\title{\textbf{A Spectral Method for Joint Community Detection and Orthogonal Group Synchronization}}
\author{Yifeng Fan\thanks{Department of Electrical and Computer Engineering,  University of Illinois at Urbana-Champaign, Champaign, IL.}
	\and Yuehaw Khoo\thanks{Department of Statistics, University of Chicago, Chicago, IL.}
	\and Zhizhen Zhao\footnotemark[1]}
\date{}
\newtheorem{theorem}{Theorem}[section]
\newtheorem{lemma}[theorem]{Lemma}
\newtheorem{remark}{Remark}
\theoremstyle{definition}
\newtheorem{definition}{Definition}[section]
\newtheorem{assumption}{Assumption}
\newcommand{\SO}{\mathrm{SO}}
\newcommand{\mO}{\mathrm{O}}
\DeclareMathOperator*{\argmax}{arg\,max}
\DeclareMathOperator*{\argmin}{arg\,min}
\begin{document}
	
	\maketitle
	
	\begin{abstract}
		Community detection and orthogonal group synchronization are both fundamental problems with a variety of important applications in science and engineering. In this work, we consider the joint problem of community detection and orthogonal group synchronization which aims to recover the communities and perform synchronization simultaneously. To this end, we propose a simple algorithm that consists of a spectral decomposition step followed by a blockwise column pivoted QR factorization (CPQR). The proposed algorithm is efficient and scales linearly with the number of edges in the graph. We also leverage the recently developed `leave-one-out' technique to establish a near-optimal guarantee for exact recovery of the cluster memberships and stable recovery of the orthogonal transforms. Numerical experiments demonstrate the efficiency and efficacy of our algorithm and confirm our theoretical characterization of it.
	\end{abstract}
	
	{\bf Keywords:} Community detection, group synchronization, spectral method, QR-factorization
	
	\section{Introduction}
\label{sec:intro}

Community detection and synchronization are both fundamental problems in signal processing, machine learning, and computer vision. Recently, there is an increasing interest in their joint problem~\cite{fan2021joint, bajaj2018smac, lederman2019representation}. That is, in the presence of heterogeneous data where data points associated with random group elements (e.g. the orthogonal group $\text{O}(d)$ of dimension $d$) fall into multiple underlying clusters, the joint problem is to simultaneously recover the cluster structures as well as the group elements. A motivating example is the 2D class averaging process in \textit{cryo-electron microscopy single particle reconstruction}~\cite{frank2006,singer2011viewing,zhao2014rotationally}, whose goal is to align (with $\text{SO}(2)$ group synchronization) and average projection images of a single particle with similar viewing angles to improve their signal-to-noise ratio (SNR). Another application in computer vision is \textit{simultaneous permutation group synchronization and clustering} on heterogeneous object collections consisting of 2D images or 3D shapes~\cite{bajaj2018smac}. 

\begin{figure}[t!]
    \centering
    \includegraphics[width = 0.65\textwidth]{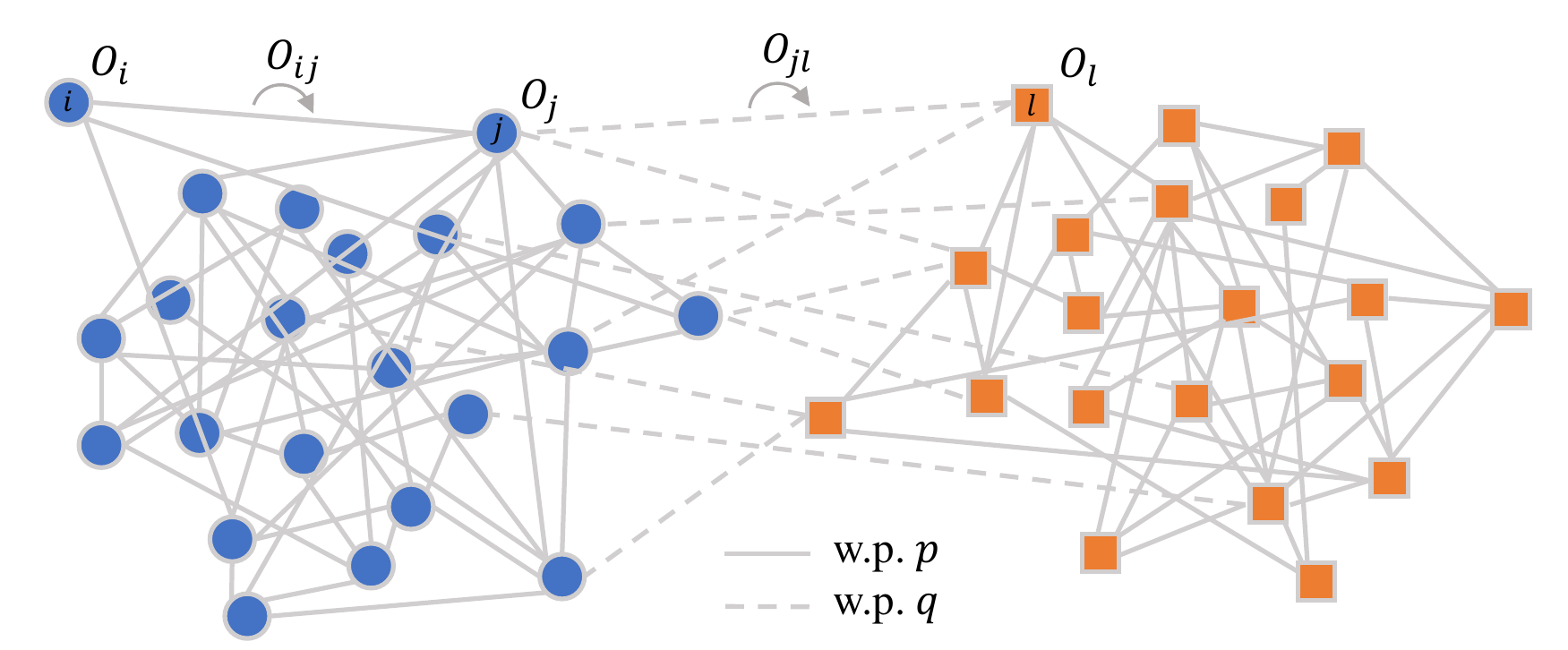}
    \caption{We present a network with two communities shown in circles and squares respectively. Each node is associated with an orthogonal group element. Each pair of nodes within the same cluster (resp.~across clusters) are independently connected with probability $p$ (resp.~$q$) as shown in solid (resp.~dash) lines. Also, a pairwise alignment $\bm{O}_{ij}$ is observed on each edge $(i,j)$.}
    \label{fig:intro_model}
\end{figure}

In this work, we study the joint problem based on the probabilistic model introduced in~\cite{fan2021joint} which extends the celebrated stochastic block model (SBM)~\cite{decelle2011asymptotic, doreian2005generalized, dyer1989solution, fienberg1985statistical, holland1983stochastic, karrer2011stochastic, massoulie2014community, mcsherry2001spectral, mossel2012stochastic, mossel2018proof} for community detection (see Fig.~\ref{fig:intro_model} for an illustration, which is slightly modified based on~\cite[Fig. 1]{fan2021joint}). In particular, we focus on the orthogonal group $\text{O}(d)$ that covers a wide range of applications mentioned above. Formally, given a network of $n$ nodes (data points) with $K$ underlying disjoint communities, each node $i$ is additionally associated with an unknown orthogonal group element $\bm{O}_i \in \text{O}(d)$. For each pair of nodes $(i,j)$, their orthogonal group transformation $\bm{O}_{ij}$ is independently observed with probability $p$ (resp. $q$) when node $i$ and node $j$ belong to the same cluster (resp. different clusters). In particular, the clean measurement $\bm{O}_{ij} = \bm{O}_i\bm{O}_j^\top$ is obtained if $i$ and $j$ are in the same cluster. Otherwise, $\bm{O}_{ij}$ is uniformly drawn from $\text{O}(d)$, implying the measurement is completely noisy. Notably, such model of corruption is widely considered in synchronization (e.g.~\cite{singer2011viewing,fan2019representation,fan2019multi, fan2019unsupervised, ling2020near}).

Under this probabilistic setting, we want to simultaneously recover the clusters and group elements by combining the idea for community detection and synchronization. That is, an optimization program can be formulated that maximizes not only the edge connections within each identified cluster, but also the consistency on the observed group transformations within each cluster. 
{\color{black}
As shown in~\cite{fan2021joint}, joint optimization approach is better than solving the clustering and synchronization problems in two stages: first solving a graph-clustering problem based on the graph connectivity and then synchronizing within each identified cluster.
} 
Directly solving such programs is usually NP-hard and computationally intractable, which gives rise to the convex relaxation methods such as semidefinite relaxation studied in \cite{fan2021joint} or spectral method in \cite{bajaj2018smac} that yield approximate solutions with polynomial time complexity. The algorithm proposed in this work is also based on a spectral method which first computes the top eigenvectors of an $n \times n$ block matrix of observed data. Then, different from \cite{bajaj2018smac}, a blockwise column-pivoted QR-factorization is performed on the top eigenvectors to identify the cluster structure and orthogonal group elements, which scales linearly with the number of data points. As a result, our method is able to achieve competitive performance with lower computational cost compared to \cite{bajaj2018smac}.

\subsection{Related work and our contributions}
Given the practical importance to a variety of applications, either community detection or group synchronization has been extensively studied over the past decades. Due to the vast volume of literature, we are not able to present a complete review of all previous works but only highlight the most related ones to this work. Community detection aims to find the underlying communities within a network by using the network topology information. It is commonly studied under the stochastic block model (SBM)~\cite{dyer1989solution, fienberg1985statistical, holland1983stochastic} where obtaining the maximum likelihood estimator for clustering is often NP-hard. Therefore, different approaches such as semidefinite programming~(SDP)~\cite{abbe2015exact, hajek2016achievinga, hajek2016achievingb, perry2017semidefinite, guedon2016community, amini2018semidefinite, bandeira2018random}, spectral method~\cite{abbe2020entrywise, vu2014simple, yun2014accurate, massoulie2014community, krzakala2013spectral, ng2002spectral}, and belief propagation~\cite{decelle2011asymptotic, abbe2015community} are considered for a practical solution. 
{\color{black}
In particular, semidefinite relaxation generally yields the state-of-the-art performance as it is able to achieve the information theoretic limits of SBM~\cite{abbe2015exact, abbe2015community, perry2017semidefinite, hajek2016achievinga, hajek2016achievingb}. However, solving large-scale SDPs is still computationally expensive. In contrast, algorithms based on spectral method are more efficient and also gives competitive result (e.g., achieving the information theoretic limits in the case of two equal-sized clusters~\cite{abbe2020entrywise, yun2014accurate, gao2017achieving}). This motivates the proposed method, which extends the spectral method in \cite{damle2016robust} for clustering, to solve our joint problem.
}

On the other hand, group synchronization wants to recover the underlying group elements $\{g_i\}_{i = 1}^n$ from a set of noisy pairwise measurements $\{g_i^{-1}g_j\}$. A common approach is the least square estimator which is usually NP-hard. Instead, similar to the development of community detection algorithms, convex relaxations such as semidefinite relaxation~\cite{singer2011angular, huang2013consistent} and spectral methods~\cite{singer2011angular, arrigoni2016spectral, chaudhury2015global, pachauri2013solving, shen2016normalized, gao2019multi} have shown to be powerful, along with many investigations of their theoretical properties~\cite{singer2011angular, zhong2018near, ling2020near, huang2013consistent, ling2020solving}. Again, spectral method based algorithms are generally more favorable and appealing than SDPs due to its computational efficiency.

The joint community detection and synchronization is a relatively new topic, motivated by recent scientific applications such as cryo-electron microscopy as mentioned before.  In~\cite{bajaj2018smac}, the authors addressed simultaneous permutation group synchronization and clustering via a spectral method for simultaneously mapping and clustering 3D object volumes. In~\cite{lederman2019representation}, as motivated by the cryo-EM single particle reconstruction problem, the authors proposed a harmonic analysis and SDP based approach for solving the rotational alignment and classification of 2D projection images simultaneously. The recent work~\cite{fan2021joint} by the authors of this paper proposed several SDPs and gave theoretical conditions for exact recovery for the probabilistic model considered. In this work, we propose an alternative spectral method based algorithm, which greatly reduces the computational complexity of SDP and obtains competitive performance compared to the existing methods.

Besides the algorithm itself, a significant contribution of this work is to provide a near-optimal performance guarantee for exact recovery under the probabilistic model. This requires analyzing the perturbation of the eigenvectors of a low rank matrix corrupted by random noise. Such problem is a classic topic in matrix perturbation theory~\cite{stewart1998perturbation, tao2012topics, anderson2010introduction}, where a naive $\ell_2$ or Frobenius norm error bound can be easily obtained by Davis-Kahan theorem~\cite{davis1970rotation}. However, 
such result is not sufficient for exact recovery since it measures the ``average'' error, and if the error is concentrated on several entries (or blocks if each node is represented by a matrix instead of a scalar), exact recovery is not guaranteed. Instead, an $\ell_\infty$ norm 
type error bound is necessary for exact recovery since it bounds the error entrywisely (or blockwisely). Fortunately, in the past years we have seen a surge of developments on $\ell_\infty$ norm bounds. Most of them~(e.g. \cite{abbe2020entrywise, zhong2018near, deng2020strong, chen2019spectral, fan2018eigenvector, ling2020near}) are based on the \textit{leave-one-out technique} proposed in~\cite{abbe2020entrywise, zhong2018near}.
In particular, our analysis greatly benefits from~\cite{abbe2020entrywise}, which provides an entrywise error bound of the leading eigenvector of low-rank matrices, and also \cite{ling2020near}, which extends~\cite{abbe2020entrywise} to consider block matrices and gives a blockwise bound on multiple eigenvectors. Results by other approaches exist such as~\cite{eldridge2018unperturbed, damle2020uniform}. Specifically, \cite{damle2020uniform} introduces deterministic row-wise perturbation bounds for orthonormal bases of invariant subspaces of symmetric matrices. Such bounds can be applied to general forms of the perturbation matrix. Compared to~\cite{damle2020uniform}, the leave-one-out technique can exploit the independence of random variables involved and thus achieves sharper bounds in our setting. Here, our contribution lies in handling the additional cluster structures and analyzing the QR factorization. 

We summarize our contributions in the following: (1) We introduce a novel algorithm for joint community detection and orthogonal group synchronization, which consists of three simple steps: a spectral decomposition followed by a \emph{blockwise} column-pivoted QR-factorization (CPQR), and a step for cluster assignment and group element recovery. (2) A variant of CPQR, called \textit{blockwise CPQR}, is designed to deal with the block matrix structure induced by the $\text{O}(d)$ group transformation. 
(3) Under the probabilistic model, a near-optimal performance guarantee is established for exact recovery of the clusters memberships and stable recovery of the orthogonal transforms. (4) We demonstrate the efficacy of our method and verify the theoretical characterization of the sharp phase transition for recovery, via a series of numerical experiments.

\subsection{Organization}
The rest of this paper is organized as follows: In Section~\ref{sec:pre} we introduce the probabilistic model and formulate the optimization program for the joint problem. Then in Section~\ref{sec:methods} we present our algorithm. Section~\ref{sec:analysis} is devoted to theoretical analysis. Numerical experiments are given in Section~\ref{sec:exp} for evaluating the performance and verifying our theory. We conclude with discussions and future directions in Section~\ref{sec:conclusion}. For clarity most of the technical proofs are deferred to Appendix.

	\subsection{Notations}
\label{sec:notation}
Throughout this paper we use the following notations: The transpose of a matrix $\bm{X}$ is denoted by $\bm{X}^\top$. An $m \times n$ matrix of all zeros is denoted by $\bm{0}_{m \times n}$ (or $\bm{0}$ for brevity). An identity matrix of size $n \times n$ is denoted by $\bm{I}_n$.  $\sigma_{\text{max}}(\bm{X})$, $\sigma_{\text{min}}(\bm{X})$, and $\sigma_{l}(\bm{X})$ stand for the maximum, the minimum, and the $l$-th largest singular value of $\bm{X}$ respectively. Similarly, $\lambda_l(\bm{X})$ denotes the $l$-th largest eigenvalue of $\bm{X}$.   $\|\bm{X}\| = \max_{\|v\| = 1}\|\bm{X}\bm{v}\|$ and $\|\bm{X}\|_\mathrm{F} = \sqrt{\text{Tr}(\bm{X}^\top\bm{X})}$ denote the operator norm and the Frobenius norm of $\bm{X}$ respectively. 

For a block matrix $\bm{X} \in \mathbb{R}^{md \times nd}$, without further specification, the $(i, j)$-th block is denoted by $\bm{X}_{ij} \in \mathbb{R}^{d \times d}$, for $i = 1, \dots, m$ and $j = 1, \dots, n$.  
In addition, the $i$-th \textit{block row} (resp. $j$-th \textit{block column}) of $\bm{X}$ is referred to as the sub-matrix that contains $\bm{X}_{ij}$ for all $j = 1, \ldots, n$ (resp. $i = 1, \ldots, m$), and is denoted as $\bm{X}_{i\cdot} \in \mathbb{R}^{d \times nd}$ (resp. $\bm{X}_{\cdot j} \in \mathbb{R}^{md \times d}$).

For two non-negative functions $f(n)$ and $g(n)$, $f(n) = O(g(n))$ or $f(n) \lesssim g(n)$ means there exists an absolute positive constant $C$ such that $f(n) \leq Cg(n)$ for all sufficiently large $n$; $f(n) = \Omega(g(n))$ or $f(n) \gtrsim g(n)$ means there exists an absolute positive constant $C$ such that $f(n) \geq Cg(n)$ for all sufficiently large $n$; and $f(n) = o(g(n))$ indicates for every positive constant $C$, the inequality $f(n) \leq Cg(n)$ holds for all sufficiently large $n$. 

\section{Preliminaries}
\label{sec:prelim}
We start from some important definitions for matrix factorization and decomposition that will be used for algorithm development and analysis. 
{\color{black}
\begin{definition}[Polar decomposition]
Given a squared matrix $\bm{X} \in \mathbb{R}^{d \times d}$, the \textup{polar decomposition} of $\bm{X}$ is given as 
\begin{equation}
    \bm{X} = \mathcal{P}(\bm{X})\bm{W}
    \label{eq:polar}
\end{equation}
where $\mathcal{P}(\bm{X}) \in \mathbb{R}^{d \times d}$ is orthogonal and $\bm{W} \in \mathbb{R}^{d \times d}$ is positive semi-definite.
\label{def:polar}
\end{definition}

Notably, such a decomposition always exists. Also, when $\bm{X}$ has full rank, $\mathcal{P}(\bm{X})$ is the closest orthogonal matrix to $\bm{X}$ such that $\mathcal{P}(\bm{X}) =  \argmin_{\bm{Y} \in \mO(d)} \|\bm{X} -\bm{Y}\|_\mathrm{F}$~(see \cite{fan1955some}), and $\bm{W}$ is guaranteed to be positive definite. In addition, by denoting $\bm{X} = \bm{U}\bm{\Sigma}\bm{V}^{\top}$ as its singular value decomposition, one can obtain $\mathcal{P}(\bm{X}) = \bm{U}\bm{V^\top}$ and $\bm{W} = \bm{V}\bm{\Sigma}\bm{V}^\top$, with computational cost $O(k^3)$. As a result, we also denote $\mathcal{P}(\bm{X})$ as the \textit{polar factor} of any matrix $\bm{X}$.

\begin{definition}[QR factorization]
Given any $\bm{X} \in \mathbb{R}^{m \times n}$, the \textup{QR factorization} of $\bm{X}$ is given as
\begin{equation*}
    \bm{X} = \bm{Q}\bm{R}
\end{equation*}
where $\bm{Q} \in \mathbb{R}^{m \times m}$ is orthogonal and $\bm{R} \in \mathbb{R}^{m \times n}$ is an upper triangular matrix.
\label{def:QR_def}
\end{definition}

Again, such a factorization always exists. In terms of computing it, the G\textit{ram-Schmidt process} and \textit{Householder transformation} are commonly used~(see e.g. 
\cite{trefethen1997numerical}), where the latter one enjoys better numerical stability.

\begin{definition}[Column-pivoted QR factorization]
Given any $\bm{X} \in \mathbb{R}^{m \times n}$ with $m \leq n$ and rank $m$, the \textup{column-pivoted QR factorization (CPQR)} of $\bm{X}$ is given as
\begin{equation*}
\bm{X} \bm{\Pi}_{n} = \bm{Q}\left[\bm{R}_1, \bm{R}_2\right] = \bm{Q}\bm{R}
\end{equation*}
where $\bm{\Pi}_n \in \mathbb{R}^{n \times n}$ is a permutation matrix, $\bm{Q} \in \mathbb{R}^{m \times m}$ is orthogonal, $\bm{R}_1 \in \mathbb{R}^{m \times m}$ is upper-triangular, and $\bm{R}_2 \in \mathbb{R}^{m \times (n-m)}$. 
\label{def:CPQR}
\end{definition}

The column-pivoted QR differs from the vanilla QR by introducing a permutation $\bm{\Pi}_n$ that ideally makes $\bm{R}_1$ as well-condition as possible given $\bm{X}$. In practice, the \textit{Golub-Businger algorithm}~\cite{businger1965linear}, which is based on the Householder transformation, chooses $\bm{\Pi}_n$ using a greedy heuristic: at each step, the column with the largest remaining norm in $\bm{X}$ is picked as the pivot for computing the new orthogonal basis in $\bm{Q}$. As a result, CPQR avoids selecting columns that are highly linearly dependent for determining $\bm{Q}$, which improves the numerical stability especially when $\bm{X}$ is rank deficient. Because of this, CPQR serves as the backbone of the so-called \textit{rank-revealing QR factorization}~\cite{gu1996efficient} which is used to determine the rank of a matrix.

In the following, we introduce an extension of CPQR which considers the case of a block matrix and perform the decomposition blockwisely. This serves as the core of our proposed algorithm in this paper.

\begin{definition}[blockwise CPQR] 
Given any $m \times n$ block matrix $\bm{X} \in \mathbb{R}^{md \times nd}$ with block size $d \times d$, $m < n$ and rank $md$ (full row rank), the \textup{blockwise CPQR} of $\bm{X}$ is given as $\bm{X}\bm{\Pi}_{nd} = \bm{Q}\left[\bm{R}_1, \bm{R}_2\right]$, where $\bm{\Pi}_{nd} \in \mathbb{R}^{nd \times nd}$ is a permutation matrix with a Kronecker product (denoted by $\otimes$) structure such that 
\begin{equation*}
    \bm{\Pi}_{nd} = \bm{\Pi}_n \otimes \bm{I}_d
\end{equation*}
for some permutation matrix $\bm{\Pi}_n \in \mathbb{R}^{n \times n}$, where $\bm{Q} \in \mathbb{R}^{md \times md}$ is orthogonal, $\bm{R}_1 \in \mathbb{R}^{md \times md}$ is upper-triangular, and $\bm{R}_2 \in \mathbb{R}^{md \times (n-m)d}$.
\label{def:block_CPQR}
\end{definition}

Here, one can view the blockwise CPQR as a special form of the vanilla CPQR, where the matrix $\bm{X}$ is decomposed blockwisely such that its block structure and the relative orders of columns within each block are always preserved, and the pivot becomes a block column instead. The details for computing it is deferred to Section~\ref{sec:methods}.
}

	\section{Problem setup}
\label{sec:pre}
Given a network with $n$ nodes and $K$ underlying communities, we assume each node $i$ has a clustering label $\kappa(i) \in \{1, \ldots, K\}$, and is associated with an orthogonal transform $\bm{O}_i \in \mO(d)$. We use $m_k$ and $C_k$ to denote the size of the $k$-th cluster and the set of nodes belonging to it respectively, such that $m_k = |C_k|$.

Formally, our probabilistic model generates a random graph $G = (V,E)$ with node set $V$ and edge set $E$. Each pair of nodes $(i,j)$ is independently connected with probability $p$ if $\kappa(j) = \kappa(i)$ that belong to the same community, otherwise they are connected with probability $q$ if $\kappa(j) \neq \kappa(i)$. Also, an orthogonal transformation $\bm{O}_{ij} \in \mO(d)$ is observed on each edge connection $(i,j) \in E$, when $\kappa(j) = \kappa(i)$ we obtain $\bm{O}_{ij} = \bm{O}_i\bm{O}_j^\top$ which equals to the true alignment from $j$ to $i$, otherwise we assume $\bm{O}_{ij} \sim \text{Unif}(\mO(d))$, which is a random orthogonal transformation uniformly drawn from $\mO(d)$ that carries no information but only noise. 

Given the above, our observation from the model can be represented by an \textit{observation matrix} $\bm{A} \in \mathbb{R}^{nd \times nd}$, which is an $n \times n$ symmetric block matrix whose $(i,j)$-th block $\bm{A}_{ij} \in \mathbb{R}^{d \times d}$ for any $i < j$ satisfies, 
\begin{equation}
  \bm{A}_{ij} = 
    \begin{cases}
    \bm{O}_i\bm{O}_j^\top,  &\; \text{with probability } p \text{ and when } \kappa(j) = \kappa(i), \\
    \bm{O}_{ij} \sim \text{Unif}(\mO(d)), &\; \text{with probability } q \text{ and when } \kappa(j) \neq \kappa(i),\\
    \bm{0}, &\; \text{otherwise}. 
    \end{cases} 
    \label{eq:clean_observation}
\end{equation}
Then $\bm{A}_{ij} = \bm{A}_{ji}^\top$, and we set the diagonal blocks $\bm{A}_{ii} = \bm{0}$ for $i = 1,\ldots, n$. In this way, unlike an adjacency matrix which only has $\{0, 1\}$-valued entries that indicate the connectivity, $\bm{A}$ defined in \eqref{eq:clean_observation} extends to include orthogonal transformation connected nodes as well. 
In addition, we define the \textit{clean observation matrix} $\bm{A}^{\text{clean}} \in \mathbb{R}^{nd \times nd}$, whose $(i,j)$-th block $\bm{A}_{ij}^{\text{clean}}$ satisfies
\begin{equation}
    \bm{A}_{ij}^{\text{clean}} = 
    \begin{cases}
    \bm{O}_i\bm{O}_j^\top, &\quad \kappa(j) = \kappa(i),\\
    \bm{0}, &\quad \text{otherwise}.
    \end{cases}
    \label{eq:clean_A}
\end{equation}
As a result, $\bm{A}^{\text{clean}}$ is equivalent to $\bm{A}$ in the clean case when $p = 1$ and $q = 0$. 

Given the observation $\bm{A}$, we formulate the following optimization program for recovery:
\begin{equation}
    \max_{\bm{\Theta}_i, \ldots, \bm{\Theta}_n \in \text{O}(d), \mathcal{C}_1, \ldots, \mathcal{C}_K} \sum_{k = 1}^K\sum_{i, j \in \mathcal{C}_k} \left\langle \bm{A}_{ij}, \; \frac{1}{|\mathcal{C}_k|}\bm{\Theta}_{i} \bm{\Theta}_j^\top \right\rangle,
    \label{eq:clustersync}
\end{equation}
where $\mathcal{C}_k$ denotes the set of nodes assigned to the $k$-th cluster, $\bm{\Theta}_i$ is the identified orthogonal transform for node $i$. As a result, \eqref{eq:clustersync} simultaneously recovers the cluster memberships and the orthogonal group elements, by maximizing not only the edge connectivity but also the consistency of transformations among nodes within each cluster. Notably, compared to the formulation in a previous work~\cite[Eq.~(5)]{fan2021joint}, the additional factor $1/|\mathcal{C}_k|$ in \eqref{eq:clustersync} is introduced to make the contribution of each cluster to the cost more balanced.
Such idea is in the same spirit as the ``RatioCut''~\cite{hagen1992new, von2007tutorial} studied in the graph partition problem.  

To proceed, we perform a change of optimization variables in \eqref{eq:clustersync}: for each cluster $\mathcal{C}_k$, let us define a block column vectors 
$\bm{V}^{(k)} = [\bm{V}_i^{(k)}]_{i = 1}^n \in \mathbb{R}^{nd \times d}$ 
of length $n$ whose $i$-th block $\bm{V}_i^{(k)} \in \mathbb{R}^{d \times d}$ satisfies
\begin{equation}
    \bm{V}_i^{(k)} := 
    \begin{cases}
    \frac{1}{\sqrt{|\mathcal{C}_k|}}\bm{\Theta}_i, &\; i \in \mathcal{C}_k,\\
    \bm{0}, &\; \text{otherwise}.
    \end{cases}
    \label{eq:hat_V_k_definition}
\end{equation}
As a result, $\bm{V}_i^{(k)}$ indicates the cluster membership of $\mathcal{C}_k$ and also includes the identified orthogonal group elements of nodes in $\mathcal{C}_k$. Then, \eqref{eq:clustersync} can be rewritten as
\begin{equation}
    \begin{aligned}
        \max_{\bm{V} \in \mathbb{R}^{nd \times Kd}} &\quad \left\langle \bm{A},\; \bm{V}\bm{V}^\top\right\rangle \\
        \mathrm{s.t.} &\quad \bm{V}\bm{V}^\top = \sum_{k = 1}^K \bm{V}^{(k)}(\bm{V}^{(k)})^\top 
        \text{ for $\{\bm{V}^{(k)}\}_{k = 1}^K$ satisfy the form in \eqref{eq:hat_V_k_definition}}.
    \end{aligned}
    \label{eq:clustersync_matrix_V}
\end{equation}
It is clear to see that \eqref{eq:clustersync_matrix_V} nonlinear and non-convex and is thus computationally intractable to be exactly solved. In~\cite{fan2021joint}, the authors use semidefinite relaxations to obtain an approximate solution with polynomial time complexity (on the program without the factor $1/|\mathcal{C}_k|$). However, solving large-scale SDPs is still highly nontrivial in general especially when $n$ is large. Therefore, here we propose a spectral method detailed in Section~\ref{sec:methods} to improve the efficiency while not sacrificing the accuracy.

	\section{Algorithm}
\label{sec:methods}

\begin{algorithm}[t!]
\SetAlgoLined
\SetNoFillComment
\LinesNotNumbered
\setcounter{AlgoLine}{0}
\vspace{0.1cm}
\KwIn{The observation matrix $\bm{A}$, the number of clusters $K$.}

\vspace{0.1cm}
\begin{minipage}{0.963\textwidth}
\begin{itemize}[leftmargin=12pt]
    \item [1.] (Spectral decomposition) Compute the top $Kd$ eigenvectors $\bm{\Phi} \in \mathbb{R}^{nd \times Kd}$ of $\bm{A}$ such that $\bm{\Phi}^\top \bm{\Phi} = \bm{I}_{Kd}$.
    \vspace{0.05cm}
    \item [2.] (Blockwise CPQR) Compute the \textit{blockwise} CPQR (detailed in Algorithm~\ref{alg:QR_blockwise}) of $\bm{\Phi}^\top$, which yields
    \begin{equation}
        \bm{\Phi}^\top \bm{\Pi}_{nd} = \bm{Q} \bm{R}\quad \Rightarrow \quad \bm{\Phi}^\top = \bm{Q}\bm{R}\bm{\Pi}_{nd}^\top.
        \label{eq:alg_QR}
    \end{equation}
    Update $\bm{R} \leftarrow \bm{R}\bm{\Pi}_{nd}^\top$.
    \vspace{0.05cm}
    \item [3.] (Recovery of the cluster memberships and orthogonal group elements)
    For each node $i = 1,\ldots, n$, assign its cluster as
    \begin{equation}
        \hat{\kappa}(i) \leftarrow \argmax_{k = 1,\ldots, K} \|\bm{R}_{ki}\|_\textrm{F}.
        \label{eq:cluster}
        \vspace{-0.2cm}
    \end{equation}
    Then  identify the orthogonal group element by the polar factor (Definition~\ref{def:polar}) as
    \begin{equation}
        \hat{\bm{O}}_{i} \leftarrow \mathcal{P}(\bm{R}_{ki})^\top \quad \text{where} \quad k = \hat{\kappa}(i).
        \label{eq:syncronization}
    \end{equation}
    \item [4.] {\color{black}
    (Optional) Perform the refinement step described in Section~\ref{sec:refinement}. 
    }
\end{itemize}
\vspace{-0.05cm}
\KwOut{\textup{Cluster assignments $\{\hat{\kappa}(i)\}_{i = 1}^n$ and orthogonal group elements $\{\hat{\bm{O}}_i\}_{i = 1}^n$.}}
\end{minipage}
\caption{Joint spectral clustering and synchronization}
\label{alg:spectral}
\end{algorithm}

Given the observation matrix $\bm{A}$, we propose the following algorithm for simultaneously finding the underlying clusters and synchronizing within each cluster. Our algorithm,  as summarized in Algorithm~\ref{alg:spectral}, is strikingly simple and only consists of three steps. First, we get the matrix $\bm{\Phi}$ which contains the top $Kd$ eigenvectors of $\bm{A}$ via  a spectral decomposition. Secondly, we get the matrix $\bm{R}$ through a blockwise CPQR of $\bm{\Phi}^\top$. We end up with cluster assignment and orthogonal group element recovery based on individual sub-block of $\bm{R}$, \textcolor{black}{followed by an optional step for refining the recovery result}. In particular, we refer to Section~\ref{sec:block_CPQR} for the details of the blockwise CPQR given in Definition~\ref{def:block_CPQR}.

We highlight that Algorithm~\ref{alg:spectral} is deterministic such that it has no dependency on any sort of random initialization.
In comparison, the performance of other classical algorithms such as $k$-means~\cite{arthur2006k} largely depends on the initial guess of the cluster centers. Also, in terms of the computational complexity, our algorithms scales linearly with the number of data points $n$ apart from the spectral decomposition step~(step 1), which is desirable especially when $n$ is large (see Section~\ref{sec:complexity} for a detailed discussion). In addition, from an implementation perspective, our algorithm is very easy to implement since it only consists of common and efficient matrix operations.

We point out that CPQR has been widely used in different scientific fields~\cite{zha2001spectral,damle2016robust, damle2017computing, damle2017scdm}, and our algorithm is mainly inspired by~\cite{damle2016robust}, which proposes using CPQR for clustering after a spectral embedding of the graph. The algorithm in~\cite{damle2016robust} achieves competitive and more robust performance against the classical Lloyd's algorithm~\cite{lloyd1982least} with $k$-means$++$ initialization~\cite{arthur2006k} under SBM model. For our problem, we introduce the blockwise CPQR which naturally extends such QR-based algorithm to handle the extra group transformations and block structures. 

\subsection{Algorithm motivation}
In this section, we provide motivations for  Algorithm~\ref{alg:spectral}. We start from the original problem \eqref{eq:clustersync_matrix_V}. By noticing that  $\{\bm{V}^{(k)}\}_{k = 1}^K$ in \eqref{eq:hat_V_k_definition} forms an orthogonal basis, the spectral method relaxes \eqref{eq:clustersync_matrix_V} by replacing the constraint in \eqref{eq:clustersync_matrix_V} with $\bm{V}^\top \bm{V} = \bm{I}_{Kd}$ and yields the following relaxed program
\begin{equation}
    \begin{aligned}
        \bm{\Phi} = \argmax_{\bm{{V}} \in \mathbb{R}^{nd \times Kd}} &\quad \langle \bm{A},\; \bm{{V}}\bm{{V}}^\top\rangle \quad
        \mathrm{s.t.} \quad \bm{{V}}^\top \bm{{V}} = \bm{I}_{Kd}
    \end{aligned}
    \label{eq:clustersync_matrix_spectral}
\end{equation}
whose global optimizer turns out to be the top $Kd$ eigenvectors of $\bm{A}$ denoted by $\bm{\Phi} \in \mathbb{R}^{nd \times Kd}$. This leads to the \emph{Step 1}~(spectral decomposition) in Algorithm~\ref{alg:spectral}. 

To proceed, we split $\bm{A}$ into deterministic and random parts, 
\begin{equation}
    \bm{A} = \mathbb{E}[\bm{A}] + (\bm{A} - \mathbb{E}[\bm{A}]) = \mathbb{E}[\bm{A}] + \bm{\Delta},
    \label{eq:A_decompose}
\end{equation}
where $\mathbb{E}[\bm{A}] = p\bm{A}^{\text{clean}}$ with the clean observation matrix $\bm{A}^{\text{clean}}$ defined in \eqref{eq:clean_A}, and the residual $\bm{\Delta}$ is a random perturbation with $\mathbb{E}[\bm{\Delta}] = \bm{0}$. 
Specifically, $\mathbb{E}[\bm{A}]$ is a low-rank matrix which satisfies the following decomposition:
\begin{equation*}
    \mathbb{E}[\bm{A}] = p\sum_{k = 1}^K m_k \bm{\Psi}^{(k)}(\bm{\Psi}^{(k)})^\top, \quad \text{with} \quad \bm{\Psi}_{i}^{(k)} := 
    \begin{cases}
    \frac{1}{\sqrt{m_k}}\bm{O}_i, &\; i \in C_k,\\
    \bm{0}, &\; \text{otherwise}, 
    \end{cases}
    \label{eq:V_k_definition}
\end{equation*}
where $\bm{\Psi}^{(k)} = [\bm{\Psi}_i^{(k)}]_{i = 1}^n \in \mathbb{R}^{nd \times d}$ is a block column vector of length $n$ that is defined in a similar manner to $\bm{V}^{(k)}$ in \eqref{eq:hat_V_k_definition}. Then  each $\bm{\Psi}^{(k)}$ indicates the true cluster memberships of $C_k$ and the exact orthogonal group element $\bm{O}_i$ of node $i$ within $C_k$.  
Therefore, the matrix $\bm{\Psi} = [\bm{\Psi}^{(1)}, \bm{\Psi}^{(2)}, \ldots, \bm{\Psi}^{(K)}]$ satisfies $\bm{\Psi}^\top \bm{\Psi} = \bm{I}_{Kd}$.

We first consider the clean case when $p = 1$ and $q = 0$, then we have $\bm{A} = \bm{A}^\text{clean}$, $\bm{\Delta} = \bm{0}$, and $\bm{\Phi} = \bm{\Psi}\bm{O}$, where $\bm{O} \in \mathbb{R}^{Kd \times Kd}$ is some orthogonal matrix. 
Then for recovering the communities and group elements, it suffices to extract $\{\bm{\Psi}^{(k)}\}_{k = 1}^K$ from $\bm{\Phi}$. However, such extraction is non-trivial since $\bm{O}$ is unknown to us. To resolve this, without loss of generality, we assume that the first $m_1$ nodes form the first community $C_1$, the following $m_2$ nodes form $C_2$, and so on, 
{\color{black}
then notice that $\bm{\Phi}^\top$ can be decomposed as 
\begin{equation}
    \begin{aligned}
    \bm{\Phi}^\top &= \bm{O}^\top 
    \begin{bmatrix}
    \bm{\Psi}^{(1)} &\cdots &\bm{\Psi}^{(k)}
    \end{bmatrix}^\top\\
    & =\bm{O}^\top
    \begin{bmatrix}
    \frac{1}{\sqrt{m_1}}\bm{O}_1^\top &\cdots &\frac{1}{\sqrt{m_1}}\bm{O}_m^\top &\cdots &\bm{0} &\cdots &\bm{0}\\
    \vdots &\vdots &\vdots &\ddots &\vdots &\vdots &\vdots\\
    \bm{0} &\cdots &\bm{0} &\cdots &\frac{1}{\sqrt{m_K}}\bm{O}_{n-m_K+1} &\cdots &\frac{1}{\sqrt{m_K}}\bm{O}_{n}
    \end{bmatrix}\\
    &= 
    \underbrace{
    \bm{O}^\top
    \begin{bmatrix}
    \bm{O}_1^\top &\cdots &\bm{0}\\
    \vdots &\ddots &\vdots \\
    \bm{0} &\cdots &\bm{O}_{n-m_K+1}^\top
    \end{bmatrix}}_{=:\bm{Q}}\\
    &\times
    \underbrace{
    \begin{bmatrix}
    \frac{1}{\sqrt{m_1}}\bm{I}_d &\cdots &\frac{1}{\sqrt{m_1}}\bm{O}_1\bm{O}_m^\top &\cdots &\bm{0} &\cdots &\bm{0}\\
    \vdots &\vdots &\vdots &\ddots &\vdots &\vdots &\vdots\\
    \bm{0} &\cdots &\bm{0} &\cdots &\frac{1}{\sqrt{m_K}}\bm{I}_d &\cdots &\frac{1}{\sqrt{m_K}}\bm{O}_{n-m_K+1}\bm{O}_{n}^\top
    \end{bmatrix}}_{ =:\bm{R}}\\
    &= \bm{Q}\bm{R}.
    \end{aligned}
    \label{eq:QR_general}
\end{equation}
The resulting decomposition $\bm{\Phi}^\top = \bm{Q}\bm{R}$ corresponds to the \emph{Step 2} (blockwise CPQR) in Algorithm~\ref{alg:spectral} up to some column permutation, and here we assume $\bm{\Pi}_{nd} = \bm{I}_{nd}$ such that no column pivoting is performed. In this way, $\bm{Q} \in \mathbb{R}^{Kd \times Kd}$ is an orthogonal matrix that includes the unknown orthogonal matrix $\bm{O}$, and $\bm{R} \in \mathbb{R}^{Kd \times nd}$ is a $K \times n$ block matrix that excludes $\bm{O}$. More importantly, $\bm{R}$ incorporates all the information needed for recovery, as shown in the following.
}

To recover the cluster memberships from $\bm{R}$ in \eqref{eq:QR_general}, one can see that for each node $i$, the $i$-th block column of $\bm{R}$ (i.e. $\bm{R}_{\cdot i}$) is sparse such that its $k$-th block $\bm{R}_{ki}$ is nonzero only if $k = \kappa(i)$, which indicates the cluster membership of node $i$. Also, the orthogonal group element $\bm{O}_i$ can be determined from the nonzero block (up to some global orthogonal transformation). This leads to the \emph{Step 3} in Algorithm~\ref{alg:spectral}, where the polar factor $\mathcal{P}(\cdot)$ in \eqref{eq:syncronization} ensures the orthogonality of the estimation $\hat{\bm{O}}_i$. 

In practice, when Algorithm~\ref{alg:spectral} is applied to the noisy observation $\bm{A}$, the exact recovery of the cluster assignments is still possible as long as the perturbation to $\mathbb{E} [\bm{A}]$, which is controlled by $p$ and $q$, is less than a certain threshold such that $\bm{\Phi}$ is still close to $\bm{\Psi} \bm{O}$. Indeed, this will be verified by both theoretical analysis in Section~\ref{sec:analysis} and numerical experiments in Section~\ref{sec:exp}.

\subsection{Blockwise column-pivoted QR}
\label{sec:block_CPQR} An important step of CPQR in Definition~\ref{def:CPQR} is selecting the pivots, which amounts to finding a subset of columns that are as linearly independent as possible and are used for determining the basis. In our setting as we present in \eqref{eq:QR_general}, we further require the QR factorization to always preserve the block structure which our recovery relies on, in other words, each $d \times d$ block should be treated as a single unit during the whole process. To handle this requirement, we propose a simple variant of CPQR, called \textit{blockwise CPQR} given in Definition~\ref{def:block_CPQR}, which preserves the block structure and can be computed by selecting a block column instead of a single column as the pivot. We present our blockwise CPQR in Algorithm~\ref{alg:QR_blockwise}, where the \textit{Householder transformation}~\cite{householder1958unitary} (see Algorithm~\ref{alg:Householder}) is adopted to ensure its numerical stability. Basically, Algorithm~\ref{alg:QR_blockwise} modifies the original CPQR algorithm~\cite{trefethen1997numerical, golub1996matrix} in the following three aspects: 

\vspace{0.05cm}
\begin{enumerate}[leftmargin=20pt]
\item At each iteration, we select a \textit{block column} instead of a column as pivot.
\item We determine the pivot by finding the block column with the largest Frobenius norm of its residual (see line 3 in Algorithm~\ref{alg:QR_blockwise}).
\item After each pivot selection, we compute $d$ orthonormal basis (instead of only one basis) from the pivot (see lines 7-11 in Algorithm~\ref{alg:QR_blockwise}) by using Householder transformation (Algorithm~\ref{alg:Householder}).
\end{enumerate}
\vspace{0.05cm}
\noindent As a result, the relative order of columns in each block is always preserved.
\begin{algorithm}[t!]

\DontPrintSemicolon
\SetAlgoLined
\SetNoFillComment
\KwIn{A block matrix $\bm{X} \in \mathbb{R}^{Kd \times nd}$ with $K \leq n$ and the block size $d$}
\kwInit{$\bm{\Pi}_{nd} \leftarrow \bm{I}_{nd}$,  $\bm{Q} \leftarrow \bm{I}_{Kd}$, $\bm{R} \leftarrow \bm{X}$}
\For{$t = 1, 2, \ldots, K$}{ 
\tcc{Pivot selection}
\For{$j = t, t+1, \ldots, n$}{

Compute the residual $\rho_j \leftarrow \|\bm{R}_{t\cdot, \; j}\|_\textrm{F}$, where $\bm{R}_{t\cdot, \; j} \in \mathbb{R}^{(K-t+1)d \times d}$ is the segment of the $j$-th block column from the $t$-th block to the end
}
Determine the pivot $ j^* \leftarrow \argmax_{j = t,\ldots,n} \rho_j$

For both $\bm{R}$ and $\bm{\Pi}_{nd}$, swap the $t$-th block column with the pivot ($j^*$-th) block column


\vspace{0.3cm}
\tcc{Determine orthonormal basis from the pivot block column}
\For{$j = 1, 2, \ldots, d$}{
Let $l \leftarrow (t-1)d + j$, apply Householder transformation in Algorithm~\ref{alg:Householder} on $\bm{r}_{l} = \bm{R}_{l:Kd , l} \in \mathbb{R}^{Kd - l+1}$, 
and  get the Householder matrix $\widetilde{\bm{Q}}_l$.  

\vspace{0.1cm}
 Update $\bm{Q}_l \leftarrow \begin{bmatrix}
 \bm{I}_{l-1} &\bm{0}\\
 \bm{0} &\widetilde{\bm{Q}}_l
 \end{bmatrix}$

\vspace{0.cm}
Update $\bm{R} \leftarrow \bm{Q}_l\bm{R}$ and $\bm{Q} \leftarrow \bm{Q}\bm{Q}_l$

}}
\KwOut{$\bm{Q}$, $\bm{R}$, \textup{and} $\bm{\Pi}_{nd}$.} 
\caption{Blockwise column-pivoted QR}
\label{alg:QR_blockwise}
\end{algorithm}

\begin{algorithm}[t!]
\DontPrintSemicolon
\SetAlgoLined
\SetNoFillComment
\KwIn{A column vector $\bm{x} \in \mathbb{R}^{n}$}
$\alpha \leftarrow -\text{sign}(x_1)\|\bm{x}\|$

$\bm{u} \leftarrow \bm{x} - \alpha\bm{e}_1$, where $\bm{e}_1 = [1, 0, \ldots, 0]^\top$

$\bm{v} \leftarrow \bm{u}/\|\bm{u}\|$

$\bm{Q} \leftarrow \bm{I}_n - 2\bm{v}\bm{v}^\top$

\KwOut{\textup{Householder matrix} $\bm{Q}$.}
\caption{Householder transformation}
\label{alg:Householder}
\end{algorithm}

\subsection{Refinements} 
\label{sec:refinement}
\paragraph{On cluster memberships} We propose the following step that further improves the clustering result by Algorithm~\ref{alg:spectral}. In \eqref{eq:cluster}, $\max_k\|\bm{R}_{ki}\|_\mathrm{F}$ can be interpreted a ``confidence score'' that node $i$ belongs to its assigned cluster. Then   our idea is to refine those cluster assignments with low confidence scores. Formally, given a threshold $\varepsilon \in (0, 1)$, we define
\begin{equation}
    S_{\varepsilon}= \left\{\, i \biggm| \max_k \frac{ \|\bm{R}_{ki}\|_\mathrm{F}}{\|\bm{R}_{\cdot i}\|_\mathrm{F}} \leq \varepsilon \,\right\}
    \label{eq:refine_S_k}
\end{equation}
as the set of ill-defined nodes, where $\bm{R}_{\cdot i}$ denotes the $i$-th block column.
{\color{black}
Notably, $\varepsilon$ is determined based on the distribution of $\{\max_k \|\bm{R}_{ki}\|_\mathrm{F} / \|\bm{R}_{\cdot i}\|_\mathrm{F}\}_{i = 1}^n$ such that $S_{\varepsilon}$ includes a small fraction of nodes and $|S_{\varepsilon}| \ll n$. As a result, this strategy enables us to  control the ``level of refinement'' instead of setting $\varepsilon$ directly. We find that in practice, refining on a small portion of nodes (e.g., $10\%$) usually yields satisfying result with a mild computational complexity.
} Then, for each node $i \in S_{\varepsilon}$, we refine its cluster to be
\begin{equation}
    \hat{\kappa}(i) = \argmax_{k = 1,\ldots, K} \frac{1}{|\hat{C}_k|}\sum_{j \in \hat{C}_k} \sqrt{|\hat{C}_k|}\left\|(\bm{R}_{\cdot i})^\top\bm{R}_{\cdot j} \right\|_\mathrm{F} = \argmax_{k = 1,\ldots, K} \frac{1}{\sqrt{|\hat{C}_k|}}\sum_{j \in \hat{C}_k} \left\|(\bm{R}_{\cdot i})^\top\bm{R}_{\cdot j} \right\|_\mathrm{F}
    \label{eq:cluster_refine}
\end{equation}
where $\hat{C}_k$ is the $k$-th cluster identified by Algorithm~\ref{alg:spectral}. To understand \eqref{eq:cluster_refine}, consider the clean case as shown in \eqref{eq:QR_general}: for each cluster, we have $\hat{C}_k = C_k$, $|\hat{C}_k| = m_k$, and \eqref{eq:cluster_refine} measures the averaged similarity between node $i$ and all nodes $j \in \hat{C}_k$ as 
\begin{equation*}
    \frac{1}{\sqrt{|\hat{C}_k|}}\sum_{j \in \hat{C}_k} \left\|(\bm{R}_{\cdot i})^\top\bm{R}_{\cdot j} \right\|_\mathrm{F} = 
    \begin{cases}
    \sqrt{d}, &\quad i \in \hat{C}_k \\
    0, &\quad i \notin \hat{C}_k
    \end{cases}
    \label{eq:cluster_refine_clean}
\end{equation*}
Then~\eqref{eq:cluster_refine} selects the cluster with the maximum similarity. Notably, the factor $\sqrt{|\hat{C}_k|}$ in \eqref{eq:cluster_refine} removes the dependency of \eqref{eq:cluster_refine} on cluster sizes. 

{\color{black}
\paragraph{On orthogonal transforms} We also have a step for refining orthogonal transforms as follows: for each cluster $\hat{C}_k$ identified by \eqref{eq:cluster}, we collect all the available pairwise transforms $\bm{O}_{ij}$ for nodes in $\hat{C}_k$, and form an observation matrix $\bm{A}^{(k)} \in \mathbb{R}^{\hat{m}_kd \times \hat{m}_kd}$ where $\hat{m}_k = |\hat{C}_k|$. In other words, $\bm{A}^{(k)}$ is a restricting of $\bm{A}$ on $\hat{C}_k$.
Then we compute the top-$d$ eigenvectors of $\bm{A}^{(k)}$ denoted by $\bm{\Phi}^{(k)} \in \mathbb{R}^{m_k d \times d}$. For each node $i$, we have
\begin{equation}
    \hat{\bm{O}}_i^{\text{refine}} = \mathcal{P}(\bm{\Phi}^{(k)}_i)
    \label{eq:refine_orth}
\end{equation}
as the refined orthogonal transform. Notably, under the probabilistic model in Section~\ref{sec:pre} where the pairwise transforms are noiseless within clusters, one can perfectly identify all the orthogonal transform $\{\bm{O}_i\}_{i = 1}^n$ by \eqref{eq:refine_orth}, as long as the cluster memberships are exactly recovered.\footnote{Here, we also assume the sub-graph that corresponds to each cluster is connected (a spanning tree exists). Otherwise, one can add an arbitrary global offset to a connected component without violating the observations.}
}

\subsection{Complexity}
\label{sec:complexity}
{\color{black}
We summarize the complexity of Algorithm~\ref{alg:spectral} in Table~\ref{tab:complexity}. Overall, the cost of Algorithm~\ref{alg:spectral} is largely dominated by the spectral decomposition step whose complexity depends on the sparsity of the network and is linear with the number of edges observed in the graph. The remaining steps of Algorithm~\ref{alg:spectral} all together is relatively efficient and scale linearly with $n$ and quadratically with $K$. As a result, when the data network $G$ is densely connected with $m = O(n^2)$ edges observed, Algorithm~\ref{alg:spectral} ends up with $O(n^2)$ complexity. While in practice, it is more common (see e.g.,~\cite{leskovec2008statistical}) to obtain a sparse network $G$ such as $m = O(n\log n)$ or $m = O(n)$, then Algorithm~\ref{alg:spectral} runs efficiently as the complexity reduces to $O(n\log n)$ or $O(n)$, respectively.

We also remark that in practice, the complexity of spectral decomposition can be further reduced from linearly scaling with $K$ i.e. $O(K)$ to $O(\log K)$, by using randomized algorithm described in, e.g.~\cite{halko2011finding, woolfe2008fast}. But notice that the resulting decomposition is an approximation and could lead to extra error of recovery (of the cluster memberships and the orthogonal transforms).
}

\aboverulesep=0ex
\belowrulesep=0ex
\renewcommand{\arraystretch}{1.25}
\begin{table}[t!]
\caption{The complexity of Algorithm~\ref{alg:spectral} in each step. We consider two cases where the network (graph) is densely or sparsely connected. For the case of a sparse network, $m$ denotes the number of edges observed in the graph.}
\centering
\begin{tabular}{p{0.02\textwidth}<{}| p{0.3\textwidth}<{} | p{0.25\textwidth}<{\centering} | p{0.25\textwidth}<{\centering}} 
\toprule
\multicolumn{2}{c|}{Steps} & A dense network &A sparse network \\ \midrule
1 &Spectral decomposition\tablefootnote{Note that we only need to compute the top $Kd$ eigenvectors, which is assumed to be done by Lanczos method~\cite{stewart2002krylov}.} &$O(Kd^3n^2)$ & $O(Kd^3m)$ \\ \midrule
2 &Blockwise CPQR &\multicolumn{2}{c}{$O(K^2d^2n + K^2d^3n)$}  \\ \midrule
\multirow{2}{*}{3} &Clustering by \eqref{eq:cluster} &\multicolumn{2}{c}{$O(Kd^2n)$}  \\ \cmidrule{2-4}
 &Synchronization by \eqref{eq:syncronization} &\multicolumn{2}{c}{$O(d^3n)$} \\ \midrule
\multicolumn{2}{c|}{Total complexity} & $O(Kd^3n^2 + K^2d^3n)$ & $O(Kd^3m + K^2d^3n)$ \\
\bottomrule
\end{tabular}
\label{tab:complexity}
\end{table}

	\section{Analysis}
\label{sec:analysis}

In this section, we investigate the performance of Algorithm~\ref{alg:spectral} under the probabilistic model described in Section~\ref{sec:pre}, by finding the condition that guarantees the clusters $\{C_k\}_{k = 1}^K$ are exactly recovered and the orthogonal transforms $\{\bm{O}_i\}_{i = 1}^n$ are estimated with bounded error. For simplicity, we focus on the case of two underlying clusters with equal cluster sizes, namely $K = 2$ and $m_1 = m_2 = m = n/2$.

To begin with, recall \eqref{eq:A_decompose} that $\bm{A} = \mathbb{E}[\bm{A}] + \bm{\Delta}$, Davis-Kahan theorem~\cite{davis1970rotation} (see Theorem~\ref{the:davis_kahan}) bounds the difference between the noisy eigenvectors $\bm{\Phi}$ of $\bm{A}$ and the clean ones $\bm{\Psi}$ of $\mathbb{E}[\bm{A}]$ in terms of the spectral norm as 
\begin{equation}
    \|\bm{\Phi} - \bm{\Psi}\bm{O}\| \lesssim \frac{\|\bm{\Delta}\bm{\Psi}\|}{\delta} \leq \frac{\|\bm{\Delta}\|\|\bm{\Psi}\|}{\delta},
    \label{eq:Davis_kahan}
\end{equation}
with a global orthogonal transformation $\bm{O} = \mathcal{P}(\bm{\Psi}^\top\bm{\Phi})$ and a certain spectral gap $\delta$~(see Theorem~\ref{the:davis_kahan} in Appendix~\ref{sec:tech_ingreidient} for the detail).
Then one may expect that exact recovery can be achieved as long as the error in \eqref{eq:Davis_kahan} is shown to be sufficiently small. However, even a tight bound of $\|\bm{\Phi} - \bm{\Psi}\bm{O}\|$ cannot guarantee exact recovery, since if a few blocks of $\bm{\Phi}$ have much larger error than others, then exact recovery of every cluster membership could still be failed. 
Therefore, a more appealing way to show exact recovery is to obtain a blockwise error bound between $\bm{\Phi}$ and $\bm{\Psi}$ as 
\begin{equation}
    \max_{1 \leq i\leq n} \|\bm{\Phi}_{i\cdot} - \bm{\Psi}_{i\cdot}\bm{O}\| \leq \epsilon
    \label{eq:block_wise}
\end{equation}
for some uniform $\epsilon$, where $\bm{\Phi}_{i\cdot}, \bm{\Psi}_{i\cdot} \in \mathbb{R}^{d \times Kd}$ denote the $i$-th block row of $\bm{\Phi}$ and $\bm{\Psi}$ respectively. As a result, the error on each node (block) is uniformly bounded and one can show that exact recovery is guaranteed as long as $\epsilon$ is sufficiently small. 

\subsection{Main theorems} Our first theoretical result provides a condition that guarantees~\eqref{eq:block_wise} is satisfied for a sufficiently small $\epsilon$, which further leads to the performance guarantee that Algorithm~\ref{alg:spectral} achieves exact recovery under such condition.

\vspace{0.05cm}
\begin{theorem}[Blockwise error bound]
Under the setting of two equal-sized clusters with model parameters $(n, p, q, d)$, for a sufficiently large $n$, suppose
    \begin{equation}
    \eta := \frac{\sqrt{(p(1-p)+q)\log (nd)}}{p\sqrt{n}} \leq c_0
    \label{eq:p_q_assumption}
\end{equation}
for some small constant $c_0 \geq 0$. Then with probability $1 - O(n^{-1})$,
\begin{equation}
    \max_{1 \leq i \leq n} \|\bm{\Phi}_{i\cdot} - \bm{\Psi}_{i\cdot}\bm{O}\| \lesssim \frac{\eta}{\sqrt{n}}
    \label{eq:bound_max_block}
\end{equation}
where $\bm{O} = \mathcal{P}(\bm{\Psi}^\top\bm{\Phi})$. 
\label{the:phi_psi_i}
\end{theorem}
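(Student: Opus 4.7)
The plan is to invoke the leave-one-out machinery of Abbe et al.\ adapted to block matrices by Ling, which the introduction already flags as the intended tool. The high-level strategy is to bootstrap from a crude spectral-norm bound on $\|\bm{\Phi}\bm{O}^\top - \bm{\Psi}\|$ (via Davis--Kahan) to the uniform blockwise bound~\eqref{eq:bound_max_block}, by exploiting, for each index $m\in\{1,\dots,n\}$, an auxiliary matrix $\bm{A}^{(m)}$ obtained by replacing the $m$-th block row and column of $\bm{A}$ by their expectations. Its top eigenvector matrix $\bm{\Phi}^{(m)}$ is then statistically independent of the $m$-th block row $\bm{\Delta}_{m\cdot}$ of the noise, and this is the sole property that lets us trade a global spectral error for a blockwise one.

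First I would record the eigenstructure of $\mathbb{E}[\bm{A}] = p\bm{A}^{\mathrm{clean}}$: its top-$Kd$ eigenspace is spanned by the columns of $\bm{\Psi}$ with eigenvalues of order $pn$, so the relevant spectral gap is $\delta \asymp pn$. A matrix-Bernstein inequality applied to $\bm{\Delta} = \bm{A} - \mathbb{E}[\bm{A}]$, whose independent blocks have variance parameter of order $p(1-p)+q$, yields $\|\bm{\Delta}\| \lesssim \sqrt{n(p(1-p)+q)\log(nd)}$ with probability $1-O(n^{-2})$, so Davis--Kahan gives the preliminary estimate $\|\bm{\Phi} - \bm{\Psi}\bm{O}\| \lesssim \eta$ for $\bm{O} = \mathcal{P}(\bm{\Psi}^\top\bm{\Phi})$. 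This global bound is off by a factor of $\sqrt{n}$ from the target and must be sharpened blockwise. To do so I would write the eigen-identity $\bm{\Phi}_{m\cdot}\bm{\Lambda} = \bm{A}_{m\cdot}\bm{\Phi}$, subtract the analogous identity for $\bm{A}^{(m)}$, and decompose the error $\bm{\Phi}_{m\cdot}\bm{O}^\top - \bm{\Psi}_{m\cdot}$ into three pieces: (a) a deterministic piece driven by $\mathbb{E}[\bm{A}]_{m\cdot}(\bm{\Phi}\bm{O}^\top - \bm{\Psi})$ and bounded by the global Davis--Kahan error, (b) a coupling piece $\bm{\Delta}_{m\cdot}(\bm{\Phi} - \bm{\Phi}^{(m)}\bm{O}^{(m)}\bm{O}^\top)$ controlled via $\|\bm{A}-\bm{A}^{(m)}\| \lesssim \|\bm{\Delta}_{m\cdot}\|$, and (c) the decisive piece $\bm{\Delta}_{m\cdot}\bm{\Phi}^{(m)}$, in which the two factors are independent.

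The payoff sits in piece (c): since $\bm{\Phi}^{(m)}$ has orthonormal columns and is independent of $\bm{\Delta}_{m\cdot}$, conditioning on $\bm{\Phi}^{(m)}$ turns $\bm{\Delta}_{m\cdot}\bm{\Phi}^{(m)}$ into a sum of $n$ independent centered $d\times Kd$ blocks whose total variance parameter is of order $p(1-p)+q$, so Bernstein delivers $\|\bm{\Delta}_{m\cdot}\bm{\Phi}^{(m)}\| \lesssim \sqrt{(p(1-p)+q)\log(nd)}$; dividing by $\delta \asymp pn$ produces precisely the desired $\eta/\sqrt{n}$. Pieces (a) and (b) are absorbed at the same order by combining the global estimate $\|\bm{\Phi} - \bm{\Psi}\bm{O}\| \lesssim \eta$ with the blockwise noise bound $\|\bm{\Delta}_{m\cdot}\| \lesssim \sqrt{(p(1-p)+q)\log(nd)}$ (again by Bernstein, now on an $(nd)\times d$ matrix), provided $\eta \leq c_0$ with $c_0$ small enough to close the bootstrap. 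Union-bounding over $m=1,\dots,n$ and the associated concentration events yields the final $1-O(n^{-1})$ probability.

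The main obstacle is the circular dependence between $\bm{\Phi}$ and the $m$-th block of $\bm{\Delta}$, which is exactly what the leave-one-out construction $\bm{\Phi}^{(m)}$ is designed to break; a related subtlety is keeping the three alignment matrices $\bm{O}$, $\bm{O}^{(m)}$, and $\mathcal{P}((\bm{\Phi}^{(m)})^\top\bm{\Phi})$ consistent through the telescoping triangle inequalities so that none of the absorbed terms picks up an unwanted constant that would destroy the contraction. Working throughout with polar factors, together with a Wedin-type bound of the form $\|\bm{\Phi} - \bm{\Phi}^{(m)}\bm{O}^{(m)}\bm{O}^\top\| \lesssim \|\bm{\Delta}_{m\cdot}\|/\delta$, keeps these residuals at the right order and lets the bootstrap close under the stated threshold on $\eta$.
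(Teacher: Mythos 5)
Your skeleton---the leave-one-out matrices $\bm{A}^{(m)}$, the eigen-identity $\bm{\Phi}=\bm{A}\bm{\Phi}\bm{\Lambda}^{-1}$, and a three-term split isolating the independent product $\bm{\Delta}_{m\cdot}\bm{\Phi}^{(m)}$---is the same as the paper's, but two of your quantitative claims are wrong, and they sit exactly where the real work is. First, $\|\bm{\Delta}_{m\cdot}\|\lesssim\sqrt{(p(1-p)+q)\log(nd)}$ is false: the block row has $n$ independent blocks each with second moment of order $p(1-p)+q$, so $\|\bm{\Delta}_{m\cdot}\bm{\Delta}_{m\cdot}^{\top}\|$ concentrates around $\tfrac{n}{2}(p(1-p)+q)$ and the correct order is $\|\bm{\Delta}_{m\cdot}\|\asymp\sqrt{(p(1-p)+q)\,n}$ (Lemma~\ref{lemma:delta_norm_2}). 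With the correct value, your piece (b), bounded via the crude Wedin estimate by $\|\bm{\Delta}_{m\cdot}\|\cdot(\|\bm{\Delta}_{m\cdot}\|/\delta)\cdot\|\bm{\Lambda}^{-1}\|\asymp (p(1-p)+q)n/(pn)^{2}=\eta^{2}/\log(nd)$, exceeds the target $\eta/\sqrt{n}$ by a factor of order $\sqrt{n}/\log n$ in the regime $p\asymp\log n/n$, $q=0$. The fix is to run the independence trick a second time \emph{inside} the Wedin bound: since $\bm{A}-\bm{A}^{(m)}$ is supported on one block row and column, $\|(\bm{A}-\bm{A}^{(m)})\bm{\Phi}^{(m)}\|\leq\|\bm{\Delta}_{m\cdot}\bm{\Phi}^{(m)}\|+\|\bm{\Delta}_{m\cdot}\|\,\|\bm{\Phi}^{(m)}_{m\cdot}\|$, which yields the sharper $\|\bm{\Phi}-\bm{\Phi}^{(m)}\bm{O}^{(m)}\|\lesssim\eta\max_{j}\|\bm{\Phi}^{(m)}_{j\cdot}\|$ (Lemma~\ref{lemma:Phi_Phi_i}); only then does piece (b) close.

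Second, both that refinement and your piece (c) need the delocalization bound $\max_{j}\|\bm{\Phi}^{(m)}_{j\cdot}\|\lesssim\max_{j}\|\bm{\Phi}_{j\cdot}\|\lesssim 1/\sqrt{n}$, which you never establish and which does not follow from orthonormality of the columns. It matters even in your Bernstein step for $\bm{\Delta}_{m\cdot}\bm{\Phi}^{(m)}$: the variance proxy is indeed $O(p(1-p)+q)$ by orthonormality, but the boundedness parameter is $L=\max_{j}\|\bm{\Phi}^{(m)}_{j\cdot}\|$, and the resulting $L\log(nd)$ term swamps $\sqrt{(p(1-p)+q)\log(nd)}$ unless $L\lesssim 1/\sqrt{n}$ (given $p(1-p)+q=\Omega(\log n/n)$, the two are then comparable). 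The paper obtains this a priori bound through a separate bootstrap that exploits the block-constant structure of $\bm{\Psi}$ within each cluster: it first shows $\|\bm{\Phi}_{i\cdot}-\bm{\Phi}_{j\cdot}\|\lesssim\eta\max_{j}\|\bm{\Phi}_{j\cdot}\|$ for same-cluster pairs (Lemma~\ref{the:phi_i_phi_j}), then stacks $m$ copies of a single block row into a matrix whose spectral norm is simultaneously $\geq\sqrt{m}\,\|\bm{\Phi}_{j'\cdot}\|$ and close to $\|\bm{\Phi}\|=1$, forcing $\max_{j}\|\bm{\Phi}_{j\cdot}\|=O(1/\sqrt{n})$ when $\eta\leq c_{0}$ (Lemma~\ref{lemma:upper_bound_phi_j}); the same stacking device (the $\bm{N}_{C_1}$ sandwich) is what converts everything into the final blockwise bound. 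Some self-consistent argument of this kind is indispensable and is the main content missing from your proposal.
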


\begin{theorem}[Performance guarantee]
Under the assumption of Theorem~\ref{the:phi_psi_i}, for $i = 1,\ldots, n$, with probability $1 - O(n^{-1})$, Algorithm~\ref{alg:spectral} exactly recovers the cluster memberships $\kappa(i)$ defined in Section~\ref{sec:pre}, and $\hat{\bm{O}}_i$ satisfies
\begin{equation}
    \|\hat{\bm{O}}_i - \bm{O}_i\bar{\bm{O}}_{\kappa(i)}\| \lesssim \eta
    \label{eq:sync_bounded}
\end{equation}
where $\bar{\bm{O}}_{\kappa(i)}$ is orthogonal and only depends on the cluster that $i$ belongs to.
\label{the:cond}
\end{theorem}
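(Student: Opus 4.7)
\emph{Proof plan.} The starting point is Theorem~\ref{the:phi_psi_i}, which provides the uniform blockwise bound $\|\bm{\Phi}_{i\cdot}-\bm{\Psi}_{i\cdot}\bm{O}\| \lesssim \eta/\sqrt{n}$, and hence $\|\bm{\Phi}_{i\cdot}-\bm{\Psi}_{i\cdot}\bm{O}\|_\mathrm{F}\lesssim \sqrt{d}\,\eta/\sqrt{n}$ for each $i$. Using $\bm{R}=\bm{Q}^\top\bm{\Phi}^\top$ (since $\bm{Q}$ is orthogonal) and introducing the auxiliary orthogonal matrix $\bm{\Sigma}:=\bm{Q}^\top\bm{O}^\top\in\mO(Kd)$, one obtains the key blockwise identity
\begin{equation*}
\bm{R}_{ki} \;=\; \tfrac{1}{\sqrt{m}}\,\bm{\Sigma}_{k,\kappa(i)}\,\bm{O}_i^\top + \tilde{\bm{E}}_{ki}, \qquad \|\tilde{\bm{E}}_{\cdot i}\|_\mathrm{F}\lesssim \sqrt{d}\,\eta/\sqrt{n},
\end{equation*}
which uses only the sparse block structure of $\bm{\Psi}^\top_{\cdot i}$. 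The problem is thereby reduced to understanding the block structure of $\bm{\Sigma}$.

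The next, and in my view most delicate, step is to show that the two pivots $p_1,p_2$ chosen in Algorithm~\ref{alg:QR_blockwise} belong to different clusters. I would argue by induction on the pivot index: assuming $\kappa(p_1),\dots,\kappa(p_{j-1})$ are distinct, the span of the first $(j-1)d$ columns of $\bm{Q}$ agrees with $\bm{O}^\top\cdot\mathrm{span}(\bm{\Psi}^\top_{\cdot p_1},\dots,\bm{\Psi}^\top_{\cdot p_{j-1}})$ up to a subspace perturbation of size $O(\eta/\sqrt{n})$ by the blockwise bound. Consequently the residual norm $\rho_i$ computed in line~3 of Algorithm~\ref{alg:QR_blockwise} satisfies $\rho_i\lesssim \eta/\sqrt{n}$ for any $i$ with $\kappa(i)\in\{\kappa(p_1),\dots,\kappa(p_{j-1})\}$, while $\rho_i\gtrsim \sqrt{d/n}$ for any $i$ from a fresh cluster. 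For $\eta\le c_0$ with $c_0$ sufficiently small, the greedy maximizer must pick $p_j$ from a new cluster; hence $\{\kappa(p_1),\kappa(p_2)\}=\{1,2\}$.

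Once the pivots are known to land in distinct clusters, the upper-triangular structure of $\bm{R}_1$ yields $\bm{R}_{k,p_j}=\bm{0}$ for $k>j$, and the identity above then forces $\|\bm{\Sigma}_{k,\kappa(p_j)}\|_\mathrm{F}\lesssim \eta$ for $k>j$. Combining this with the orthogonality of $\bm{\Sigma}$ (each block row and column has squared Frobenius norm $d$) shows that $\bm{\Sigma}$ is within $O(\eta)$ of a block-diagonal matrix in the column ordering induced by $\pi(j):=\kappa(p_j)$, with each diagonal block close to an orthogonal $d\times d$ matrix. Substituting back, for every $i$ and $\hat k:=\pi^{-1}(\kappa(i))$, one obtains $\|\bm{R}_{\hat k,i}\|_\mathrm{F}\ge \sqrt{d/m}-O(\eta/\sqrt{n})$ and $\|\bm{R}_{k,i}\|_\mathrm{F}\lesssim \eta/\sqrt{n}$ for $k\neq \hat k$. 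Since $\eta\le c_0\ll 1$, the argmax in~\eqref{eq:cluster} returns $\hat\kappa(i)=\hat k$, which is exact cluster recovery up to the (trivial) label permutation $\pi$.

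For the orthogonal transforms, I would invoke a standard Lipschitz bound for the polar factor, such as $\|\mathcal{P}(A)-\mathcal{P}(B)\|\le 2\|A-B\|/(\sigma_{\min}(A)+\sigma_{\min}(B))$. Defining $\bar{\bm{O}}_{\kappa(i)}:=\mathcal{P}(\bm{\Sigma}_{\hat k,\kappa(i)})^\top$, which is orthogonal and depends only on the cluster of $i$, the leading term $\tfrac{1}{\sqrt{m}}\bm{\Sigma}_{\hat k,\kappa(i)}\bm{O}_i^\top$ of $\bm{R}_{\hat k,i}$ has smallest singular value $\gtrsim 1/\sqrt{m}$ while the perturbation is $O(\sqrt{d}\,\eta/\sqrt{n})$, so the Lipschitz bound produces $\|\hat{\bm{O}}_i-\bm{O}_i\bar{\bm{O}}_{\kappa(i)}\|\lesssim \sqrt{m}\cdot\eta/\sqrt{n}\lesssim \eta$ using $m=n/2$. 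The main obstacle is the pivot-selection analysis in the second paragraph: without proving that pivots fall in distinct clusters, one cannot pin down the near-block-diagonal structure of $\bm{\Sigma}$ and the rest of the argument collapses; everything following is a deterministic perturbation computation built on Theorem~\ref{the:phi_psi_i}.
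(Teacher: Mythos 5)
Your argument is correct and reaches the theorem from the same probabilistic inputs as the paper --- Theorem~\ref{the:phi_psi_i} for the uniform blockwise bound and a Lipschitz estimate for the polar factor (the paper's Lemma~\ref{lemma:PX_PY}) --- but the deterministic bookkeeping follows a genuinely different route. The paper exploits $K=2$ head-on: since $\bm{Q}$ is orthogonal, $\|\bm{R}_{1j}\|_\mathrm{F}^2+\|\bm{R}_{2j}\|_\mathrm{F}^2=\|\bm{\Phi}_{j\cdot}\|_\mathrm{F}^2$ with $\|\bm{R}_{1j}\|_\mathrm{F}=\|\mathcal{P}(\bm{\Phi}_{p_1\cdot})(\bm{\Phi}_{j\cdot})^\top\|_\mathrm{F}$, so the assignment rule collapses to the single projection inequality \eqref{eq:assign_cluster_equal_main} involving only the first pivot; the second pivot is never located, and $\bm{Q}_{\cdot 2}$ is handled purely as the orthogonal complement of $\bm{Q}_{\cdot 1}$ (Lemma~\ref{lemma:Q_2}). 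You instead pass to the orthogonal matrix $\bm{\Sigma}=\bm{Q}^\top\bm{O}^\top$, prove pivot separation by induction, and deduce near-block-diagonality of $\bm{\Sigma}$ from the triangularity of $\bm{R}_1$ combined with orthogonality; the synchronization bound then follows, as in the paper, from the polar-factor perturbation bound with $\sigma_{\min}\gtrsim 1/\sqrt{m}$ absorbing the $\sqrt{n}$ factor. Your route pays for an extra lemma (pivot separation) that the paper avoids, but it is the formulation that scales to $K>2$, which the paper explicitly leaves as future work. Two refinements: for $K=2$ the single constraint $\bm{R}_{2,p_1}=\bm{0}$ together with the orthogonality of $\bm{\Sigma}$ already forces the near-block-diagonal structure, so the pivot-separation step is not actually load-bearing here and your claim that the argument ``collapses'' without it applies only to $K>2$; and the subspace (sin-theta) distance between $\mathrm{span}(\bm{Q}_{\cdot 1})$ and $\bm{O}^\top\mathcal{R}((\bm{\Psi}_{p_1\cdot})^\top)$ is $O(\eta)$ rather than $O(\eta/\sqrt{n})$, because one divides by $\sigma_{\min}(\bm{\Phi}_{p_1\cdot})\asymp 1/\sqrt{n}$ --- the residual estimates $\rho_i$ you draw from it are nevertheless correct as stated.
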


As we can see, \eqref{eq:p_q_assumption} is the condition that 
leads to exact recovery of the cluster memberships, and \eqref{eq:sync_bounded} guarantees the estimation error of $\hat{\bm{O}}_i$ for each node $i$ is uniformly bounded. Also, by letting $q = 0$ and $d = O(1)$, \eqref{eq:p_q_assumption} reduces to
\begin{equation}
    \frac{p}{1-p} \gtrsim \frac{\log n}{n}.
    \label{eq:cond_rewritten}
\end{equation}
As a result, \eqref{eq:cond_rewritten} implies that \eqref{eq:p_q_assumption} holds and exact recovery is possible if $p \gtrsim \log n/n$. Notably, such threshold lies in the same regime by using the SDPs studied in~\cite{fan2021joint}, which indicates that the spectral method yields competitive results against SDP. 

In the following, we outline the key steps for proving Theorems~\ref{the:phi_psi_i} and \ref{the:cond}. The complete proof is deferred to Appendix~\ref{sec:proof_main_theorems}.
Our proof follows from two important ingredients, one is the \textit{leave-one-out} technique presented in \cite{abbe2020entrywise, zhong2018near}
which enables us to have a tight, entrywise analysis on the eigenvectors of low-rank matrices; the second one is \cite{ling2020near} which extends the entrywise analysis to a blockwise error bound for studying group synchronization. Here, our main contribution lies in handling the difficulty introduced by the combination of community structures and orthogonal group elements, as well as the QR-factorization for clustering. 

\subsection{The sketch of proof}
\label{sec:proof_sketch}
Throughout the analysis we assume the first $m = n/2$ nodes belong to  $C_1$ and the remaining $m$ nodes belong to $C_2$. Without loss of generality, we assume $\bm{O}_i = \bm{I}_d, \; i = 1, \ldots, n$. For brevity, we use ``w.h.p" (with high probability) to represent ``with probability $1 - O(n^{-1})$". 

\vspace{0.1cm}
\paragraph{$\bullet$ Step 1: Bound $\|\bm{\Phi}_{i\cdot} - \bm{\Phi}_{j\cdot}\|$} We first bound the distance $\|\bm{\Phi}_{i\cdot} - \bm{\Phi}_{j\cdot}\|$ for any pair of nodes within the same cluster. The keypoint lies in finding a suitable surrogate of $\bm{\Phi}$ such that the difference between $\bm{\Phi}$ and its surrogate can be tightly bounded. To this end, let $\bm{\Lambda} \in \mathbb{R}^{2d \times 2d}$ be a diagonal matrix that contains the top $2d$ eigenvalues of $\bm{A}$ as diagonal entries, then we have $\bm{\Phi} = \bm{A}\bm{\Phi}\bm{\Lambda}^{-1}$. To proceed, inspired by \cite{ling2020near, abbe2020entrywise}, let us define the following function of $\bm{\Phi}$:
\begin{equation}
    f(\bm{\Phi}) := \bm{A}\bm{\Phi}\bm{\Lambda}^{-1},
    \label{eq:eigen_decomp}
\end{equation}
then $\bm{\Phi}$ is a fixed point of $f(\cdot)$ such that $f(\bm{\Phi}) = \bm{\Phi}$. Since \eqref{eq:Davis_kahan} indicates that $\bm{\Phi}$ is close to $\bm{\Psi}\bm{O}$ with $\bm{O} = \mathcal{P}(\bm{\Psi}^\top\bm{\Phi})$, then we hope the following choice of surrogate
\begin{equation}
    f(\bm{\Psi}\bm{O}) = \bm{A}\bm{\Psi}\bm{O}\bm{\Lambda}^{-1}
    \label{eq:def_f}
\end{equation}
serves as a good estimation of $\bm{\Phi}$ \textit{blockwisely} such that the block error  $\|\bm{\Phi}_{i\cdot} - f(\bm{\Psi}\bm{O})_{i\cdot}\|$ are uniformly bounded for any $i$.
If this holds, we can further bound $\|\bm{\Phi}_{i\cdot} - \bm{\Phi}_{j\cdot}\|$ as 
\begin{equation}
\begin{aligned}
        \|\bm{\Phi}_{i\cdot} - \bm{\Phi}_{j\cdot}\| &= \|\bm{\Phi}_{i\cdot} - f(\bm{\Psi}\bm{O})_{i\cdot} - (\bm{\Phi}_{j\cdot} - f(\bm{\Psi}\bm{O})_{j\cdot}) + f(\bm{\Psi}\bm{O})_{i\cdot} - f(\bm{\Psi}\bm{O})_{j\cdot}\| \\
        &\leq \|\bm{\Phi}_{i\cdot} - f(\bm{\Psi}\bm{O})_{i\cdot}\| + \|\bm{\Phi}_{j\cdot} - f(\bm{\Psi}\bm{O})_{j\cdot}\| + \|f(\bm{\Psi}\bm{O})_{i\cdot} - f(\bm{\Psi}\bm{O})_{j\cdot}\|.
\end{aligned}
\label{eq:bound_phi_i_phi_j_main}
\end{equation}

To bound $\|\bm{\Phi}_{i\cdot} - f(\bm{\Psi}\bm{O})_{i\cdot}\|$, by definition it satisfies 
\begin{align}
    \|\bm{\Phi}_{i\cdot} - f(\bm{\Psi}\bm{O})_{i\cdot}\| &= \|[\bm{\Phi} - \bm{A}\bm{\Psi}\bm{O}\bm{\Lambda}^{-1}]_{i \cdot}\| = \|[\bm{A}(\bm{\Phi} - \bm{\Psi}\bm{O})\bm{\Lambda}^{-1}]_{i \cdot}\| \nonumber \\ &\leq \|\bm{\Lambda}^{-1}\|\|[\bm{A}(\bm{\Phi} - \bm{\Psi}\bm{O})]_{i \cdot}\|
    = \|\bm{\Lambda}^{-1}\|\|[(\mathbb{E}[\bm{A}] + \bm{\Delta})(\bm{\Phi}- \bm{\Psi}\bm{O})]_{i \cdot}\| \nonumber \\
    &\leq \|\bm{\Lambda}^{-1}\|\left(\|[\mathbb{E}[\bm{A}]]_{i \cdot}(\bm{\Phi} - \bm{\Psi}\bm{O})\|+ \|\bm{\Delta}_{i \cdot}(\bm{\Phi} - \bm{\Psi}\bm{O})\| \right) \label{eq:epsilon_1_decomp_main}
\end{align}
Here, $\|\bm{\Lambda}^{-1}\|$ and $\|[\mathbb{E}[\bm{A}]]_{i\cdot}(\bm{\Phi} - \bm{\Psi}\bm{O})\|$ are tightly bounded by Weyl's inequality (see Theorem~\ref{the:weyl}) and Davis-Kahan theorem (see Theorem~\ref{the:davis_kahan}), respectively. For $\|\bm{\Delta}_{i \cdot}(\bm{\Phi} - \bm{\Psi}\bm{O})\|$, it is natural to consider applying concentration inequalities for getting a tight bound since $\bm{\Delta}_i$ consists of independent noisy blocks. However, this is impossible since $\bm{\Delta}_{i \cdot}$ and $\bm{\Phi} - \bm{\Psi}\bm{O}$ are not statistically independent, which only allows Cauchy-Schwarz inequality such that $\|\bm{\Delta}_{i \cdot}(\bm{\Phi} - \bm{\Psi}\bm{O})\| \leq \|\bm{\Delta}_{i \cdot}\|\|\bm{\Phi} - \bm{\Psi}\bm{O}\|$ that is bounded loosely. To resolve this, we resort to the \textit{leave-one-out} technique introduced in~\cite{zhong2018near, abbe2020entrywise, ling2020near}, the idea is to define an auxiliary matrix $\bm{A}^{(i)}$ that ``leaves out'' the $i$-th block row and column as
\begin{equation}
    \bm{A}^{(i)} := \mathbb{E}[\bm{A}] + \bm{\Delta}^{(i)}, \quad \bm{\Delta}^{(i)}_{kl} := 
    \begin{cases}
    \bm{\Delta}_{kl}, & \text{ if  $k \neq i$ and  $l \neq i$}, \\
    \bm{0}, & \text{ if $k = i$ or $l = i$}.
    \end{cases}
    \label{eq:A_i_definition}
\end{equation}
In other words, $\bm{A}^{(i)}$ only differs from $\bm{A}$ on its $i$-th block row and $i$-th block column. Because of this tiny difference, the noise perturbation $\bm{\Delta}_{i \cdot}$ is not included in $\bm{A}^{(i)}$ and thus $\bm{\Delta}_{i \cdot}$ is independent of $\bm{\Phi}^{(i)}$, which denotes the top $Kd$ eigenvectors of $\bm{A}^{(i)}$. This enables us to tightly bound $\|\bm{\Delta}_{i \cdot}(\bm{\Phi} - \bm{\Psi}\bm{O})\|$ by replacing $\bm{\Phi}$ with $\bm{\Phi}^{(i)}$ and applying concentration inequalities. To this end, by defining
\begin{equation*}
    \begin{aligned}
    \bm{O}^{(i)} := \mathcal{P}((\bm{\Phi}^{(i)})^\top\bm{\Phi}), \quad 
    \bm{S}^{(i)} := \mathcal{P}(\bm{\Psi}^{\top}\bm{\Phi}^{(i)}),
    \end{aligned}
\end{equation*}
then $\|\bm{\Delta}_{i \cdot}(\bm{\Phi} - \bm{\Psi}\bm{O})\|$ can be decomposed as
\begin{align}
    \|\bm{\Delta}_{i \cdot}(\bm{\Phi} - \bm{\Psi}\bm{O})\| &= \|\bm{\Delta}_{i \cdot}(\bm{\Phi} - \bm{\Phi}^{(i)}\bm{O}^{(i)} + \bm{\Phi}^{(i)}\bm{O}^{(i)} - \bm{\Psi}\bm{S}^{(i)}\bm{O}^{(i)} + \bm{\Psi}\bm{S}^{(i)}\bm{O}^{(i)} - \bm{\Psi}\bm{O})\| \nonumber \\
    &\leq \underbrace{\|\bm{\Delta}_{i \cdot}(\bm{\Phi} \!-\! \bm{\Phi}^{(i)}\bm{O}^{(i)})\|}_{=: T_1} \!+\! \underbrace{\|\bm{\Delta}_{i \cdot}(\bm{\Phi}^{(i)} \!-\! \bm{\Psi}\bm{S}^{(i)})\|}_{=: T_2} \!+\! \underbrace{\|\bm{\Delta}_{i \cdot}\bm{\Psi}(\bm{S}^{(i)}\bm{O}^{(i)} \!-\! \bm{O})\|}_{=: T_3}    \label{eq:bound_T_1_T_2_T_3}\\
    &= T_1 + T_2 + T_3. \nonumber 
\end{align}
{\color{black} 
Then, under the condition \eqref{eq:p_q_assumption}, we bound $T_1$, $T_2$ and $T_3$ separately with details given in Appendix~\ref{sec:proof_the_1}, where particularly $T_2$ is bounded by matrix Bernstein inequality~\cite[Theorem 1.6.2]{tropp2015introduction}. Here, we remark that the condition \eqref{eq:p_q_assumption} is necessary for bounding $T_1, T_2$, and $T_3$, as $\eta $ should be sufficiently small (i.e. $\eta \leq c_0$ for some small $c_0$) for obtaining the following intermediate result (see \eqref{eq:phi_j_bound} in the proof of Lemma~\ref{lemma:Phi_Phi_i} for details):
\begin{equation*}
    \max_j\|\bm{\Phi}_{j \cdot}^{(i)}\| \lesssim \max_j\|\bm{\Phi}_{j \cdot}\|
    \label{eq:phi_j_bound_main}
\end{equation*}
which implies that after leave-one-out, the norm of the block row of $\bm{\Phi}^{(i)}$ is universally (at most) at the same order as the original $\bm{\Phi}$. Then, \eqref{eq:phi_j_bound_main} gives the bounds on $\{T_i\}_{i = 1}^3$, and further leads to the following bound on $\|\bm{\Phi}_{i\cdot} - f(\bm{\Psi}\bm{O})_{i\cdot}\|$:}
\begin{lemma}
Under the condition \eqref{eq:p_q_assumption}, for all $i = 1, \ldots, n$,
\begin{equation*}
    \|\bm{\Phi}_{i\cdot} - f(\bm{\Psi}\bm{O})_{i\cdot}\| \lesssim \eta\max_j \|\bm{\Phi}_{j\cdot}\|
\end{equation*}
with probability  $1 - O(n^{-1})$.
\label{the:Phi_f_Phi}
\end{lemma}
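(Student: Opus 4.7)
The plan is to execute the decomposition already laid out in \eqref{eq:epsilon_1_decomp_main} and \eqref{eq:bound_T_1_T_2_T_3}, which reduces the lemma to bounding five ingredients: $\|\bm{\Lambda}^{-1}\|$, $\|[\mathbb{E}[\bm{A}]]_{i\cdot}(\bm{\Phi} - \bm{\Psi}\bm{O})\|$, and the three leave-one-out pieces $T_1, T_2, T_3$. The target scale $\eta \max_j \|\bm{\Phi}_{j\cdot}\|$ should emerge after multiplying the bracketed quantities in \eqref{eq:epsilon_1_decomp_main} by the factor $\|\bm{\Lambda}^{-1}\| \lesssim 1/(pn)$ and absorbing the noise concentration $\|\bm{\Delta}\| \lesssim \sqrt{n(p(1-p)+q)\log(nd)}$ that produces $\eta$.

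For the two ``easy'' pieces, I would first control $\|\bm{\Lambda}^{-1}\|$ by Weyl's inequality: the top $2d$ eigenvalues of $\mathbb{E}[\bm{A}] = p\bm{A}^{\text{clean}}$ all equal $pn/2$, so provided the standard spectral-norm concentration for $\bm{\Delta}$ (matrix Bernstein on independent $d \times d$ blocks) combined with assumption \eqref{eq:p_q_assumption} forces $\|\bm{\Delta}\| \ll pn$, one obtains $\|\bm{\Lambda}^{-1}\| \lesssim (pn)^{-1}$. For $\|[\mathbb{E}[\bm{A}]]_{i\cdot}(\bm{\Phi} - \bm{\Psi}\bm{O})\|$, I would factor out $p$ from $\mathbb{E}[\bm{A}]$, note that the $i$-th block row of $\bm{A}^{\text{clean}}$ has at most $m = n/2$ nonzero orthogonal blocks and hence operator norm $O(\sqrt{n})$, and apply Davis--Kahan \eqref{eq:Davis_kahan} to bound $\|\bm{\Phi} - \bm{\Psi}\bm{O}\|$ by $\|\bm{\Delta}\|/(pn)$; multiplied through by $\|\bm{\Lambda}^{-1}\|$ this produces a contribution of order $\eta/\sqrt{n}$, matching the target since $\max_j \|\bm{\Phi}_{j\cdot}\| \gtrsim 1/\sqrt{n}$ in the regime considered.

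The real work lies in the three leave-one-out terms. $T_2 = \|\bm{\Delta}_{i\cdot}(\bm{\Phi}^{(i)} - \bm{\Psi}\bm{S}^{(i)})\|$ is the cleanest: by construction \eqref{eq:A_i_definition}, $\bm{\Delta}_{i\cdot}$ and $\bm{\Phi}^{(i)}$ are independent, so conditioned on $\bm{\Phi}^{(i)}$ I would apply matrix Bernstein to $\sum_{j \neq i} \bm{\Delta}_{ij}[\bm{\Phi}^{(i)} - \bm{\Psi}\bm{S}^{(i)}]_{j\cdot}$, converting a Frobenius-norm control on $\bm{\Phi}^{(i)} - \bm{\Psi}\bm{S}^{(i)}$ (again Davis--Kahan, now for $\bm{A}^{(i)}$) into a blockwise bound at the price of $\sqrt{\log n}$. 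For $T_1 = \|\bm{\Delta}_{i\cdot}(\bm{\Phi} - \bm{\Phi}^{(i)}\bm{O}^{(i)})\|$, I would use Cauchy--Schwarz with $\|\bm{\Delta}_{i\cdot}\| \lesssim \sqrt{n(p(1-p)+q)}$ (concentration on a single block row) and control $\|\bm{\Phi} - \bm{\Phi}^{(i)}\bm{O}^{(i)}\|_\mathrm{F}$ by Davis--Kahan applied to the pair $(\bm{A}, \bm{A}^{(i)})$: since $\bm{A} - \bm{A}^{(i)}$ is supported only on the $i$-th block row and column, $\|(\bm{A} - \bm{A}^{(i)})\bm{\Phi}^{(i)}\|$ is proportional to $\max_j \|\bm{\Phi}_{j\cdot}^{(i)}\|$. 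Finally, $T_3 = \|\bm{\Delta}_{i\cdot}\bm{\Psi}(\bm{S}^{(i)}\bm{O}^{(i)} - \bm{O})\|$ exploits the independence of $\bm{\Delta}_{i\cdot}$ from the deterministic $\bm{\Psi}$ (after the reduction $\bm{O}_i = \bm{I}_d$) via matrix Bernstein on $\bm{\Delta}_{i\cdot}\bm{\Psi}$, while $\|\bm{S}^{(i)}\bm{O}^{(i)} - \bm{O}\|$ is small by a polar-factor perturbation argument chaining $\bm{\Phi}$ through $\bm{\Phi}^{(i)}$ back to $\bm{\Psi}$.

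The main obstacle, and the spot where the hypothesis $\eta \leq c_0$ is crucially used, is the self-referential nature of these bounds: each $T_k$ estimate involves $\max_j \|\bm{\Phi}_{j\cdot}^{(i)}\|$, whereas the target on the right-hand side involves $\max_j \|\bm{\Phi}_{j\cdot}\|$. I would close this loop through the bootstrap inequality $\max_j \|\bm{\Phi}_{j \cdot}^{(i)}\| \lesssim \max_j \|\bm{\Phi}_{j \cdot}\|$ highlighted in the excerpt, which only closes when the cumulative multiplicative constant aggregated from $T_1, T_2, T_3$ is strictly less than one; this is precisely what taking $c_0$ small in \eqref{eq:p_q_assumption} ensures. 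After a union bound over $i = 1, \ldots, n$ (already accounted for by the $\log(nd)$ inside $\eta$) to make every concentration step hold simultaneously with probability $1 - O(n^{-1})$, the five ingredients combine through \eqref{eq:epsilon_1_decomp_main} and \eqref{eq:bound_T_1_T_2_T_3} to yield the claimed inequality $\|\bm{\Phi}_{i\cdot} - f(\bm{\Psi}\bm{O})_{i\cdot}\| \lesssim \eta \max_j \|\bm{\Phi}_{j\cdot}\|$.
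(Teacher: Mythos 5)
Your overall architecture is exactly the paper's: the same decomposition \eqref{eq:epsilon_1_decomp_main}--\eqref{eq:bound_T_1_T_2_T_3}, the bounds $\|\bm{\Lambda}^{-1}\| \lesssim (pn)^{-1}$ via Weyl, the Davis--Kahan control of $\|\bm{\Phi}-\bm{\Psi}\bm{O}\|$ for the $\mathbb{E}[\bm{A}]$ term, Cauchy--Schwarz plus a Davis--Kahan argument on the pair $(\bm{A},\bm{A}^{(i)})$ for $T_1$, a polar-factor perturbation chain for $T_3$, and the bootstrap $\max_j\|\bm{\Phi}^{(i)}_{j\cdot}\| \lesssim \max_j\|\bm{\Phi}_{j\cdot}\|$ requiring $\eta \le c_0$. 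All of that matches Lemmas~\ref{lemma:delta_norm_2}, \ref{lemma:Delta_M}, \ref{lemma:Psi_Phi}, \ref{lemma:Phi_Phi_i}, and \ref{lemma:S_i_O_i_O}.

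The one step that would fail as written is your $T_2$: you propose feeding a \emph{Frobenius-norm} (equivalently operator-norm) control on $\bm{M} := \bm{\Phi}^{(i)} - \bm{\Psi}\bm{S}^{(i)}$, obtained from Davis--Kahan, into matrix Bernstein. Bernstein's sub-exponential tail requires a uniform bound $L \ge \max_j\|\bm{\Delta}_{ij}\bm{M}_{j\cdot}\|$, i.e.\ a bound on $\max_j\|\bm{M}_{j\cdot}\|$, and Davis--Kahan only gives $\max_j\|\bm{M}_{j\cdot}\| \le \|\bm{M}\| \lesssim \tau \asymp \eta/\sqrt{\log(nd)}$. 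In the regime of interest ($p \asymp \log n/n$) this is larger than the quantity actually needed, $\max_j\|\bm{\Phi}_{j\cdot}\| \asymp 1/\sqrt{n}$, by a factor of order $\sqrt{n/\log n}$; the resulting $L\log(nd)$ term then contributes $\tau\log(nd)/(pn)$ after multiplying by $\|\bm{\Lambda}^{-1}\|$, which exceeds the target $\eta/\sqrt{n}$ unless $p \gtrsim \sqrt{\log(nd)/n}$ --- far stronger than \eqref{eq:p_q_assumption}. The paper avoids this by bounding $\max_j\|\bm{\Phi}^{(i)}_{j\cdot} - \bm{\Psi}_{j\cdot}\bm{S}^{(i)}\|$ with the trivial triangle inequality, $\max_j\|\bm{\Phi}^{(i)}_{j\cdot}\| + \max_j\|\bm{\Psi}_{j\cdot}\| \lesssim \max_j\|\bm{\Phi}_{j\cdot}\|$ (Lemma~\ref{lemma:phi_i_psi_j}), and using that same blockwise quantity in both the variance and the $L$ term of Lemma~\ref{lemma:Delta_M}. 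Since you already invoke the bootstrap $\max_j\|\bm{\Phi}^{(i)}_{j\cdot}\| \lesssim \max_j\|\bm{\Phi}_{j\cdot}\|$ elsewhere, the fix is simply to route $T_2$ through it as well, rather than through Davis--Kahan.
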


Lemma~\ref{the:Phi_f_Phi} confirms our previous hypothesis that $\bm{\Phi}_{i\cdot}$ is well-approximated by its surrogate $f(\bm{\Psi}\bm{O})_{i\cdot}$ uniformly, as long as $\eta$ defined in \eqref{eq:p_q_assumption} is sufficiently small. Back to \eqref{eq:bound_phi_i_phi_j_main}, 
$\|f(\bm{\Psi}\bm{O})_{i\cdot} - f(\bm{\Psi}\bm{O})_{j\cdot}\|$ can be bounded as
\begin{equation*}
\begin{aligned}
    \|f(\bm{\Psi}\bm{O})_{i\cdot} \!-\! f(\bm{\Psi}\bm{O})_{j\cdot}\| \!=\! \|(\bm{\Delta}_{i \cdot} \!-\! \bm{\Delta}_{j \cdot})\bm{\Psi}\bm{O}\bm{\Lambda}^{-1}\| \!\leq\! \|(\bm{\Delta}_{i \cdot} \!-\! \bm{\Delta}_{j \cdot})\bm{\Psi}\|\|\bm{\Lambda}^{-1}\| \!\lesssim\! \eta \max_j \|\bm{\Phi}_{j\cdot}\|.
\end{aligned}
\end{equation*}
with the detail deferred to Appendix~\ref{sec:proof_the_1}. Combining the results above yields

\begin{lemma}
Under the condition \eqref{eq:p_q_assumption}, for any pair of nodes $(i,j)$ that belong to the same cluster, it satisfies
\begin{equation*}
    \begin{aligned}
        \|\bm{\Phi}_{i\cdot} - \bm{\Phi}_{j\cdot}\| \lesssim \eta\max_j \|\bm{\Phi}_{j\cdot}\| \lesssim \frac{\eta}{\sqrt{n}}
    \end{aligned}
\end{equation*}
with probability $1 - O(n^{-1})$.
\label{the:phi_i_phi_j}
\end{lemma}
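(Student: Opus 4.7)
}
The plan is to combine the triangle-inequality decomposition already written in \eqref{eq:bound_phi_i_phi_j_main} with Lemma~\ref{the:Phi_f_Phi} and a direct concentration bound on the ``surrogate difference'' $f(\bm{\Psi}\bm{O})_{i\cdot}-f(\bm{\Psi}\bm{O})_{j\cdot}$. Concretely, apply \eqref{eq:bound_phi_i_phi_j_main} to reduce the lemma to three sub-bounds: the first two, $\|\bm{\Phi}_{i\cdot}-f(\bm{\Psi}\bm{O})_{i\cdot}\|$ and $\|\bm{\Phi}_{j\cdot}-f(\bm{\Psi}\bm{O})_{j\cdot}\|$, are each $\lesssim\eta\max_k\|\bm{\Phi}_{k\cdot}\|$ by direct invocation of Lemma~\ref{the:Phi_f_Phi}; only the third term $\|f(\bm{\Psi}\bm{O})_{i\cdot}-f(\bm{\Psi}\bm{O})_{j\cdot}\|$ needs fresh work.

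For the third term, I would write
\[
f(\bm{\Psi}\bm{O})_{i\cdot}-f(\bm{\Psi}\bm{O})_{j\cdot}=(\bm{A}_{i\cdot}-\bm{A}_{j\cdot})\bm{\Psi}\bm{O}\bm{\Lambda}^{-1}=\bigl((\mathbb{E}[\bm{A}]_{i\cdot}-\mathbb{E}[\bm{A}]_{j\cdot})+(\bm{\Delta}_{i\cdot}-\bm{\Delta}_{j\cdot})\bigr)\bm{\Psi}\bm{O}\bm{\Lambda}^{-1}.
\]
The key structural observation is that because $i,j$ belong to the same cluster and (w.l.o.g.) $\bm{O}_i=\bm{I}_d$, the rows $\mathbb{E}[\bm{A}]_{i\cdot}$ and $\mathbb{E}[\bm{A}]_{j\cdot}$ coincide on every block except positions $i$ and $j$, and on those two positions their differences (of norm $p$) pair up against $\bm{\Psi}_{i\cdot}=\bm{\Psi}_{j\cdot}=m_k^{-1/2}[\bm{I}_d,\bm 0]$ and cancel. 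Hence $(\mathbb{E}[\bm{A}]_{i\cdot}-\mathbb{E}[\bm{A}]_{j\cdot})\bm{\Psi}=\bm{0}$ and only the noise part survives, giving the identity already asserted in the excerpt:
\[
\|f(\bm{\Psi}\bm{O})_{i\cdot}-f(\bm{\Psi}\bm{O})_{j\cdot}\|\leq\|(\bm{\Delta}_{i\cdot}-\bm{\Delta}_{j\cdot})\bm{\Psi}\|\,\|\bm{\Lambda}^{-1}\|.
\]
I would then bound $\|\bm{\Lambda}^{-1}\|\lesssim 1/(pn)$ via Weyl's inequality applied to $\mathbb{E}[\bm{A}]=p\sum_k m_k\bm{\Psi}^{(k)}(\bm{\Psi}^{(k)})^\top$ whose nonzero eigenvalues are $pm=pn/2$, together with $\|\bm\Delta\|\ll pn$ under \eqref{eq:p_q_assumption}. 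For $\|(\bm{\Delta}_{i\cdot}-\bm{\Delta}_{j\cdot})\bm{\Psi}\|$, since $\bm{\Psi}$ is deterministic and the sum $\sum_{k\neq i,j}(\bm{\Delta}_{ik}-\bm{\Delta}_{jk})\bm{\Psi}_{k\cdot}$ decomposes into independent, bounded, mean-zero $d\times 2d$ summands (the $k\in\{i,j\}$ contributions cause only an $O(1/\sqrt{n})$ deterministic boundary term), matrix Bernstein yields, with the per-term variance $p(1-p)/m$ inside the cluster and $q/m$ across clusters,
\[
\|(\bm{\Delta}_{i\cdot}-\bm{\Delta}_{j\cdot})\bm{\Psi}\|\lesssim\sqrt{(p(1-p)+q)\log(nd)}
\]
w.h.p. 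Multiplying by the bound on $\|\bm{\Lambda}^{-1}\|$ gives exactly $\eta/\sqrt{n}$, matching the target scale.

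Finally, to upgrade $\eta\max_k\|\bm{\Phi}_{k\cdot}\|$ to $\eta/\sqrt{n}$, I would run a short self-bootstrap: apply the same decomposition to a single row, observing $\|f(\bm{\Psi}\bm{O})_{k\cdot}\|\lesssim 1/\sqrt{n}$ (since $\mathbb{E}[\bm{A}]_{k\cdot}\bm{\Psi}$ has norm $\asymp p\sqrt{n}$ and $\|\bm{\Lambda}^{-1}\|\lesssim 1/(pn)$, with the noise part handled by the same Bernstein estimate). Combined with Lemma~\ref{the:Phi_f_Phi}, this gives $\max_k\|\bm{\Phi}_{k\cdot}\|\leq\eta\max_k\|\bm{\Phi}_{k\cdot}\|+C/\sqrt{n}$, and solving under $\eta\leq c_0<1$ yields $\max_k\|\bm{\Phi}_{k\cdot}\|\lesssim 1/\sqrt{n}$. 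The main obstacle I anticipate is the symmetry-induced dependence $\bm{\Delta}_{ij}=\bm{\Delta}_{ji}^\top$ when applying Bernstein to $(\bm{\Delta}_{i\cdot}-\bm{\Delta}_{j\cdot})\bm{\Psi}$; the cleanest workaround is to exclude the two shared blocks from the Bernstein sum and bound them deterministically by $O(1/\sqrt{n})$, which is dominated by the desired order $\eta/\sqrt{n}$ under \eqref{eq:p_q_assumption}.
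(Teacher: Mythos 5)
Your proposal is correct, and its core follows the paper's own route: the decomposition \eqref{eq:bound_phi_i_phi_j_main}, Lemma~\ref{the:Phi_f_Phi} for the first two terms, the reduction of the third term to $\|(\bm{\Delta}_{i\cdot}-\bm{\Delta}_{j\cdot})\bm{\Psi}\|\|\bm{\Lambda}^{-1}\|$ (the cancellation of the $\mathbb{E}[\bm{A}]$ contribution, which you spell out explicitly and the paper leaves implicit), and the bounds $\|\bm{\Lambda}^{-1}\|\lesssim(pn)^{-1}$ and $\|(\bm{\Delta}_{i\cdot}-\bm{\Delta}_{j\cdot})\bm{\Psi}\|\lesssim\sqrt{(p(1-p)+q)\log(nd)}$. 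Two points differ. For the noise term the paper simply writes $\|(\bm{\Delta}_{i\cdot}-\bm{\Delta}_{j\cdot})\bm{\Psi}\|\le\|\bm{\Delta}_{i\cdot}\bm{\Psi}\|+\|\bm{\Delta}_{j\cdot}\bm{\Psi}\|$ and invokes Lemma~\ref{lemma:Delta_M} twice, which sidesteps the $\bm{\Delta}_{ij}=\bm{\Delta}_{ji}^\top$ dependence you worry about; your direct Bernstein on the difference works too but is unnecessary effort. The more substantive difference is the final upgrade $\max_j\|\bm{\Phi}_{j\cdot}\|\lesssim 1/\sqrt{n}$: the paper proves this as Lemma~\ref{lemma:upper_bound_phi_j} by building the rank-$2d$ matrix $\bm{\Phi}'$ that repeats the two maximal block rows, bounding $\|\bm{\Phi}-\bm{\Phi}'\|\lesssim\sqrt{n}\,\eta\max_j\|\bm{\Phi}_{j\cdot}\|$ via the just-established row-difference bound, and comparing $\sigma_{\max}(\bm{\Phi}')\ge\sqrt{m}\max_j\|\bm{\Phi}_{j\cdot}\|$ with $\sigma_{\max}(\bm{\Phi})=1$ through Weyl's inequality. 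Your self-bootstrap $\max_k\|\bm{\Phi}_{k\cdot}\|\le C_1\eta\max_k\|\bm{\Phi}_{k\cdot}\|+C_2/\sqrt{n}$, obtained from Lemma~\ref{the:Phi_f_Phi} plus a direct estimate $\|f(\bm{\Psi}\bm{O})_{k\cdot}\|\lesssim 1/\sqrt{n}$, is a genuinely different and arguably more elementary argument; it is valid under \eqref{eq:p_q_assumption} provided $c_0$ is taken small enough that $C_1c_0<1$, which matches how the paper uses the same smallness assumption elsewhere (e.g., in the proof of Lemma~\ref{lemma:Phi_Phi_i}). What the paper's construction buys is that it reuses the row-difference bound already in hand; what yours buys is avoiding the auxiliary matrix $\bm{\Phi}'$ entirely. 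Either closes the proof.
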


\vspace{0.1cm}
\paragraph{$\bullet$  Step 2: Bound $\|\bm{\Phi}_{i\cdot} - \bm{\Psi}_{i\cdot}\bm{O}\|$}
Now we use the results from Step 1 to prove Theorem~\ref{the:phi_psi_i}, which gives the blockwise error bound $\|\bm{\Phi}_{i\cdot} - \bm{\Psi}_{i\cdot}\bm{O}\|$. The main idea is to combine Lemma~\ref{the:phi_i_phi_j} with the bound on $\|\bm{\Phi} - \bm{\Psi}\bm{O}\|$ given by Davis-Kahan. To this end, let us define $\bm{\Phi} - \bm{\Psi}\bm{O} = [\bm{N}_{C_1}^\top, \bm{N}_{C_2}^\top ]^\top$ 
where $\bm{N}_{C_1}$ is defined as
\begin{equation}
    \bm{N}_{C_1} :=
    \begin{bmatrix}
    \bm{\Phi}_{1\cdot} - \bm{\Psi}_{1\cdot}\bm{O}\\
    \vdots\\
    \bm{\Phi}_{m\cdot} - \bm{\Psi}_{m\cdot}\bm{O}\\
    \end{bmatrix}
    = 
    \underbrace{\begin{bmatrix}
    \bm{\Phi}_{i\cdot} - \bm{\Psi}_{1\cdot}\bm{O}\\
    \vdots\\
    \bm{\Phi}_{i\cdot}- \bm{\Psi}_{m\cdot}\bm{O}\\
    \end{bmatrix}}_{=:\bm{N}_i} + 
    \underbrace{
    \begin{bmatrix}
    \bm{\Phi}_{1\cdot} - \bm{\Phi}_{i\cdot}\\
    \vdots\\
    \bm{\Phi}_{m\cdot} - \bm{\Phi}_{i\cdot}\\
    \end{bmatrix}}_{=: \bm{N}_{\bm{\Delta}, i}}
    = \bm{N}_i + \bm{N}_{\bm{\Delta}, i}, 
    \label{eq:def_N_C_1}
\end{equation}
for all $i \in C_1$, and $\bm{N}_{C_2}$ is defined in a similar manner for block indices that belong to $C_2$.  
Then we can derive a lower bound of $\|\bm{\Phi} - \bm{\Psi}\bm{O}\|$ as 
\begin{align*}
    \|\bm{\Phi} - \bm{\Psi}\bm{O}\| & = \max_{\|\bm{x}\| = 1}\|[\bm{N}_{C_1}^\top, \bm{N}_{C_2}^\top] \bm{x}\|  \geq \max_{\|\bm{y}\| = 1} \|\bm{N}_{C_1}^\top\bm{y}\| = \|\bm{N}_{C_1}\| \\ 
    & \geq \|\bm{N}_i\| - \|\bm{N}_{\bm{\Delta}, i}\| = \sqrt{m}\|\bm{\Phi}_{i\cdot} - \bm{\Psi}_{i\cdot}\bm{O}\| - O(\eta)
\end{align*}
w.h.p. On the other hand, by Davis-Kahan we can obtain an upper bound as $\|\bm{\Phi} - \bm{\Psi}\bm{O}\| \lesssim \eta/\sqrt{\log (n)}$ w.h.p (see Lemma~\ref{lemma:Psi_Phi} for the detail).
Combining the lower bound and the upper bound yields
\begin{equation*}
    \|\bm{\Phi}_{i\cdot} - \bm{\Psi}_{i\cdot}\bm{O}\| \lesssim \frac{\eta}{\sqrt{n}}, \quad \text {for } i \in C_1
\end{equation*}
w.h.p. In a similar manner, we get the same bound for $i \in C_2$. Then applying the union bound for $i = 1, \ldots, n$ completes the proof of Theorem~\ref{the:phi_psi_i}.

\vspace{0.1cm}
\paragraph{$\bullet$ Step 3: Find the performance guarantee of Algorithm~\ref{alg:spectral}}
We start from showing the exact recovery of the cluster memberships by Algorithm~\ref{alg:spectral}. Recall the matrix $\bm{R}$ output from the blockwise CPQR in \eqref{eq:alg_QR}, with out loss of generality, we assume $p_1 \in C_1$ is the first pivot block column selected. Then for any node $j$, the two blocks $\bm{R}_{1j}$ and $\bm{R}_{2j}$ satisfy
\begin{equation*}
    \|\bm{R}_{1j}\|_\mathrm{F}^2 = \|\mathcal{P}(\bm{\Phi}_{p_1\cdot})(\bm{\Phi}_{j\cdot})^\top\|^2_\mathrm{F}, \quad
    \|\bm{R}_{2j}\|_\mathrm{F}^2 = \|(\bm{\Phi}_{j\cdot})^\top\|_\mathrm{F}^2 - \|\mathcal{P}(\bm{\Phi}_{p_1\cdot})(\bm{\Phi}_{j\cdot})^\top\|^2_\mathrm{F},
\end{equation*}
which correspond to the projection of $(\bm{\Phi}_{j\cdot})^\top$ onto the column space $\mathcal{R}((\bm{\Phi}_{p_1\cdot})^\top)$ and the complement of $\mathcal{R}((\bm{\Phi}_{p_1\cdot})^\top)$ respectively. According to Algorithm~\ref{alg:spectral}, node $j$ would be assigned to $C_1$ if $\|\bm{R}_{1j}\|_\mathrm{F} > \|\bm{R}_{2j}\|_\mathrm{F}$, which is equivalent to
\begin{equation}
    \|\mathcal{P}(\bm{\Phi}_{p_1\cdot})(\bm{\Phi}_{j\cdot})^\top\|_\mathrm{F} > \frac{\sqrt{2}}{2}\|\bm{\Phi}_{j\cdot}\|_\mathrm{F}
    \label{eq:assign_cluster_equal_main}.
\end{equation}
We first show that for any $j \in C_1$, \eqref{eq:assign_cluster_equal_main} is satisfied w.h.p. To this end, we estimate the LHS and the RHS of \eqref{eq:assign_cluster_equal_main} by using $\bm{\Psi}_{i\cdot}\bm{O}$ as a surrogate of $\bm{\Phi}_{i\cdot}$ for $i = 1,\ldots n$, as they have shown to be similar in Theorem~\ref{the:phi_psi_i}. Then we obtain that w.h.p
\begin{equation}
\begin{aligned}
\|\mathcal{P}(\bm{\Phi}_{p_1\cdot})(\bm{\Phi}_{j\cdot})^\top\|_{\mathrm{F}} 
&= \left(\sqrt{2} - O(\eta)\right)\sqrt{\frac{d}{n}}, \quad \frac{\sqrt{2}}{2}\|\bm{\Phi}_{j\cdot}\|_\mathrm{F} =
    \left(1 + O(\eta)\right)\sqrt{\frac{d}{n}}.
\end{aligned}
\end{equation}
This implies when the condition \eqref{eq:p_q_assumption} is satisfied such that $\eta$ is sufficiently small, \eqref{eq:assign_cluster_equal_main} holds and $j$ is correctly classified.
{\color{black}
Similar analysis applies to any $j \in C_2$ and we leave the details in Appendix~\ref{sec:proof_main_theorems}.
}

For recovering the orthogonal transforms $\{\bm{O}_i\}_{i = 1}^n$, we follow the assumption that $p_1 \in C_1$ is the first pivot. In the case of two clusters, the orthogonal matrix $\bm{Q}$ in the block CPQR \eqref{eq:alg_QR} satisfies $\bm{Q} = [
\bm{Q}_{\cdot 1}, \bm{Q}_{\cdot 2}]$, where $\bm{Q}_{\cdot 1} \in \mathbb{R}^{2d \times d}$ is the polar factor of $(\bm{\Phi}_{p_1 \cdot})^\top$ up to some orthogonal transform, and $\bm{Q}_{\cdot 2} \in \mathbb{R}^{2d \times d}$ is orthogonal to $\bm{Q}_{\cdot 1}$, i.e.,
\begin{equation}
     \bm{Q}_{\cdot 1} = \mathcal{P}((\bm{\Phi}_{p_1\cdot})^\top) \bar{\bm{O}}_1, \quad \text{and} \quad (\bm{Q}_{\cdot 1})^\top \bm{Q}_{\cdot 2} = \bm{0},
 \label{eq:def_Q_12} 
\end{equation}
where $\bar{\bm{O}}_1 \in \mathbb{R}^{d \times d}$ is some orthogonal matrix. 
As a result, for any node $j \in C_1$, our estimation $\hat{\bm{O}}_j$ by \eqref{eq:syncronization} can be written as 
\begin{equation*}
    \hat{\bm{O}}_j = \mathcal{P}(\bm{R}_{1j}) = \mathcal{P}((\bm{Q}_{\cdot 1})^\top (\bm{\Phi}_{j\cdot})^\top).
\end{equation*}
To proceed, by applying Theorem~\ref{the:phi_psi_i} we can show the estimation error $\|\hat{\bm{O}}_j - \bm{O}_j \bar{\bm{O}} _1\| \lesssim \eta$. Similar analysis applies to any $j \in C_2$ and we leave the details in Appendix~\ref{sec:proof_main_theorems}.
	
	\begin{figure}[t!]
    \centering
    \subfloat[\scriptsize{Success rate of exact recovery, $d = 2$}]{\includegraphics[width = 0.40\textwidth]{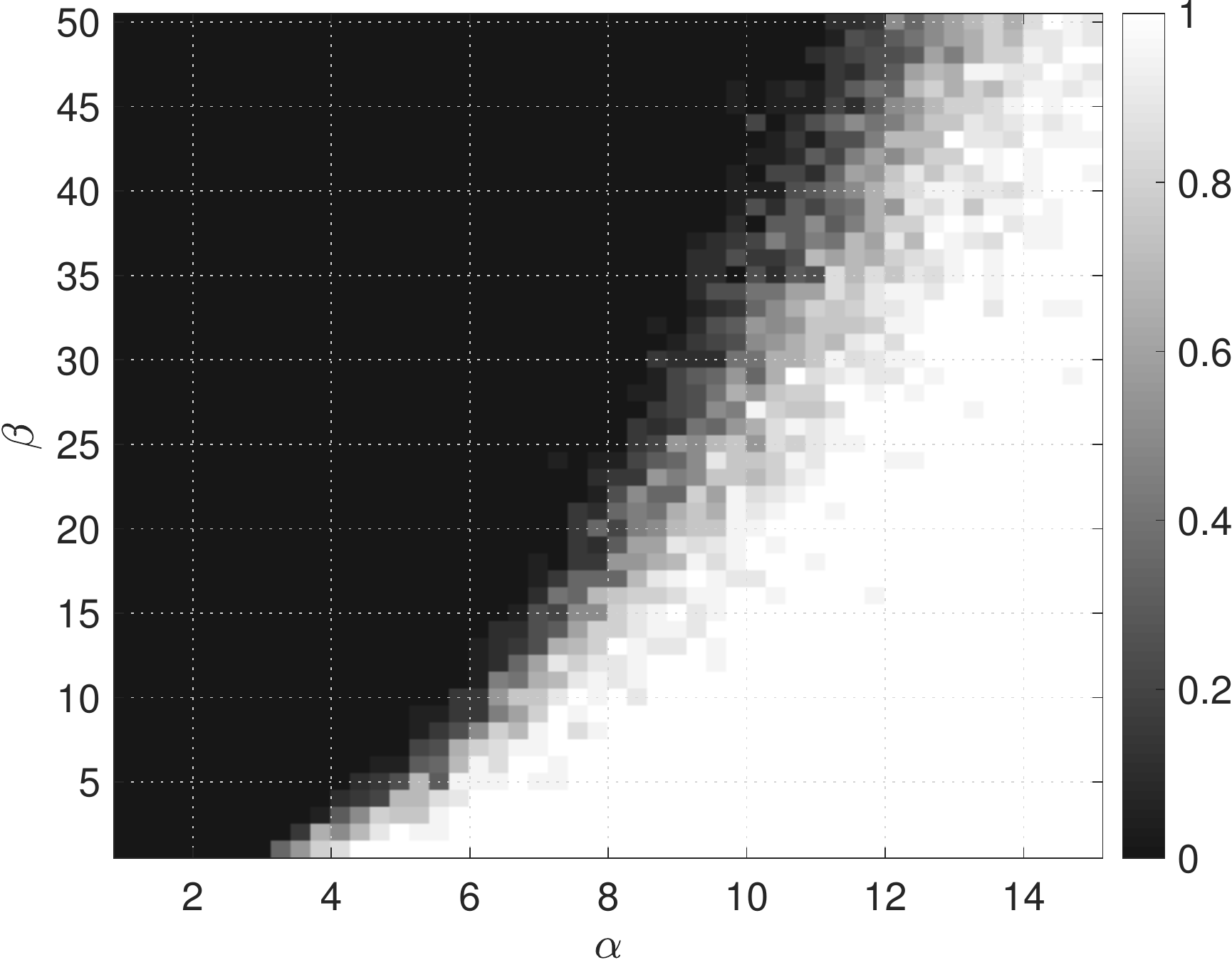}
    \label{fig:exp_K_2_a}
    }\hskip 0.2cm
    \subfloat[\scriptsize{Error of synchronization, $d = 2$}]{\includegraphics[width = 0.405\textwidth]{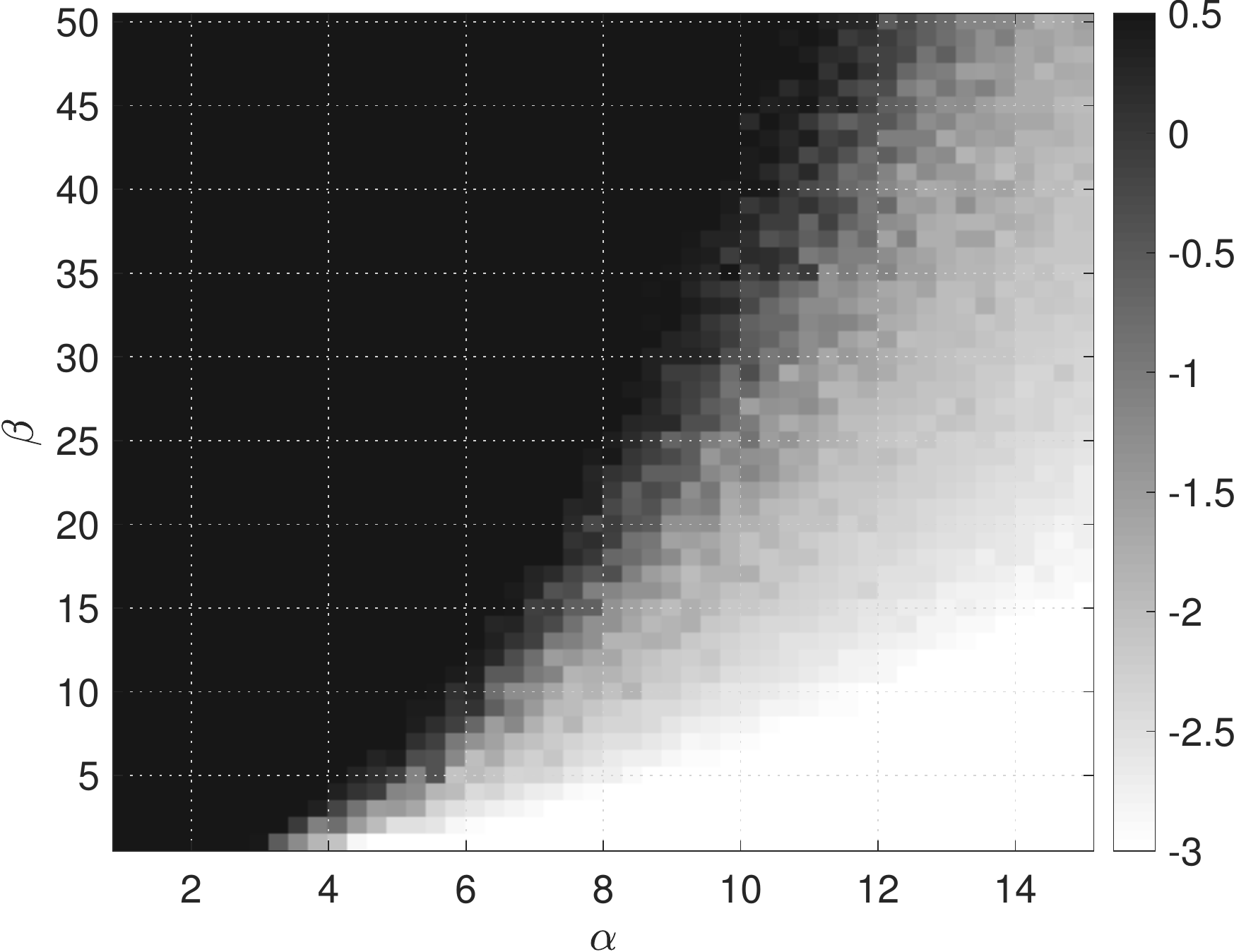}
    \label{fig:exp_K_2_b}
    }\\[-2pt]
    \hskip -0.2cm
    \subfloat[\scriptsize{Success rate of exact recovery, $d = 3$}]{\includegraphics[width = 0.40\textwidth]{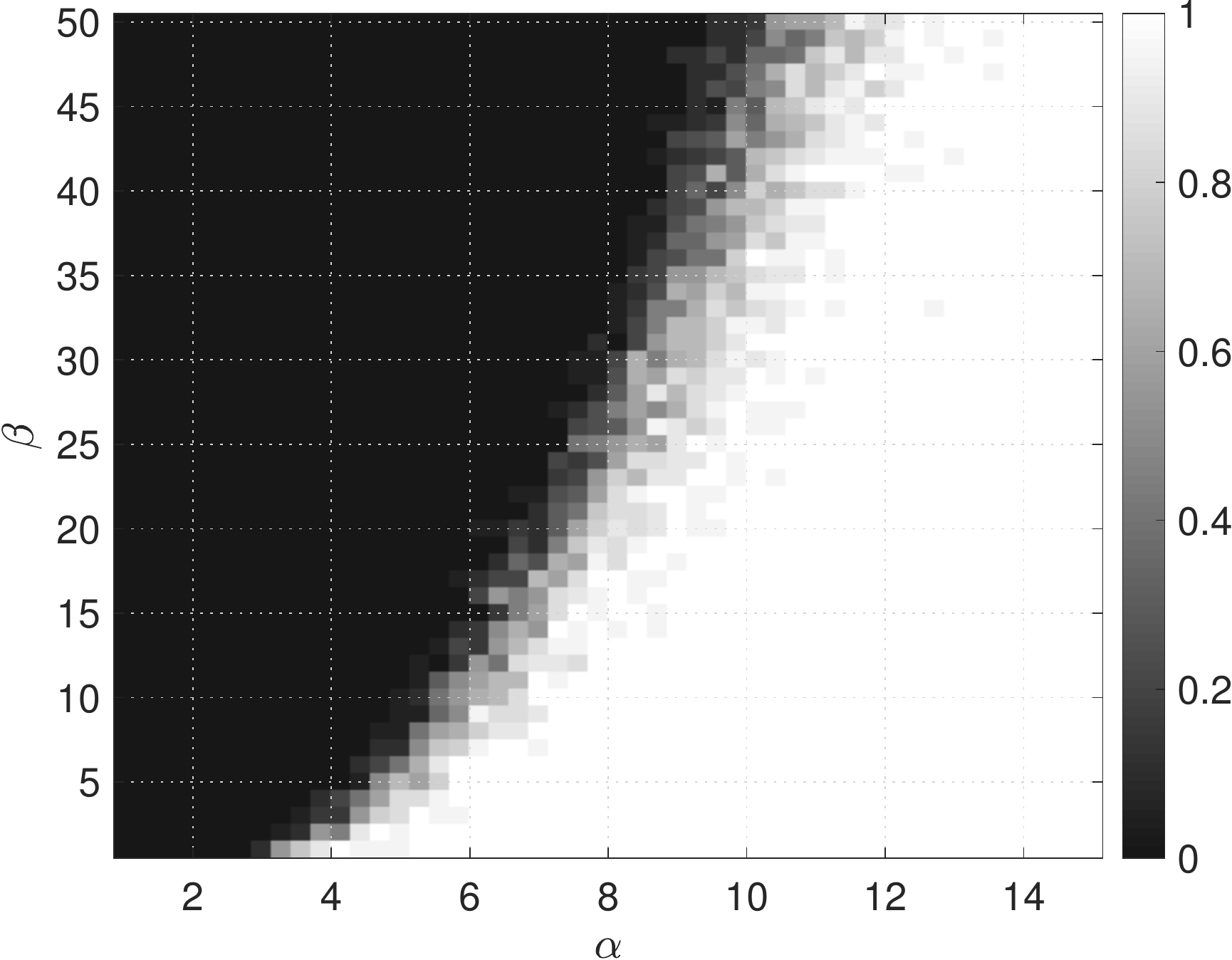}\label{fig:exp_two_cluster_d_2_cluster}}\hskip 0.45cm
    \subfloat[\scriptsize{Error of synchronization, $d = 3$}]{\includegraphics[width = 0.405\textwidth]{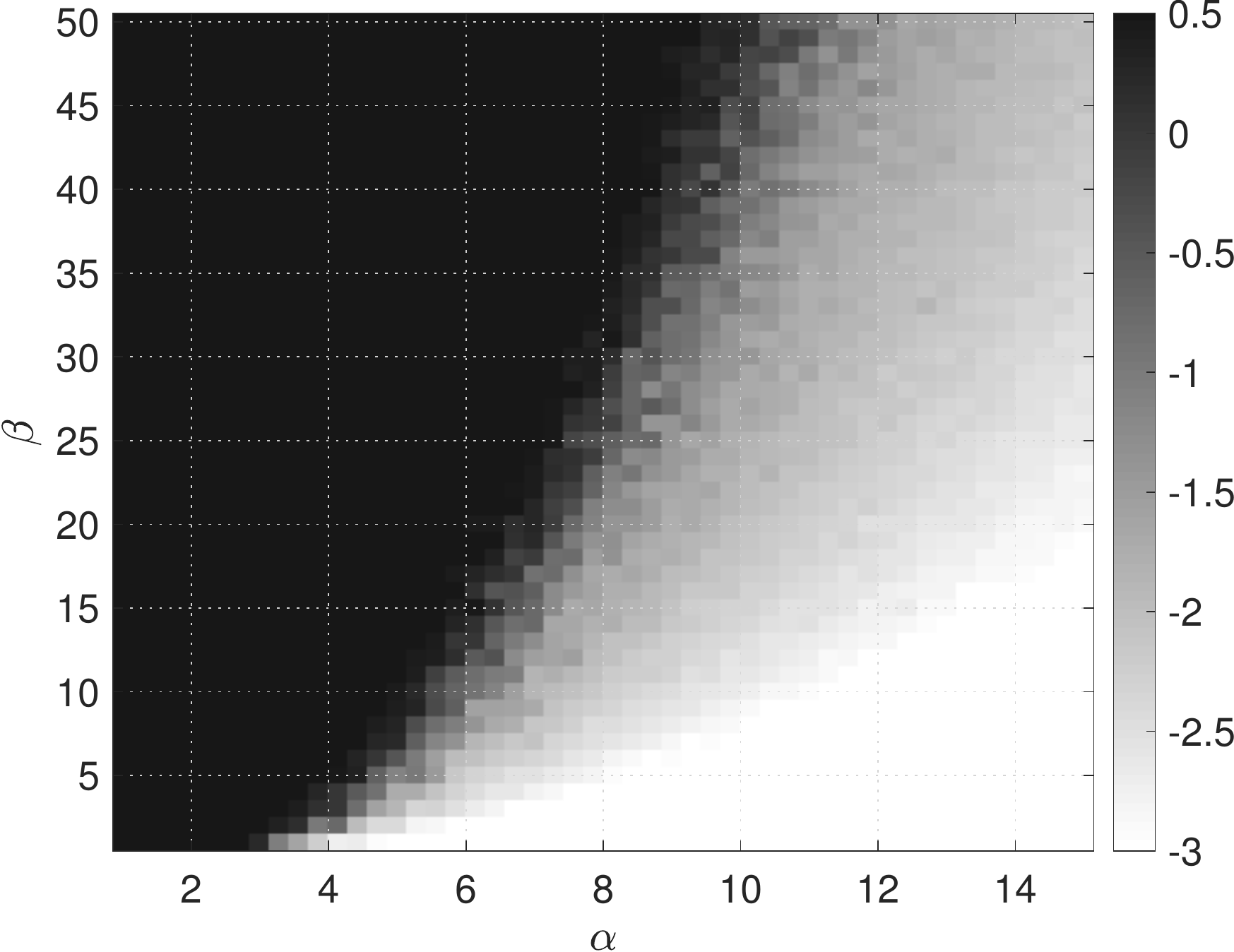}\label{fig:exp_two_cluster_d_2_sync}}
    \caption{\textit{Results on two clusters by Algorithm~\ref{alg:spectral}}. We test under the setting $m_1 = m_2 = 500$, $d = 2$ or $d = 3$.  
    \protect \subref{fig:exp_K_2_a} and \protect \subref{fig:exp_two_cluster_d_2_cluster}: the success rate of exact recovery by \eqref{eq:def_failure_rate}, under varying $\alpha$ in $p = \alpha\log n/n$ and $\beta$ in $q = \beta\log n/n$; 
    \protect \subref{fig:exp_K_2_b} and \protect \subref{fig:exp_two_cluster_d_2_sync}: the synchronization error by \eqref{eq:def_error_sync}.
    }
    \label{fig:exp_K_2_spec}
\end{figure}

\section{Numerical experiments}
\label{sec:exp}

\begin{figure}[t!]
    \centering
    \subfloat[\scriptsize{Success rate with varying $\eta$ and $\alpha$, $d = 2$}]{\includegraphics[width = 0.4\textwidth]{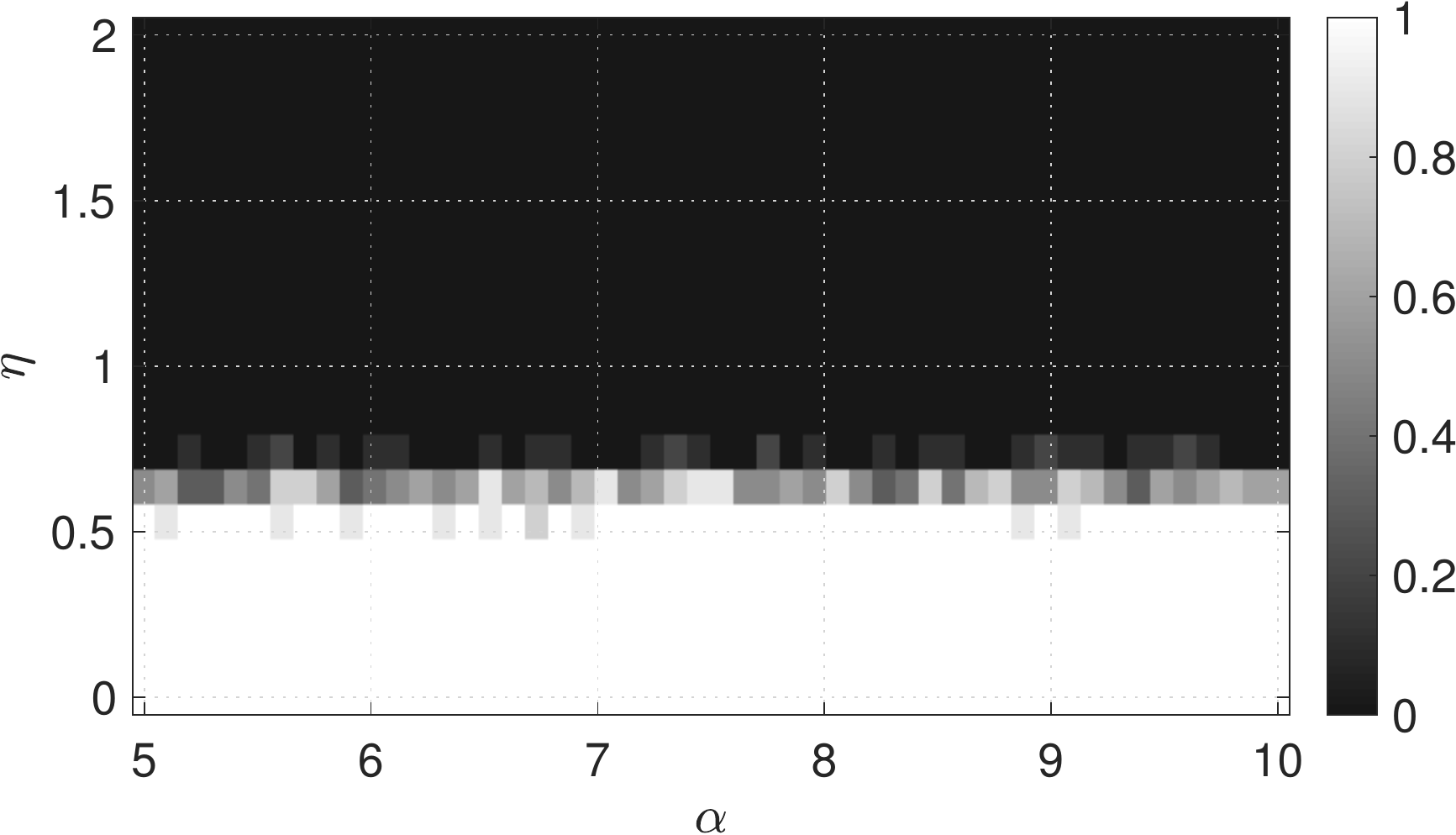}
    \label{fig:exp_scale_a}
    }\hskip 0.2cm 
    \subfloat[\scriptsize{Success rate with varying $\eta$ and $\alpha$, $d = 2$}]{\includegraphics[width = 0.4\textwidth]{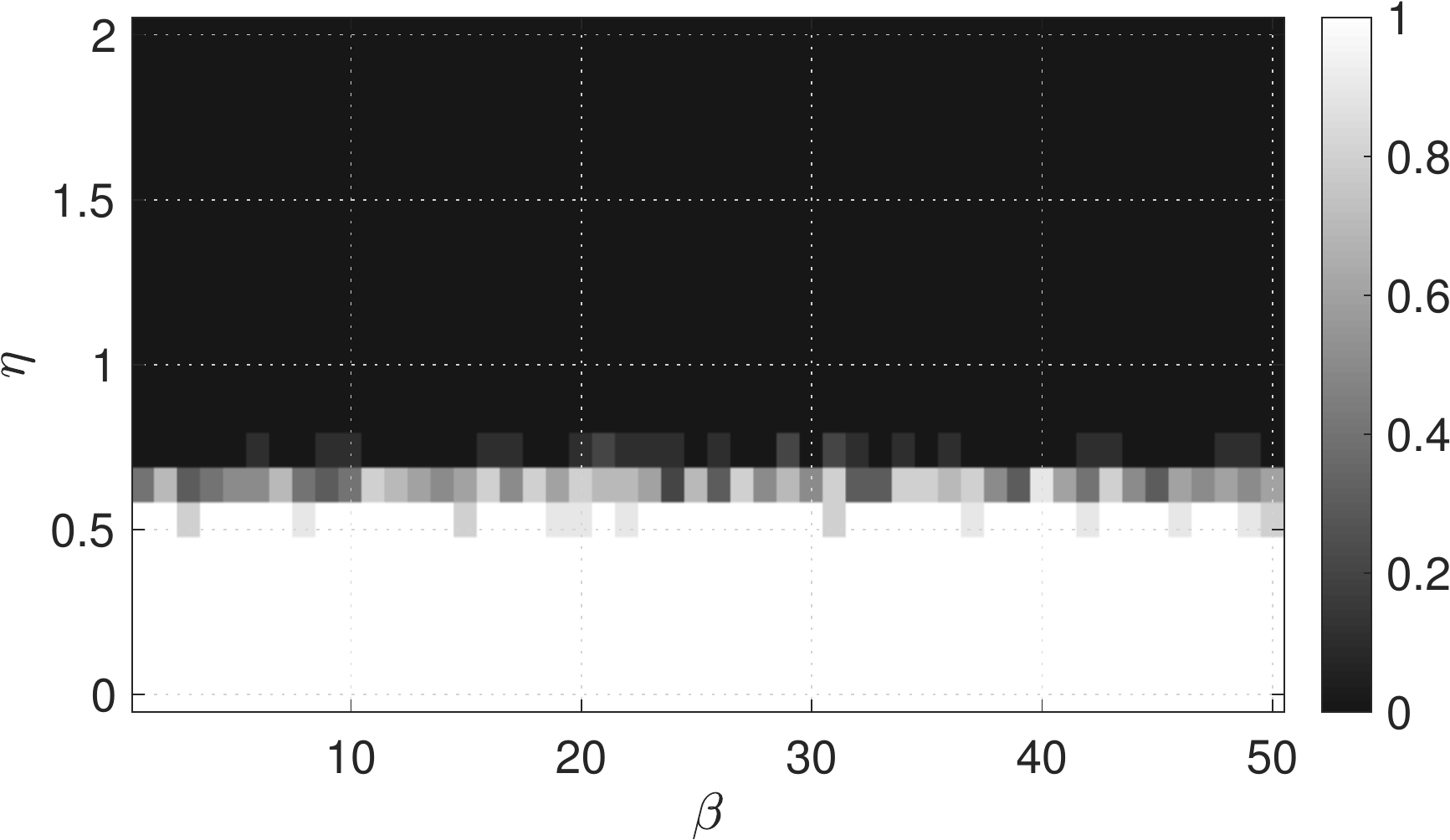}
    \label{fig:exp_scale_b}}\\[-3pt]
    \subfloat[\scriptsize{Success rate with varying $\eta$ and $\beta$, $d = 3$}]{\includegraphics[width = 0.4\textwidth]{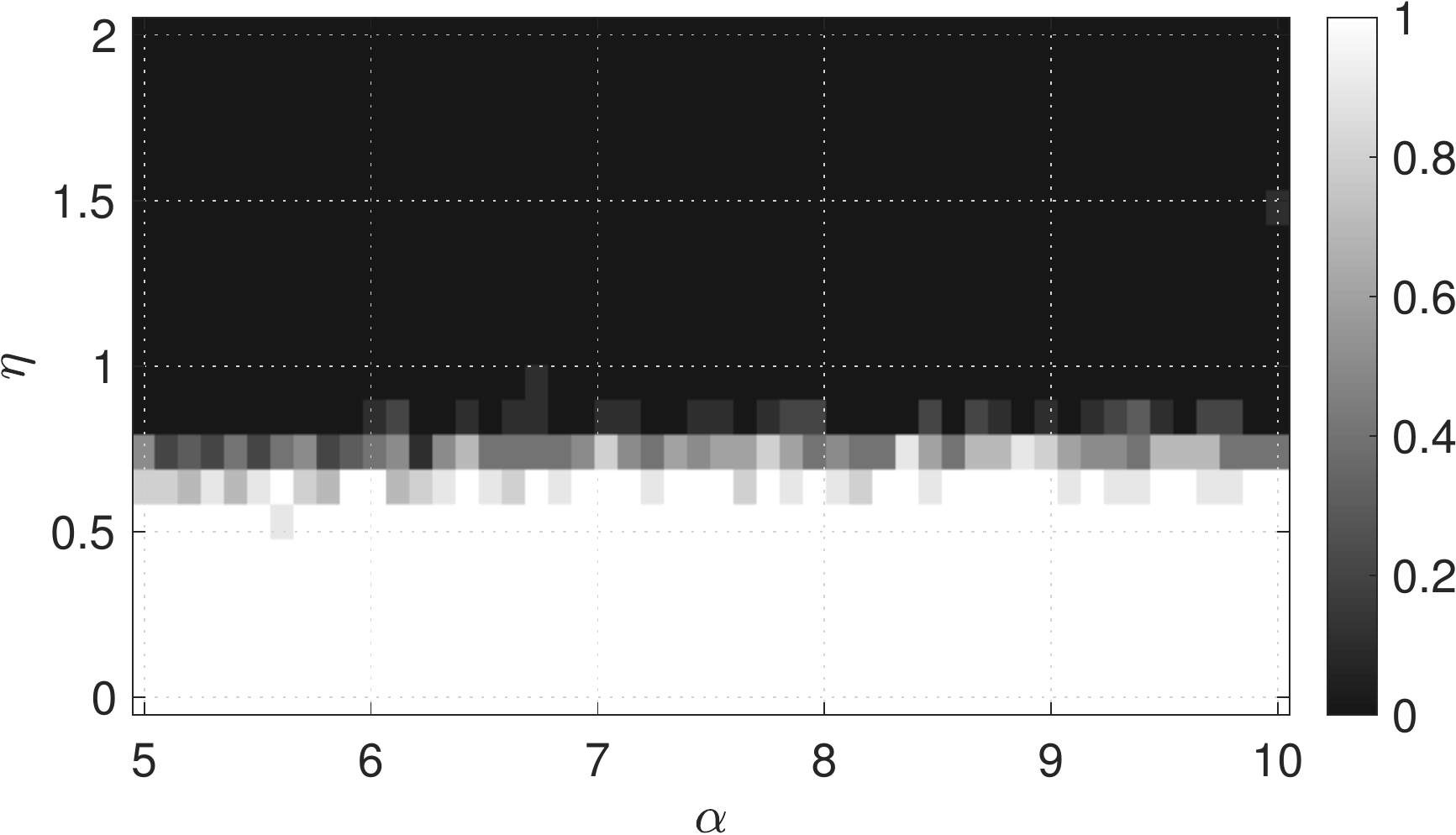}
    \label{fig:exp_scale_c}} \hskip 0.3cm
    \subfloat[\scriptsize{Success rate with varying $\eta$ and $\beta$, $d = 3$}]{\includegraphics[width = 0.4\textwidth]{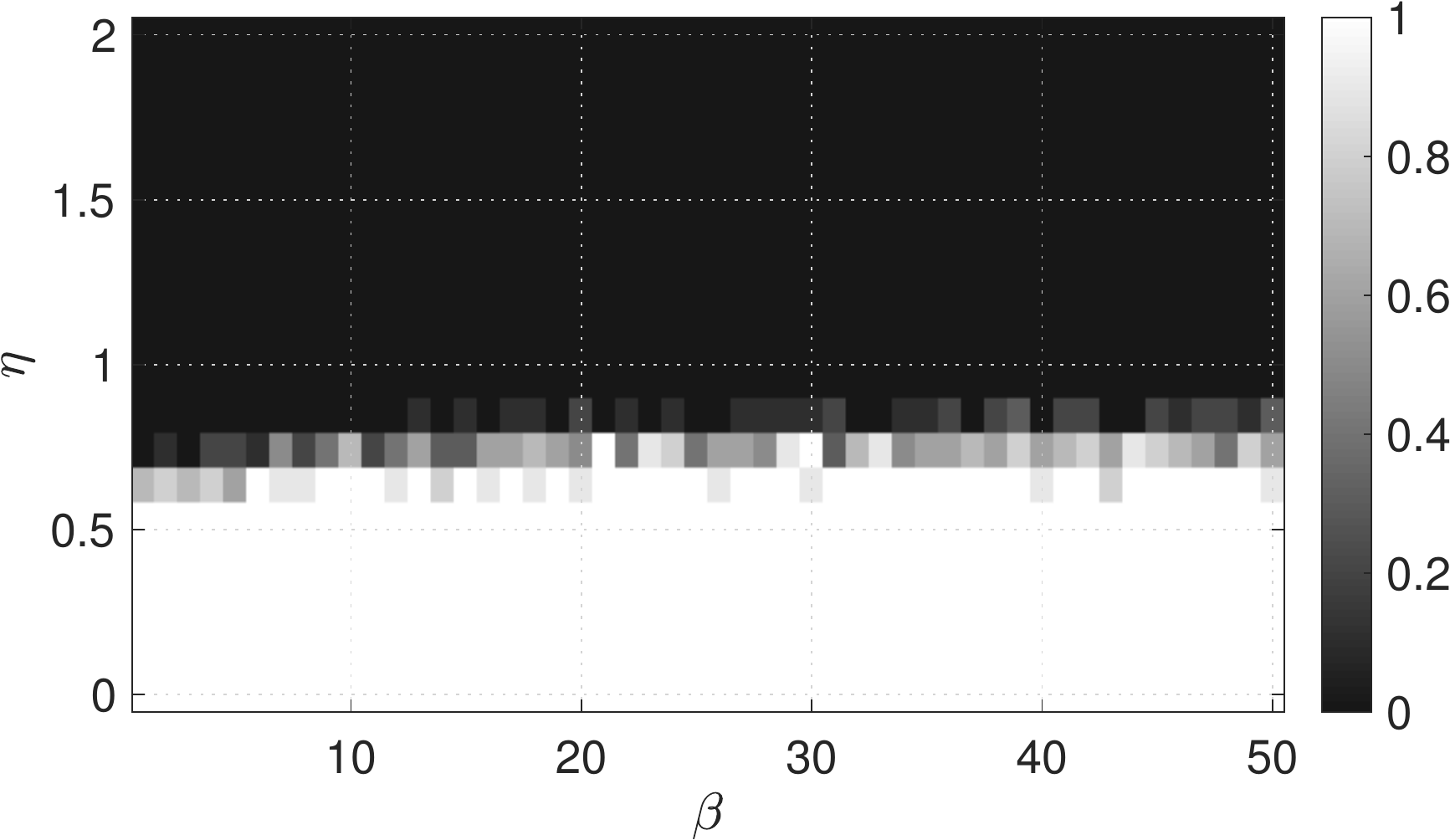}
    \label{fig:exp_scale_d}}
    \caption{\textit{Results on two clusters by Algorithm~\ref{alg:spectral}}. We test under the setting $m_1 = m_2 = 200$, $d = 2$ or $d = 3$. \protect \subref{fig:exp_scale_a} and \protect \subref{fig:exp_scale_c}: the success rate of exact recovery under varying $\eta$ defined in \eqref{eq:p_q_assumption} and $\alpha$ in $p = \alpha \log n/n$; \protect \subref{fig:exp_scale_b} and \protect \subref{fig:exp_scale_d}: the success rate of exact recovery under varying $\eta$ and $\beta$ in $q = \beta \log n/n$. For a fixed $\alpha$ (resp. $\beta$), we adjust $\eta$ by changing $\beta$ (resp. $\alpha$).}
    \label{fig:exp_scale}
\end{figure}

This section is devoted to numerically investigating the performance of our algorithm. 
{\color{black}
All experiments\footnote{The code is available in \url{https://github.com/frankfyf/joint_cluster_sync_spectral}.} are performed in MATLAB on a machine with 60 Intel Xeon CPU cores, running at 2.3GHz with 512GB RAM in total, and only one core is used for each experiment.
}
In each experiment, we generate the observation matrix $\bm{A}$ based on the probabilistic model in Section~\ref{sec:pre} and estimate the cluster memberships and the group elements by Algorithm~\ref{alg:spectral}. To evaluate the result, for clustering, let $\hat{C}_k = \{i | \hat{\kappa}(i) = k \}$ be the set of nodes identified to the $k$-th cluster by Algorithm~\ref{alg:spectral}, then we compute
\begin{equation}
    \textit{Success rate of exact recovery} = \text{ the rate } \{\hat{C}_k\}_{k = 1}^K \text{ is identical to } \{C_k\}_{k = 1}^K,
    \label{eq:def_failure_rate}
\end{equation}
that is the rate that Algorithm~\ref{alg:spectral} exactly recovers all the clusters memberships. 
After that, in order to evaluate the quality of identified orthogonal transformations, we define $\bm{O}^{(k)} = [\bm{O}_i]_{i \in C_k} \in \mathbb{R}^{m_kd \times d}$
for each cluster $C_k$ as the matrix that concatenates the ground truth $\bm{O}_i$ for all $i \in C_k$, and similarly define $\hat{\bm{O}}^{(k)} = [\hat{\bm{O}}_i]_{i \in C_k}$, the estimated orthogonal transformations. Then we remove the orthogonal ambiguity by aligning $\hat{\bm{O}}^{(k)}$ with $\bm{O}^{(k)}$ within each cluster $C_k$ as the following:
$$\bm{G}^{(k)} = \argmin_{\bm{G} \in \mathrm{O}(d)} \|\hat{\bm{O}}^{(k)} - \bm{O}^{(k)}\bm{G}\|_\mathrm{F}, \quad k = 1,\ldots, K$$
whose analytical solution is 
$\bm{G}^{(k)} = \mathcal{P}((\bm{O}^{(k)})^\top \hat{\bm{O}}^{(k)})$. In this way, the error of synchronization is defined as
\begin{equation}
    \textit{Error of synchronization} = \log\left(\frac{1}{\sqrt{d}}\max_{k = 1,\ldots, K} \max_{i \in C_k} \|\hat{\bm{O}}_i - \bm{O}_i\bm{G}^{(k)}\|_\mathrm{F}\right)
    \label{eq:def_error_sync}
\end{equation}
which is the maximum error of our estimation $\hat{\bm{O}}_i$ over all nodes. As a result, \eqref{eq:def_error_sync} is small only if the estimation error of each $\bm{O}_i$ is bounded. Both \eqref{eq:def_failure_rate} and \eqref{eq:def_error_sync} are averaged over 20 different realizations for each experiment.

\begin{figure}[t!]
    \centering
    \subfloat[\scriptsize{Success rate, before refinement}]{\includegraphics[width = 0.4\textwidth]{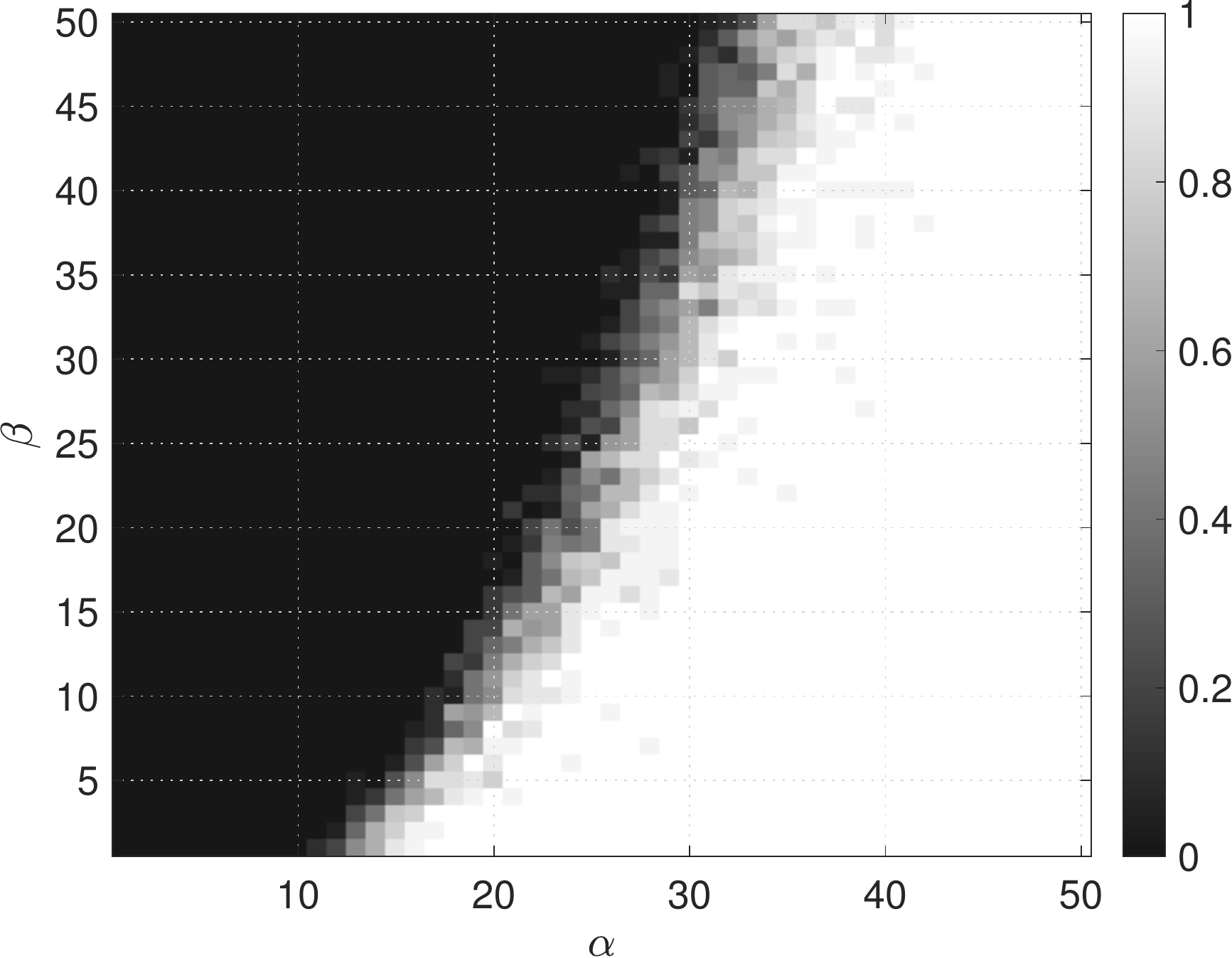}
    \label{fig:exp_three_a}
    } \hskip 0.2cm
    \subfloat[\scriptsize{Error of sync., before refinement}]{\includegraphics[width = 0.405\textwidth]{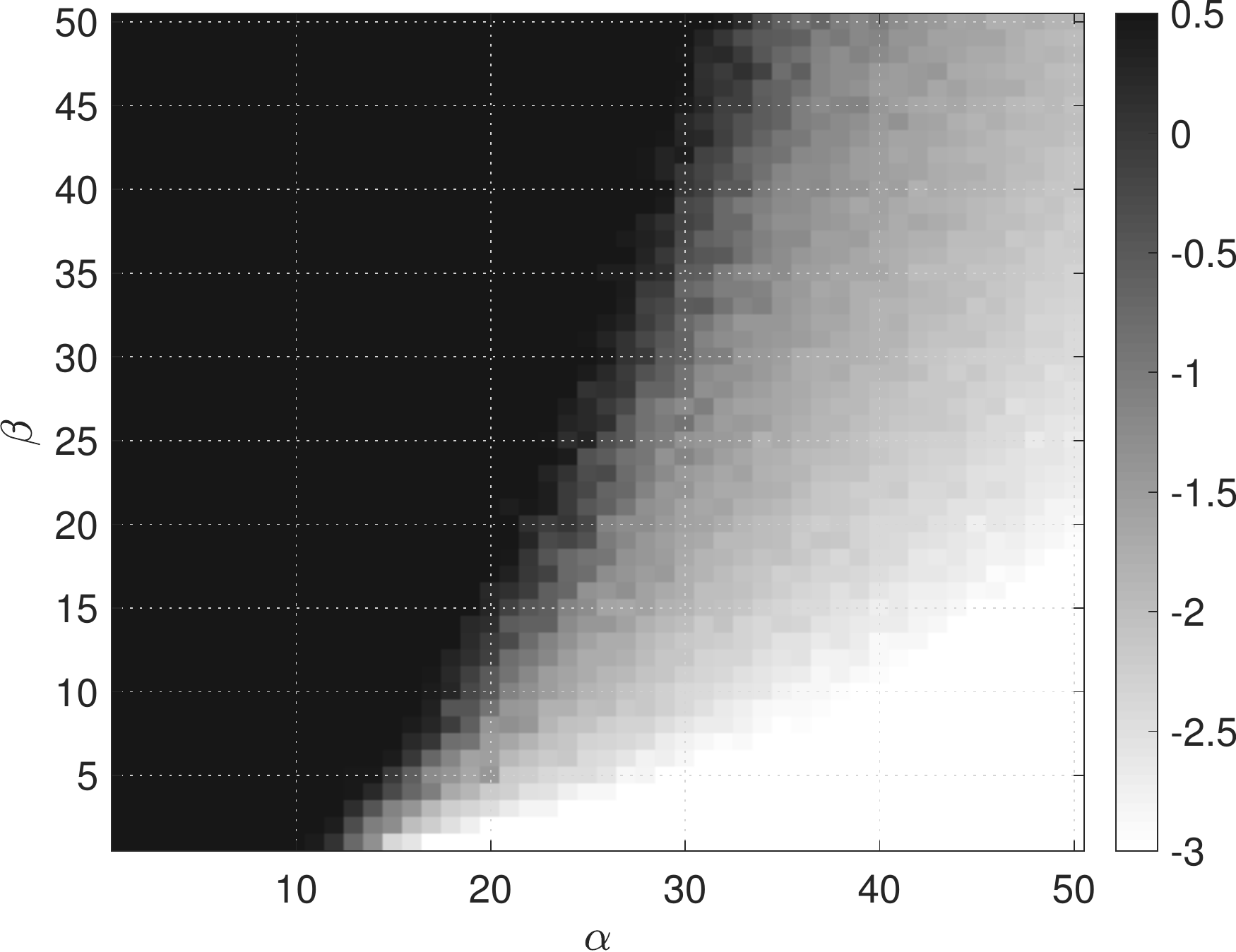}
    \label{fig:exp_three_b}}\\[-3pt]
    \subfloat[\scriptsize{Success rate, after refinement}]{\includegraphics[width = 0.4\textwidth]{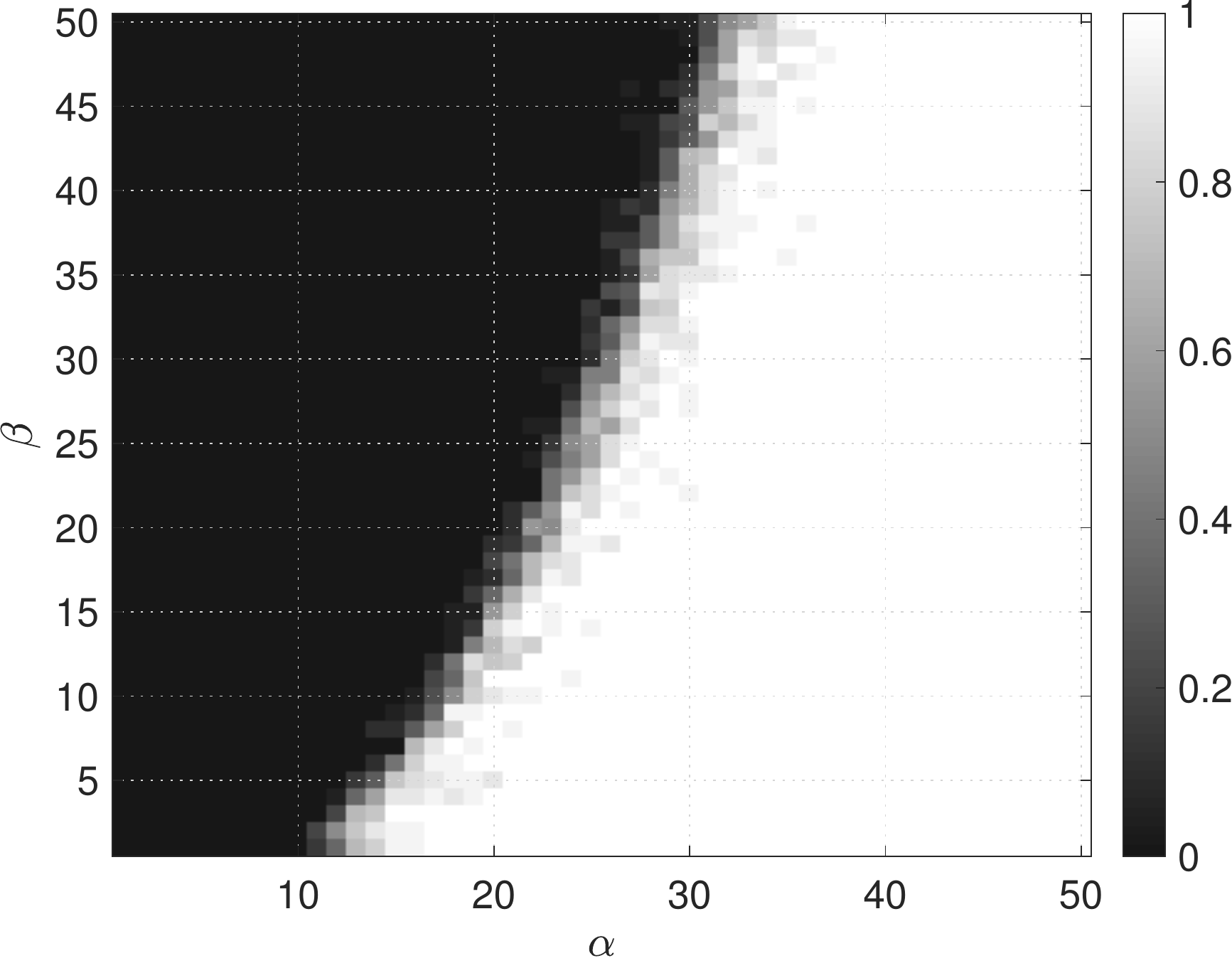}
    \label{fig:exp_three_c}} \hskip 0.33cm
    \subfloat[\scriptsize{Error of sync., after refinement}]{\includegraphics[width = 0.405\textwidth]{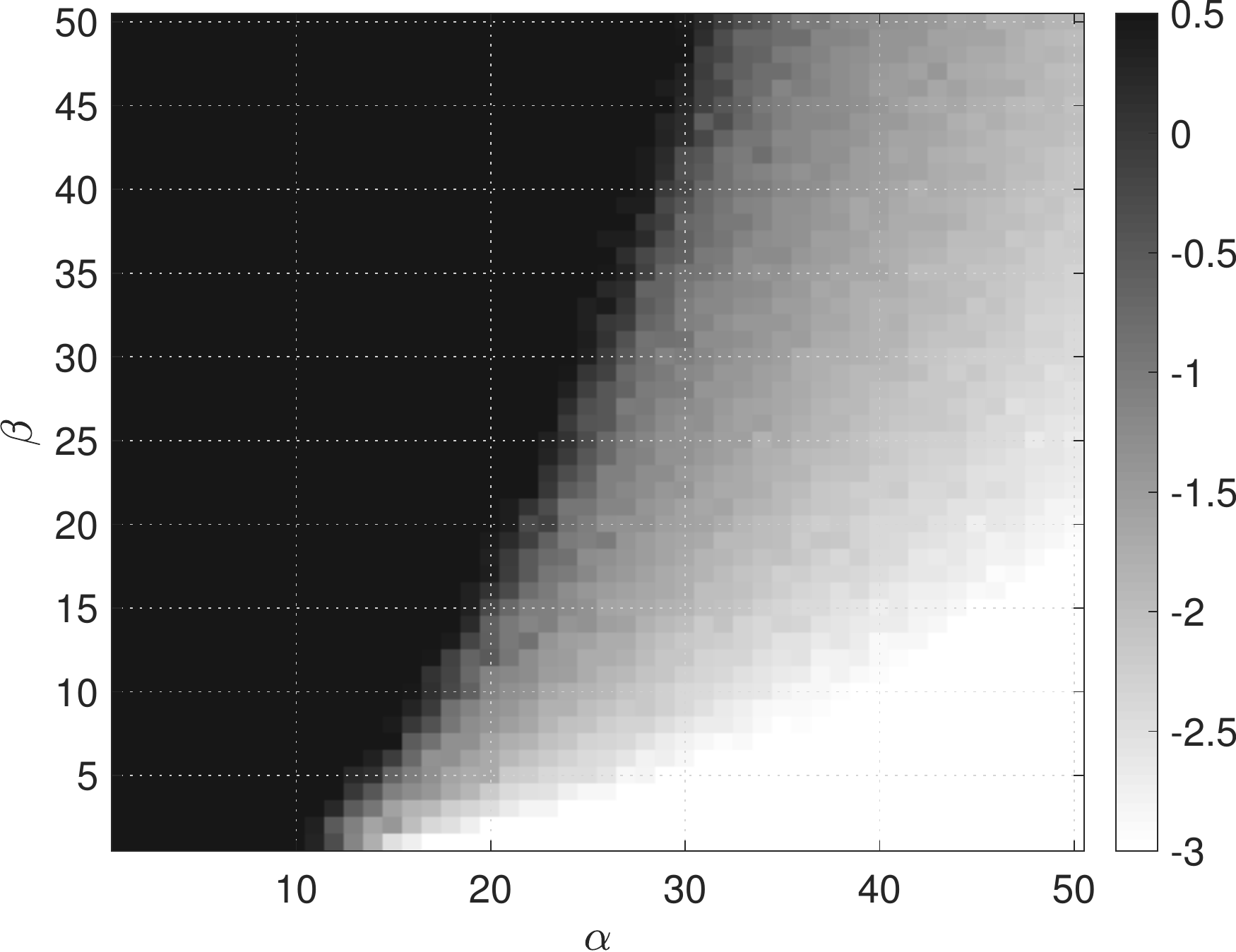}
    \label{fig:exp_three_d}}
    \caption{\textit{Results on five clusters by Algorithm~\ref{alg:spectral}.} We test under the setting $m_1 = 100$, $m_2 = m_3 = m_4 = 200$, $m_5 = 300$, and $d = 2$. We plot the success rate of exact recovery by \eqref{eq:def_failure_rate} and the synchronization error by \eqref{eq:def_error_sync}. \protect \subref{fig:exp_three_a} and \protect \subref{fig:exp_three_b}: result by Algorithm~\ref{alg:spectral}; \protect \subref{fig:exp_three_c} and \protect \subref{fig:exp_three_d}: result after the refinement step \eqref{eq:cluster_refine} in Section~\ref{sec:refinement}.
    }
    \label{fig:exp_three}
\end{figure}

We first test the performance of Algorithm~\ref{alg:spectral}. We consider the case of two clusters with equal cluster sizes, where we fix $n = 1000$ and test under two settings with $d = 2$ or $3$. In particular, since Theorem~\ref{the:cond} implies that exact recovery is possible at the regime $p, q = O(\log n/n)$, we measure the recovery performance on different $p = \alpha\log n/n$ and $q = \beta \log n/n$ with varying $\alpha$ and $\beta$. In Fig.~\ref{fig:exp_K_2_spec} we plot the success rate of exact recovery \eqref{eq:def_failure_rate} and the error of synchronization \eqref{eq:def_error_sync}. As a result, in both Fig.~\ref{fig:exp_K_2_a} and Fig.~\ref{fig:exp_two_cluster_d_2_cluster} we observe sharp phase transitions on the success rate of exact recovery. In Fig.~\ref{fig:exp_K_2_b} and Fig.~\ref{fig:exp_two_cluster_d_2_sync} the error of synchronization follows a similar pattern such that when exact recovery fails we observe a large error, and the error dramatically decreases as exact recovery is achieved. Such observations agree with our theory in Theorem~\ref{the:cond}. 

To better visualize the scaling of the phase transition curve,
in Fig.~\ref{fig:exp_scale} we plot the success rate of exact recovery, under different $\eta$ defined in \eqref{eq:p_q_assumption} with varying $p = \alpha \log n/n$ or $q = \beta \log n/n$. Specifically, we set $m_1 = m_2 = 200$, and for a fixed $\alpha$ (resp. $\beta$), we adjust $\eta$ from $0$ to $2$ by changing $\beta$ (resp. $\alpha$) accordingly. As we can see, Fig.~\ref{fig:exp_scale} implies that exact recovery can be achieved with high possibility as $\eta \leq 0.5$, which agrees with the theoretical condition \eqref{eq:p_q_assumption} in Theorem~\ref{the:cond} that exact recovery is possible as long as $\eta \leq c_0$ for some constant $c_0$. Such observation indicates the sharpness of our condition \eqref{eq:p_q_assumption}.

We further test our algorithm on a more general scenario with five clusters such that $(m_1, m_2, m_3, m_4, m_5) = (100, 200, 200, 200, 300)$ and $d = 2$. We report the result of our algorithm in terms of the metrics \eqref{eq:def_failure_rate} and \eqref{eq:def_error_sync} in Fig.~\ref{fig:exp_three_a} and Fig.~\ref{fig:exp_three_b}.
As a result, we still observe a clear phase transition boundary, which verifies our algorithm is able to handle arbitrary underlying cluster structures. 

\begin{figure}[t!]
\vspace{-0.1cm}
    \centering
    \subfloat[\scriptsize{(Ours) Before refinement}]{\includegraphics[width = 0.32\textwidth]{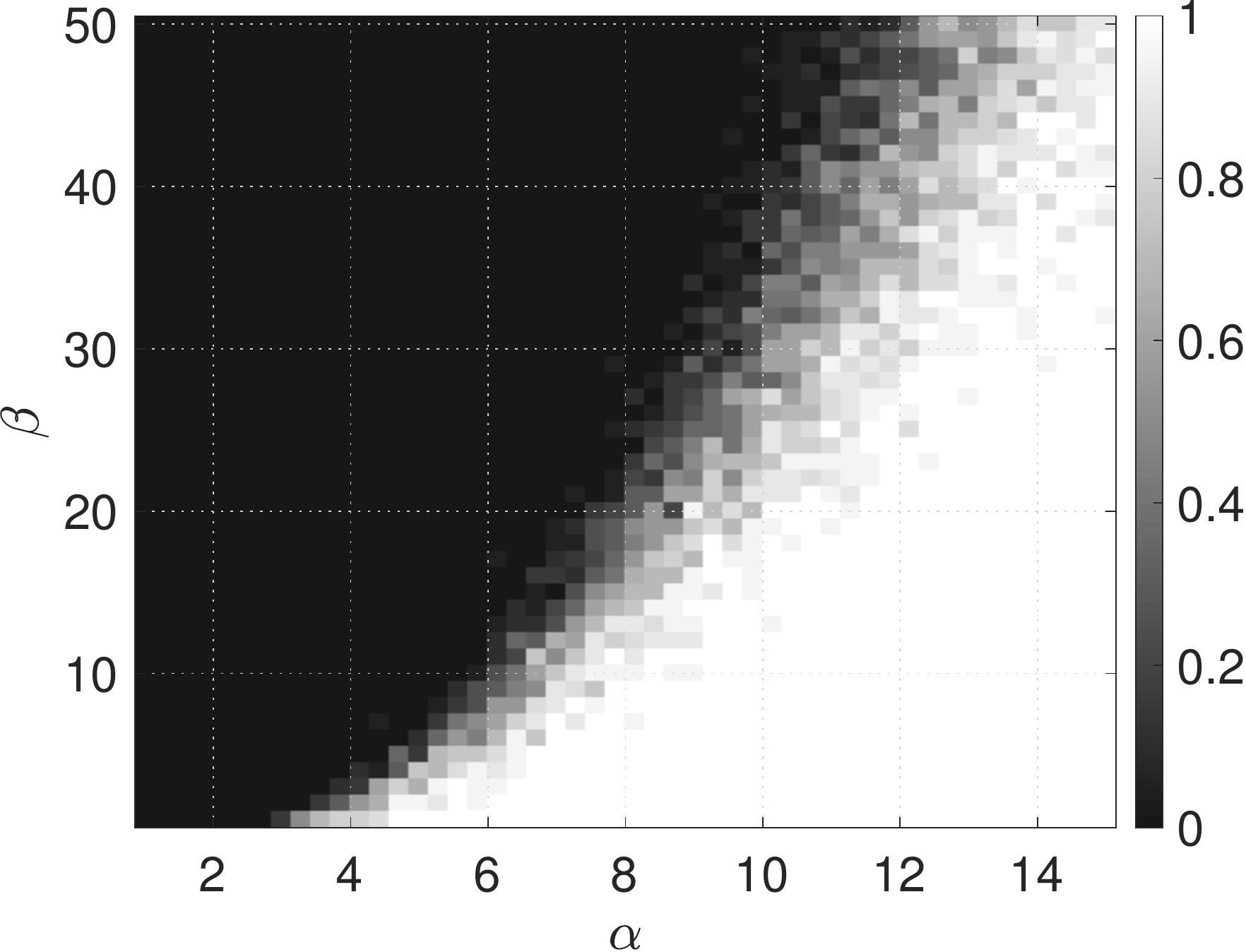}\label{fig:before_refine}} \hskip 0.17cm
    \subfloat[\scriptsize{(Ours) After refinement }]{\includegraphics[width = 0.32\textwidth]{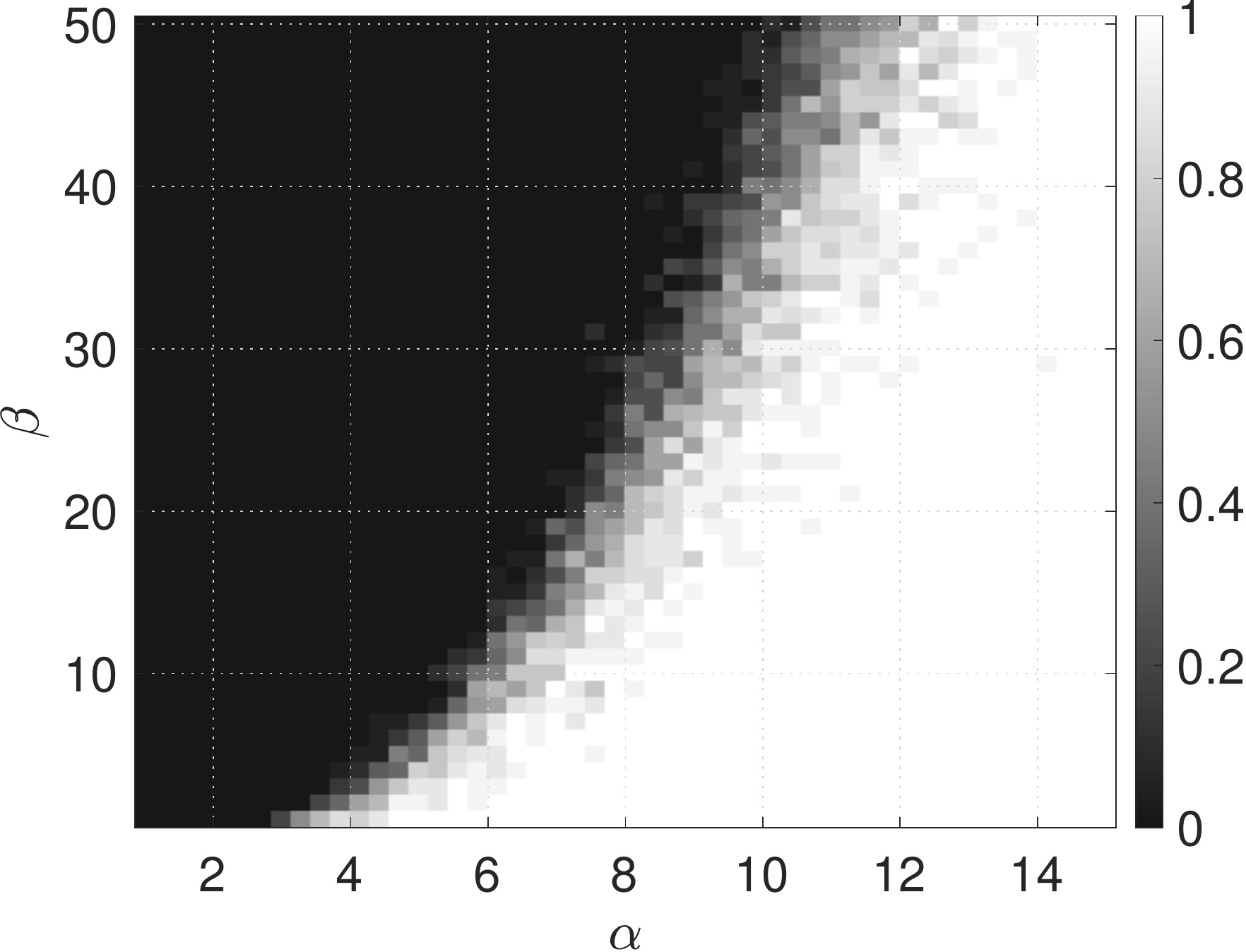}\label{fig:after_refine}} \hskip 0.17cm
    \subfloat[\scriptsize{Result by SMAC~\cite{bajaj2018smac}}]{\includegraphics[width = 0.32\textwidth]{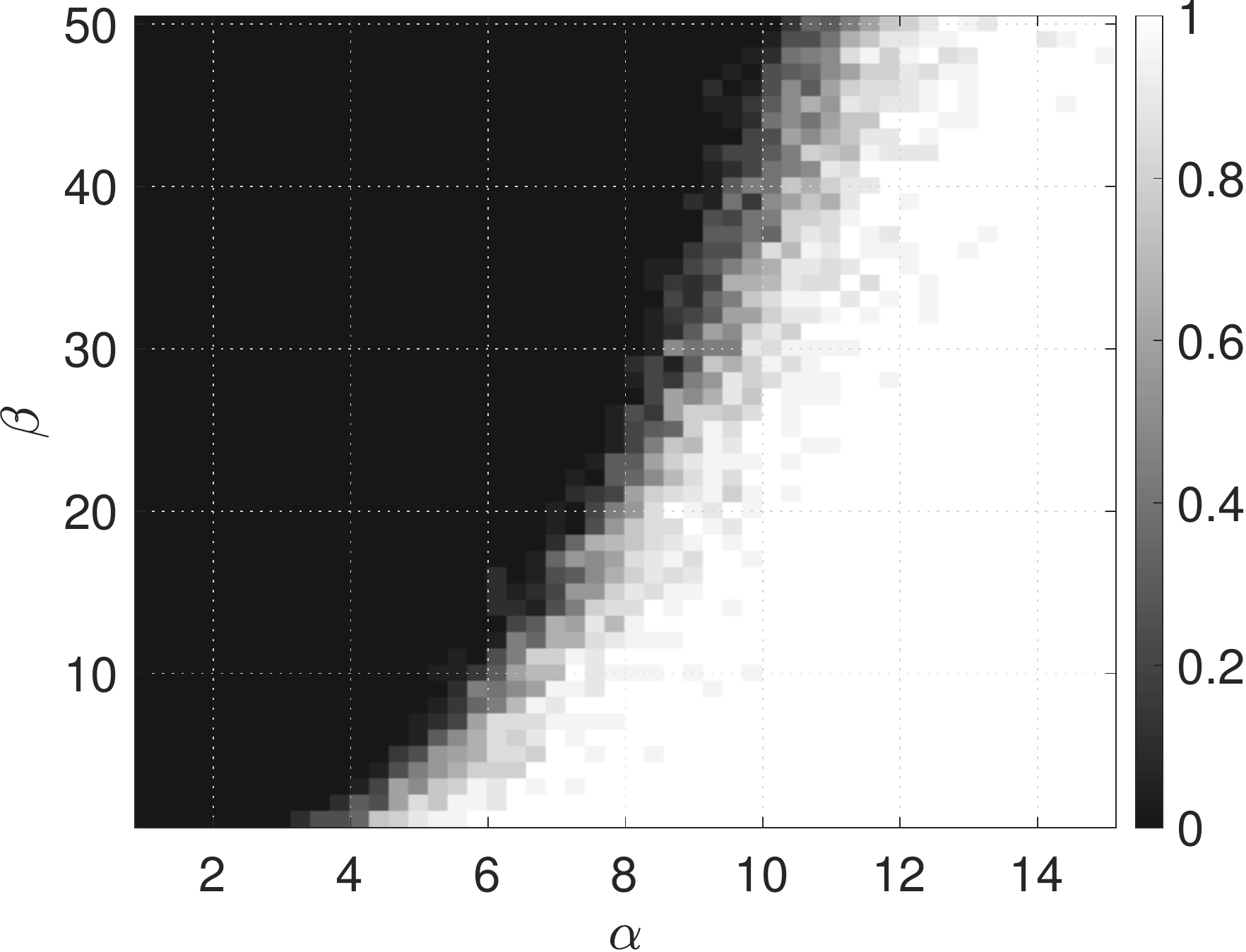}\label{fig:SMAC_refine}} 
    \caption{We test under the setting $m_1 = m_2 = 200$ and $d = 2$. We show the success rate of exact recovery of result by \protect \subref{fig:before_refine}: Algorithm~\ref{alg:spectral}; \protect \subref{fig:after_refine}: the refinement step in \eqref{eq:cluster_refine}; \protect \subref{fig:SMAC_refine}: SMAC proposed in~\cite{bajaj2018smac}. 
    }
    \label{fig:SMAC}
\end{figure}

\begin{figure}[t!]
\vspace{-0.15cm}
    \centering
    \subfloat[\scriptsize{Without spectral decomposition}]{\includegraphics[width = 0.43\textwidth]{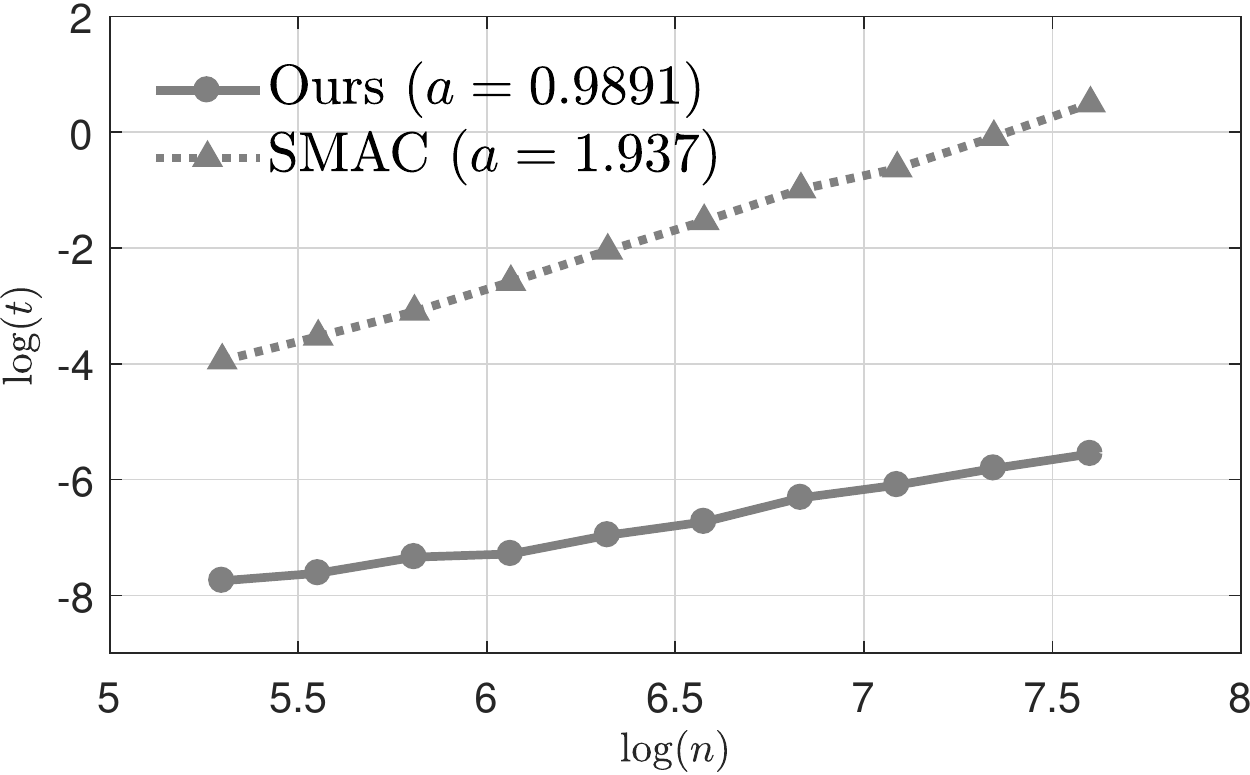}\label{fig:runtime_noeigen}} \hskip 0.2cm
    \subfloat[\scriptsize{With spectral decomposition}]{\includegraphics[width = 0.43\textwidth]{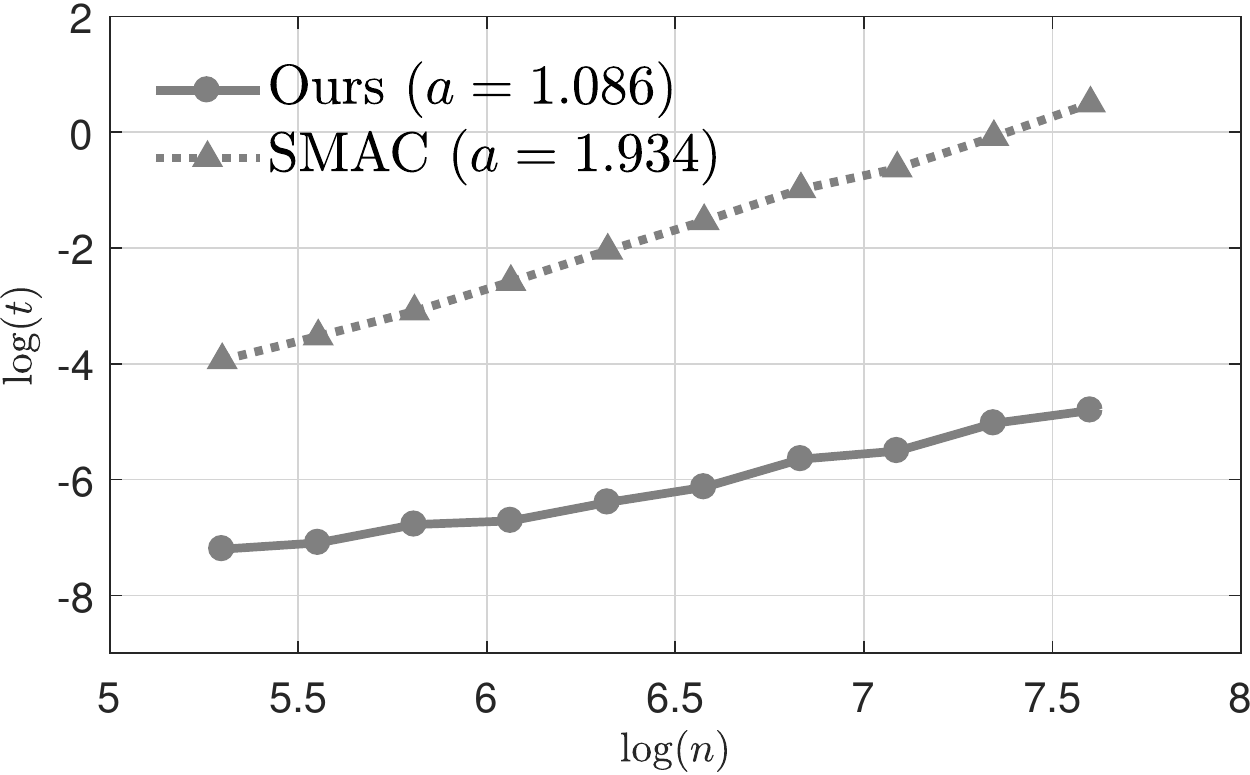}\label{fig:runtime_eigen}}\\
    \caption{\textit{Runtime test}. 
    We plot the runtime (denoted by $t$) of different algorithms v.s. different sizes of the adjacency matrix $\mathbf{A}$ (denoted by $n$) in $\log$-scale. The slope of each run-time v.s. problem size curve is noted by $a$. \protect \subref{fig:runtime_noeigen} and \protect \subref{fig:runtime_eigen} represent the runtime that include and exclude the step 1 (spectral decomposition) in Algorithm~\ref{alg:spectral}, respectively. {\color{black} We set $p = q = 10\log n/n$, $K = 2$ and $d = 2$.} The refinement step in Algorithm~\ref{alg:spectral} is excluded.
    }
    \label{fig:runtime}
\end{figure}

{\color{black}
We also test the optional refinement step for cluster memberships described in Section~\ref{sec:refinement}, where the threshold $\epsilon$ in \eqref{eq:refine_S_k} is specified in a way that 10$\%$ nodes are included in the set $S_{\epsilon}$. The result is then displayed in Fig.~\ref{fig:exp_three_c} and Fig.~\ref{fig:exp_three_d}. We also show another one in Fig.~\ref{fig:SMAC} under the setting $m_1 = m_2 = 200$ and $d = 2$.
As a result, on both examples, we observe a clear improvement on the phase transition boundary of exact recovery of the cluster memberships after the refinement step is applied, which demonstrates the efficacy of refinement.
}

In addition, we compare our algorithm with an existing method SMAC proposed in~\cite{bajaj2018smac}, which is also based on spectral decomposition. Although the original version of SMAC is designed for permutation group synchronization, it can be easily extended to handle the orthogonal group. We test under the setting of $m_1 = m_2 = 200$ and $d = 2$. In Fig~\ref{fig:SMAC} we present the success rate of recovery by our algorithm and SMAC. As we can see, the result by our algorithm after the refinement has a similar phase transition boundary as SMAC. However, our method has much less computational complexity than SMAC, as we shall see in the following. 

{
\color{black}
To investigate the computational complexity, we further test the runtime of the proposed Algorithm~\ref{alg:spectral} (excluding the refinement step) and SMAC~\cite{bajaj2018smac} in Fig.~\ref{fig:runtime}, under the setting $p = q = 10\log n/n$, $K = 2$, and $d = 2$, and we let $n$ vary from $200$ to $2000$. All the experiments are performed using the same computational resource mentioned at the beginning of Section~\ref{sec:exp}, and we obtain the average timing over 50 trials. Fig.~\ref{fig:runtime_noeigen} and Fig.~\ref{fig:runtime_eigen} display the runtime results including and excluding the step 1 (spectral decomposition) in Algorithm~\ref{alg:spectral}, respectively. From the slopes of the curves, we observe that our algorithm without refinement scales almost linearly with the data size $n$, and the slope slightly increases as the spectral decomposition is included (recall that the order is of $O(n\log n)$ in theory due to the sparsity). In contrast, SMAC scales quadractically in $n$, since it performs synchronization pairwisely. Such observation agrees with our complexity analysis in Table~\ref{tab:complexity} and demonstrates the efficiency of Algorithm~\ref{alg:spectral}.
}

\begin{figure}[t!]
\vspace{-0.15cm}
    \centering
    \subfloat[\scriptsize{$d = 2$}]{\includegraphics[width = 0.32\textwidth]{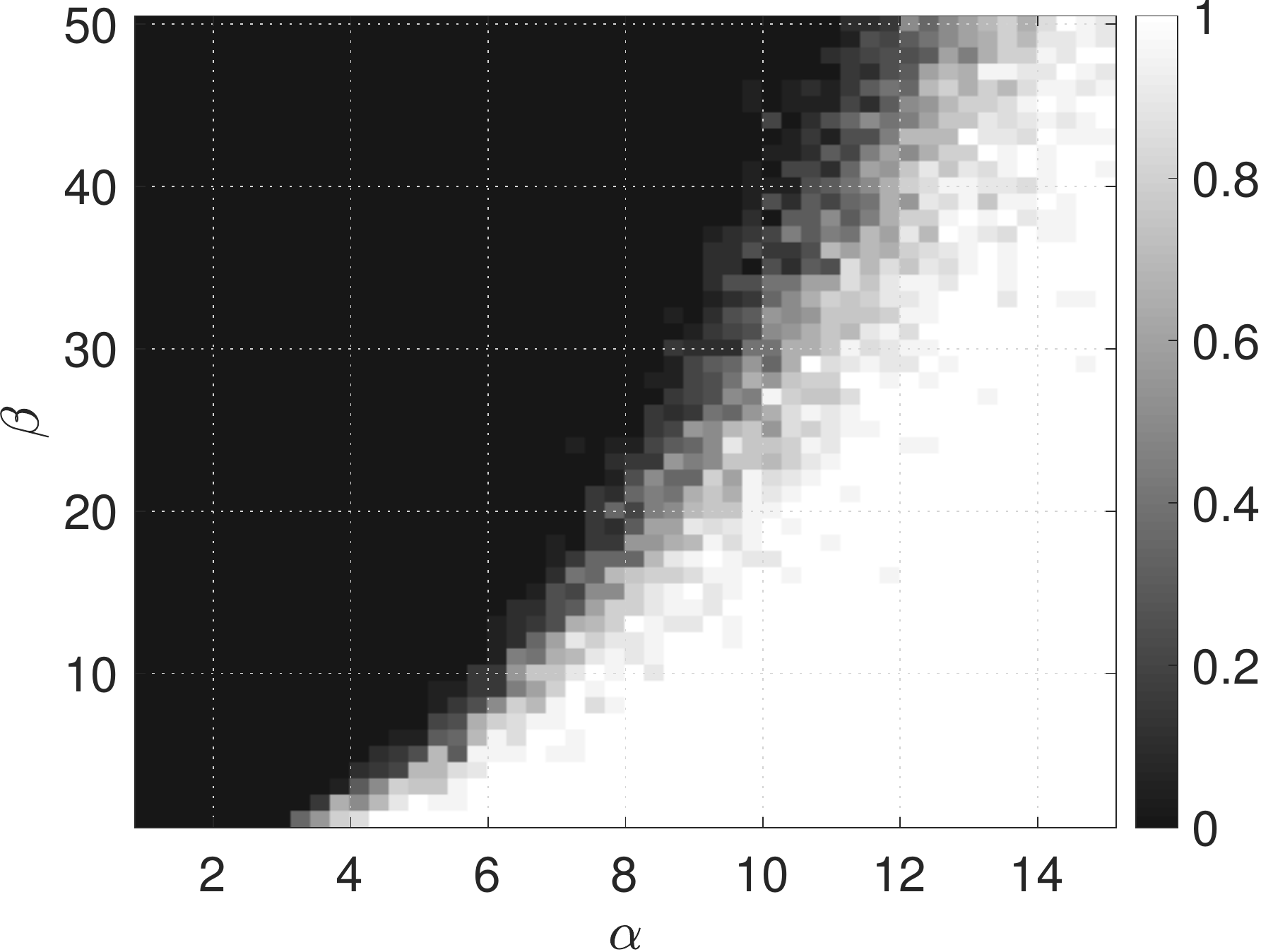}\label{fig:compare_d_2}}\hskip 0.17cm
    \subfloat[\scriptsize{$d = 10$}]{\includegraphics[width = 0.32\textwidth]{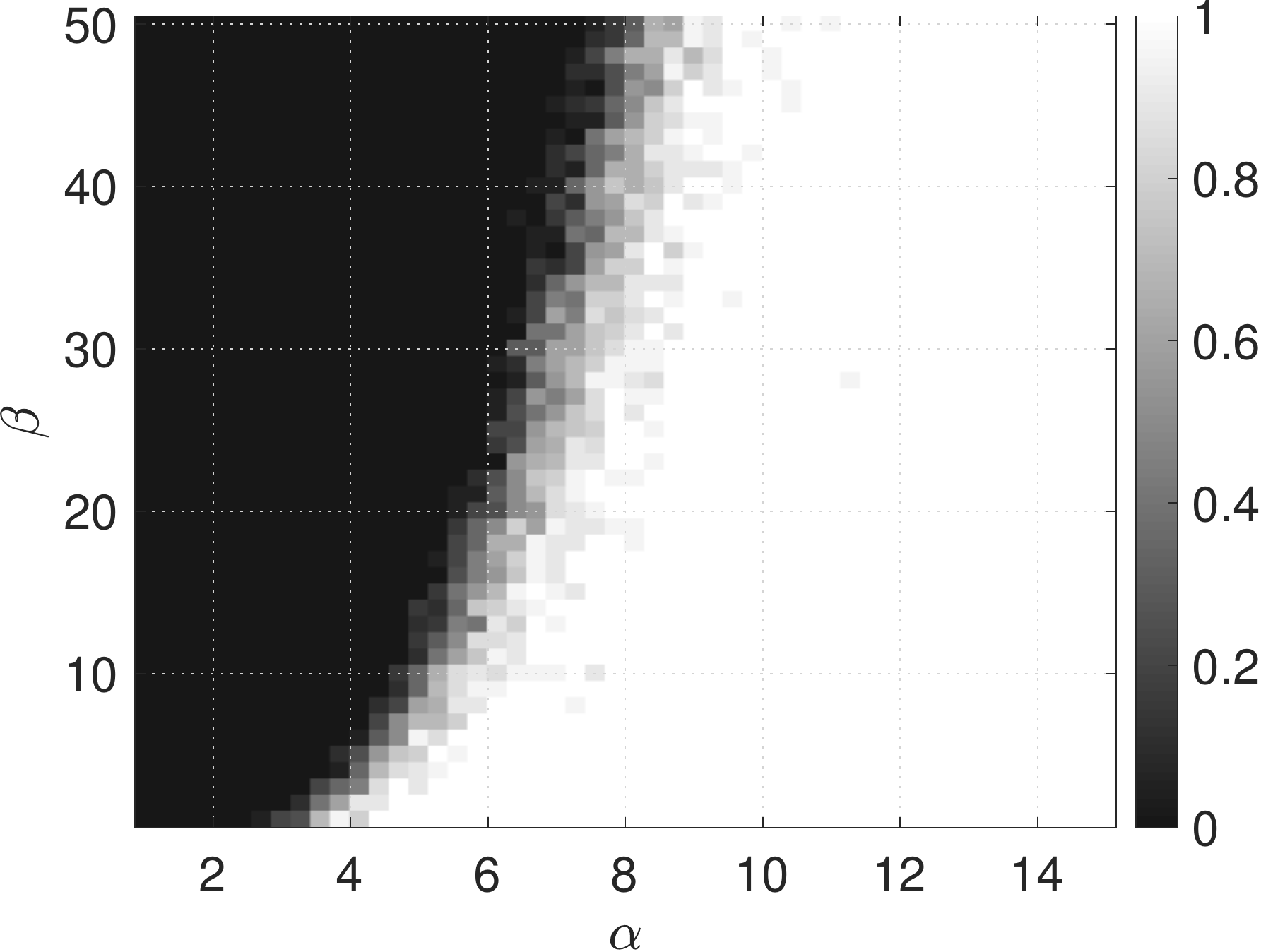}\label{fig:compare_d_10}} \hskip 0.17cm
    \subfloat[\scriptsize{$\min_{i \in C_1} \frac{\|\bm{R}_{1i}\|_\mathrm{F}}{ \|\bm{R}_{2i}\|_\mathrm{F}}$ }]{\includegraphics[width = 0.315\textwidth]{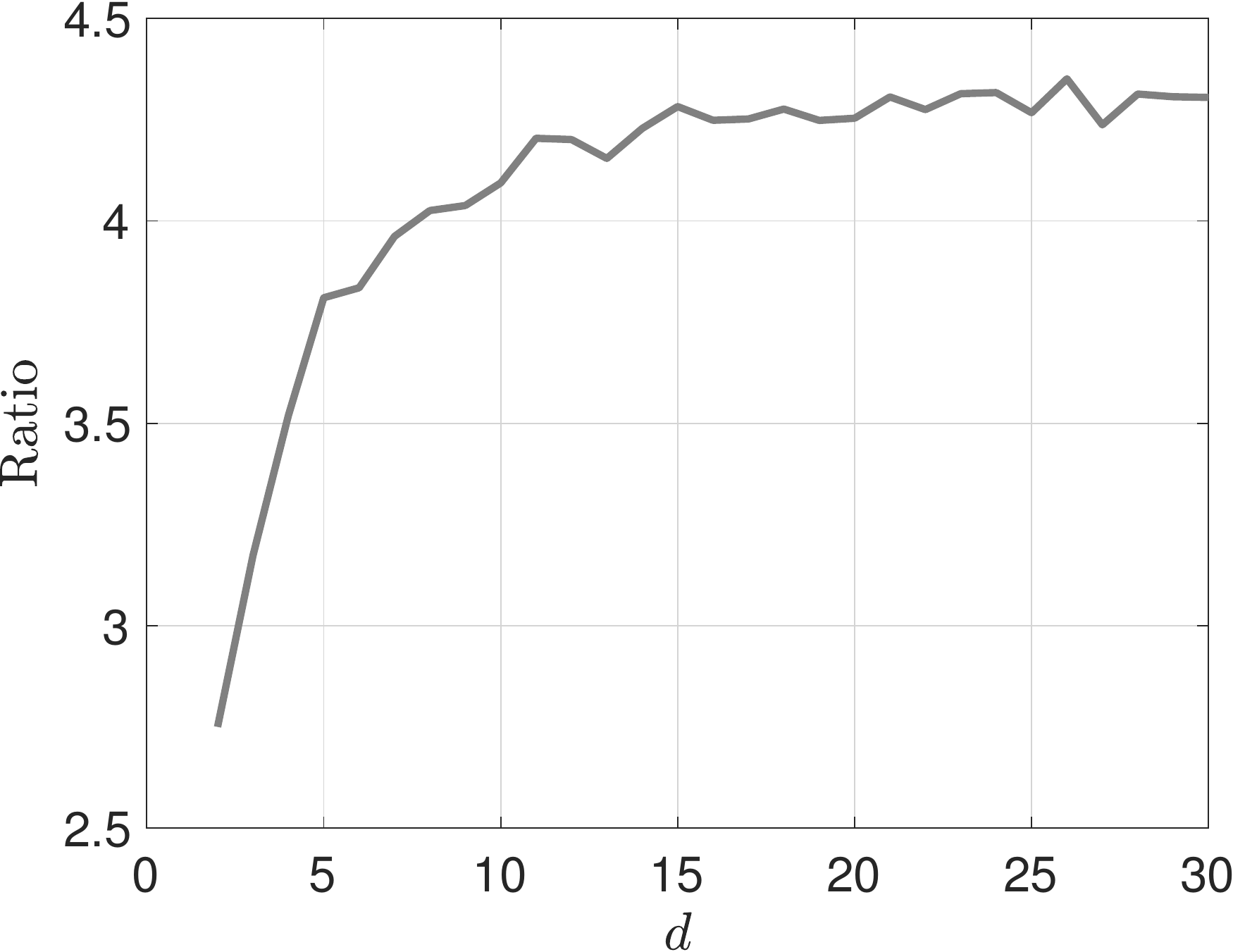}\label{fig:ratio}}\\
    \caption{\protect \subref{fig:compare_d_2} and \protect \subref{fig:compare_d_10}: We compare the success rate of exact recovery between $d = 2$ and $d = 10$, under $m_1 = m_2 = 500$; \protect \subref{fig:ratio}: We fix $p = q = 0.5$ and plot the minimum ratio $\min_{i \in C_1} \|\bm{R}_{i1}\|_\mathrm{F} / \|\bm{R}_{i2}\|_\mathrm{F}$ under different $d$ from $2$ to $30$.}
    \label{fig:compare_ratio}
\end{figure}

Besides, empirically we observe the phase transition boundary of exact recovery changes as the dimension of orthogonal group $d$ increases, as shown in Fig.~\ref{fig:compare_d_2} and Fig.~\ref{fig:compare_d_10} where we compare the result when $d = 2$ with $d = 10$. One can see that under the same setting of $(p,q,n)$, exact recovery gets easier as $d$ increases. To investigate the reason behind,
we randomly pick a node $i \in C_1$ and compute the ``signal-to-noise ratio'' $\|\bm{R}_{1i}\|_\mathrm{F} / \|\bm{R}_{2i}\|_\mathrm{F}$, where $\|\bm{R}_{1i}\|_\mathrm{F}$ and $\|\bm{R}_{2i}\|_\mathrm{F}$ 
can be seen as the signal and noise level respectively. Clearly, a larger ratio indicates a lower error probability of clustering. Then, we fix the parameters $p = q = 0.5, m_1 = m_2 = 500$, and plot the minimum ratio $\min_{i \in C_1} \|\bm{R}_{1i}\|_\mathrm{F}/\|\bm{R}_{2i}\|_\mathrm{F}$ 
among all nodes $i \in C_1$ under different $d$ varying from $2$ to $30$ in Fig.~\ref{fig:ratio}. As we can see, the ratio increases and converges as $d$ increases, which indicates that clustering becomes easier on a larger dimension $d$. Unfortunately, such phenomenon is not characterized by our theory in Section~\ref{sec:analysis}, and we leave the theoretical investigation as a future study.

	\section{Discussion and conclusion}
\label{sec:conclusion}

In this work, we study the joint community detection and orthogonal synchronization by proposing a spectral method based algorithm. The proposed method is extremely convenient to use which only consists of a spectral decomposition step followed by a blockwise column pivoted QR-factorization~(CPQR). As a simple variant of CPQR, blockwise CPQR is designed to ensure that the blockwise nature of the matrices involved is captured and the blockwise structure is always preserved. Such QR variant is flexible and can be applied to other applications that require QR-factorization on a block matrix. In terms of the time complexity, our algorithm scales linearly with the number of data points, which exhibits a great advantage over other existing methods that at least requires $O(n^2)$ complexity. In addition, under the scenario of two equal-sized clusters, we provide a near-optimal condition which guarantees the underlying cluster memberships are exactly recovered, and the orthogonal transforms are stably recovered. In particular, such condition is obtained by deriving a blockwise error bound on each block of eigenvectors, using the leave-one-out technique~\cite{zhong2018near, abbe2020entrywise} rather than Davis-Kahan theorem~\cite{davis1970rotation}.
Also, we point out that our theory can be extended to the more general case when having arbitrary number of clusters of different sizes. To evaluate our algorithm, we perform a series of numerical experiments that demonstrate the efficacy of our algorithm and confirm our theoretical characterization of the sharp phase transition of recovery.

{\color{black}
In addition, we have considered an extension of the current result to cover a ``noisy'' version of the problem by considering additive Gaussian noise model on the pairwise group transformation. We perform some initial study on this noise model which is given in Appendix~\ref{sec:new_noise_model}, where we (both theoretically and empirically) show that our proposed Algorithm~\ref{alg:spectral} is still able to robustly recover the cluster memberships and the orthogonal transforms under mild additive noise levels.

There are several directions that can be further explored. First, it is natural to expect that the proposed algorithm can be applied to other groups such as permutation group. Second, one should be able to extend the theoretical results to more general scenarios with unbalanced cluster sizes and larger number of clusters,\footnote{Here, we still assume the number of clusters scales as a constant i.e. $K = O(1)$ and each cluster size scales in the same order as the total number of nodes i.e. $m_i = \Theta(n), \; n=1,\ldots,K$.} but several major changes are necessary: for example, in general the spectral norm of the residual $\|\bm{\Delta}\|$ given in Lemma~\ref{lemma:Delta_M} should depend on $K$ but the current proof (see Appendix~\ref{sec:proof_lemma_delta_norm_2}) only considers two clusters; also, the analysis of exact recovery by CPQR certainly becomes more challenging as more than one pivot are needed to be considered when $K > 2$. 
}

	\section*{Acknowledgments}
	ZZ and YF acknowledge the support from NSF grant DMS-1854791 and Alfred P. Sloan foundation. YK acknowledges support from NSF grant DMS-2111563.
	
	\appendix
	
	\section{Proof of the main theorems}
\label{sec:proof_main_theorems}
This section is devoted to the proof of the main theorems given in Section~\ref{sec:analysis}. 

\subsection{Important technical ingredients}
\label{sec:tech_ingreidient}
We first introduce several important technical ingredients that will be frequently used in our analysis. 

\vspace{0.1cm}
\paragraph{$\bullet$ Polar decomposition} 
Recall the polar decomposition and the polar factor $\mathcal{P}(\cdot)$ in Definition~\ref{def:polar}, 
then the following bound will be frequently used in our analysis.
\begin{lemma}[\textup{\cite[Lemma 5.2]{ling2020near}}]
Let $\bm{X}, \bm{Y} \in \mathbb{R}^{d \times d}$ be two invertible matrices, then
\begin{equation*}
    \|\mathcal{P}(\bm{X}) - \mathcal{P}(\bm{Y})\| \leq 2\sqrt{2}\min\{\sigma^{-1}_{\mathrm{min}}(\bm{X}), \sigma^{-1}_{\mathrm{min}}(\bm{Y})\}\|\bm{X} - \bm{Y}\|,
\end{equation*}
where $\sigma_{\mathrm{min}}(\cdot)$ denotes the smallest singular value of a matrix.
\label{lemma:PX_PY}
\end{lemma}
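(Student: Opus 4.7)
The plan is to reduce the estimate to a Sylvester equation for the orthogonal matrix $\bm{R} := \mathcal{P}(\bm{X})^\top \mathcal{P}(\bm{Y})$. Since $\mathcal{P}(\bm{X})$ is orthogonal, $\mathcal{P}(\bm{X}) - \mathcal{P}(\bm{Y}) = \mathcal{P}(\bm{X})(\bm{I} - \bm{R})$ and consequently $\|\mathcal{P}(\bm{X}) - \mathcal{P}(\bm{Y})\| = \|\bm{I} - \bm{R}\|$, so it suffices to bound $\|\bm{I} - \bm{R}\|$ by $\|\bm{X} - \bm{Y}\|$ with the desired prefactor.

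To obtain such a bound, I would exploit the symmetry of $\bm{W}_X = \mathcal{P}(\bm{X})^\top \bm{X}$ and $\bm{W}_Y = \mathcal{P}(\bm{Y})^\top \bm{Y} = \bm{Y}^\top \mathcal{P}(\bm{Y})$. Computing $\mathcal{P}(\bm{X})^\top \bm{Y} = \bm{R}\bm{W}_Y$ and $\bm{X}^\top \mathcal{P}(\bm{Y}) = \bm{W}_X \bm{R}$ and then subtracting $\mathcal{P}(\bm{X})^\top \bm{X} = \bm{W}_X$ and $\bm{Y}^\top \mathcal{P}(\bm{Y}) = \bm{W}_Y$ yields the Sylvester-type identity
\begin{equation*}
\bm{W}_X(\bm{R} - \bm{I}) + (\bm{R} - \bm{I})\bm{W}_Y \;=\; \mathcal{P}(\bm{X})^\top(\bm{Y} - \bm{X}) + (\bm{X} - \bm{Y})^\top \mathcal{P}(\bm{Y}).
\end{equation*}
The right-hand side has operator norm at most $2\|\bm{X} - \bm{Y}\|$ since $\mathcal{P}(\bm{X})$ and $\mathcal{P}(\bm{Y})$ are orthogonal.

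I would then apply the standard operator-norm Sylvester estimate: if $\bm{P}, \bm{Q}$ are symmetric positive definite, the unique solution $\bm{Z}$ to $\bm{P}\bm{Z} + \bm{Z}\bm{Q} = \bm{C}$ satisfies $\|\bm{Z}\| \leq \|\bm{C}\|/(\lambda_{\min}(\bm{P}) + \lambda_{\min}(\bm{Q}))$, most cleanly proved via the integral representation $\bm{Z} = \int_0^\infty e^{-t\bm{P}} \bm{C}\, e^{-t\bm{Q}} \, dt$ together with $\|e^{-t\bm{P}}\| = e^{-t\lambda_{\min}(\bm{P})}$. Since $\lambda_{\min}(\bm{W}_X) = \sigma_{\min}(\bm{X})$ and $\lambda_{\min}(\bm{W}_Y) = \sigma_{\min}(\bm{Y})$ (both strictly positive by invertibility of $\bm{X}, \bm{Y}$), this gives
\begin{equation*}
\|\bm{I} - \bm{R}\| \;\leq\; \frac{2\|\bm{X} - \bm{Y}\|}{\sigma_{\min}(\bm{X}) + \sigma_{\min}(\bm{Y})} \;\leq\; \frac{2\|\bm{X} - \bm{Y}\|}{\max\{\sigma_{\min}(\bm{X}), \sigma_{\min}(\bm{Y})\}} \;=\; 2\min\{\sigma_{\min}^{-1}(\bm{X}), \sigma_{\min}^{-1}(\bm{Y})\}\,\|\bm{X} - \bm{Y}\|,
\end{equation*}
which is in fact slightly sharper than the stated constant $2\sqrt{2}$.

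The main obstacle is verifying the operator-norm Sylvester bound: the Frobenius analogue is immediate by simultaneous diagonalization, but the operator-norm version requires either the integral-representation argument or an equivalent spectral decomposition estimate. Once that is in hand, the remainder is purely algebraic — the key check being that $\mathcal{P}(\bm{X})^\top \bm{X}$ and $\bm{Y}^\top \mathcal{P}(\bm{Y})$ are symmetric so that they equal $\bm{W}_X$ and $\bm{W}_Y$ and the right-hand side of the Sylvester equation telescopes to a simple difference involving $\bm{X} - \bm{Y}$.
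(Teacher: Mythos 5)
Your proof is correct. The paper itself offers no proof of this lemma --- it is imported verbatim as \cite[Lemma 5.2]{ling2020near} --- so your self-contained derivation is necessarily a different route from what the paper does, and it is worth recording that it works. The algebra checks out: with $\bm{W}_X = \mathcal{P}(\bm{X})^\top\bm{X}$ and $\bm{W}_Y = \mathcal{P}(\bm{Y})^\top\bm{Y}$ both symmetric positive definite (by invertibility), the identity $\bm{W}_X(\bm{R}-\bm{I}) + (\bm{R}-\bm{I})\bm{W}_Y = \mathcal{P}(\bm{X})^\top(\bm{Y}-\bm{X}) + (\bm{X}-\bm{Y})^\top\mathcal{P}(\bm{Y})$ follows exactly as you say, using the symmetry of the polar-factor products to rewrite $\bm{X}^\top\mathcal{P}(\bm{X})$ as $\bm{W}_X$ and $\bm{Y}^\top\mathcal{P}(\bm{Y})$ as $\bm{W}_Y$. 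The operator-norm Sylvester bound is the one genuinely non-algebraic ingredient, and the integral representation $\bm{Z} = \int_0^\infty e^{-t\bm{P}}\bm{C}\,e^{-t\bm{Q}}\,dt$ handles it: the solution is unique because the spectra of $\bm{P}$ and $-\bm{Q}$ are disjoint, and $\|e^{-t\bm{P}}\| = e^{-t\lambda_{\min}(\bm{P})}$ gives $\|\bm{Z}\| \leq \|\bm{C}\|/(\lambda_{\min}(\bm{P})+\lambda_{\min}(\bm{Q}))$. Combined with $\lambda_{\min}(\bm{W}_X) = \sigma_{\mathrm{min}}(\bm{X})$ this is the classical R.-C. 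Li perturbation bound for the unitary polar factor, and it yields the constant $2/(\sigma_{\mathrm{min}}(\bm{X})+\sigma_{\mathrm{min}}(\bm{Y})) \leq 2\min\{\sigma_{\mathrm{min}}^{-1}(\bm{X}),\sigma_{\mathrm{min}}^{-1}(\bm{Y})\}$, which is indeed sharper than the $2\sqrt{2}$ stated in the lemma; since the lemma is only used here up to absolute constants, the improvement is harmless. The one presentational caveat is that your argument as written needs both $\bm{X}$ and $\bm{Y}$ invertible (so that $\bm{W}_X$ and $\bm{W}_Y$ are positive definite rather than merely semidefinite), but that is exactly the hypothesis of the lemma.
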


Lemma~\ref{lemma:PX_PY} implies that $\|\mathcal{P}(\bm{X}) - \mathcal{P}(\bm{Y})\|$ is bounded as long as $\|\bm{X} - \bm{Y}\|$ is bounded and all the singular values of $\bm{X}$ and $\bm{Y}$ are bounded away from zero.

\vspace{0.1cm}
\paragraph{$\bullet$ Matrix perturbation theory} 
We include the following classic results in matrix perturbation theory, where Theorem~\ref{the:weyl} and Theorem~\ref{the:davis_kahan} study the perturbation on singular values and eigenvectors, respectively.
\begin{theorem}[Weyl's inequality~\cite{stewart1998perturbation}]
Let $\bm{X}$ and $\bm{Y}$ be two matrices of the same size, then
\begin{equation*}
    |\sigma_i(\bm{X}) - \sigma_i(\bm{Y})| \leq \|\bm{X} - \bm{Y}\|, \quad \forall i 
\end{equation*}
where $\sigma_i(\cdot)$ denotes the $i$-th largest singular value of a matrix.
\label{the:weyl}
\end{theorem}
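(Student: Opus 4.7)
The plan is to prove Weyl's inequality for singular values via the Courant--Fischer (min--max) characterization, which is the most transparent route and avoids passing through the spectral theorem for Hermitian dilations.

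First I would recall the min--max formula: for any $A \in \mathbb{R}^{m \times n}$ and any $1 \le i \le \min(m,n)$,
\begin{equation*}
    \sigma_i(A) \;=\; \min_{\substack{S \subseteq \mathbb{R}^n \\ \dim S = n-i+1}} \; \max_{\substack{x \in S \\ \|x\|=1}} \|Ax\|.
\end{equation*}
This is standard and follows from applying Courant--Fischer to $A^\top A$ and taking square roots. The point is that $\sigma_i(A)$ is obtained as an extremum of the $\ell_2$-norm of $Ax$ over unit vectors in certain subspaces.

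Next, for any subspace $S \subseteq \mathbb{R}^n$ with $\dim S = n - i + 1$ and any unit vector $x \in S$, the triangle inequality gives
\begin{equation*}
    \|Xx\| \;=\; \|(X-Y)x + Yx\| \;\le\; \|X-Y\| + \|Yx\|.
\end{equation*}
Taking the max over unit $x \in S$ and then the min over such subspaces $S$, I would obtain $\sigma_i(X) \le \|X-Y\| + \sigma_i(Y)$. Swapping the roles of $X$ and $Y$ yields the reverse bound, and combining them proves $|\sigma_i(X)-\sigma_i(Y)| \le \|X-Y\|$.

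There is no real obstacle here; the only subtle point is being careful that the min--max characterization is stated for the appropriate $i$ (so that subspaces of dimension $n-i+1$ exist), and that the bound $\max_{x \in S, \|x\|=1}\|(X-Y)x\| \le \|X-Y\|$ is applied uniformly in $S$ before taking the outer min. Since the statement of the theorem quantifies over all valid $i$ without further constraint, this handles every case at once, and the result follows immediately.
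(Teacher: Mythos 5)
Your proof is correct. The paper states this as a classical result cited from Stewart's perturbation theory text and gives no proof of its own, so there is nothing to compare against; the argument you give — the Courant–Fischer min–max characterization $\sigma_i(A) = \min_{\dim S = n-i+1} \max_{x \in S,\, \|x\|=1} \|Ax\|$ combined with the triangle inequality $\|Xx\| \leq \|(X-Y)x\| + \|Yx\| \leq \|X-Y\| + \|Yx\|$, applied uniformly before taking the outer minimum and then symmetrized in $X$ and $Y$ — is the standard textbook derivation and is sound as written.
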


\begin{theorem}[Davis-Kahan theorem~\cite{davis1970rotation}]
Let $\bm{X}$ and $\hat{\bm{X}} = \bm{X} + \bm{\Delta}$ be two $n \times n$ symmetric matrices with eigenvalues $\{\lambda_i\}_{i = 1}^n$ and $\{\hat{\lambda}_i\}_{i = 1}^n$ respectively, 
with the following eigen-decompositions:
\begin{equation*}
    \bm{X} = 
    \begin{bmatrix}
    \bm{\Phi}_0 &\bm{\Phi}_1
    \end{bmatrix}
    \begin{bmatrix}
    \bm{\Lambda}_0 &\bm{0}\\
    \bm{0} &\bm{\Lambda}_1
    \end{bmatrix}
    \begin{bmatrix}
    \bm{\Phi}_0 &\bm{\Phi}_1
    \end{bmatrix}^\top, \quad
    \hat{\bm{X}} = 
    \begin{bmatrix}
    \hat{\bm{\Phi}}_0 &\hat{\bm{\Phi}}_1
    \end{bmatrix}
    \begin{bmatrix}
    \hat{\bm{\Lambda}}_0 &\bm{0}\\
    \bm{0} &\hat{\bm{\Lambda}}_1
    \end{bmatrix}
    \begin{bmatrix}
    \hat{\bm{\Phi}}_0 &\hat{\bm{\Phi}}_1
    \end{bmatrix}^\top
\end{equation*}
where $\bm{\Lambda}_0$ and $\bm{\Lambda}_1$ $( \text{resp. } \hat{\bm{\Lambda}}_0 \text{ and } \hat{\bm{\Lambda}}_1)$ are diagonal matrices that contain the top $r$ eigenvalues $\{\lambda_i\}_{i = 1}^r$ $( \text{resp. } \{\hat{\lambda}_i\}_{i = 1}^r)$ and the remaining eigenvalues, respectively. $\bm{\Phi}_k, \hat{\bm{\Phi}}_k$ for $k = 0, 1$ are the normalized eigenvectors. Then,
\begin{equation*}
    \|\hat{\bm{\Phi}}_1^\top\bm{\Phi}_0\| \leq \frac{\|\bm{\Delta}\bm{\Phi}_0\|}{\delta}
\end{equation*}
where $\delta := |\lambda_{\mathrm{min}}(\hat{\bm{\Lambda}}_0) - \lambda_{\mathrm{max}}(\bm{\Lambda}_1)|$ denotes the spectral gap between $\hat{\bm{\Lambda}}_0$ and $\bm{\Lambda}_1$.
\label{the:davis_kahan}
\end{theorem}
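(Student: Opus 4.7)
}
My plan is to reduce the bound on $\|\hat{\bm{\Phi}}_1^\top \bm{\Phi}_0\|$ to a Sylvester equation that couples the perturbed and unperturbed spectra, and then invoke the separation hypothesis encoded in $\delta$. First I would start from the perturbed eigenvector identity $\hat{\bm{X}} \hat{\bm{\Phi}}_0 = \hat{\bm{\Phi}}_0 \hat{\bm{\Lambda}}_0$ and substitute $\hat{\bm{X}} = \bm{X} + \bm{\Delta}$ to obtain $\bm{X} \hat{\bm{\Phi}}_0 = \hat{\bm{\Phi}}_0 \hat{\bm{\Lambda}}_0 - \bm{\Delta}\hat{\bm{\Phi}}_0$. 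Left-multiplying by $\bm{\Phi}_1^\top$ and using $\bm{\Phi}_1^\top \bm{X} = \bm{\Lambda}_1 \bm{\Phi}_1^\top$ produces
\begin{equation*}
\bm{\Phi}_1^\top \hat{\bm{\Phi}}_0\, \hat{\bm{\Lambda}}_0 - \bm{\Lambda}_1\, \bm{\Phi}_1^\top \hat{\bm{\Phi}}_0 = \bm{\Phi}_1^\top \bm{\Delta}\, \hat{\bm{\Phi}}_0,
\end{equation*}
a Sylvester equation in the unknown $\bm{M} := \bm{\Phi}_1^\top \hat{\bm{\Phi}}_0$.

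Next I would solve this Sylvester equation using the spectral separation. Since $\hat{\bm{\Lambda}}_0$ and $\bm{\Lambda}_1$ are diagonal with entries separated by at least $\delta$ (this is precisely $|\lambda_{\min}(\hat{\bm{\Lambda}}_0) - \lambda_{\max}(\bm{\Lambda}_1)|$), an entry-wise inversion gives $\bm{M}_{ij} = (\bm{\Phi}_1^\top \bm{\Delta} \hat{\bm{\Phi}}_0)_{ij}/(\hat{\lambda}_j - \lambda_{r+i})$, and passing to operator norms yields
\begin{equation*}
\|\bm{M}\| \;\le\; \frac{\|\bm{\Phi}_1^\top \bm{\Delta}\, \hat{\bm{\Phi}}_0\|}{\delta} \;\le\; \frac{\|\bm{\Delta}\, \hat{\bm{\Phi}}_0\|}{\delta}.
\end{equation*}
This would be the Sylvester-solve step; it is standard once the gap is identified.

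The remaining task is to replace $\|\bm{\Phi}_1^\top \hat{\bm{\Phi}}_0\|$ by $\|\hat{\bm{\Phi}}_1^\top \bm{\Phi}_0\|$ and $\|\bm{\Delta}\hat{\bm{\Phi}}_0\|$ by $\|\bm{\Delta}\bm{\Phi}_0\|$. For the first, I would invoke the standard canonical-angle symmetry $\|\bm{\Phi}_1^\top \hat{\bm{\Phi}}_0\| = \|\hat{\bm{\Phi}}_1^\top \bm{\Phi}_0\|$, which follows because both equal $\|\sin\Theta(\mathcal{R}(\bm{\Phi}_0), \mathcal{R}(\hat{\bm{\Phi}}_0))\|$ via the CS decomposition. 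For the second, an alternative route is to start the derivation from $\bm{X}\bm{\Phi}_0 = \bm{\Phi}_0 \bm{\Lambda}_0$ instead, left-multiply by $\hat{\bm{\Phi}}_1^\top$, and use $\hat{\bm{\Phi}}_1^\top \hat{\bm{X}} = \hat{\bm{\Lambda}}_1 \hat{\bm{\Phi}}_1^\top$; this variant directly gives $\|\hat{\bm{\Phi}}_1^\top \bm{\Phi}_0\| \le \|\hat{\bm{\Phi}}_1^\top \bm{\Delta}\bm{\Phi}_0\|/\delta' \le \|\bm{\Delta}\bm{\Phi}_0\|/\delta'$ with $\delta' = |\lambda_{\min}(\bm{\Lambda}_0) - \lambda_{\max}(\hat{\bm{\Lambda}}_1)|$, and one can then verify that the two forms of the gap coincide with the one in the statement up to a Weyl-type adjustment.

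The main obstacle, I expect, is reconciling the exact form of the gap $\delta$ with the right-hand side $\|\bm{\Delta}\bm{\Phi}_0\|$. The natural Sylvester derivation starting from $\hat{\bm{\Phi}}_0$ pairs $\hat{\bm{\Lambda}}_0$ with $\bm{\Lambda}_1$ (the gap in the theorem) but produces $\|\bm{\Delta}\hat{\bm{\Phi}}_0\|$, whereas the derivation starting from $\bm{\Phi}_0$ produces $\|\bm{\Delta}\bm{\Phi}_0\|$ but pairs $\bm{\Lambda}_0$ with $\hat{\bm{\Lambda}}_1$. Matching both simultaneously requires either (a) invoking the canonical-angle symmetry to swap roles while tracking that the two candidate gaps agree under the stated hypothesis, or (b) using a slightly more refined argument such as the contour-integral / resolvent representation of the spectral projector, which decouples the ``unperturbed test vector'' from the ``perturbed gap.'' I would pursue route (a) first, since it is elementary and fits the usage of the theorem elsewhere in the paper, where $\delta$ is always taken large enough that the Weyl discrepancy between the two gap forms is negligible.
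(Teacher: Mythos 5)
The paper states Theorem~\ref{the:davis_kahan} as a classical result cited from the literature and gives no proof of its own, so your proposal can only be measured against the standard argument --- which is exactly what you reproduce. The Sylvester-equation derivation is the right one, and, more importantly, you correctly isolate the only genuine subtlety: the theorem as literally written pairs the residual $\|\bm{\Delta}\bm{\Phi}_0\|$ (unperturbed eigenvectors) with the gap $\delta = |\lambda_{\min}(\hat{\bm{\Lambda}}_0)-\lambda_{\max}(\bm{\Lambda}_1)|$ (perturbed top block vs.\ unperturbed bottom block), and neither natural Sylvester derivation produces that combination. In fact the statement is false as written without an adjustment: take $\bm{X}=\diag(1,0)$, $\bm{\Delta}=\diag(-1,2)$, $r=1$; then $\|\hat{\bm{\Phi}}_1^\top\bm{\Phi}_0\|=1$, $\|\bm{\Delta}\bm{\Phi}_0\|=1$, but $\delta=2$, violating the claimed inequality (the ``correct'' gap $|\lambda_{\min}(\bm{\Lambda}_0)-\lambda_{\max}(\hat{\bm{\Lambda}}_1)|=1$ makes it tight). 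So your route (a) is the right resolution and is the only honest one: prove the version with $\delta'=|\lambda_{\min}(\bm{\Lambda}_0)-\lambda_{\max}(\hat{\bm{\Lambda}}_1)|$ exactly, then observe via Weyl that $|\delta-\delta'|\leq 2\|\bm{\Delta}\|$, which is negligible in every application in this paper (the theorem is only ever invoked through ``$\lesssim$'' with $\|\bm{\Delta}\| = o(\delta)$, precisely as the paper's own Remark after the theorem arranges). This matches the intended reading of the cited result.

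Two small points of rigor you should tighten. First, the step from the entrywise solution $\bm{M}_{ij}=(\bm{\Phi}_1^\top\bm{\Delta}\hat{\bm{\Phi}}_0)_{ij}/(\hat{\lambda}_j-\lambda_{r+i})$ to $\|\bm{M}\|\leq\|\bm{E}\|/\delta$ in \emph{operator} norm is not immediate from entrywise division (that only gives the Frobenius bound); you need the standard integral representation $\bm{M}=\int_0^\infty e^{t\bm{\Lambda}_1}\bm{E}\,e^{-t\hat{\bm{\Lambda}}_0}\,dt$, valid because one spectrum lies entirely above the other, whence $\|\bm{M}\|\leq\|\bm{E}\|\int_0^\infty e^{-t\delta}dt=\|\bm{E}\|/\delta$. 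Second, the canonical-angle symmetry $\|\bm{\Phi}_1^\top\hat{\bm{\Phi}}_0\|=\|\hat{\bm{\Phi}}_1^\top\bm{\Phi}_0\|$ you invoke does hold here, but only because the two distinguished subspaces have the same dimension $r$; it is worth saying so. With those two caveats filled in, the proposal is a complete and correct proof of the result in the form the paper actually uses.
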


\begin{remark}
{
\color{black}
A common technique for analyzing $\delta$ above is by obtaining the following lower bound:
\begin{equation*}
\begin{aligned}
    \delta &= |\lambda_{\mathrm{min}}(\hat{\bm{\Lambda}}_0) - \lambda_{\mathrm{max}}(\bm{\Lambda}_1)| = |\lambda_r(\hat{\bm{X}}) - \lambda_{r+1}(\bm{X})| \\
    &= |\lambda_r(\hat{\bm{X}}) - \lambda_r(\bm{X}) + \lambda_r(\bm{X}) - \lambda_{r+1}(\bm{X})| \\
    &\overset{(a)}{\geq} |\lambda_r(\bm{X}) - \lambda_{r+1}(\bm{X})| - |\lambda_r(\hat{\bm{X}}) - \lambda_r(\bm{X})|\\
    &\overset{(b)}{\geq} |\lambda_r(\bm{X}) - \lambda_{r+1}(\bm{X})| - \|\bm{\Delta}\|
\end{aligned}
\end{equation*}
where $(a)$ comes from the triangular inequality and $(b)$ applies Theorem~\ref{the:weyl}. Here, we assume that $|\lambda_r(\bm{X}) - \lambda_{r+1}(\bm{X})| > \|\bm{\Delta}\|$ such that the spectral gap is greater than the norm of perturbation (otherwise the lower bound becomes trivial). Indeed, this is the case in our analysis because $\|\bm{\Delta}\| = O(\sqrt{\log n})$ under the condition that $p = \alpha \frac{\log n}{n}$ and $q = \beta \frac{\log n}{n}$ and it is always of lower order compared to $|\lambda_r(\bm{X}) - \lambda_{r+1}(\bm{X})| = \Omega(\log n)$. Therefore we have $\delta \approx |\lambda_r(\bm{X}) - \lambda_{r+1}(\bm{X})|$. For details of the analysis on $\|\bm{\Delta}\|$, see Lemma~\ref{lemma:delta_norm_2} and its proof.
}
\end{remark}

\begin{lemma}[\textup{\cite[Lemma 5.6]{ling2020near}}]
Suppose $\bm{X}, \bm{Y} \in \mathbb{R}^{n \times d}$ are two tall orthogonal matrices, i.e. $\bm{X}^\top\bm{X} = \bm{Y}^\top\bm{Y} = \bm{I}_d$. Then,
\begin{equation*}
    \left\|\bm{Y} - \bm{X}\bm{O}\right\| \leq 2\left\|(\bm{I}_{n} - \bm{X}\bm{X}^\top)\bm{Y}\right\|,
\end{equation*}
where $\bm{O} = \mathcal{P}(\bm{X}^\top\bm{Y})$.
\label{lemma:error2inner}
\end{lemma}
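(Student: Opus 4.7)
The plan is to split $\bm{Y}$ along the column space of $\bm{X}$ and its orthogonal complement, and bound the two pieces separately, invoking the optimality of the polar factor to handle the piece that lives in the column space of $\bm{X}$.

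First, I would write the orthogonal decomposition $\bm{Y} = \bm{X}\bm{X}^\top\bm{Y} + (\bm{I}_n - \bm{X}\bm{X}^\top)\bm{Y}$. Subtracting $\bm{X}\bm{O}$ and applying the triangle inequality yields
\[
\|\bm{Y} - \bm{X}\bm{O}\| \leq \|\bm{X}(\bm{X}^\top\bm{Y} - \bm{O})\| + \|(\bm{I}_n - \bm{X}\bm{X}^\top)\bm{Y}\| = \|\bm{X}^\top\bm{Y} - \bm{O}\| + \|(\bm{I}_n - \bm{X}\bm{X}^\top)\bm{Y}\|,
\]
where the final equality uses that $\bm{X}$ has orthonormal columns and therefore acts as an isometry. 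So the entire problem reduces to showing $\|\bm{X}^\top\bm{Y} - \bm{O}\| \leq \|(\bm{I}_n - \bm{X}\bm{X}^\top)\bm{Y}\|$.

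To establish that inequality, I would introduce the SVD $\bm{X}^\top\bm{Y} = \bm{U}\bm{\Sigma}\bm{V}^\top$, so that by Definition~\ref{def:polar} we have $\bm{O} = \mathcal{P}(\bm{X}^\top\bm{Y}) = \bm{U}\bm{V}^\top$ and hence $\|\bm{X}^\top\bm{Y} - \bm{O}\| = \max_i(1 - \sigma_i)$. Because $\bm{X}$ and $\bm{Y}$ are both tall isometries, $\|\bm{X}^\top\bm{Y}\| \leq 1$, so every $\sigma_i \in [0,1]$. On the other side, since $\bm{I}_n - \bm{X}\bm{X}^\top$ is an orthogonal projector and $\bm{Y}^\top\bm{Y} = \bm{I}_d$,
\[
\|(\bm{I}_n - \bm{X}\bm{X}^\top)\bm{Y}\|^2 = \|\bm{Y}^\top(\bm{I}_n - \bm{X}\bm{X}^\top)\bm{Y}\| = \|\bm{I}_d - (\bm{X}^\top\bm{Y})^\top(\bm{X}^\top\bm{Y})\| = \max_i(1 - \sigma_i^2).
\]
The elementary inequality $1 - \sigma_i \leq \sqrt{1 - \sigma_i^2}$, valid for $\sigma_i \in [0,1]$, then gives exactly $\|\bm{X}^\top\bm{Y} - \bm{O}\| \leq \|(\bm{I}_n - \bm{X}\bm{X}^\top)\bm{Y}\|$, and the factor of $2$ in the claim drops out.

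There is no real obstacle: the only subtle point is that the bound $\sigma_i \leq 1$ (equivalently, the elementary inequality above) requires \emph{both} $\bm{X}$ and $\bm{Y}$ to have orthonormal columns. Without this, $\bm{X}^\top\bm{Y}$ could have singular values exceeding $1$ and the reduction from $1 - \sigma_i$ to $\sqrt{1 - \sigma_i^2}$ would fail. Everything else is just the triangle inequality and the isometry property of $\bm{X}$, so the proof should fit comfortably in a short display.
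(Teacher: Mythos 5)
Your proof is correct, and every step checks out: the isometry identity $\|\bm{X}(\bm{X}^\top\bm{Y}-\bm{O})\| = \|\bm{X}^\top\bm{Y}-\bm{O}\|$, the projector computation $\|(\bm{I}_n-\bm{X}\bm{X}^\top)\bm{Y}\|^2 = \max_i(1-\sigma_i^2)$, and the elementary bound $1-\sigma_i \le \sqrt{1-\sigma_i^2}$ for $\sigma_i\in[0,1]$ (which follows from $1-\sigma_i^2=(1-\sigma_i)(1+\sigma_i)\ge(1-\sigma_i)^2$) are all valid, and together they give the stated factor of $2$. Note that the paper itself offers no proof — it cites the lemma from \cite{ling2020near} — but your argument is the standard one for results of this type (it is essentially the conversion between the $\sin\Theta$ distance and the Procrustes/polar distance between subspace bases), so there is nothing to flag.
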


As a result, plugging Lemma~\ref{lemma:error2inner} into Theorem~\ref{the:davis_kahan} yields
\begin{equation}
    \|\bm{\Phi}_0 - \hat{\bm{\Phi}}_0\bm{O}\| \leq 2\|\hat{\bm{\Phi}}_1^\top\bm{\Phi}_0\| \leq \frac{2\|\bm{\Delta}\bm{\Phi}_0\|}{\delta},
    \label{eq:Davis_kahan_bound}
\end{equation}
where $\bm{O} = \mathcal{P}(\bm{\Phi}_0^\top\hat{\bm{\Phi}}_0)$. \eqref{eq:Davis_kahan_bound} will be frequently used in our analysis.

\vspace{0.1cm}
\paragraph{$\bullet$ Analysis of $\bm{\Delta}$} 
Recall the residual $ \bm{\Delta} = \bm{A} - \mathbb{E}[\bm{A}]$ defined in \eqref{eq:A_decompose}. Then under the setting of two equal-sized clusters and the assumption $\bm{O}_i = \bm{I}_d, i = 1,\ldots, n$, by definition each block $\bm{\Delta}_{ij} \in \mathbb{R}^{d \times d}$ for $j \neq i$ satisfies
\begin{equation}
    \begin{aligned}
    &\text{when $\kappa(j) = \kappa(i)$: } \; \bm{\Delta}_{ij} = 
    \begin{cases}
    (1-p)\bm{I}_d, &\; \text{w.p. } p,\\
    -p \bm{I}_d, &\; \text{w.p. } 1-p,
    \end{cases} \\
    &\text{when $\kappa(j) \neq \kappa(i)$: } \; \bm{\Delta}_{ij} = 
    \begin{cases}
    \bm{O}_{ij}, &\; \text{w.p. } q,\\
    \bm{0},  &\; \text{w.p. } 1-q, 
    \end{cases}
    \end{aligned}
    \label{eq:Delta_ij}
\end{equation}
where $\bm{O}_{ij} \sim \text{Unif}(\mO(d))$. 
Also, from the setting $\bm{A}_{ii} = \bm{0}$ we have $\bm{\Delta}_{ii} = \bm{0}, i = 1,\ldots, n$. Given the above, we obtain the following inequalities for $\bm{\Delta}$.
\begin{lemma}
Given $\bm{\Delta}$ defined in \eqref{eq:Delta_ij}, suppose the model parameters $p, q$ satisfy $p = \Omega(\log n/n), 1-p = \Omega(\log n/n)$ and $q = \Omega(\log n/n)$, then 
\begin{equation*}
    \begin{aligned}
    \|\bm{\Delta}\| &\lesssim \sqrt{p(1-p)n} + \sqrt{qn}
    \end{aligned}
\end{equation*}
with probability $1 - O(n^{-1})$. Also, for the $i$-th block row $\bm{\Delta}_{i \cdot}$ and $\bm{\Delta}^{(i)}$ defined in \eqref{eq:A_i_definition}, we have $\|\bm{\Delta}_{i \cdot}\| \leq  \|\bm{\Delta}\|$ and $\|\bm{\Delta}^{(i)}\| \leq  \|\bm{\Delta}\|$ for $i = 1,\ldots, n$.
\label{lemma:delta_norm_2}
\end{lemma}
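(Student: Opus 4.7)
The plan is to decompose $\bm{\Delta}$ according to whether blocks are within-cluster or between-cluster, handle each piece with a tailored concentration inequality, and then derive the two ``submatrix'' statements by elementary spectral arguments. Write $\bm{\Delta} = \bm{\Delta}^{W} + \bm{\Delta}^{B}$, where $\bm{\Delta}^{W}_{kl} = \bm{\Delta}_{kl}$ if $\kappa(k) = \kappa(l)$ and $\bm{0}$ otherwise, and $\bm{\Delta}^{B}$ is defined analogously for cross-cluster pairs. By the triangle inequality it suffices to bound each piece individually.

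For $\bm{\Delta}^{W}$ I would exploit Kronecker structure: inspecting \eqref{eq:Delta_ij}, every within-cluster block is a scalar multiple of $\bm{I}_d$, so $\bm{\Delta}^{W} = \bm{W} \otimes \bm{I}_d$ where $\bm{W} \in \mathbb{R}^{n\times n}$ is the symmetric centered Bernoulli ``SBM noise'' matrix supported on within-cluster edges, with $\text{Var}(W_{ij}) = p(1-p)$. Hence $\|\bm{\Delta}^{W}\| = \|\bm{W}\|$, and invoking a sharp spectral norm bound for symmetric matrices with independent bounded entries (e.g.~Feige--Ofek or Bandeira--van~Handel type inequalities), which holds provided $p(1-p) \gtrsim \log n / n$, yields $\|\bm{W}\| \lesssim \sqrt{p(1-p)n}$ with probability $1 - O(n^{-1})$.

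For $\bm{\Delta}^{B}$, the blocks are no longer scalar multiples of identity, so I would write $\bm{\Delta}^{B} = \sum_{(i,j):\kappa(i) \neq \kappa(j),\, i<j} \bm{X}_{ij}$, where each $\bm{X}_{ij}$ is the symmetric block matrix placing $\bm{\Delta}_{ij}$ in position $(i,j)$ and $\bm{\Delta}_{ij}^{\top}$ in position $(j,i)$. These summands are independent, centered (since $\mathbb{E}[\bm{O}_{ij}] = \bm{0}$ under Haar measure on $\mathrm{O}(d)$), and uniformly bounded by $\|\bm{X}_{ij}\| \leq 1$. A direct computation gives $\mathbb{E}[\bm{\Delta}_{ij}\bm{\Delta}_{ij}^{\top}] = q\bm{I}_d$, so $\sum_{(i,j)} \mathbb{E}[\bm{X}_{ij}^{2}]$ is block-diagonal with each block bounded in norm by $O(qn)$, giving matrix variance $v \lesssim qn$. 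Matrix Bernstein then yields $\|\bm{\Delta}^{B}\| \lesssim \sqrt{qn \log(nd)} + \log(nd)$, which collapses to $\sqrt{qn}$ (up to log factors absorbed into constants) under $q \gtrsim \log n/n$.

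Finally, for the submatrix bounds: for any unit vector $\bm{v} \in \mathbb{R}^{nd}$, the squared norm $\|\bm{\Delta}_{i\cdot}\bm{v}\|^{2}$ equals the sum of a $d$-element subset of the squared entries of $\bm{\Delta}\bm{v}$, giving $\|\bm{\Delta}_{i\cdot}\bm{v}\| \leq \|\bm{\Delta}\bm{v}\| \leq \|\bm{\Delta}\|$; taking the supremum proves $\|\bm{\Delta}_{i\cdot}\| \leq \|\bm{\Delta}\|$. For $\bm{\Delta}^{(i)}$, zero-padding does not change spectral norm, so $\|\bm{\Delta}^{(i)}\|$ equals the spectral norm of the principal submatrix obtained from $\bm{\Delta}$ by deleting the $i$-th block row and column, and Cauchy interlacing on the symmetric matrix $\bm{\Delta}$ then gives $\|\bm{\Delta}^{(i)}\| \leq \|\bm{\Delta}\|$. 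The main technical obstacle is step for $\bm{\Delta}^{B}$: stripping the $\sqrt{\log(nd)}$ factor from matrix Bernstein to match the claimed rate requires either absorbing the logarithm into the implicit constant (permissible when $q \gtrsim \log n/n$) or replacing Bernstein by a sharper Seginer/non-commutative Khintchine-type bound exploiting that each $\bm{O}_{ij}$ is Haar on $\mathrm{O}(d)$.
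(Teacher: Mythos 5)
Your overall architecture matches the paper's: split $\bm{\Delta}$ into within-cluster and cross-cluster parts, bound each in operator norm, and obtain the two submatrix claims by elementary norm monotonicity. The within-cluster step is fine — the Kronecker identity $\|\bm{W}\otimes\bm{I}_d\|=\|\bm{W}\|$ plus a Bandeira--van Handel/Feige--Ofek type bound gives $\sqrt{p(1-p)n}+\sqrt{\log n}\lesssim\sqrt{p(1-p)n}$ under $p(1-p)\gtrsim\log n/n$, which is exactly what the paper's Theorem~\ref{the:bound_norm_2_A} delivers via the moment method. The submatrix arguments (row selection for $\bm{\Delta}_{i\cdot}$, principal-submatrix monotonicity for $\bm{\Delta}^{(i)}$) are also correct and equivalent to the paper's zeroing argument.

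The genuine gap is in your treatment of $\bm{\Delta}^{B}$. Matrix Bernstein gives $\|\bm{\Delta}^{B}\|\lesssim\sqrt{qn\log(nd)}+\log(nd)$, and you claim this ``collapses to $\sqrt{qn}$ (up to log factors absorbed into constants) under $q\gtrsim\log n/n$.'' That absorption is not valid: the assumption $q\gtrsim\log n/n$ lets you absorb the \emph{additive} term $\log(nd)$ (since $\sqrt{qn}\gtrsim\sqrt{\log n}$), but the leading term carries a \emph{multiplicative} factor $\sqrt{\log(nd)}$, which grows with $n$ and cannot be hidden in an absolute constant for any regime of $q$. So your primary route proves only $\|\bm{\Delta}^{B}\|\lesssim\sqrt{qn\log(nd)}$, which is weaker than the stated $\sqrt{qn}$ by an unbounded factor. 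Your fallback suggestion — a Seginer/moment-method-type bound exploiting the bounded blocks — is in fact the necessary route, not an optional refinement: the paper's Theorem~\ref{lemma:spec_bound_S_12} establishes $\|\widetilde{\bm{\Delta}}_{12}\|\leq c_1(\sqrt{qn_1}+\sqrt{qn_2})+c_2\sqrt{\log n}$ by a trace-moment argument (following \cite[Theorem~A.7]{fan2021joint}), and only then is the additive $\sqrt{\log n}$ absorbed using $q=\Omega(\log n/n)$. To complete your proof you would need to actually carry out such a sharper bound for the cross-cluster block rather than relying on matrix Bernstein.
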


\begin{lemma}
Given any block matrix $\bm{M} \in \mathbb{R}^{nd \times r}$ with $n$ block rows, suppose $p = \Omega(\log n/n), 1-p = \Omega(\log n/n)$ and $q = \Omega(\log n/n)$, then for each block row $\bm{\Delta}_{i \cdot}$ it satisfies
\begin{equation*}
    \|\bm{\Delta}_{i \cdot}\bm{M}\| \lesssim \sqrt{(p(1-p) + q)n\log(nd)}\max_j\|\bm{M}_{j\cdot }\|
\end{equation*}
with probability $1 - O(n^{-1})$.
\label{lemma:Delta_M}
\end{lemma}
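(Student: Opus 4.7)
The plan is to write $\bm{\Delta}_{i\cdot}\bm{M}$ as a sum of independent mean-zero random matrices and apply matrix Bernstein. Specifically, decompose
\[
    \bm{\Delta}_{i\cdot}\bm{M} \;=\; \sum_{j=1}^{n}\bm{X}_{j}, \qquad \bm{X}_{j} := \bm{\Delta}_{ij}\bm{M}_{j\cdot} \in \mathbb{R}^{d\times r},
\]
where the summand at $j=i$ vanishes since $\bm{\Delta}_{ii}=\bm{0}$, and the remaining $\bm{X}_j$ are mutually independent because the off-diagonal blocks of $\bm{A}$ are drawn independently under the probabilistic model.

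Next, I would check the three ingredients needed for matrix Bernstein. \emph{Mean zero}: for $j \in C_{\kappa(i)}$ this is immediate from \eqref{eq:Delta_ij}, while for $j \notin C_{\kappa(i)}$ it reduces to $\mathbb{E}[\bm{O}_{ij}]=\bm{0}$, which holds because the Haar measure on $\mO(d)$ is bi-invariant. \emph{Uniform bound}: $\|\bm{\Delta}_{ij}\|\leq 1$ almost surely (it is either a scaled identity with coefficient at most $1$, an orthogonal matrix, or zero), so $R := \max_j \|\bm{X}_j\| \leq \max_j\|\bm{M}_{j\cdot}\|$. \emph{Matrix variance}: a direct computation yields $\mathbb{E}[\bm{X}_j^\top\bm{X}_j]=p(1-p)\bm{M}_{j\cdot}^\top\bm{M}_{j\cdot}$ for $j\in C_{\kappa(i)}$ (from $\mathbb{E}[\bm{\Delta}_{ij}^\top\bm{\Delta}_{ij}]=p(1-p)\bm{I}_d$) and $\mathbb{E}[\bm{X}_j^\top\bm{X}_j]=q\bm{M}_{j\cdot}^\top\bm{M}_{j\cdot}$ for $j\notin C_{\kappa(i)}$ (from $\bm{O}_{ij}^\top\bm{O}_{ij}=\bm{I}_d$); the same calculation handles $\mathbb{E}[\bm{X}_j\bm{X}_j^\top]$. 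Summing and using the triangle inequality, both $\|\sum_j\mathbb{E}[\bm{X}_j^\top\bm{X}_j]\|$ and $\|\sum_j\mathbb{E}[\bm{X}_j\bm{X}_j^\top]\|$ are at most $V := (p(1-p)+q)\,n\,\max_j\|\bm{M}_{j\cdot}\|^2$, which is precisely the split that produces the additive factor $p(1-p)+q$ in the final bound.

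Applying matrix Bernstein with failure probability $n^{-1}$ gives
\[
\|\bm{\Delta}_{i\cdot}\bm{M}\| \;\lesssim\; \sqrt{V\log\!\bigl((d+r)n\bigr)} + R\log\!\bigl((d+r)n\bigr).
\]
Under the hypothesis $p,\, 1-p,\, q = \Omega(\log n/n)$, we have $(p(1-p)+q)n \gtrsim \log n \gtrsim \log((d+r)n)$ (for $r$ polynomial in $n$, as in all applications of this lemma where $r=Kd$), so the sub-Gaussian term dominates the sub-exponential term. Substituting $V$ and $R$ and absorbing $\log((d+r)n) \asymp \log(nd)$ yields
\[
\|\bm{\Delta}_{i\cdot}\bm{M}\| \;\lesssim\; \sqrt{(p(1-p)+q)\,n\,\log(nd)}\;\max_j\|\bm{M}_{j\cdot}\|
\]
with probability at least $1-O(n^{-1})$.

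The only real subtlety is the variance computation: one must track the within-cluster and cross-cluster contributions separately so that the additive form $p(1-p)+q$ (rather than a looser combination) appears in $V$, and one must use the Haar-invariance of $\bm{O}_{ij}$ both for mean zero and for $\mathbb{E}[\bm{O}_{ij}^\top\bm{O}_{ij}]=\bm{I}_d$. Everything else is a direct invocation of a standard concentration inequality.
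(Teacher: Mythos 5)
Your proposal is correct and follows essentially the same route as the paper: both decompose $\bm{\Delta}_{i\cdot}\bm{M}=\sum_j \bm{\Delta}_{ij}\bm{M}_{j\cdot}$, verify mean zero and $\|\bm{\Delta}_{ij}\|\leq 1$, bound the matrix variance by $(p(1-p)+q)\,n\,\max_j\|\bm{M}_{j\cdot}\|^2$ using the within/cross-cluster split, and invoke matrix Bernstein, absorbing the sub-exponential term via $p(1-p)+q=\Omega(\log n/n)$. The paper handles $\mathbb{E}[\bm{\Delta}_{ij}\bm{M}_{j\cdot}\bm{M}_{j\cdot}^\top\bm{\Delta}_{ij}^\top]$ slightly more cleanly via $\bm{M}_{j\cdot}\bm{M}_{j\cdot}^\top\preceq\|\bm{M}_{j\cdot}\|^2\bm{I}_d$, but this is a cosmetic difference.
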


The proofs of both Lemma~\ref{lemma:delta_norm_2} and Lemma~\ref{lemma:Delta_M} are deferred to Appendix~\ref{sec:proof_lemmas}.

\subsection{Proof of Theorem~\ref{the:phi_psi_i}} 
\label{sec:proof_the_1}
In this section, we prove Theorem~\ref{the:phi_psi_i} which provides the blockwise error bound in \eqref{eq:bound_max_block}. 
We first introduce a series of necessary inequalities, followed by the proofs of Lemmas~\ref{the:Phi_f_Phi} and \ref{the:phi_i_phi_j} that are presented in Section~\ref{sec:proof_sketch}. Then we end up with the proof of Theorem~\ref{the:phi_psi_i}. All the lemmas introduced in this section are proved in Appendix~\ref{sec:proof_lemmas_the_1}.

To begin with, recall that $\bm{\Phi}$, $\bm{\Psi}$, and $\bm{\Phi}^{(i)}$ are defined as the top $Kd$ eigenvectors of $\bm{A}$, $\mathbb{E}[\bm{A}]$, and the auxiliary matrix $\bm{A}^{(i)}$ defined in \eqref{eq:A_i_definition},  respectively. We first bound the difference between $\bm{\Phi}$~(or $\bm{\Phi}^{(i)}$) and $\bm{\Psi}$ in the following:

\begin{lemma}
For a sufficiently large $n$, let $\tau := \frac{\sqrt{p(1-p)} + \sqrt{q}}{p\sqrt{n}}$, then the following satisfies with probability $1 - O(n^{-1})$.
\begin{alignat*}{2}
    \|\bm{\Phi} - \bm{\Psi}\bm{O}\| &\lesssim \tau, \\
    \|\bm{\Phi}^{(i)} - \bm{\Psi}\bm{O}^{(i)}\| &\lesssim \tau, \\
    1 - \sigma_{\mathrm{min}}(\bm{\Psi}^\top\bm{\Phi}) &\lesssim \tau, \\
    1 - \sigma_{\mathrm{min}}(\bm{\Psi}^\top\bm{\Phi}^{(i)}) &\lesssim \tau,
\end{alignat*}
where $\bm{O} = \mathcal{P}(\bm{\Psi}^\top\bm{\Phi})$ and $\bm{O}^{(i)} = \mathcal{P}(\bm{\Psi}^\top\bm{\Phi}^{(i)})$.
\label{lemma:Psi_Phi}
\end{lemma}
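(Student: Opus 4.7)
\textbf{Proof plan for Lemma~\ref{lemma:Psi_Phi}.} The strategy is to derive all four bounds from a single application of Davis-Kahan (Theorem~\ref{the:davis_kahan}), combined with Weyl's inequality (Theorem~\ref{the:weyl}) and the operator-norm control of $\bm{\Delta}$ supplied by Lemma~\ref{lemma:delta_norm_2}. The starting point is that $\mathbb{E}[\bm{A}] = p\sum_{k=1}^{2} m_k \bm{\Psi}^{(k)}(\bm{\Psi}^{(k)})^\top$ has $\bm{\Psi}^\top\bm{\Psi} = \bm{I}_{2d}$ and $m_1 = m_2 = n/2$, so its top $2d$ eigenvalues all equal $pn/2$ and the gap to $\lambda_{2d+1}(\mathbb{E}[\bm{A}]) = 0$ is $pn/2$. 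Via the remark following Theorem~\ref{the:davis_kahan} together with Lemma~\ref{lemma:delta_norm_2}, the effective spectral gap in Davis-Kahan then satisfies $\delta \geq pn/2 - \|\bm{\Delta}\| \gtrsim pn$ with probability $1 - O(n^{-1})$, provided $\sqrt{p(1-p)n} + \sqrt{qn} \ll pn$, i.e.\ $\tau$ is bounded above by a small constant---this is the regime implicit in the statement.

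For the first bound, substituting this gap into \eqref{eq:Davis_kahan_bound} gives
\[
\|\bm{\Phi} - \bm{\Psi}\bm{O}\| \leq \frac{2\|\bm{\Delta}\bm{\Psi}\|}{\delta} \leq \frac{2\|\bm{\Delta}\|}{\delta} \lesssim \frac{\sqrt{p(1-p)n} + \sqrt{qn}}{pn} = \tau,
\]
using $\|\bm{\Psi}\| = 1$. The second bound follows by repeating the argument verbatim with $\bm{A}^{(i)} = \mathbb{E}[\bm{A}] + \bm{\Delta}^{(i)}$ in place of $\bm{A}$: since Lemma~\ref{lemma:delta_norm_2} also gives $\|\bm{\Delta}^{(i)}\| \leq \|\bm{\Delta}\|$, both the spectral gap and the numerator of the Davis-Kahan bound are controlled in the same way.

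For the third and fourth bounds, I would exploit the defining polar-factor identity. Writing $\bm{\Psi}^\top\bm{\Phi} = \bm{O} + \bm{\Psi}^\top(\bm{\Phi} - \bm{\Psi}\bm{O})$ yields $\|\bm{\Psi}^\top\bm{\Phi} - \bm{O}\| \leq \|\bm{\Phi} - \bm{\Psi}\bm{O}\| \lesssim \tau$. Since $\bm{O}$ is orthogonal we have $\sigma_{\min}(\bm{O}) = 1$, and Weyl's inequality then gives
\[
1 - \sigma_{\min}(\bm{\Psi}^\top\bm{\Phi}) = \sigma_{\min}(\bm{O}) - \sigma_{\min}(\bm{\Psi}^\top\bm{\Phi}) \leq \|\bm{\Psi}^\top\bm{\Phi} - \bm{O}\| \lesssim \tau.
\]
An identical manipulation with $(\bm{\Phi}^{(i)}, \bm{O}^{(i)})$ in place of $(\bm{\Phi}, \bm{O})$ delivers the fourth bound. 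The only mildly delicate point is verifying that the spectral gap is preserved uniformly in $i$ when we later apply Davis-Kahan $n$ times and take a union bound, but this is guaranteed by the high-probability bound on $\|\bm{\Delta}\|$ together with $\|\bm{\Delta}^{(i)}\| \leq \|\bm{\Delta}\|$; no genuinely new obstacle arises beyond the routine verification of parameter scalings.
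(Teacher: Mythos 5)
Your proposal is correct and follows essentially the same route as the paper's proof: Davis--Kahan via \eqref{eq:Davis_kahan_bound} with the spectral gap $\delta \gtrsim pn$ obtained from Weyl's inequality and Lemma~\ref{lemma:delta_norm_2}, the identity $\bm{\Psi}^\top\bm{\Phi} - \bm{O} = \bm{\Psi}^\top(\bm{\Phi}-\bm{\Psi}\bm{O})$ plus Weyl for the singular-value bounds, and the observation $\|\bm{\Delta}^{(i)}\| \leq \|\bm{\Delta}\|$ to repeat the argument for the leave-one-out quantities. No substantive differences.
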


The following lemma bounds the difference between $\bm{\Phi}$ and $\bm{\Phi}^{(i)}$. One can expect that such difference is tiny since $\bm{A}$ only differs from $\bm{A}^{(i)}$ on its $i$-th block column and $i$-th block row.

\begin{lemma}
Under the condition \eqref{eq:p_q_assumption}, as $n$ is sufficiently large, we have
\begin{equation*}
    \begin{aligned}
    \|\bm{\Phi} - \bm{\Phi}^{(i)}\bm{O}^{(i)}\| &\lesssim \eta \max_j \|\bm{\Phi}_{j\cdot}\|, \\
    \max_j \|\bm{\Phi}_{j\cdot}^{(i)}\| &\lesssim \max_j\|\bm{\Phi}_{j\cdot}\|
    \end{aligned}
\end{equation*}
with probability $1 - O(n^{-1})$, where $\bm{O}^{(i)} = \mathcal{P}((\bm{\Phi}^{(i)})^\top\bm{\Phi})$. 
\label{lemma:Phi_Phi_i}
\end{lemma}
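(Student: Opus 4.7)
The plan is to prove both inequalities simultaneously via a bootstrap argument built around Davis-Kahan. First I would apply the bound \eqref{eq:Davis_kahan_bound} to the pair $(\bm{A}, \bm{A}^{(i)})$, yielding
$$\|\bm{\Phi} - \bm{\Phi}^{(i)}\bm{O}^{(i)}\| \lesssim \frac{\|(\bm{A} - \bm{A}^{(i)})\bm{\Phi}^{(i)}\|}{\delta^{(i)}},$$
where $\delta^{(i)}$ is the gap between the $2d$-th and $(2d+1)$-th eigenvalues of $\bm{A}^{(i)}$. Since $\mathbb{E}[\bm{A}]$ has its top $2d$ eigenvalues at $pm = pn/2$ and the remaining ones at $0$, and $\|\bm{\Delta}^{(i)}\| \leq \|\bm{\Delta}\| \lesssim \sqrt{(p(1-p)+q)n}$ by Lemma~\ref{lemma:delta_norm_2}, Weyl's inequality (Theorem~\ref{the:weyl}) together with the condition \eqref{eq:p_q_assumption} gives $\delta^{(i)} \gtrsim pn$.

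Next I would analyze the numerator. Because $\bm{A} - \bm{A}^{(i)}$ vanishes off the $i$-th block row and column, a direct block-by-block expansion gives
$$\left((\bm{A} - \bm{A}^{(i)})\bm{\Phi}^{(i)}\right)_{j\cdot} = \begin{cases} \bm{\Delta}_{i\cdot}\bm{\Phi}^{(i)}, & j = i, \\ \bm{\Delta}_{ji}\bm{\Phi}^{(i)}_{i\cdot}, & j \neq i, \end{cases}$$
so that $\|(\bm{A} - \bm{A}^{(i)})\bm{\Phi}^{(i)}\| \leq \|\bm{\Delta}_{i\cdot}\bm{\Phi}^{(i)}\| + \|\bm{\Delta}_{\cdot i}\|\|\bm{\Phi}^{(i)}_{i\cdot}\|$. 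The essential gain of the leave-one-out construction is that $\bm{\Phi}^{(i)}$ is a deterministic function of $\bm{A}^{(i)}$ and therefore independent of $\bm{\Delta}_{i\cdot}$, so Lemma~\ref{lemma:Delta_M} applies to give $\|\bm{\Delta}_{i\cdot}\bm{\Phi}^{(i)}\| \lesssim \sqrt{(p(1-p)+q)n\log(nd)}\max_j\|\bm{\Phi}^{(i)}_{j\cdot}\|$. Meanwhile Lemma~\ref{lemma:delta_norm_2} gives $\|\bm{\Delta}_{\cdot i}\| \lesssim \sqrt{(p(1-p)+q)n}$, which is smaller by a $\sqrt{\log(nd)}$ factor and is absorbed since $\|\bm{\Phi}^{(i)}_{i\cdot}\| \leq \max_j\|\bm{\Phi}^{(i)}_{j\cdot}\|$. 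Dividing by $\delta^{(i)} \gtrsim pn$ yields the provisional estimate
$$\|\bm{\Phi} - \bm{\Phi}^{(i)}\bm{O}^{(i)}\| \lesssim \eta\max_j\|\bm{\Phi}^{(i)}_{j\cdot}\|.$$

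Finally I would close the loop. Applying the triangle inequality block by block,
$$\max_j\|\bm{\Phi}^{(i)}_{j\cdot}\| \leq \max_j\|\bm{\Phi}_{j\cdot}\| + \|\bm{\Phi} - \bm{\Phi}^{(i)}\bm{O}^{(i)}\| \leq \max_j\|\bm{\Phi}_{j\cdot}\| + C\eta\max_j\|\bm{\Phi}^{(i)}_{j\cdot}\|$$
for some absolute constant $C$. Since the condition \eqref{eq:p_q_assumption} permits choosing $c_0$ small enough that $Cc_0 \leq 1/2$, the second term on the right can be absorbed into the left, yielding $\max_j\|\bm{\Phi}^{(i)}_{j\cdot}\| \lesssim \max_j\|\bm{\Phi}_{j\cdot}\|$, which is the second claim; substituting back into the provisional estimate yields the first claim. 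The main obstacle is precisely this self-referential coupling: a clean Davis-Kahan bound on the numerator naturally produces $\max_j\|\bm{\Phi}^{(i)}_{j\cdot}\|$ rather than the desired $\max_j\|\bm{\Phi}_{j\cdot}\|$, and the only way to eliminate the auxiliary quantity without launching a further leave-one-out recursion is to exploit the smallness of $\eta$ in the absorption step. This is why the assumption $\eta \leq c_0$ in \eqref{eq:p_q_assumption} is indispensable here, and not merely a matter of simplifying constants.
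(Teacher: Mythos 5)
Your proposal is correct and follows essentially the same route as the paper's proof: Davis--Kahan applied to the pair $(\bm{A},\bm{A}^{(i)})$ with spectral gap $\Omega(pn)$, a split of the numerator into the $i$-th block row term (handled via the independence of $\bm{\Delta}_{i\cdot}$ and $\bm{\Phi}^{(i)}$ through Lemma~\ref{lemma:Delta_M}) and the $i$-th block column term (handled via Lemma~\ref{lemma:delta_norm_2}), followed by the same self-absorption step that uses $\eta\le c_0$ to convert $\max_j\|\bm{\Phi}^{(i)}_{j\cdot}\|$ into $\max_j\|\bm{\Phi}_{j\cdot}\|$. The paper phrases the absorption by picking the maximizing index $j'$ (see \eqref{eq:phi_j_bound}), which is equivalent to your formulation.
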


The following lemma further bounds the blockwise difference between $\bm{\Phi}^{(i)}$ and $\bm{\Psi}$.
\begin{lemma}
Under the condition \eqref{eq:p_q_assumption}, for a sufficiently large $n$ we have
\begin{equation*}
\begin{aligned}
    \max_{j}\|\bm{\Phi}_{j\cdot}^{(i)} - \bm{\Psi}_{j\cdot}\bm{S}^{(i)}\| &\lesssim \max_j\|\bm{\Phi}_{j\cdot}\|, \\
    \max_j\|\bm{\Phi}_{j\cdot}\| &\geq \frac{1}{\sqrt{n}}
\end{aligned}
\end{equation*}
with probability $1 - O(n^{-1})$, where $\bm{S}^{(i)} = \mathcal{P}(\bm{\Psi}^\top\bm{\Phi}^{(i)})$. 
\label{lemma:phi_i_psi_j}
\end{lemma}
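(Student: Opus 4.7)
The plan is to tackle the second inequality first via a trace argument, then establish the first inequality using the eigen-equation for $\bm{\Phi}^{(i)}$ together with the spectral structure of $\mathbb{E}[\bm{A}]$.

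For $\max_j\|\bm{\Phi}_{j\cdot}\| \geq 1/\sqrt{n}$: since $\bm{\Phi}^\top\bm{\Phi} = \bm{I}_{2d}$, the identity $\sum_{j=1}^n \|\bm{\Phi}_{j\cdot}\|_\mathrm{F}^2 = \mathrm{tr}(\bm{\Phi}^\top\bm{\Phi}) = 2d$ holds, and pigeonhole yields $\max_j\|\bm{\Phi}_{j\cdot}\|_\mathrm{F}^2 \geq 2d/n$. Combining with $\|\bm{\Phi}_{j\cdot}\|_\mathrm{F} \leq \sqrt{d}\,\|\bm{\Phi}_{j\cdot}\|$ produces $\max_j\|\bm{\Phi}_{j\cdot}\| \geq \sqrt{2/n} \geq 1/\sqrt{n}$.

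For the blockwise bound, I would start from $\bm{\Phi}^{(i)} = \bm{A}^{(i)}\bm{\Phi}^{(i)}(\bm{\Lambda}^{(i)})^{-1}$, substitute $\bm{A}^{(i)} = \mathbb{E}[\bm{A}] + \bm{\Delta}^{(i)}$, and use the fact that $\mathbb{E}[\bm{A}] = pm\,\bm{\Psi}\bm{\Psi}^\top$ under the balanced assumption $m_1 = m_2 = m$. Extracting the $j$-th block row and subtracting $\bm{\Psi}_{j\cdot}\bm{S}^{(i)}$ produces
\begin{equation*}
\bm{\Phi}^{(i)}_{j\cdot} - \bm{\Psi}_{j\cdot}\bm{S}^{(i)} = \bm{\Psi}_{j\cdot}\!\left[pm\,\bm{\Psi}^\top\bm{\Phi}^{(i)}(\bm{\Lambda}^{(i)})^{-1} - \bm{S}^{(i)}\right] + [\bm{\Delta}^{(i)}]_{j\cdot}\bm{\Phi}^{(i)}(\bm{\Lambda}^{(i)})^{-1}.
\end{equation*}
The bracketed signal factor is $O(\tau)$ by combining Lemma~\ref{lemma:Psi_Phi}, which gives $\|\bm{\Psi}^\top\bm{\Phi}^{(i)} - \bm{S}^{(i)}\| \lesssim \tau$, with Weyl's inequality applied to $\bm{A}^{(i)}$, which yields $\|pm(\bm{\Lambda}^{(i)})^{-1} - \bm{I}_{2d}\| \lesssim \|\bm{\Delta}\|/(pm) \lesssim \tau$; together with $\|\bm{\Psi}_{j\cdot}\| = 1/\sqrt{m}$ this contributes $O(\tau/\sqrt{n})$. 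For the noise term I split $\bm{\Phi}^{(i)} = \bm{\Psi}\bm{S}^{(i)} + (\bm{\Phi}^{(i)} - \bm{\Psi}\bm{S}^{(i)})$: the leading part $[\bm{\Delta}^{(i)}]_{j\cdot}\bm{\Psi}\bm{S}^{(i)}(\bm{\Lambda}^{(i)})^{-1}$ is controlled by Lemma~\ref{lemma:Delta_M} applied with the deterministic matrix $\bm{M} = \bm{\Psi}$ (so that $\max_k\|\bm{\Psi}_{k\cdot}\| = 1/\sqrt{m}$), giving $\sqrt{(p(1-p)+q)\log(nd)}/(pm) \sim \eta/\sqrt{n}$; the residual part is estimated by Cauchy--Schwarz as $\|\bm{\Delta}\|\cdot\|\bm{\Phi}^{(i)} - \bm{\Psi}\bm{S}^{(i)}\|/(pm) \lesssim \|\bm{\Delta}\|\tau/(pm)$. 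Summing the three contributions, the condition $\eta \leq c_0$ combined with the lower bound $\max_j\|\bm{\Phi}_{j\cdot}\| \geq 1/\sqrt{n}$ established above yields the desired $\lesssim \max_j\|\bm{\Phi}_{j\cdot}\|$.

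The principal obstacle is controlling the noise residual $[\bm{\Delta}^{(i)}]_{j\cdot}(\bm{\Phi}^{(i)} - \bm{\Psi}\bm{S}^{(i)})$: for $j \neq i$, the row $[\bm{\Delta}^{(i)}]_{j\cdot}$ and the eigenvectors $\bm{\Phi}^{(i)}$ are not statistically independent, since both depend on the $(j,k)$-entries of $\bm{\Delta}$ for $k \notin \{i,j\}$, so Lemma~\ref{lemma:Delta_M} cannot be invoked directly. My plan sidesteps this by settling for the crude Cauchy--Schwarz estimate, whose looseness is absorbed by the already small spectral bound $\|\bm{\Phi}^{(i)} - \bm{\Psi}\bm{S}^{(i)}\| \lesssim \tau$ from Lemma~\ref{lemma:Psi_Phi}; if a sharper blockwise bound were needed, one could perform a secondary leave-one-out on row $j$, producing an auxiliary matrix $\bm{A}^{(i,j)}$ whose top eigenvectors are independent of $[\bm{\Delta}^{(i)}]_{j\cdot}$, at the cost of a fixed-point / self-bounding argument on $\max_j\|\bm{\Phi}^{(i)}_{j\cdot} - \bm{\Psi}_{j\cdot}\bm{S}^{(i)}\|$.
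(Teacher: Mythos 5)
Your proof of the second inequality ($\max_j\|\bm{\Phi}_{j\cdot}\| \geq 1/\sqrt{n}$) is correct and essentially matches the paper's (the paper uses $1 = \|\bm{\Phi}\| \leq \sqrt{\sum_i\|\bm{\Phi}_{i\cdot}\|^2}$; your trace/pigeonhole variant is equally valid).

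For the first inequality, however, you have badly overshot the target, and in doing so you open a gap that your own estimates do not close. The lemma only claims $\max_j\|\bm{\Phi}^{(i)}_{j\cdot} - \bm{\Psi}_{j\cdot}\bm{S}^{(i)}\| \lesssim \max_j\|\bm{\Phi}_{j\cdot}\|$ — i.e., the difference is merely of the same order as the largest block row, not small. The paper proves this in two lines by the triangle inequality: $\|\bm{\Phi}^{(i)}_{j\cdot} - \bm{\Psi}_{j\cdot}\bm{S}^{(i)}\| \leq \|\bm{\Phi}^{(i)}_{j\cdot}\| + \|\bm{\Psi}_{j\cdot}\|$, then $\max_j\|\bm{\Phi}^{(i)}_{j\cdot}\| \lesssim \max_j\|\bm{\Phi}_{j\cdot}\|$ from Lemma~\ref{lemma:Phi_Phi_i} and $\|\bm{\Psi}_{j\cdot}\| = 1/\sqrt{m} \lesssim 1/\sqrt{n} \leq \max_j\|\bm{\Phi}_{j\cdot}\|$. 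No eigen-equation expansion is needed; the coarse bound is all that the proof of Lemma~\ref{the:Phi_f_Phi} uses (the smallness of $T_2$ there comes from the Bernstein factor, not from this lemma).

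The concrete problem with your route is the cross term $[\bm{\Delta}^{(i)}]_{j\cdot}(\bm{\Phi}^{(i)} - \bm{\Psi}\bm{S}^{(i)})(\bm{\Lambda}^{(i)})^{-1}$. Your Cauchy--Schwarz estimate gives $\|\bm{\Delta}\|\cdot\tau/(pm) \lesssim \tau^2$, and you assert this is absorbed into $\max_j\|\bm{\Phi}_{j\cdot}\|$ using only the lower bound $\max_j\|\bm{\Phi}_{j\cdot}\| \geq 1/\sqrt{n}$. But in the regime the paper actually cares about, $p,q \asymp \log n/n$, one has $\tau \asymp 1/\sqrt{\log n}$, so $\tau^2 \asymp 1/\log n \gg 1/\sqrt{n}$; the condition $\eta \leq c_0$ only guarantees $\tau^2 \lesssim 1/\log(nd)$, never $\tau^2 \lesssim 1/\sqrt{n}$. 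So the "looseness absorbed by the already small spectral bound" claim fails exactly where it matters, and the secondary leave-one-out $\bm{A}^{(i,j)}$ you relegate to an optional remark would in fact be mandatory to push your decomposition through. Since the lemma does not require the sharp $O(\eta/\sqrt{n})$-type conclusion you are aiming for, the right move is simply to drop the expansion and use the triangle inequality.
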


\begin{lemma}
Under the condition \eqref{eq:p_q_assumption}, for a sufficiently large $n$ we have
\begin{equation*}
    \|\bm{S}^{(i)}\bm{O}^{(i)} - \bm{O}\| \lesssim \eta \max_j \|\bm{\Phi}_{j\cdot}\|,
\end{equation*}
where $\bm{O} = \mathcal{P}(\bm{\Psi}^\top \bm{\Phi})$, $\bm{O}^{(i)} = \mathcal{P}(\bm{(\Phi}^{(i)})^\top \bm{\Phi})$ and $\bm{S}^{(i)} = \mathcal{P}(\bm{\Psi}^\top \bm{\Phi}^{(i)})$.
\label{lemma:S_i_O_i_O}
\end{lemma}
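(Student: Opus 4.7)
\medskip
\noindent\textbf{Proof plan.} The plan is to reduce the difference $\bm{S}^{(i)}\bm{O}^{(i)}-\bm{O}$ into a difference of two polar factors of closely-related matrices, and then invoke Lemma~\ref{lemma:PX_PY} together with the bounds already established in Lemmas~\ref{lemma:Psi_Phi} and \ref{lemma:Phi_Phi_i}.

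\medskip
\noindent\textbf{Key step: orthogonal invariance of the polar factor.} The first observation is that if $\bm{G}\in\mO(d)$ and $\bm{Y}\in\mathbb{R}^{d\times d}$ is invertible with SVD $\bm{Y}=\bm{U}\bm{\Sigma}\bm{V}^\top$, then $\bm{Y}\bm{G}=\bm{U}\bm{\Sigma}(\bm{G}^\top\bm{V})^\top$ is also an SVD (up to signs of singular values), which gives $\mathcal{P}(\bm{Y}\bm{G})=\mathcal{P}(\bm{Y})\bm{G}$. Applying this with $\bm{Y}=\bm{\Psi}^\top\bm{\Phi}^{(i)}$ and $\bm{G}=\bm{O}^{(i)}$ yields
\begin{equation*}
\bm{S}^{(i)}\bm{O}^{(i)} \;=\; \mathcal{P}(\bm{\Psi}^\top\bm{\Phi}^{(i)})\,\bm{O}^{(i)} \;=\; \mathcal{P}\bigl(\bm{\Psi}^\top\bm{\Phi}^{(i)}\bm{O}^{(i)}\bigr).
\end{equation*}

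\medskip
\noindent\textbf{Main estimate.} Combining the above with $\bm{O}=\mathcal{P}(\bm{\Psi}^\top\bm{\Phi})$ and invoking Lemma~\ref{lemma:PX_PY},
\begin{equation*}
\|\bm{S}^{(i)}\bm{O}^{(i)} - \bm{O}\|
= \bigl\|\mathcal{P}(\bm{\Psi}^\top\bm{\Phi}^{(i)}\bm{O}^{(i)}) - \mathcal{P}(\bm{\Psi}^\top\bm{\Phi})\bigr\|
\lesssim \sigma_{\min}^{-1}(\bm{\Psi}^\top\bm{\Phi})\bigl\|\bm{\Psi}^\top(\bm{\Phi}^{(i)}\bm{O}^{(i)} - \bm{\Phi})\bigr\|,
\end{equation*}
provided the two matrices inside the polar factors are invertible, which is guaranteed by the lower bound on $\sigma_{\min}(\bm{\Psi}^\top\bm{\Phi})$ in Lemma~\ref{lemma:Psi_Phi}. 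Bounding $\|\bm{\Psi}^\top(\bm{\Phi}^{(i)}\bm{O}^{(i)}-\bm{\Phi})\|\le\|\bm{\Psi}\|\cdot\|\bm{\Phi}^{(i)}\bm{O}^{(i)}-\bm{\Phi}\|=\|\bm{\Phi}^{(i)}\bm{O}^{(i)}-\bm{\Phi}\|$ (since $\bm{\Psi}$ has orthonormal columns), and using Lemma~\ref{lemma:Psi_Phi} to conclude $\sigma_{\min}(\bm{\Psi}^\top\bm{\Phi})\ge 1-O(\tau)\gtrsim 1$ under the condition \eqref{eq:p_q_assumption}, we arrive at
\begin{equation*}
\|\bm{S}^{(i)}\bm{O}^{(i)} - \bm{O}\| \lesssim \|\bm{\Phi}-\bm{\Phi}^{(i)}\bm{O}^{(i)}\|.
\end{equation*}
Finally, Lemma~\ref{lemma:Phi_Phi_i} gives $\|\bm{\Phi}-\bm{\Phi}^{(i)}\bm{O}^{(i)}\|\lesssim \eta\max_j\|\bm{\Phi}_{j\cdot}\|$ with probability $1-O(n^{-1})$, which completes the argument.

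\medskip
\noindent\textbf{Expected obstacle.} The proof is essentially a short chain once the identity $\mathcal{P}(\bm{Y}\bm{G})=\mathcal{P}(\bm{Y})\bm{G}$ is in hand; the only delicate point is ensuring the arguments of $\mathcal{P}(\cdot)$ are invertible so that Lemma~\ref{lemma:PX_PY} applies and the $\sigma_{\min}^{-1}(\bm{\Psi}^\top\bm{\Phi})$ factor is $O(1)$. Both are covered by Lemma~\ref{lemma:Psi_Phi} under the smallness assumption on $\eta$ (equivalently on $\tau$) from \eqref{eq:p_q_assumption}, so no further technical work is needed beyond recognizing this composition of previously established bounds.
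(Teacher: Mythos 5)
Your proposal is correct and follows essentially the same route as the paper's proof: the identity $\mathcal{P}(\bm{Y}\bm{G})=\mathcal{P}(\bm{Y})\bm{G}$ to merge $\bm{O}^{(i)}$ into the polar factor, Lemma~\ref{lemma:PX_PY} with the $\sigma_{\min}$ lower bounds from Lemma~\ref{lemma:Psi_Phi} (via $\tau=o(1)$ under \eqref{eq:p_q_assumption}), and then Lemma~\ref{lemma:Phi_Phi_i} to finish. No gaps.
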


As we can see, most of the statistics in the previous lemmas involves $\max_{j}\|\bm{\Phi}_{j\cdot}\|$, and from Lemma~\ref{lemma:Phi_Phi_i} we have $\max_{j}\|\bm{\Phi}_{j\cdot}\| \geq 1/\sqrt{n}$.
Then in the following lemma we further show that $\max_{j}\|\bm{\Phi}_{j\cdot}\| = O(1/\sqrt{n})$. Therefore, this enables us to replace all $\max_{j}\|\bm{\Phi}_{j\cdot}\|$ in the previous bounds with $1/\sqrt{n}$.

\begin{lemma}
Under the condition \eqref{eq:p_q_assumption}, for a sufficiently large $n$ we have
\begin{equation*}
    \max_{j}\|\bm{\Phi}_{j\cdot}\| \lesssim \frac{1}{\sqrt{n}},
\end{equation*}
with probability $1 - O(n^{-1})$.
\label{lemma:upper_bound_phi_j}
\end{lemma}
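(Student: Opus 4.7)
My plan is to use the fixed-point identity $\bm{\Phi} = \bm{A}\bm{\Phi}\bm{\Lambda}^{-1}$ together with the surrogate $f(\bm{\Psi}\bm{O}) = \bm{A}\bm{\Psi}\bm{O}\bm{\Lambda}^{-1}$ and derive a self-bounding inequality for $M := \max_j \|\bm{\Phi}_{j\cdot}\|$. Specifically, for any $i$ the triangle inequality gives
\begin{equation*}
\|\bm{\Phi}_{i\cdot}\| \;\leq\; \|\bm{\Phi}_{i\cdot} - f(\bm{\Psi}\bm{O})_{i\cdot}\| \;+\; \|f(\bm{\Psi}\bm{O})_{i\cdot}\|.
\end{equation*}
Lemma~\ref{the:Phi_f_Phi} already controls the first term by $C\eta\, M$ with high probability, so the task reduces to bounding the second term by $O(1/\sqrt{n})$; then maximizing over $i$ and absorbing the term $C\eta M$ into the left-hand side (permissible since $\eta \leq c_0$ is sufficiently small by the hypothesis) yields $M \lesssim 1/\sqrt{n}$.

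To bound $\|f(\bm{\Psi}\bm{O})_{i\cdot}\|$, I would decompose $\bm{A} = \mathbb{E}[\bm{A}] + \bm{\Delta}$ and write
\begin{equation*}
f(\bm{\Psi}\bm{O})_{i\cdot} \;=\; \bigl([\mathbb{E}[\bm{A}]\bm{\Psi}]_{i\cdot} + [\bm{\Delta}\bm{\Psi}]_{i\cdot}\bigr)\,\bm{O}\,\bm{\Lambda}^{-1},
\end{equation*}
and estimate the three factors separately. For the deterministic piece, the identity $\mathbb{E}[\bm{A}]\bm{\Psi} = p\,\bm{\Psi}\,\mathrm{diag}(m_1\bm{I}_d,\dots,m_K\bm{I}_d)$ together with $\|\bm{\Psi}_{i\cdot}\| = 1/\sqrt{m_{\kappa(i)}}$ gives $\|[\mathbb{E}[\bm{A}]\bm{\Psi}]_{i\cdot}\| \lesssim p\sqrt{n}$. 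For the random piece, Lemma~\ref{lemma:Delta_M} applied with $\bm{M} = \bm{\Psi}$ (using $\max_j\|\bm{\Psi}_{j\cdot}\| \lesssim 1/\sqrt{n}$) yields $\|[\bm{\Delta}\bm{\Psi}]_{i\cdot}\| \lesssim \sqrt{(p(1-p)+q)\log(nd)} = p\sqrt{n}\,\eta$. Finally, Weyl's inequality (Theorem~\ref{the:weyl}) combined with Lemma~\ref{lemma:delta_norm_2} shows that every top eigenvalue of $\bm{A}$ satisfies $|\lambda_\ell(\bm{A})| \geq pm - \|\bm{\Delta}\| \gtrsim pn$ under \eqref{eq:p_q_assumption}, hence $\|\bm{\Lambda}^{-1}\| \lesssim 1/(pn)$.

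Putting these pieces together gives
\begin{equation*}
\|f(\bm{\Psi}\bm{O})_{i\cdot}\| \;\lesssim\; \frac{p\sqrt{n}(1+\eta)}{pn} \;\lesssim\; \frac{1}{\sqrt{n}},
\end{equation*}
uniformly in $i$ on a high-probability event. Taking the maximum over $i$ in the initial triangle inequality produces $M \leq C\eta M + C'/\sqrt{n}$, and choosing $c_0$ small enough so that $C\eta \leq C c_0 \leq 1/2$ lets us rearrange to obtain the desired bound $M \lesssim 1/\sqrt{n}$ with probability $1 - O(n^{-1})$ via a union bound over the events in Lemmas~\ref{the:Phi_f_Phi},~\ref{lemma:delta_norm_2}, and~\ref{lemma:Delta_M}. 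The main subtlety is the self-referential nature of the bound in Lemma~\ref{the:Phi_f_Phi}, which forces us to carry out exactly this ``absorption'' argument rather than plugging in an a priori estimate; once one recognizes that $\eta$ is precisely the small parameter needed to close the loop, the rest is a routine combination of the already-established ingredients.
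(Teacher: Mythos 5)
Your proof is correct, but it reaches the bound by a genuinely different mechanism than the paper. The paper's proof picks the maximizing block rows $\bm{\Phi}_{i'\cdot}$ ($i'\in C_1$) and $\bm{\Phi}_{j'\cdot}$ ($j'\in C_2$), builds the auxiliary matrix $\bm{\Phi}'$ that repeats each of them $m$ times, uses the intra-cluster bound \eqref{eq:bound_phi_i_phi_j} to show $\|\bm{\Phi}-\bm{\Phi}'\|\lesssim\sqrt{n}\,\eta\,\|\bm{\Phi}_{j'\cdot}\|$, and then plays Weyl's inequality (Theorem~\ref{the:weyl}) against the two facts $\sigma_{\max}(\bm{\Phi})=1$ and $\sigma_{\max}(\bm{\Phi}')\geq\sqrt{m}\,\|\bm{\Phi}_{j'\cdot}\|$; rearranging $\sqrt{m}\,\|\bm{\Phi}_{j'\cdot}\|-1\lesssim\sqrt{n}\,\eta\,\|\bm{\Phi}_{j'\cdot}\|$ gives the claim. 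You instead bound $\|f(\bm{\Psi}\bm{O})_{i\cdot}\|$ directly by $O(1/\sqrt{n})$ from the eigen-relation $\mathbb{E}[\bm{A}]\bm{\Psi}^{(k)}=pm_k\bm{\Psi}^{(k)}$, Lemma~\ref{lemma:Delta_M} with $\bm{M}=\bm{\Psi}$, and $\|\bm{\Lambda}^{-1}\|\lesssim(pn)^{-1}$, then combine with Lemma~\ref{the:Phi_f_Phi} and absorb the $C\eta M$ term. Both routes ultimately lean on Lemma~\ref{the:Phi_f_Phi} (the paper's \eqref{eq:bound_phi_i_phi_j} is its consequence) and both close the loop by the same small-$\eta$ absorption, so neither introduces a circularity the other avoids; your version is somewhat more direct in that it dispenses with the auxiliary matrix $\bm{\Phi}'$ and the singular-value comparison, exploiting instead that the surrogate's block rows are explicitly computable, while the paper's version isolates exactly where the normalization $\|\bm{\Phi}\|=1$ enters. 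The only point worth making explicit in a write-up is that the constant in Lemma~\ref{the:Phi_f_Phi} is uniform over $i$ on a single high-probability event (the paper states it that way), which is what licenses taking the maximum over $i$ before absorbing.
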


Based on the results above, now we are able to prove Lemma~\ref{the:Phi_f_Phi} which bounds the difference between $\bm{\Phi}_{i\cdot}$ and its surrogate $f(\bm{\Psi}\bm{O})_{i\cdot}$.
\begin{proof}[Proof of Lemma~\ref{the:Phi_f_Phi}]
We start from \eqref{eq:epsilon_1_decomp_main} in our proof sketch Section~\ref{sec:proof_sketch}, which is given as
\begin{equation}
    \|\bm{\Phi}_{i\cdot} - f(\bm{\Psi}\bm{O})_{i\cdot}\| \leq \|\bm{\Lambda}^{-1}\|\left(\|[\mathbb{E}[\bm{A}]]_{i \cdot}(\bm{\Phi} - \bm{\Psi}\bm{O})\|+ \|\bm{\Delta}_{i \cdot}(\bm{\Phi} - \bm{\Psi}\bm{O})\| \right) 
    \label{eq:epsilon_1_decomp}
\end{equation}
where the three terms $\|\bm{\Lambda}^{-1}\|$, $\|[\mathbb{E}[\bm{A}]]_{i \cdot}(\bm{\Phi} - \bm{\Psi}\bm{O})\|$ and $\|\bm{\Delta}_{i \cdot}(\bm{\Phi} - \bm{\Psi}\bm{O})\|$ are bounded separately. For $\|\bm{\Lambda}^{-1}\|$, by definition $\|\bm{\Lambda}^{-1}\| = \sigma_{2d}^{-1}(\bm{A})$, and by applying Weyl's inequality (Theorem~\ref{the:weyl}) we have 
\begin{equation*}
    \sigma_{2d}(\bm{A}) \geq \sigma_{2d}(\mathbb{E}[\bm{A}]) - \|\bm{\Delta}\| = \frac{pn}{2} - \|\bm{\Delta}\| = \Omega(pn)
\end{equation*}
w.h.p., where $\|\bm{\Delta}\|$ is bounded by Lemma~\ref{lemma:delta_norm_2}. This leads to 
\begin{equation}
    \|\bm{\Lambda}^{-1}\| \lesssim (pn)^{-1}
    \label{eq:bound_lambda}
\end{equation}
w.h.p. For $\|[\mathbb{E}[\bm{A}]]_{i \cdot}(\bm{\Phi} - \bm{\Psi}\bm{O})\|$, it satisfies
\begin{align*}
   \|[\mathbb{E}[\bm{A}]]_{i \cdot}(\bm{\Phi} - \bm{\Psi}\bm{O})\| &\leq \|[\mathbb{E}[\bm{A}]]_{i \cdot}\| \|\bm{\Phi} - \bm{\Psi}\bm{O}\| \lesssim p\sqrt{m} \cdot \frac{\sqrt{p(1-p)} + \sqrt{q}}{p\sqrt{n}} \\
   &\overset{(a)}{\lesssim} p\sqrt{m} \cdot \underbrace{\frac{\sqrt{p(1-p) + q}}{p\sqrt{n}}}_{= \eta/\log(nd)} \leq \frac{p\eta\sqrt{n}}{\log(nd)}
\end{align*}
w.h.p., where $\|\bm{\Phi} - \bm{\Psi}\bm{O}\|$ is bounded by Lemma~\ref{lemma:Psi_Phi} and $(a)$ uses the fact $\sqrt{x} + \sqrt{y} \leq \sqrt{2(x + y)}$ for any $x, y \geq 0$. From \eqref{eq:bound_T_1_T_2_T_3}, it remains to bound $\|\bm{\Delta}_{i \cdot}(\bm{\Phi} - \bm{\Psi}\bm{O})\|$ as
\begin{equation}
    \begin{aligned}
    \|\bm{\Delta}_{i \cdot}(\bm{\Phi} - \bm{\Psi}\bm{O})\| &\leq \underbrace{\|\bm{\Delta}_{i \cdot}(\bm{\Phi} \!-\! \bm{\Phi}^{(i)}\bm{O}^{(i)})\|}_{=: T_1} \!+\! \underbrace{\|\bm{\Delta}_{i \cdot}(\bm{\Phi}^{(i)} \!-\! \bm{\Psi}\bm{S}^{(i)})\|}_{=: T_2} \!+\! \underbrace{\|\bm{\Delta}_{i \cdot}\bm{\Psi}(\bm{S}^{(i)}\bm{O}^{(i)} \!-\! \bm{O})\|}_{=: T_3}  \\
    &= T_1 + T_2 + T_3, 
    \end{aligned}
    \label{eq:bound_T_1_T_2_T_3_appendix}
\end{equation}
w.h.p. Here, $T_1$, $T_2$ and $T_3$ satisfy
\begin{equation}
    \begin{aligned}
    T_1 &= \|\bm{\Delta}_{i \cdot}(\bm{\Phi} - \bm{\Phi}^{(i)}\bm{O}^{(i)})\| \leq \|\bm{\Delta}_{i \cdot}\| \|\bm{\Phi} - \bm{\Phi}^{(i)}\bm{O}^{(i)}\|
    \\
    &\lesssim (\sqrt{p(1-p)n} + \sqrt{qn}) \cdot \eta\max_j \|\bm{\Phi}_{j\cdot}\| \overset{(a)}{\lesssim} \sqrt{(p(1-p) + q)n}\eta \max_j \|\bm{\Phi}_{j\cdot}\|,\\
    T_2 &= \|\bm{\Delta}_{i \cdot}(\bm{\Phi}^{(i)} - \bm{\Psi}\bm{S}^{(i)})\| \lesssim \sqrt{(p(1-p)+q)n\log(nd)}\max_j\|\bm{\Phi}_{j\cdot}^{(i)} - \bm{\Psi}_{j\cdot}\bm{S}^{(i)}\| \\
    &\lesssim \sqrt{(p(1-p)+q)n\log(nd)}\max_j \|\bm{\Phi}_{j\cdot}\|, \\
    T_3 &= \|\bm{\Delta}_{i \cdot}\bm{\Psi}(\bm{S}^{(i)}\bm{O}^{(i)} - \bm{O})\| \leq \|\bm{\Delta}_{i \cdot}\bm{\Psi}\|\|\bm{S}^{(i)}\bm{O}^{(i)} - \bm{O}\| \\
    &\lesssim \sqrt{(p(1-p)+q)n\log(nd)}\max_j\|\bm{\Psi}_{j\cdot}\| \cdot  \eta \max_j \|\bm{\Phi}_{j\cdot}\| \\
    &\lesssim \sqrt{(p(1-p)+q)\log(nd)}\eta \max_j \|\bm{\Phi}_{j\cdot}\|.
    \end{aligned}
    \label{eq:bound_T_123}
\end{equation}
For $T_1$, $\|\bm{\Delta}_{i \cdot}\|$ and $\|\bm{\Phi} - \bm{\Phi}^{(i)}\bm{O}^{(i)}\|$ are bounded by Lemma~\ref{lemma:delta_norm_2} and Lemma~\ref{lemma:Phi_Phi_i} respectively; for $T_2$, $\|\bm{\Delta}_{i \cdot}(\bm{\Phi}^{(i)} - \bm{\Psi}\bm{S}^{(i)})\|$ is bounded by Lemma~\ref{lemma:Delta_M} with $\bm{M} = \bm{\Phi}^{(i)} - \bm{\Psi}\bm{S}^{(i)}$, and $\max_j\|\bm{\Phi}_{j\cdot}^{(i)} - \bm{\Psi}_{j\cdot}\bm{S}^{(i)}\|$ is bounded by Lemma~\ref{lemma:phi_i_psi_j}; for $T_3$, $\|\bm{\Delta}_{i \cdot}\bm{\Psi}\|$ and $\|\bm{S}^{(i)}\bm{O}^{(i)} - \bm{O}\|$ are bounded by Lemma~\ref{lemma:Delta_M} and Lemma~\ref{lemma:S_i_O_i_O} respectively.
As a result, one can see that $T_2$ is the dominant term 
among $\{T_i\}_{i = 1}^3$ and \eqref{eq:bound_T_1_T_2_T_3_appendix} becomes  
\begin{equation*}
    \|\bm{\Delta}_{i \cdot}(\bm{\Phi} - \bm{\Psi}\bm{O})\| \lesssim \sqrt{(p(1-p)+q)n\log(nd)}\max_j \|\bm{\Phi}_{j\cdot}\| = \eta pn\max_j \|\bm{\Phi}_{j\cdot}\|
\end{equation*}
w.h.p. Putting the above together into \eqref{eq:epsilon_1_decomp} yields 
\begin{equation*}
    \begin{aligned}
        \|\bm{\Phi}_{i\cdot} - f(\bm{\Psi}\bm{O})_{i\cdot}\| &\lesssim \|\bm{\Lambda}^{-1}\|\left(\|[\mathbb{E}[\bm{A}]]_{i \cdot}(\bm{\Phi} - \bm{\Psi}\bm{O})\|+ \|\bm{\Delta}_{i \cdot}(\bm{\Phi} - \bm{\Psi}\bm{O})\| \right)\\
        &\lesssim (pn)^{-1} \cdot \left(\frac{p\eta\sqrt{n}}{\log(nd)} + \eta pn \max_j \|\bm{\Phi}_{j\cdot}\|\right) \overset{(a)}{\lesssim} \eta \max_j \|\bm{\Phi}_{j\cdot}\|
    \end{aligned}
\end{equation*}
w.h.p., where $(a)$ uses $\max_j \|\bm{\Phi}_{j\cdot}\| \geq 1/\sqrt{n}$ shown in the proof of Lemma~\ref{lemma:phi_i_psi_j}. 
\end{proof}

In the next, we prove Lemma~\ref{the:phi_i_phi_j} which bounds the difference between $\bm{\Phi}_{i\cdot}$ and $\bm{\Phi}_{j\cdot}$ for $\kappa(i) = \kappa(j)$.
\begin{proof}[Proof of Lemma~\ref{the:phi_i_phi_j}]
We start from \eqref{eq:bound_phi_i_phi_j_main} such that 
\begin{equation*}
\begin{aligned}
        \|\bm{\Phi}_{i\cdot} - \bm{\Phi}_{j\cdot}\| &\leq \|\bm{\Phi}_{i\cdot} - f(\bm{\Psi}\bm{O})_{i\cdot}\| + \|\bm{\Phi}_{j\cdot} - f(\bm{\Psi}\bm{O})_{j\cdot}\| + \|f(\bm{\Psi}\bm{O})_{i\cdot} - f(\bm{\Psi}\bm{O})_{j\cdot}\|.
\end{aligned}
\end{equation*}
where $\|\bm{\Phi}_{i\cdot} - f(\bm{\Psi}\bm{O})_{i\cdot}\|$ and $\|\bm{\Phi}_{j\cdot} - f(\bm{\Psi}\bm{O})_{j\cdot}\|$ have been bounded by Lemma~\ref{the:Phi_f_Phi}. For $\|f(\bm{\Psi}\bm{O})_{i\cdot} - f(\bm{\Psi}\bm{O})_{j\cdot}\|$, by definition 
\begin{equation}
\begin{aligned}
    &\|f(\bm{\Psi}\bm{O})_{i\cdot} - f(\bm{\Psi}\bm{O})_{j\cdot}\| = \|(\bm{\Delta}_{i \cdot} - \bm{\Delta}_{j\cdot})\bm{\Psi}\bm{O}\bm{\Lambda}^{-1}\| \leq \|(\bm{\Delta}_{i \cdot} - \bm{\Delta}_{j\cdot})\bm{\Psi}\|\|\bm{\Lambda}^{-1}\|,\\
    &\leq (\|\bm{\Delta}_{i \cdot}\bm{\Psi}\| + \|\bm{\Delta}_{j\cdot}\bm{\Psi}\|)\|\bm{\Lambda}^{-1}\| \lesssim \sqrt{(p(1-p)+q)\log(nd)} \cdot (pn)^{-1}\\
    &= \frac{\eta}{\sqrt{n}} \overset{(a)}{\leq} \eta \max_j \|\bm{\Phi}_{j\cdot}\|
    \label{eq:bound_f_psi_i_j}
\end{aligned}
\end{equation}
w.h.p., where $\|\bm{\Delta}_{i \cdot}\bm{\Psi}\|$ and $\|\bm{\Delta}_{j\cdot}\bm{\Psi}\|$ are bounded by Lemma~\ref{lemma:Delta_M}, $\|\bm{\Lambda}^{-1}\|$ is bounded in \eqref{eq:bound_lambda},
$(a)$ uses $\max_j \|\bm{\Phi}_{j\cdot}\| \geq 1/\sqrt{n}$ in Lemma~\ref{lemma:phi_i_psi_j}. Combining \eqref{eq:bound_f_psi_i_j} with Lemma~\ref{the:Phi_f_Phi} yields
\begin{equation}
    \|\bm{\Phi}_{i\cdot} - \bm{\Phi}_{j\cdot}\| \lesssim \eta \max_j\|\bm{\Phi}_{j\cdot}\|
    \label{eq:bound_phi_i_phi_j}
\end{equation}
w.h.p. This further leads to $\max_j\|\bm{\Phi}_{j\cdot}\| = O(1/\sqrt{n})$ given in Lemma~\ref{lemma:upper_bound_phi_j}. Plugging this back into \eqref{eq:bound_phi_i_phi_j} completes the proof.
\end{proof}

Now we are ready to prove Theorem~\ref{the:phi_psi_i}.
\begin{proof}[Proof of Theorem~\ref{the:phi_psi_i}]
By defining $\bm{N}_{C_1}$, $\bm{N}_i$ and $\bm{N}_{\bm{\Delta}, i}$ as in \eqref{eq:def_N_C_1} such that
\begin{equation*}
    \bm{N}_{C_1} :=
    \begin{bmatrix}
    \bm{\Phi}_{1\cdot} - \bm{\Psi}_{1\cdot}\bm{O}\\
    \vdots\\
    \bm{\Phi}_{m\cdot} - \bm{\Psi}_{m\cdot}\bm{O}\\
    \end{bmatrix}
    = 
    \underbrace{\begin{bmatrix}
    \bm{\Phi}_{i\cdot} - \bm{\Psi}_{1\cdot}\bm{O}\\
    \vdots\\
    \bm{\Phi}_{i\cdot}- \bm{\Psi}_{m\cdot}\bm{O}\\
    \end{bmatrix}}_{=:\bm{N}_i} + 
    \underbrace{
    \begin{bmatrix}
    \bm{\Phi}_{1\cdot} - \bm{\Phi}_{i\cdot}\\
    \vdots\\
    \bm{\Phi}_{m\cdot} - \bm{\Phi}_{i\cdot}\\
    \end{bmatrix}}_{=: \bm{N}_{\bm{\Delta}, i}}
    = \bm{N}_i + \bm{N}_{\bm{\Delta}, i}, 
\end{equation*}
we have $\|\bm{N}_i\| = \sqrt{m}\|\bm{\Phi}_{i\cdot} - \bm{\Psi}_{i\cdot}\bm{O}\|$ since $\bm{\Psi}_{1\cdot} = \cdots = \bm{\Psi}_{m\cdot}$, and 
\begin{equation*}
    \|\bm{N}_{\bm{\Delta}, i}\| \overset{(a)}{\leq} \sqrt{\sum_{j = 1}^m \|\bm{\Phi}_{i\cdot} - \bm{\Phi}_{j\cdot}\|^2} \overset{(b)}{\lesssim} \sqrt{m} \cdot \frac{\eta}{\sqrt{n}} = O(\eta)
\end{equation*}
w.h.p., where $(a)$ holds by definition of the operator norm, $(b)$ comes from Lemma~\ref{the:phi_i_phi_j}. Then the following satisfies
\begin{equation}
    \begin{aligned}
    \|\bm{\Phi} - \bm{\Psi}\bm{O}\| &\overset{(a)}{=} \max_{\|\bm{x}\| = 1}\|[\bm{N}_{C_1}^\top, \bm{N}_{C_2}^\top] \bm{x}\|  \geq \max_{\|\bm{y}\| = 1} \|\bm{N}_{C_1}^\top\bm{y}\| \overset{(b)}{=} \|\bm{N}_{C_1}\| \\ 
    & \overset{(c)}{\geq} \|\bm{N}_i\| - \|\bm{N}_{\bm{\Delta}, i}\| = \sqrt{m}\|\bm{\Phi}_{i\cdot} - \bm{\Psi}_{i\cdot}\bm{O}\| - O(\eta)
    \end{aligned}
    \label{eq:phi_psi_1}
\end{equation}
w.h.p., where both $(a)$ and $(b)$ hold by definition of the operator norm, $(c)$ comes from the triangle inequality. On the other hand, from Lemma~\ref{lemma:Psi_Phi} one can see that $\|\bm{\Phi} - \bm{\Psi}\bm{O}\| \lesssim \tau \lesssim \eta/\log(nd)$ w.h.p.
Combining this with \eqref{eq:phi_psi_1} yields 
\begin{equation*}
    \begin{aligned}
        \|\bm{\Phi}_{i\cdot} - \bm{\Psi}_{i\cdot}\bm{O}\| = \frac{1}{\sqrt{m}}\left(O(\eta) + O\left(\frac{\eta}{\log(nd)}\right)\right) \lesssim \frac{\eta}{\sqrt{n}}
    \end{aligned}
\end{equation*}
w.h.p. The bound $\|\bm{\Phi}_{i\cdot} - \bm{\Psi}_{i\cdot}\bm{O}\|$ for $i \in C_2$ is obtained in the same way as above.
\end{proof}

\subsection{Proof of Theorem~\ref{the:cond}}  
\label{sec:proof_the_2}
In this section, we prove Theorem~\ref{the:cond} which provides the performance guarantee of Algorithm~\ref{alg:spectral}. Again, we start from listing several important inequalities, where most of them directly come from Theorem~\ref{the:phi_psi_i}. All the proofs of the lemmas are deferred to Appendix~\ref{sec:proof_lemmas}.

To begin with, recall that Theorem~\ref{the:phi_psi_i} bounds the difference between the noisy eigenvector block $\bm{\Phi}_{i\cdot}$ and the clean one $\bm{\Psi}_{i\cdot} \bm{O}$, then the following lemma further bounds the difference between other statistics such as the polar factors $\mathcal{P}(\bm{\Phi}_{i\cdot})$ and $\mathcal{P}(\bm{\Psi}_{i\cdot}\bm{O})$.

\begin{lemma}
Under the condition \eqref{eq:p_q_assumption} and as $n$ is sufficiently large, for any $i, j = 1,\ldots, n$ such that $\kappa(i) = \kappa(j)$, the following satisfy
\begin{align}
    \|\sigma_l(\bm{\Phi}_{i\cdot}) - \sqrt{2/n}\|  &\lesssim \eta/\sqrt{n},\quad l = 1,\ldots, d, \label{eq:sigma_phi_new} \\
    \|\mathcal{P}(\bm{\Phi}_{i\cdot}) - \mathcal{P}(\bm{\Psi}_{i\cdot}\bm{O})\| &\lesssim \eta, \label{eq:P_Phi_P_Psi} \\
    \|\bm{\Phi}_{j\cdot} \mathcal{P}(\bm{\Phi}_{i\cdot})^\top - \bm{\Psi}_{j\cdot} \mathcal{P}(\bm{\Psi}_{i\cdot})^\top\| &\lesssim \eta/\sqrt{n}, \nonumber\\
    \|\mathcal{P}(\bm{\Phi}_{j\cdot} \mathcal{P}(\bm{\Phi}_{i\cdot})^\top) - \mathcal{P}(\bm{\Psi}_{j\cdot} \mathcal{P}(\bm{\Psi}_{i\cdot})^\top)\| &\lesssim \eta \label{eq:P_P_Phi_P_Psi}
\end{align}
with probability $1 - O(n^{-1})$, where $\bm{O} = \mathcal{P}(\bm{\Psi}^\top \bm{\Phi})$. 
\label{lemma:sigma_phi_i} 
\end{lemma}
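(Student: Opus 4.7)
The plan is to derive all four bounds as straightforward consequences of the blockwise eigenvector bound in Theorem~\ref{the:phi_psi_i}, by invoking Weyl's inequality (Theorem~\ref{the:weyl}) and the polar-factor stability bound (Lemma~\ref{lemma:PX_PY}) after checking that the relevant minimum singular values stay bounded away from zero. Throughout, I use the WLOG assumption $\bm{O}_i=\bm{I}_d$ from Section~\ref{sec:proof_sketch}, so that for any $i$ with $\kappa(i)=k$, the block row $\bm{\Psi}_{i\cdot}\in\mathbb{R}^{d\times Kd}$ is zero except for a single block $\frac{1}{\sqrt{m}}\bm{I}_d$ in position $k$, i.e.\ $\bm{\Psi}_{i\cdot}$ has all $d$ singular values equal to $\sqrt{2/n}$, and since $\bm{O}$ is orthogonal, so does $\bm{\Psi}_{i\cdot}\bm{O}$.

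First, for \eqref{eq:sigma_phi_new}, I would apply Weyl's inequality directly: $|\sigma_l(\bm{\Phi}_{i\cdot})-\sigma_l(\bm{\Psi}_{i\cdot}\bm{O})|\le\|\bm{\Phi}_{i\cdot}-\bm{\Psi}_{i\cdot}\bm{O}\|\lesssim \eta/\sqrt{n}$ by Theorem~\ref{the:phi_psi_i}, and then use $\sigma_l(\bm{\Psi}_{i\cdot}\bm{O})=\sqrt{2/n}$. For \eqref{eq:P_Phi_P_Psi}, \eqref{eq:sigma_phi_new} with $\eta\le c_0$ small implies $\sigma_{\min}(\bm{\Phi}_{i\cdot})\ge \sqrt{2/n}-O(\eta/\sqrt{n})\gtrsim 1/\sqrt{n}$, so the rectangular extension of Lemma~\ref{lemma:PX_PY} (valid whenever both matrices have full row rank, since the polar factor of a fat matrix with orthonormal-row convention still depends Lipschitz-continuously on the SVD) gives $\|\mathcal{P}(\bm{\Phi}_{i\cdot})-\mathcal{P}(\bm{\Psi}_{i\cdot}\bm{O})\|\lesssim \sqrt{n}\cdot\eta/\sqrt{n}=\eta$.

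For the third bound I would exploit orthogonality of $\bm{O}$: writing $\mathcal{P}(\bm{\Psi}_{i\cdot}\bm{O})=\mathcal{P}(\bm{\Psi}_{i\cdot})\bm{O}$, so $\bm{\Psi}_{j\cdot}\mathcal{P}(\bm{\Psi}_{i\cdot})^\top=\bm{\Psi}_{j\cdot}\bm{O}\,\mathcal{P}(\bm{\Psi}_{i\cdot}\bm{O})^\top$, insert this into the difference and split via the triangle inequality as
\begin{equation*}
\bm{\Phi}_{j\cdot}\mathcal{P}(\bm{\Phi}_{i\cdot})^\top - \bm{\Psi}_{j\cdot}\mathcal{P}(\bm{\Psi}_{i\cdot})^\top
= (\bm{\Phi}_{j\cdot}-\bm{\Psi}_{j\cdot}\bm{O})\mathcal{P}(\bm{\Phi}_{i\cdot})^\top + \bm{\Psi}_{j\cdot}\bm{O}\bigl(\mathcal{P}(\bm{\Phi}_{i\cdot})-\mathcal{P}(\bm{\Psi}_{i\cdot}\bm{O})\bigr)^\top.
\end{equation*}
The first summand is bounded by $\|\bm{\Phi}_{j\cdot}-\bm{\Psi}_{j\cdot}\bm{O}\|\cdot\|\mathcal{P}(\bm{\Phi}_{i\cdot})\|\lesssim (\eta/\sqrt{n})\cdot 1$ using Theorem~\ref{the:phi_psi_i} and that $\mathcal{P}(\bm{\Phi}_{i\cdot})$ has orthonormal rows; the second is bounded by $\|\bm{\Psi}_{j\cdot}\|\cdot\eta\lesssim \eta/\sqrt{n}$ by \eqref{eq:P_Phi_P_Psi} and $\|\bm{\Psi}_{j\cdot}\|=\sqrt{2/n}$.

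Finally, for \eqref{eq:P_P_Phi_P_Psi}, I note that $\bm{\Psi}_{j\cdot}\mathcal{P}(\bm{\Psi}_{i\cdot})^\top\in\mathbb{R}^{d\times d}$ equals $\frac{1}{\sqrt{m}}\bm{O}_j\bm{O}_i^\top$ since $\mathcal{P}(\bm{\Psi}_{i\cdot})^\top$ has only one nonzero block $\bm{I}_d$ in position $k=\kappa(i)=\kappa(j)$; therefore its singular values are all $\sqrt{2/n}$. Combining with the third bound and Weyl's inequality gives $\sigma_{\min}(\bm{\Phi}_{j\cdot}\mathcal{P}(\bm{\Phi}_{i\cdot})^\top)\gtrsim 1/\sqrt{n}$, and then Lemma~\ref{lemma:PX_PY} (now in the square case) yields $\|\mathcal{P}(\bm{\Phi}_{j\cdot}\mathcal{P}(\bm{\Phi}_{i\cdot})^\top)-\mathcal{P}(\bm{\Psi}_{j\cdot}\mathcal{P}(\bm{\Psi}_{i\cdot})^\top)\|\lesssim \sqrt{n}\cdot\eta/\sqrt{n}=\eta$. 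All steps hold on the same $1-O(n^{-1})$ event as Theorem~\ref{the:phi_psi_i}. The one mildly delicate point, which is the crux, is certifying that $\sigma_{\min}$ of the noisy polar-factor arguments stays of order $1/\sqrt{n}$, so that each application of Lemma~\ref{lemma:PX_PY} incurs a factor $\sqrt{n}$ that is absorbed by the $1/\sqrt{n}$ in the perturbation bound; this is exactly what the smallness condition $\eta\le c_0$ from \eqref{eq:p_q_assumption} buys us.
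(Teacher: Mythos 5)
Your proposal is correct and follows essentially the same route as the paper: Weyl's inequality plus Theorem~\ref{the:phi_psi_i} for the singular values, Lemma~\ref{lemma:PX_PY} with the $\sigma_{\min}\gtrsim 1/\sqrt{n}$ certification for both polar-factor bounds, and a triangle-inequality splitting for the product term. The only (cosmetic) difference is that your two-term splitting of $\bm{\Phi}_{j\cdot}\mathcal{P}(\bm{\Phi}_{i\cdot})^\top-\bm{\Psi}_{j\cdot}\mathcal{P}(\bm{\Psi}_{i\cdot})^\top$ avoids the quadratic cross term $\bm{\Delta}_{\bm{\Phi}_{j\cdot}}\bm{\Delta}_{\mathcal{P}(\bm{\Phi}_{i\cdot})}^\top$ that appears in the paper's three-term expansion, and you explicitly flag the rectangular use of Lemma~\ref{lemma:PX_PY}, which the paper applies without comment.
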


Lemma~\ref{lemma:sigma_phi_i} is sufficient for showing the exact recovery of the cluster memberships by Algorithm~\ref{alg:spectral}. For showing the stable recovery of the orthogonal group elements $\{\bm{O}_{i}\}_{i=1}^n$, we need the following result.

\begin{lemma}
Under the condition \eqref{eq:p_q_assumption} and the assumption that $p_1 \in C_1$ is the first pivot selected by Algorithm~\ref{alg:spectral}, as $n$ is sufficiently large, for $\bm{Q}_{\cdot 1}$ and $\bm{Q}_{\cdot 2}$ defined in \eqref{eq:def_Q_12}, the following satisfies for any $j \in C_2$
\begin{align}
    \|\bm{Q}_{\cdot 2} - \mathcal{P}(\bm{\Psi}_{j\cdot}\bm{O})^\top \bar{\bm{O}}_2\| &\lesssim \eta, \nonumber\\
    \|\bm{\Phi}_{j\cdot}\bm{Q}_{\cdot 2} - \bm{\Psi}_{j\cdot}\mathcal{P}(\bm{\Psi}_{j\cdot})^\top \bar{\bm{O}}_2\| &\lesssim \eta/\sqrt{n}, \nonumber\\
    \|\mathcal{P}(\bm{\Phi}_{j\cdot}\bm{Q}_{\cdot 2}) - \mathcal{P}(\bm{\Psi}_{j\cdot}\mathcal{P}(\bm{\Psi}_{j\cdot})^\top) \bar{\bm{O}}_2\| &\lesssim \eta \label{eq:P_Q_2_Psi}
\end{align}
with probability $1 - O(n^{-1})$, where $\bm{O} = \mathcal{P}(\bm{\Psi}^\top \bm{\Phi})$ and $\bar{\bm{O}}_2 = \mathcal{P}(\mathcal{P}(\bm{\Phi}_{j\cdot}\bm{O})\bm{Q}_{\cdot 2})$.
\label{lemma:Q_2} 
\end{lemma}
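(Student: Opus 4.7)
The plan is to exploit the geometric fact that, in the clean model with $\bm{O}_i = \bm{I}_d$, the two $2d \times d$ matrices $\mathcal{P}(\bm{\Psi}_{p_1\cdot}\bm{O})^\top$ and $\mathcal{P}(\bm{\Psi}_{j\cdot}\bm{O})^\top$ have mutually orthogonal columns that together form a $2d \times 2d$ orthogonal matrix (this uses $\bm{\Psi}_{p_1\cdot}\bm{\Psi}_{j\cdot}^\top = \bm{0}$ for $p_1 \in C_1$ and $j \in C_2$, together with the commutation $\mathcal{P}(\bm{X}\bm{O}) = \mathcal{P}(\bm{X})\bm{O}$ for orthogonal $\bm{O}$). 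Since $\bm{Q}_{\cdot 2}$ is orthogonal to $\bm{Q}_{\cdot 1} = \mathcal{P}(\bm{\Phi}_{p_1\cdot})^\top \bar{\bm{O}}_1$ by construction in \eqref{eq:def_Q_12}, and $\mathcal{P}(\bm{\Phi}_{p_1\cdot})$ is $O(\eta)$-close to $\mathcal{P}(\bm{\Psi}_{p_1\cdot}\bm{O})$ by \eqref{eq:P_Phi_P_Psi}, I expect $\bm{Q}_{\cdot 2}$ to lie almost entirely inside the column space of $\mathcal{P}(\bm{\Psi}_{j\cdot}\bm{O})^\top$, up to an orthogonal factor that will become $\bar{\bm{O}}_2$.

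For the first inequality, I would decompose $\bm{Q}_{\cdot 2}$ in this orthonormal basis,
\begin{equation*}
\bm{Q}_{\cdot 2} = \mathcal{P}(\bm{\Psi}_{p_1\cdot}\bm{O})^\top \bm{A} + \mathcal{P}(\bm{\Psi}_{j\cdot}\bm{O})^\top \bm{B},
\end{equation*}
with $\bm{A} := \mathcal{P}(\bm{\Psi}_{p_1\cdot}\bm{O})\bm{Q}_{\cdot 2}$ and $\bm{B} := \mathcal{P}(\bm{\Psi}_{j\cdot}\bm{O})\bm{Q}_{\cdot 2}$ in $\mathbb{R}^{d \times d}$. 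The orthogonality $\bm{Q}_{\cdot 1}^\top \bm{Q}_{\cdot 2} = \bm{0}$ with \eqref{eq:def_Q_12} translates to $\mathcal{P}(\bm{\Phi}_{p_1\cdot})\bm{Q}_{\cdot 2} = \bm{0}$, so $\bm{A} = (\mathcal{P}(\bm{\Psi}_{p_1\cdot}\bm{O}) - \mathcal{P}(\bm{\Phi}_{p_1\cdot}))\bm{Q}_{\cdot 2}$ and \eqref{eq:P_Phi_P_Psi} yields $\|\bm{A}\| \lesssim \eta$. The identity $\bm{A}^\top\bm{A} + \bm{B}^\top\bm{B} = \bm{I}_d$, coming from $\bm{Q}_{\cdot 2}^\top \bm{Q}_{\cdot 2} = \bm{I}_d$, then forces $\bm{B}$ to be almost orthogonal with $\|\bm{B} - \mathcal{P}(\bm{B})\| \lesssim \eta^2$, giving the preliminary bound $\|\bm{Q}_{\cdot 2} - \mathcal{P}(\bm{\Psi}_{j\cdot}\bm{O})^\top \mathcal{P}(\bm{B})\| \lesssim \eta$. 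To match the stated $\bar{\bm{O}}_2 = \mathcal{P}(\mathcal{P}(\bm{\Phi}_{j\cdot}\bm{O})\bm{Q}_{\cdot 2})$, I would invoke Lemma~\ref{lemma:PX_PY} with $\|\mathcal{P}(\bm{\Phi}_{j\cdot}\bm{O}) - \mathcal{P}(\bm{\Psi}_{j\cdot}\bm{O})\| \lesssim \eta$ from \eqref{eq:P_Phi_P_Psi} and $\sigma_{\min}(\bm{B}) = 1 - O(\eta^2)$, obtaining $\|\bar{\bm{O}}_2 - \mathcal{P}(\bm{B})\| \lesssim \eta$.

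The second and third inequalities would then follow by bootstrapping. For the second, Theorem~\ref{the:phi_psi_i} gives $\|\bm{\Phi}_{j\cdot} - \bm{\Psi}_{j\cdot}\bm{O}\| \lesssim \eta/\sqrt{n}$, so replacing $\bm{\Phi}_{j\cdot}$ by $\bm{\Psi}_{j\cdot}\bm{O}$ contributes $O(\eta/\sqrt{n})$ and replacing $\bm{Q}_{\cdot 2}$ by $\mathcal{P}(\bm{\Psi}_{j\cdot}\bm{O})^\top \bar{\bm{O}}_2$ contributes $\|\bm{\Psi}_{j\cdot}\| \cdot O(\eta) = O(\eta/\sqrt{n})$; the identity $\mathcal{P}(\bm{\Psi}_{j\cdot}\bm{O})^\top = \bm{O}^\top \mathcal{P}(\bm{\Psi}_{j\cdot})^\top$ lets $\bm{O}\bm{O}^\top$ cancel and leaves exactly $\bm{\Psi}_{j\cdot}\mathcal{P}(\bm{\Psi}_{j\cdot})^\top \bar{\bm{O}}_2$. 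For the third, observing that $\bm{\Psi}_{j\cdot}\mathcal{P}(\bm{\Psi}_{j\cdot})^\top = (1/\sqrt{m})\bm{I}_d$ has smallest singular value $\asymp 1/\sqrt{n}$, one more application of Lemma~\ref{lemma:PX_PY} scales the $O(\eta/\sqrt{n})$ bound from \eqref{eq:P_Q_2_Psi} by $\sqrt{n}$ and produces the desired $O(\eta)$.

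The main obstacle is not any single estimate but rather the careful bookkeeping of how the various polar factors and global rotations relate: in particular, reconciling the lemma's $\bar{\bm{O}}_2$ (defined through $\bm{\Phi}_{j\cdot}\bm{O}$) with the natural orthogonal factor $\mathcal{P}(\bm{B})$ that emerges from the clean-basis decomposition of $\bm{Q}_{\cdot 2}$, and systematically exploiting $\mathcal{P}(\bm{X}\bm{O}) = \mathcal{P}(\bm{X})\bm{O}$ so that $\bm{O}$ cancels at the right places. All individual norm bounds follow directly from Lemma~\ref{lemma:sigma_phi_i}, Theorem~\ref{the:phi_psi_i}, and Lemma~\ref{lemma:PX_PY}, so once the geometric decomposition is set up correctly the estimates are essentially a chain of triangle inequalities.
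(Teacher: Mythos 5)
Your proposal is correct and follows essentially the same route as the paper: the second and third bounds are obtained identically (surrogate substitution via Theorem~\ref{the:phi_psi_i} followed by Lemma~\ref{lemma:PX_PY} with $\sigma_{\min}\asymp 1/\sqrt{n}$), and your explicit decomposition of $\bm{Q}_{\cdot 2}$ in the clean orthonormal basis $[\mathcal{P}(\bm{\Psi}_{p_1\cdot}\bm{O})^\top,\ \mathcal{P}(\bm{\Psi}_{j\cdot}\bm{O})^\top]$ is just an unpacked version of the paper's one-line appeal to Lemma~\ref{lemma:error2inner}, both reducing the first bound to $\|\mathcal{P}(\bm{\Phi}_{p_1\cdot})\mathcal{P}(\bm{\Psi}_{j\cdot}\bm{O})^\top\|\lesssim\eta$ via \eqref{eq:P_Phi_P_Psi} and the exact orthogonality of the clean polar factors. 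Your extra step reconciling $\mathcal{P}(\bm{B})$ with the stated $\bar{\bm{O}}_2$ is in fact more careful than the paper, whose proof implicitly takes $\bar{\bm{O}}_2$ to be the orthogonal factor $\mathcal{P}(\mathcal{P}(\bm{\Psi}_{j\cdot}\bm{O})\bm{Q}_{\cdot 2})$ produced by Lemma~\ref{lemma:error2inner} --- which is exactly your $\mathcal{P}(\bm{B})$.
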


\begin{proof}[Proof of Theorem~\ref{the:cond}]
\textit{Exact recovery of the cluster memberships}:
According to the blockwise CPQR in Section~\ref{sec:block_CPQR}, with out loss of generality we assume $p_1 \in C_1$ is the first pivot column selected by Algorithm~\ref{alg:spectral}. Then for any node $j$, the two blocks $\bm{R}_{1j}$ and $\bm{R}_{2j}$ satisfy
\begin{equation*}
    \|\bm{R}_{1j}\|_\mathrm{F}^2 = \|\mathcal{P}(\bm{\Phi}_{p_1\cdot})(\bm{\Phi}_{j\cdot})^\top\|^2_\mathrm{F}, \quad
    \|\bm{R}_{2j}\|_\mathrm{F}^2 = \|(\bm{\Phi}_{j\cdot})^\top\|_\mathrm{F}^2 - \|\mathcal{P}(\bm{\Phi}_{p_1\cdot})(\bm{\Phi}_{j\cdot})^\top\|^2_\mathrm{F},
\end{equation*}
which correspond to the projection of $(\bm{\Phi}_{j\cdot})^\top$ onto the column space $\mathcal{R}((\bm{\Phi}_{p_1\cdot})^\top)$ and the complement of $\mathcal{R}((\bm{\Phi}_{p_1\cdot})^\top)$ respectively. According to Algorithm~\ref{alg:spectral}, node $j$ would be assigned to $C_1$ if $\|\bm{R}_{1j}\|_\mathrm{F} > \|\bm{R}_{2j}\|_\mathrm{F}$, which is equivalent to
\begin{equation}
    \|\mathcal{P}(\bm{\Phi}_{p_1\cdot})(\bm{\Phi}_{j\cdot})^\top\|_\mathrm{F} > \frac{\sqrt{2}}{2}\|\bm{\Phi}_{j\cdot}\|_\mathrm{F}
    \label{eq:assign_cluster_equal}.
\end{equation}
We first show that for any $j \in C_1$, \eqref{eq:assign_cluster_equal} is satisfied w.h.p. To this end, by using $\bm{\Psi}_{i\cdot}\bm{O}$ as a surrogate of $\bm{\Phi}_{i\cdot}$ for each node $i$, $\|\mathcal{P}(\bm{\Phi}_{p_1\cdot})(\bm{\Phi}_{j\cdot})^\top\|_{\mathrm{F}}$ can be bounded as
\begin{align}
    &\|\mathcal{P}(\bm{\Phi}_{p_1\cdot})(\bm{\Phi}_{j\cdot})^\top\|_{\mathrm{F}} = \|\left[\mathcal{P}(\bm{\Psi}_{p_1\cdot}\bm{O}) + \mathcal{P}(\bm{\Phi}_{p_1\cdot}) - \mathcal{P}(\bm{\Psi}_{p_1\cdot}\bm{O})\right](\bm{\Phi}_{j\cdot})^\top\|_{\mathrm{F}} \nonumber\\
    &\geq \|\mathcal{P}(\bm{\Psi}_{p_1\cdot}\bm{O})(\bm{\Psi}_{j\cdot}\bm{O}  + \bm{\Phi}_{j\cdot} - \bm{\Psi}_{j\cdot}\bm{O})^\top\|_{\mathrm{F}} - \|\left[\mathcal{P}(\bm{\Phi}_{p_1\cdot}) - \mathcal{P}(\bm{\Psi}_{p_1\cdot}\bm{O})\right](\bm{\Phi}_{j\cdot})^\top\|_\mathrm{F}\nonumber\\
    &\geq \underbrace{\|\mathcal{P}(\bm{\Psi}_{p_1\cdot})(\bm{\Psi}_{j\cdot})^\top\|_\mathrm{F}}_{=: T_1} - \underbrace{\|\mathcal{P}(\bm{\Psi}_{p_1\cdot})\|_\mathrm{F}\|\bm{\Phi}_{j\cdot} - \bm{\Psi}_{j\cdot}\bm{O}\|}_{=: T_2} -  \underbrace{\|\mathcal{P}(\bm{\Phi}_{p_1\cdot}) - \mathcal{P}(\bm{\Psi}_{p_1\cdot}\bm{O})\|\|\bm{\Phi}_{j\cdot}\|_\mathrm{F}}_{=: T_3} \nonumber\\
    &= T_1 - T_2 - T_3 \label{eq:bound_P_phi_i_phi_j}
\end{align}
where we use the fact $\|\bm{X}\bm{Y}\|_{\mathrm{F}} \leq \|\bm{X}\|\|\bm{Y}\|_\mathrm{F}$ for any $\bm{X}$ and $\bm{Y}$. To proceed, $T_1$, $T_2$ and $T_3$ can be bounded separately as 
\begin{align*}
    T_1 &= \|\mathcal{P}(\bm{\Psi}_{p_1\cdot})(\bm{\Psi}_{j\cdot})^\top\|_\mathrm{F} = \sqrt{m}\|\bm{\Psi}_{p_1\cdot}(\bm{\Psi}_{j\cdot})^\top\|_\mathrm{F} = \sqrt{2d/n},\\
    T_2 &= \|\mathcal{P}(\bm{\Psi}_{p_1\cdot})\|_\mathrm{F}\|\bm{\Phi}_{j\cdot} - \bm{\Psi}_{j\cdot}\bm{O}\| \leq \sqrt{d} \cdot O\left(\eta/\sqrt{n}\right) \lesssim \eta\sqrt{d/n}, \\
    T_3 &\leq \sqrt{d}\|\bm{\Phi}_{j\cdot}\|\|\mathcal{P}(\bm{\Phi}_{p_1\cdot}) - \mathcal{P}(\bm{\Psi}_{p_1\cdot}\bm{O})\| \lesssim \sqrt{d} \cdot \left(\sqrt{2/n} + O\left(\eta/\sqrt{n}\right)\right) \cdot \eta \lesssim \eta\sqrt{d/n}.
\end{align*}
For $T_1$ we use $\bm{\Psi}_{p_1\cdot}(\bm{\Psi}_{j\cdot})^\top = \bm{I}_d/m$; for $T_2$ we bound $\|\bm{\Phi}_{j\cdot} - \bm{\Psi}_{j\cdot}\bm{O}\|$ by Theorem~\ref{the:phi_psi_i}; for $T_3$, both $\|\bm{\Phi}_{j\cdot}\|$ and $\|\mathcal{P}(\bm{\Phi}_{p_1\cdot}) - \mathcal{P}(\bm{\Psi}_{p_1\cdot}\bm{O})\|$ are bounded by Lemma~\ref{lemma:sigma_phi_i}.
Plugging these into \eqref{eq:bound_P_phi_i_phi_j} yields
\begin{equation}
    \|\mathcal{P}(\bm{\Phi}_{p_1\cdot})(\bm{\Phi}_{j\cdot})^\top\|_{\mathrm{F}} = \sqrt{\frac{2d}{n}} - O\bigg(\sqrt{\frac{d}{n}}\eta\bigg),
\end{equation}
w.h.p.
On the other hand, the RHS of \eqref{eq:assign_cluster_equal} satisfies
\begin{equation*}
    \frac{\sqrt{2}}{2}\|\bm{\Phi}_{j\cdot}\|_\mathrm{F} \leq \frac{\sqrt{2d}}{2}\|\bm{\Phi}_{j\cdot}\| \leq \frac{\sqrt{2d}}{2}(\|\bm{\Psi}_{j\cdot}\bm{O}\| +\|\bm{\Phi}_{j\cdot} -\bm{\Psi}_{j\cdot}\bm{O}\|) =
    \left(1 + O(\eta)\right)\sqrt{\frac{d}{n}},
\end{equation*}
w.h.p. Therefore, as $\eta$ is sufficiently small, \eqref{eq:assign_cluster_equal} is satisfied and $j \in C_1$ is correctly assigned to $C_1$. On the other hand, for any $j \in C_2$, similar to \eqref{eq:bound_P_phi_i_phi_j} we have 
\begin{equation*}
    \begin{aligned}
        &\|\mathcal{P}(\bm{\Phi}_{p_1\cdot})(\bm{\Phi}_{j\cdot})^\top\|_\mathrm{F} = \|\left[\mathcal{P}(\bm{\Psi}_{p_1\cdot}\bm{O}) + \mathcal{P}(\bm{\Phi}_{p_1\cdot}) - \mathcal{P}(\bm{\Psi}_{p_1\cdot}\bm{O})\right](\bm{\Phi}_{j\cdot})^\top\|_{\mathrm{F}} \\
        &\leq \|\mathcal{P}(\bm{\Psi}_{p_1\cdot}\bm{O})(\bm{\Psi}_{j\cdot}\bm{O} + \bm{\Phi}_{j\cdot} - \bm{\Psi}_{j\cdot}\bm{O})^\top\|_{\mathrm{F}} + \|\left[\mathcal{P}(\bm{\Phi}_{p_1\cdot}) - \mathcal{P}(\bm{\Psi}_{p_1\cdot}\bm{O})\right](\bm{\Phi}_{j\cdot})^\top\|_\mathrm{F}\\
        &\leq \underbrace{\|\mathcal{P}(\bm{\Psi}_{p_1\cdot})(\bm{\Psi}_{j\cdot})^\top\|_\mathrm{F}}_{ = 0} + \underbrace{\|\mathcal{P}(\bm{\Psi}_{p_1\cdot})\|_\mathrm{F}\|\bm{\Phi}_{j\cdot} \!-\! \bm{\Psi}_{j\cdot}\bm{O}\|_{\mathrm{2}}}_{= T_2} + \underbrace{\|\mathcal{P}(\bm{\Phi}_{p_1\cdot}) \!-\! \mathcal{P}(\bm{\Psi}_{p_1\cdot}\bm{O})\|_\mathrm{2}\|\bm{\Phi}_{j\cdot}\|_\mathrm{F}}_{ = T_3}\\
        &=  O\left(\eta\sqrt{d/n}\right)
    \end{aligned}
\end{equation*}
w.h.p., where $T_2$ and $T_3$ are defined in \eqref{eq:bound_P_phi_i_phi_j}.  On the other hand, 
\begin{equation*}
    \frac{\sqrt{2}}{2}\|\bm{\Phi}_{j\cdot}\|_\mathrm{F} = \frac{\sqrt{2}}{2}\sqrt{\sum_{l = 1}^d\sigma_{l}^2(\bm{\Phi}_{j\cdot})} \geq \frac{\sqrt{2}}{2} \cdot \sqrt{d}\bigg(\sqrt{\frac{2}{n}} - O\bigg(\frac{\eta}{\sqrt{n}}\bigg)\bigg) = \sqrt{\frac{d}{n}} -  O\bigg(\sqrt{\frac{d}{n}}\eta\bigg)
\end{equation*}
w.h.p., where $\sigma_l(\bm{\Phi}_{j\cdot})$ is bounded by Lemma~\ref{lemma:sigma_phi_i}. Therefore, as $\eta$ is small, the inequality in \eqref{eq:assign_cluster_equal} does not hold and node $j$ is assigned to $C_2$. This leads to the exact recovery of the cluster memberships.

\vspace{0.1cm}
\textit{$\bullet$ Stable recovery of orthogonal transformations:}
To bound the estimation error of $\{\bm{O}_i\}_{i = 1}^n$, recall the orthogonal matrix $\bm{Q}$ from the blockwise CPQR in \eqref{eq:alg_QR}, then in the case of two clusters, it can be written as $\bm{Q} = [\bm{Q}_{\cdot 1} ,\bm{Q}_{\cdot 2}]$ where $\bm{Q}_{\cdot 1}, \bm{Q}_{\cdot 2} \in \mathbb{R}^{2d \times d}$. We follow the previous assumption that $p_1 \in C_1$ is the first pivot, then $\bm{Q}_{\cdot 1} \in \mathbb{R}^{2d \times d}$ is the polar factor of $(\bm{\Phi}_{p_1 \cdot})^\top$ up to some orthogonal transformation, and $\bm{Q}_{\cdot 2} \in \mathbb{R}^{2d \times d}$ is orthogonal to $\bm{Q}_{\cdot 1}$. 
As a result, for any node $j \in C_1$ we have 
\begin{equation}
    \bm{R}_{1j} = (\bm{Q}_{\cdot 1})^\top (\bm{\Phi}_{j\cdot})^\top, \quad \bm{Q}_{\cdot 1} = \mathcal{P}(\bm{\Phi}_{p_1\cdot}^\top) \bar{\bm{O}}_1
    \label{eq:R_1_j_1_j_2}
\end{equation}
where $\bar{\bm{O}}_1 \in \mathbb{R}^{d \times d}$ is some orthogonal matrix. Then, according to \eqref{eq:cluster}, our estimation $\hat{\bm{O}}_{j}$ is given as $\hat{\bm{O}}_{j} = \mathcal{P}(\bm{R}_{1j})^\top =   \mathcal{P}(\bm{\Phi}_{j\cdot}\mathcal{P}(\bm{\Phi}_{p_1\cdot})^\top)\bar{\bm{O}}_1.$
Meanwhile, the ground truth $\bm{O}_{j}$ satisfies $\bm{O}_{j} = \mathcal{P}(\bm{\Psi}_{j\cdot}\mathcal{P}(\bm{\Psi}_{p_1\cdot})^\top) = \bm{I}_d$ by assumption, then the estimation error of $\bm{O}_j$ can be bounded as
\begin{equation*}
    \|\hat{\bm{O}}_j - \bm{O}_j \bar{\bm{O}} _1\| = \|\hat{\bm{O}}_j\bar{\bm{O}}_1^\top - \bm{O}_j\| = \|\mathcal{P}(\bm{\Phi}_{j\cdot}\mathcal{P}(\bm{\Phi}_{i\cdot})^\top) - \mathcal{P}(\bm{\Psi}_{j\cdot}\mathcal{P}(\bm{\Psi}_{i\cdot})^\top)\| \overset{(a)}{\lesssim} \eta
\end{equation*}
w.h.p., where $(a)$ comes from Lemma~\ref{lemma:sigma_phi_i}. This completes the proof for $j \in C_1$. Next, we check $\hat{\bm{O}}_j$ for $j \in C_2$, similar to \eqref{eq:R_1_j_1_j_2} we have 
\begin{equation}
    \bm{R}_{2j} = (\bm{Q}_{\cdot 2})^\top (\bm{\Phi}_{j\cdot})^\top, \quad  (\bm{Q}_{\cdot 1})^\top \bm{Q}_{\cdot 2} = \bm{0}.
    \label{eq:R_Q_2}
\end{equation}
Also the ground truth $\bm{O}_j$ satisfies $\bm{O}_{j} = \mathcal{P}(\bm{\Psi}_{j\cdot}\mathcal{P}(\bm{\Psi}_{i\cdot})^\top) = \bm{I}_d$. Then our estimation $\hat{\bm{O}}_j$ is given as $\hat{\bm{O}}_{j} =  \mathcal{P}(\bm{R}_{2j})^\top = \mathcal{P}(\bm{\Phi}_{j\cdot}\bm{Q}_{\cdot 2})$. Furthermore, by defining $\bar{\bm{O}}_2 = \mathcal{P}(\mathcal{P}(\bm{\Phi}_{j\cdot}\bm{O})\bm{Q}_{\cdot 2})$ where $\bm{O} = \mathcal{P}(\bm{\Psi}^\top\bm{\Phi})$, we have
\begin{equation*}
    \|\hat{\bm{O}}_j - \bm{O}_j \bar{\bm{O}}_2\| = \|\mathcal{P}(\bm{\Phi}_{j\cdot}\bm{Q}_{\cdot 2}) - \mathcal{P}(\bm{\Psi}_{j\cdot}\mathcal{P}(\bm{\Psi}_{j\cdot})^\top) \bar{\bm{O}}_2\| \overset{(a)}{\lesssim} \eta
\end{equation*} 
w.h.p., where $(a)$ comes from  \eqref{eq:P_Q_2_Psi} in Lemma~\ref{lemma:Q_2}. This completes the proof.
\end{proof}

\section{Proof of the Lemmas in Appendix~\ref{sec:proof_main_theorems} }
\label{sec:proof_lemmas}

\subsection{Proof of Lemma~\ref{lemma:delta_norm_2}}
\label{sec:proof_lemma_delta_norm_2}
The proof relies on the following two theorems.
\begin{theorem}
Let $\bm{A} \in \mathbb{R}^{n \times n}$ be a symmetric matrix whose entries $a_{ij}$ for $1 \leq i < j \leq n$ are i.i.d. and satisfy 
\begin{equation*}
    a_{ij} = 
    \begin{cases}
    1 - p, &\quad \textrm{w.p. $p$},\\
    -p, &\quad \textrm{w.p. $1-p$}.
    \end{cases}
\end{equation*}
Then, for any $c > 0$, there exists constants $c_1, c_2 > 0$ such that 
\begin{equation*}
    \|\bm{A}\| \leq c_1\sqrt{p(1-p)n} + c_2\sqrt{\log n}
\end{equation*}
with probability $1 - n^{-c}$.
\label{the:bound_norm_2_A}
\end{theorem}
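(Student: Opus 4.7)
The plan is to recognize this statement as the sharp spectral-norm estimate for an inhomogeneous Wigner-type random matrix with centered, bounded, independent off-diagonal entries. Each $a_{ij}$ satisfies $\mathbb{E}[a_{ij}] = 0$, $|a_{ij}|\leq 1$, and $\mathrm{Var}(a_{ij}) = p(1-p)$, so the first term $c_1\sqrt{p(1-p)n}$ corresponds to the semicircle scale $\sigma\sqrt{n}$ driven by the row-variance $\sigma^2 = p(1-p)(n-1)$, while the extra $c_2\sqrt{\log n}$ absorbs the entrywise $\ell_\infty$-bound that becomes the binding constraint in the sparse regime. The natural route is therefore to invoke an off-the-shelf non-asymptotic Wigner-type estimate, e.g.\ the Bandeira--Van Handel bound, for a symmetric matrix with independent entries satisfying these two parameters, and then choose the deviation parameter in terms of $c$ to give probability $1-n^{-c}$.

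Concretely, I would first write $\bm{A} = \sum_{i<j} a_{ij}\bm{S}_{ij}$ with $\bm{S}_{ij} = \bm{e}_i\bm{e}_j^\top + \bm{e}_j\bm{e}_i^\top$, so that $\{a_{ij}\bm{S}_{ij}\}$ is a collection of independent, symmetric, mean-zero matrices with $\|a_{ij}\bm{S}_{ij}\|\leq 1$ and matrix variance $\big\|\sum_{i<j}\mathbb{E}[a_{ij}^2\bm{S}_{ij}^2]\big\| = p(1-p)(n-1)$. From here, two paths are available. The direct path applies the Bandeira--Van Handel inequality, which yields
\[
\Pr\!\left(\|\bm{A}\| \geq (1+\epsilon)\,2\sqrt{p(1-p)(n-1)} + t\right) \;\leq\; n\exp\!\left(-\frac{t^2}{C}\right),
\]
and then setting $t = C'\sqrt{\log n}$ with $C'$ depending on $c$ delivers the stated bound. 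The self-contained path is to run matrix Bernstein for a preliminary estimate and then sharpen via an $\varepsilon$-net combined with scalar Bernstein on bilinear forms $\bm{x}^\top \bm{A}\bm{y}$ in the Füredi--Komlós style.

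The only real obstacle is the gap between what matrix Bernstein alone gives and what is claimed. Matrix Bernstein applied directly produces a bound of order $\sqrt{p(1-p)n\log n} + \log n$, which is looser by a $\sqrt{\log n}$ factor in the leading term and does not match $c_1\sqrt{p(1-p)n} + c_2\sqrt{\log n}$. Closing this $\sqrt{\log n}$ gap is precisely what the refined inhomogeneous Wigner bounds were designed for, so in the write-up I would cite such a result rather than reproduce the moment/net computation, and then verify the two input parameters $\sigma^2 = p(1-p)(n-1)$ and $\sigma_* \leq 1$ on one line. The union-bound over $n$ basis directions in the cited inequality is what produces the $n^{-c}$ failure probability after calibrating $c_1, c_2$ in terms of $c$.
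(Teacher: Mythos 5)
Your proposal is correct and follows essentially the same route as the paper: the paper likewise reduces the claim to an off-the-shelf sharp bound for symmetric random matrices with independent, bounded, centered entries --- specifically the moment estimate $\mathbb{E}[\|\bm{A}\|^{2k}]^{1/2k} \lesssim \sigma_k + \sqrt{k}\,\sigma_k^*$ of Lata\l{}a--van Handel--Youssef with $k \asymp \log n$ followed by Markov's inequality, which is the moment-method counterpart of the Bandeira--Van Handel tail bound you invoke, with the same two input parameters $\sigma^2 = p(1-p)(n-1)$ and $\sigma_* \leq 1$. Your observation that plain matrix Bernstein loses a $\sqrt{\log n}$ factor on the leading term and that a refined inhomogeneous Wigner estimate is the right tool matches the paper's reasoning exactly.
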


\begin{proof}
We resort to the moment method which is commonly used in random matrix theory (see e.g.~\cite{anderson2010introduction, tao2012topics}). The idea is to bound $\mathbb{E}[\|\bm{A}\|^{2k}]$ for any $k \in \mathbb{N}$ and apply Markov inequality for getting a tail bound. We start by denoting
\begin{align*}
\sigma_{k} & = \Bigg(\sum_{i = 1}^{n} \Bigg(\sum_{j = 1}^{n} \mathbb{E}\left[a_{ij}^2\right]\Bigg)^k \Bigg)^{1/2k} = n^{1/2k} \sqrt{n p(1-p)}, \\
\sigma_k^* & = \Bigg(\sum_{i = 1}^{n} \sum_{j = 1}^{n} \|a_{ij} \|_\infty^{2k} \Bigg)^{1/2k} \leq n^{1/k}.
\end{align*}
Then by applying \cite[Theorem 4.8]{latala2018dimension}, we obtain
\begin{align*}
    \mathbb{E}\left[\|\bm{A}\|^{2k}\right]^{1/2k} & \leq 2\sigma_{k} + C \sqrt{k} \sigma_{k}^* \leq 2n^{1/2k} \sqrt{n p(1-p)} + C\sqrt{k}n^{1/k} 
\end{align*}
for some universal constant $C > 0$. By further setting $k = \lceil \gamma \log n \rceil$ for some constant $\gamma > 0$, we have
\begin{equation*}
     \mathbb{E}\left[\|\bm{A}\|^{2k}\right]^{1/2k} \leq 2e^{1/2\gamma}\sqrt{np(1-p)} + C^\prime\sqrt{\log n}
\end{equation*}
for some $C^\prime > 0$. To get a tail bound, by Markov inequality~\cite{boucheron2013concentration} we obtain,
\begin{align*}
     \mathbb{P}\{ \| \bm{A} \| \geq t \} & =  \mathbb{P}\{ \| \bm{A} \|^{2k} \geq t^{2k} \} \leq t^{-2k}     \mathbb{E}\left[\|\bm{A}\|^{2k}\right]  \\
    &\leq \left( \frac {2e^{1/2\gamma}\sqrt{np(1-p)} + C^\prime\sqrt{\log n}}{t} \right)^{2\lceil \gamma \log n \rceil}, 
\end{align*}
for any $t > 0$. By setting $\gamma = \frac{c}{2} + 1$, for any constant $c > 0$, we can identify some constants $c_1, c_2 > 0$ such that $t = c_1\sqrt{np(1-p)}+ c_2 \sqrt{\log n}$ and $\mathbb{P}\{\|\bm{A}\| \geq t \} \leq n^{-c}$, which completes the proof.
\end{proof}

\begin{theorem}
Let $\bm{S} \in \mathbb{R}^{n_1 d \times n_2 d}$ be an $n_1 \times n_2$ random block matrix where each block $\bm{S}_{ij}$ is i.i.d. and satisfies
\begin{equation*}
    \bm{S}_{ij} = 
    \begin{cases}
    \bm{O}_{ij}, &\quad \text{w.p. $q$},\\
    \bm{0}, &\quad \text{w.p. $1-q$},
    \end{cases}
    \label{eq:S_ij_definition}
\end{equation*}
where $\bm{O}_{ij}$ is uniformly drawn from $\mO(d)$. Let $n = n_1 + n_2$. Then, for any $c > 0$, there exists $c_1, c_2 > 0$ such that
\begin{equation*}
    \|\bm{S}\| \leq c_1 (\sqrt{q n_1} + \sqrt{q n_2}) + c_2 \sqrt{\log n}
\end{equation*}
with probability $1 - n^{-c}$.
\label{lemma:spec_bound_S_12}
\end{theorem}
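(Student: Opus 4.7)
The plan is to symmetrize $\bm{S}$ and re-use the moment-method scheme from the proof of Theorem~\ref{the:bound_norm_2_A}, with inputs adjusted to reflect the block-orthogonal structure. Define
\begin{equation*}
\tilde{\bm{S}} \;:=\; \begin{bmatrix} \bm{0} & \bm{S} \\ \bm{S}^\top & \bm{0} \end{bmatrix} \;\in\; \mathbb{R}^{N \times N}, \qquad N := (n_1+n_2)d,
\end{equation*}
so that $\|\bm{S}\|=\|\tilde{\bm{S}}\|$. The $d\times d$ blocks of $\tilde{\bm{S}}$ are mutually independent and mean-zero (because the Haar measure on $\mO(d)$ is centered), with $\|\tilde{\bm{S}}_{ij}\|\le 1$; each scalar entry in a nonzero block has variance $q/d$ since $\mathbb{E}[(\bm{O}_{ij})_{ab}^2] = 1/d$. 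In particular the row-variance $\sum_c \mathbb{E}\tilde{S}_{rc}^2$ equals $qn_2$ for rows in the upper part and $qn_1$ in the lower part, while every scalar entry is bounded by $1$.

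I would then apply a block-independent version of Latala's Theorem~4.8 in \cite{latala2018dimension} to $\tilde{\bm{S}}$. With $\sigma_k$ and $\sigma_k^*$ defined as in the proof of Theorem~\ref{the:bound_norm_2_A}, the above gives $\sigma_k \le N^{1/2k}\sqrt{q\max(n_1,n_2)}$ and $\sigma_k^* \le N^{1/k}$, so the theorem yields
\begin{equation*}
\bigl(\mathbb{E}\|\tilde{\bm{S}}\|^{2k}\bigr)^{1/2k} \;\le\; 2\sigma_k + C\sqrt{k}\,\sigma_k^*.
\end{equation*}
Setting $k = \lceil \gamma \log n \rceil$ makes $N^{1/2k}$ and $N^{1/k}$ absolute constants, and Markov's inequality, applied exactly as in the closing paragraph of the proof of Theorem~\ref{the:bound_norm_2_A} with $t = c_1(\sqrt{qn_1}+\sqrt{qn_2}) + c_2\sqrt{\log n}$, produces a tail probability at most $n^{-c}$; here I use $\sqrt{q\max(n_1,n_2)} \le \sqrt{qn_1}+\sqrt{qn_2}$.

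The main obstacle is the step invoking Latala's bound: its standard statement assumes independent \emph{scalar} entries, whereas scalar entries of $\tilde{\bm{S}}$ within a common $d\times d$ block are coupled through the orthogonality of $\bm{O}_{ij}$. Since the $(i,j)$-blocks themselves are independent, what I actually need is a block-independent analogue of the inequality. Two natural routes are (i) invoking a block-type Latala inequality derived from matrix Rosenthal estimates or from the dimension-free bounds of Latala--van Handel--Youssef, or (ii) redoing the trace/path-counting proof directly, using that for Haar-uniform $\bm{O}\in\mO(d)$ the joint moments of its entries are governed by Weingarten functions and hence contribute only $d$-dependent multiplicative constants to the standard path enumeration, which is harmless since $d$ is $O(1)$ in our regime. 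A shorter but weaker alternative is to apply Tropp's matrix Bernstein directly to $\tilde{\bm{S}}$, which yields $\sqrt{q\max(n_1,n_2)\log n}$ in place of $\sqrt{q\max(n_1,n_2)}+\sqrt{\log n}$, losing only a $\sqrt{\log n}$ factor that can often be absorbed into downstream constants.
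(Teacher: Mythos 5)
Your overall strategy is sound and is essentially the one the paper implicitly relies on: the paper's own proof of this theorem is a one-line deferral to \cite[Theorem~A.7]{fan2021joint}, which handles the block structure by a moment/trace argument of the kind you sketch. Your symmetrization $\tilde{\bm{S}}$, the variance computation $\mathbb{E}[(\bm{O}_{ij})_{ab}^2]=1/d$ giving row variances $qn_1$ and $qn_2$, and the choice $k=\lceil\gamma\log n\rceil$ followed by Markov's inequality are all correct and mirror the proof of Theorem~\ref{the:bound_norm_2_A}.

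However, there is a genuine gap exactly where you flag it, and flagging it does not close it. Latala's Theorem~4.8 in \cite{latala2018dimension} requires independent scalar entries (up to symmetry), and the $d^2$ scalar entries inside a single Haar block $\bm{O}_{ij}$ are strongly dependent (they satisfy $d(d+1)/2$ exact orthogonality constraints), so the inequality cannot be invoked as stated. None of your three escape routes is carried out: route (i) names a class of results without verifying that any of them applies to this dependency structure with the required constants; route (ii) is the right idea --- and is in effect what the cited external theorem does --- but asserting that Weingarten moments ``contribute only $d$-dependent multiplicative constants to the standard path enumeration'' is precisely the combinatorial work that constitutes the proof, and you have not done it; route (iii), matrix Bernstein, provably yields only $\sqrt{q\max(n_1,n_2)\log n}$, which for $qn\gg\log n$ is larger than the claimed $c_1(\sqrt{qn_1}+\sqrt{qn_2})+c_2\sqrt{\log n}$ by a factor of order $\sqrt{\log n}$, so it proves a strictly weaker statement than the theorem. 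To make the argument self-contained you must either execute the block trace-moment computation (bounding $\mathbb{E}\operatorname{Tr}(\tilde{\bm{S}}^{2k})$ directly, using $\|\bm{S}_{ij}\|\le 1$ and blockwise independence so that only even block-paths survive, with the Haar moments controlled uniformly in the path), or give a precise citation to a block-matrix concentration result whose hypotheses you verify; otherwise the proof of the stated bound remains incomplete.
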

\begin{proof}
The proof is similar to the one of~\cite[Theorem~A.7]{fan2021joint}, with the only difference on the orthogonal group $\mO(d)$ rather than $\SO(d)$ considered in~\cite{fan2021joint}. 
\end{proof}

\begin{proof}[Proof of Lemma~\ref{lemma:delta_norm_2}]
By definition, $\bm{\Delta}$ can be written into four blocks as 
\begin{equation*}
    \bm{\Delta} = 
    \begin{bmatrix}
    \widetilde{\bm{\Delta}}_{11} & \widetilde{\bm{\Delta}}_{12}\\
    \widetilde{\bm{\Delta}}_{12}^\top & \widetilde{\bm{\Delta}}_{22}
    \end{bmatrix} = 
    \begin{bmatrix}
    \widetilde{\bm{\Delta}}_{11} & \bm{0}\\
    \bm{0} & \widetilde{\bm{\Delta}}_{22}
    \end{bmatrix} + 
    \begin{bmatrix}
    \bm{0} & \widetilde{\bm{\Delta}}_{12}\\
    \widetilde{\bm{\Delta}}_{12}^\top & \bm{0}
    \end{bmatrix} =: \bm{\Delta}_{\text{in}} + \bm{\Delta}_{\text{out}},
    \label{eq:Delta_four_blocks}
\end{equation*}
where $\widetilde{\bm{\Delta}}_{11}, \widetilde{\bm{\Delta}}_{22} \in \mathbb{R}^{md \times md}$ correspond to the two clusters $C_1, C_2$ respectively.
By using the fact $\|\bm{\Delta}_{\text{out}}\| =  \|\widetilde{\bm{\Delta}}_{12}\|$ and the triangle inequality that $\|\bm{\Delta}_{\text{in}}\| \leq \|\widetilde{\bm{\Delta}}_{11}\| + \|\widetilde{\bm{\Delta}}_{22}\|$ and $\|\bm{\Delta}\| \leq \|\bm{\Delta}_{\text{in}}\| + \|\bm{\Delta}_{\text{out}}\|$, we have $\|\bm{\Delta}\| \leq \|\widetilde{\bm{\Delta}}_{11}\| + \|\widetilde{\bm{\Delta}}_{22}\| + \|\widetilde{\bm{\Delta}}_{12}\|$. For $\|\widetilde{\bm{\Delta}}_{11}\|$, by definition in \eqref{eq:Delta_ij} we can denote $\bm{\Delta}_{ij} = r_{ij}\bm{I}_d$, where $r_{ij}$ is a random variable such that
$\mathbb{P}\{r_{ij} = 1 - p\} = p$ and $\mathbb{P}\{r_{ij} = -p\} = 1-p$. Then let $\bm{E}_1 \in \mathbb{R}^{m \times m}$ be a matrix that contains all $r_{ij}$ for $i, j \in C_1$ such that $r_{ij} = r_{ji}$ and $r_{ii} = 0$, one can see that $\widetilde{\bm{\Delta}}_{11} = \bm{E}_1 \otimes \bm{I}_d$ where $\otimes$ denotes the Kronecker product, and further
\begin{equation*}
    \|\widetilde{\bm{\Delta}}_{11}\| = \|\bm{E}_1 \otimes \bm{I}_d\| = \|\bm{E}_1\|.
\end{equation*}
Then from Theorem~\ref{the:bound_norm_2_A} we get $\|\widetilde{\bm{\Delta}}_{11}\| \lesssim \sqrt{p(1-p)n}$ w.h.p., where the residual term $c_2\sqrt{\log m}$ is absorbed owe to the assumption $p = \Omega(\log n/n)$ and $1-p = \Omega(\log n/n)$.  Similarly $\|\widetilde{\bm{\Delta}}_{22}\|$ is bounded as $\|\widetilde{\bm{\Delta}}_{22}\| \lesssim \sqrt{p(1-p)n}$ w.h.p. For $\|\widetilde{\bm{\Delta}}_{12}\|$, applying Theorem~\ref{lemma:spec_bound_S_12} gives $\|\widetilde{\bm{\Delta}}_{12}\| \lesssim \sqrt{qn}$ w.h.p. Putting all together yields the bound on $\|\bm{\Delta}\|$. 

For the block row $\bm{\Delta}_{i \cdot }$, let us consider another matrix $\bm{\Delta}^\prime \in \mathbb{R}^{nd \times nd}$ which is all zero but only the $i$-th block row is equal to $\bm{\Delta}_{i\cdot}$ (i.e. $\bm{\Delta}_{i\cdot}^\prime = \bm{\Delta}_{i\cdot}$). Notice that $\bm{\Delta}^\prime$ is of the same size as $\bm{\Delta}$, then by the definition of the operator norm $\|\bm{X}\| = \max_{\|v\|_2 = 1} \|\bm{X}v\|_2$, one can see that $\|\bm{\Delta}_{i\cdot}\| = \|\bm{\Delta}^\prime\| \leq \|\bm{\Delta}\|$.   

For $\bm{\Delta}^{(i)}$, let us consider another matrix $\bm{\Delta}^{(i), \text{col}}$ which is identical to $\bm{\Delta}$ but only the $i$-th block column is all zero, i.e. $\bm{\Delta}^{(i), \text{col}}_{\cdot i} = \bm{0}$. Then by definition of the operator norm we have $\|\bm{\Delta}^{(i), \text{col}}\| \leq \|\bm{\Delta}\|$. Furthermore, since $\bm{\Delta}^{(i)}$ is identical to $\bm{\Delta}^{(i), \text{col}}_{\cdot i}$ but only the $i$-th block row is all zero, then by following the same idea as above we have $\|\bm{\Delta}^{(i)}\| \leq \|\bm{\Delta}^{(i), \text{col}}_{\cdot i}\|$, which completes the proof.
\end{proof}

\subsection{Proof of Lemma~\ref{lemma:Delta_M}}
The proof relies on the following inequality.
\begin{theorem}[Matrix Bernstein inequality~\cite{tropp2015introduction}]
Let $\bm{X}_1, \ldots, \bm{X}_n \in \mathbb{R}^{d_1 \times d_2}$ be independent matrix such that $\mathbb{E}[\bm{X}_i] = \bm{0}$ and $\|\bm{X}_i\| \leq L$ for $i = 1,\ldots, n$. Let $\bm{Z} = \sum_{i = 1}^n \bm{X}_i$ and $v(\bm{Z}) := \max\{\|\mathbb{E}(\bm{Z}\bm{Z}^\top)\|, \; \|\mathbb{E}(\bm{Z}^\top\bm{Z})\|\}$.
Then for any $t > 0$ 
\begin{equation}
    \mathbb{P}\left\{\|\bm{Z}\| \geq t \right\} \leq (d_1 + d_2) \exp\left(\frac{-t^2/2}{v(\bm{Z}) + Lt/3}\right).
    \label{eq:matrix_bernstein}
\end{equation}
In other words, for any $c > 0$, 
\begin{equation}
    \|\bm{Z}\| \leq \sqrt{2cv(\bm{Z})\log(n(d_1 + d_2))} + \frac{2cL\log(n(d_1 + d_2))}{3}
    \label{eq:matrix_bernstein_bound}
\end{equation}
with probability $1 - n^{-c}$.
\label{lemma:matrix_bernstein}
\end{theorem}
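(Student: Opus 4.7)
The plan is to follow the standard matrix-Laplace-transform route pioneered by Ahlswede--Winter and refined by Tropp. Since the statement concerns a rectangular matrix, I would first reduce to the symmetric case by the \emph{Hermitian dilation trick}: for each $\bm{X}_i$ define
\begin{equation*}
    \widetilde{\bm{X}}_i := \begin{bmatrix} \bm{0} & \bm{X}_i \\ \bm{X}_i^\top & \bm{0} \end{bmatrix} \in \mathbb{R}^{(d_1+d_2)\times(d_1+d_2)}.
\end{equation*}
These are independent, centered, symmetric, satisfy $\|\widetilde{\bm{X}}_i\|=\|\bm{X}_i\|\leq L$, and $\|\widetilde{\bm{Z}}\| = \|\bm{Z}\|$ where $\widetilde{\bm{Z}} = \sum_i \widetilde{\bm{X}}_i$. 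Moreover a direct block computation gives $\widetilde{\bm{Z}}^2 = \mathrm{diag}(\bm{Z}\bm{Z}^\top, \bm{Z}^\top\bm{Z})$, so $\|\mathbb{E}\widetilde{\bm{Z}}^2\| = v(\bm{Z})$. It therefore suffices to establish the Bernstein bound for a sum of independent, centered, symmetric matrices of dimension $d:=d_1+d_2$ with operator norm bounded by $L$ and with matrix variance $v = \|\mathbb{E}\widetilde{\bm{Z}}^2\|$.

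The second step is the \emph{matrix Chernoff/Laplace transform bound}: for any symmetric $\bm{Y}$ and any $\theta>0$,
\begin{equation*}
    \mathbb{P}\{\lambda_{\max}(\bm{Y}) \geq t\} \leq e^{-\theta t}\,\mathbb{E}\bigl[\operatorname{tr} e^{\theta \bm{Y}}\bigr].
\end{equation*}
Applying this to $\widetilde{\bm{Z}}$ and to $-\widetilde{\bm{Z}}$ and using symmetry of the bound (so that $\|\widetilde{\bm{Z}}\| = \max\{\lambda_{\max}(\widetilde{\bm{Z}}), \lambda_{\max}(-\widetilde{\bm{Z}})\}$) reduces the problem to controlling the matrix moment generating function $\mathbb{E}[\operatorname{tr} e^{\theta \widetilde{\bm{Z}}}]$. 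The key tool here is \emph{Lieb's concavity theorem}, which, via the subadditivity of the matrix cumulant generating function, yields
\begin{equation*}
    \mathbb{E}\bigl[\operatorname{tr} e^{\theta \widetilde{\bm{Z}}}\bigr] \leq \operatorname{tr} \exp\Bigl(\sum_{i=1}^n \log \mathbb{E}[e^{\theta \widetilde{\bm{X}}_i}]\Bigr).
\end{equation*}
This is the heart of the proof and what distinguishes the modern approach from the weaker Ahlswede--Winter bound (which uses Golden--Thompson and loses a factor).

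The third step is the \emph{per-summand matrix cumulant estimate}. Under $\mathbb{E}[\widetilde{\bm{X}}_i]=\bm{0}$ and $\|\widetilde{\bm{X}}_i\|\leq L$, a Taylor expansion gives, for $0<\theta<3/L$,
\begin{equation*}
    \log \mathbb{E}[e^{\theta \widetilde{\bm{X}}_i}] \preceq \frac{\theta^2/2}{1-L\theta/3}\,\mathbb{E}[\widetilde{\bm{X}}_i^{\,2}],
\end{equation*}
obtained by dominating $e^{\theta x}-1-\theta x \leq \frac{\theta^2 x^2/2}{1-L\theta/3}$ on $[-L,L]$ (and lifting to matrices via the operator-monotonicity arguments of spectral calculus) and then using $\log(\bm{I}+\bm{A})\preceq \bm{A}$. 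Summing inside the trace exponential and invoking $\|\sum_i \mathbb{E}\widetilde{\bm{X}}_i^{\,2}\| = v$ gives
\begin{equation*}
    \mathbb{E}[\operatorname{tr} e^{\theta \widetilde{\bm{Z}}}] \leq d\,\exp\Bigl(\frac{\theta^2 v/2}{1-L\theta/3}\Bigr).
\end{equation*}
Combining with the Laplace bound and optimizing in $\theta$ by choosing $\theta = t/(v+Lt/3)$, which lies in the admissible range, yields the advertised
\begin{equation*}
    \mathbb{P}\{\|\bm{Z}\|\geq t\} \leq (d_1+d_2)\exp\!\Bigl(\frac{-t^2/2}{v(\bm{Z})+Lt/3}\Bigr).
\end{equation*}
The second (high-probability) form \eqref{eq:matrix_bernstein_bound} follows by solving the exponent for $t$ and splitting into the subgaussian and subexponential regimes. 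The main obstacle is the cumulant bound in Step~3: translating the scalar Bernstein inequality $e^{\theta x}-1-\theta x \leq \theta^2 x^2/(2(1-L\theta/3))$ into a semidefinite matrix inequality requires care (the cleanest route is transfer via eigenvalue calculus applied to the bounded symmetric $\widetilde{\bm{X}}_i$), and Lieb's concavity theorem in Step~2 is the non-elementary input that makes the dimensional factor $d_1+d_2$ sharp rather than doubled.
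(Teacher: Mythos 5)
Your outline is a correct sketch of the standard proof of matrix Bernstein (Hermitian dilation so that $\|\mathbb{E}\widetilde{\bm{Z}}^2\| = v(\bm{Z})$, matrix Laplace transform, Lieb's concavity theorem for subadditivity of the matrix cgf, the per-summand bound $\log\mathbb{E}[e^{\theta\widetilde{\bm{X}}_i}] \preceq \tfrac{\theta^2/2}{1-L\theta/3}\,\mathbb{E}[\widetilde{\bm{X}}_i^2]$, and the choice $\theta = t/(v+Lt/3)$), but it is a genuinely different route from the paper, which does not prove the tail bound \eqref{eq:matrix_bernstein} at all: it is quoted directly from Tropp's monograph, and the paper's only original content is the conversion from \eqref{eq:matrix_bernstein} to the high-probability form \eqref{eq:matrix_bernstein_bound}. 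That conversion is done explicitly there by setting the right-hand side of \eqref{eq:matrix_bernstein} equal to $n^{-c}$, solving the resulting quadratic in $t$ to get $t=\sqrt{2v\gamma}\bigl(\sqrt{1+\tfrac{L^2\gamma}{18v}}+\sqrt{\tfrac{L^2\gamma}{18v}}\bigr)$ with $\gamma := c\log n+\log(d_1+d_2)$, and then bounding $t \leq \sqrt{2v\gamma}+\tfrac{2L\gamma}{3}$ and $\gamma \leq c\log(n(d_1+d_2))$ (which tacitly uses $c\geq 1$; the prefactor $d_1+d_2$ is absorbed into $\gamma$ so that the failure probability is exactly $n^{-c}$). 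Your proposal spends its effort where the paper cites, and is brief exactly where the paper works: "solving the exponent for $t$ and splitting into regimes" is the one step you would need to execute carefully, including accounting for the dimensional prefactor, to recover the precise constants $\sqrt{2c\,v\log(n(d_1+d_2))}+\tfrac{2cL\log(n(d_1+d_2))}{3}$ claimed in \eqref{eq:matrix_bernstein_bound}. What your self-contained approach buys is independence from the external reference (and it makes visible why the dimension factor is $d_1+d_2$ rather than something worse); what the paper's approach buys is brevity, since for the purposes of Lemma~\ref{lemma:Delta_M} only the ready-made tail bound and the short algebraic inversion are needed.
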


To obtain \eqref{eq:matrix_bernstein_bound}, by setting the RHS of \eqref{eq:matrix_bernstein} to be $n^{-c}$ for some $c > 0$ we get 
\begin{equation*}
    t = \sqrt{2v(\bm{Z})\gamma}\left(\sqrt{1 + \frac{L^2\gamma}{18v(\bm{Z})}} + \sqrt{\frac{L^2\gamma}{18v(\bm{Z})}}\right), \quad \gamma := c\log n  + \log(d_1 + d_2).
\end{equation*}
Then when $c > 1$ it satisfies $\gamma \leq c\log(n(d_1+d_2))$ and $t \leq \sqrt{2v(\bm{Z})\gamma}\left(1 + 2\sqrt{\frac{L^2\gamma}{18v(\bm{Z})}}\right) \leq \sqrt{2v(\bm{Z})\gamma} + \frac{2L\gamma}{3}$,
which leads to \eqref{eq:matrix_bernstein_bound}.

\begin{proof}[Proof of Lemma~\ref{lemma:Delta_M}]
We directly apply Theorem~\ref{lemma:matrix_bernstein} by letting $\bm{X}_j = \bm{\Delta}_{ij}\bm{M}_{j\cdot}$. Clearly, $\mathbb{E}[\bm{X}_j] = 0$ and $\|\bm{X}_j\| \leq \|\bm{\Delta}_{ij}\|\|\bm{M}_{j \cdot}\| \leq \max_j \|\bm{M}_{j \cdot}\|$, for $j = 1,\ldots, n$,  
where $\|\bm{\Delta}_{ij}\| \leq 1$ by definition in \eqref{eq:Delta_ij}. Then $L = \max_j \|\bm{M}_{j \cdot}\|$.
For $v(\bm{Z})$ in Theorem~\ref{lemma:matrix_bernstein} where $\bm{Z} := \sum_{j = 1}^n\bm{X}_j$, we have
\begin{equation*}
    \begin{aligned}
    \mathbb{E}(\bm{Z}\bm{Z}^\top) &= \sum_{j = 1}^n \mathbb{E}\left[\bm{\Delta}_{ij}\bm{M}_{j\cdot }\bm{M}_{j \cdot}^\top\bm{\Delta}_{ij}^\top\right]\overset{(a)}{\preceq} \sum_{j = 1}^n \mathbb{E}\left[\bm{\Delta}_{ij}\bm{\Delta}_{ij}^\top\right]\|\bm{M}_{j \cdot}\|^2_2\\
    &\preceq \sum_{j = 1}^n \mathbb{E}\left[\bm{\Delta}_{ij}\bm{\Delta}_{ij}^\top\right] \max_{j}\|\bm{M}_{j \cdot}\|^2_2 \overset{(b)}{=} \left[\frac{n}{2}(p(1-p) + q)\max_{j}\|\bm{M}_{j \cdot}\|^2_2\right] \bm{I}_d
    \end{aligned}
\end{equation*}
where $(a)$ holds since $\bm{M}_{j \cdot}\bm{M}_{j \cdot}^\top \preceq \|\bm{M}_{j \cdot}\|^2 \bm{I}_d$ and $(b)$ holds by \eqref{eq:Delta_ij}. Similarly, one can see that $\mathbb{E}(\bm{Z}^\top\bm{Z}) =  \left[\frac{n}{2}(p(1-p) + q)\max_{j}\|\bm{M}_{j \cdot}\|^2_2\right] \bm{I}_d$. 
Then, by definition $v(\bm{Z}) = \frac{n}{2}(p(1-p)+q)\max_j\|\bm{M}_{j \cdot}\|^2$. In this way, Theorem~\ref{lemma:matrix_bernstein} gives us
\begin{equation*}
\begin{aligned}
    \|\bm{\Delta}_{i \cdot}\bm{M}\| &\leq \sqrt{2c(p(1-p)+q)n\log(2nd)}\max_j\|\bm{M}_{j \cdot}\| + \frac{2c}{3}\log(2nd) \max_j\|\bm{M}_{j \cdot}\| \\
    &\lesssim \sqrt{(p(1-p) + q)n\log(nd)}\max_j\|\bm{M}_{j \cdot}\|
\end{aligned}    
\end{equation*}
with probability $1 - n^{-c}$ for $c > 0$, 
where the last inequality holds because of the assumption $p(1-p) + q = \Omega(\log n/n)$ and then the second term $\frac{2c}{3}\log(2nd) \max_j\|\bm{M}_{j \cdot}\|$ is grouped into the first one. This completes the proof.
\end{proof}

\subsection{Proof of the lemmas in Appendix~\ref{sec:proof_the_1}}
\label{sec:proof_lemmas_the_1}

\begin{proof}[Proof of Lemma~\ref{lemma:Psi_Phi}]
For $\|\bm{\Phi} - \bm{\Psi}\bm{O}\|$, applying Theorem~\ref{the:davis_kahan} with \eqref{eq:Davis_kahan_bound} yields
\begin{equation*}
    \begin{aligned}
    \|\bm{\Phi} - \bm{\Psi}\bm{O}\| \lesssim \frac{\|\bm{\Delta}\bm{\Psi}\|}{\delta}
    \leq \frac{\|\bm{\Delta}\|\|\bm{\Psi}\|}{\delta}
    \end{aligned}
\end{equation*}
where $\delta = |\sigma_{2d}(\mathbb{E}[\bm{A}]) - \sigma_{2d+1}(\bm{A})|$. Here, $\sigma_{2d}(\mathbb{E}[\bm{A}]) = pn/2$ by definition, and $\sigma_{2d+1}(\bm{A}) \leq \sigma_{2d+1}(\mathbb{E}[\bm{A}]) + \|\bm{\Delta}\| = \|\bm{\Delta}\|$ by Theorem~\ref{the:weyl}. Then applying Lemma~\ref{lemma:delta_norm_2} gives $\delta \geq \sigma_{2d}(\mathbb{E}[\bm{A}]) -  \sigma_{2d+1}(\bm{A}) = \Omega(pn)$ w.h.p. Also, by definition $\|\bm{\Phi}\| = 1$. Combining these gives $\|\bm{\Phi} - \bm{\Psi}\bm{O}\| \lesssim \tau$ w.h.p. For $1 - \sigma_{\mathrm{min}}(\bm{\Psi}^\top\bm{\Phi})$, notice that
\begin{equation*}
    \|\bm{\Psi}^\top\bm{\Phi} - \bm{O}\| = \|\bm{\Psi}^\top(\bm{\Phi} - \bm{\Psi}\bm{O})\| \leq \|\bm{\Psi}\|\|\bm{\Phi} - \bm{\Psi}\bm{O}\| \lesssim \tau
\end{equation*}
w.h.p., this leads to 
\begin{equation*}
    \sigma_{\text{min}}(\bm{O}) - \sigma_{\text{min}}(\bm{\Psi}^\top\bm{\Phi}) = 1 - \sigma_{\text{min}}(\bm{\Psi}^\top\bm{\Phi}) \overset{(a)}{\leq} \|\bm{\Psi}^\top\bm{\Phi} - \bm{O}\| \lesssim \tau
\end{equation*}
w.h.p., where $(a)$ comes from Theorem~\ref{the:weyl}. The bounds on $\|\bm{\Phi}^{(i)} - \bm{\Psi}\bm{O}^{(i)}\|$ and $1 - \sigma_{\text{min}}(\bm{\Psi}^\top\bm{\Phi}^{(i)})$ are derived in a similar manner, therefore we do not repeat. 
\end{proof}

\vspace{0.1cm}
\begin{proof}[Proof of Lemma~\ref{lemma:Phi_Phi_i}]
Applying Theorem~\ref{the:davis_kahan} on $\|\bm{\Phi} - \bm{\Phi}^{(i)}\bm{O}^{(i)}\|$ yields
\begin{equation*}
    \|\bm{\Phi} - \bm{\Phi}^{(i)}\bm{O}^{(i)}\| \leq \frac{\|(\bm{A} - \bm{A}^{(i)})\bm{\Phi}^{(i)}\|}{\delta},
\end{equation*}
where $\delta = |\sigma_{2d}(\bm{A}) - \sigma_{2d+1}(\bm{A}^{(i)})|$.
To bound $\delta$, similar to the proof in Lemma~\ref{lemma:Psi_Phi} we have $\sigma_{2d}(\bm{A})\geq \sigma_{2d}(\mathbb{E}[\bm{A}]) - \|\bm{\Delta}\|$ and $\sigma_{2d+1}(\bm{A}^{(i)}) \leq \sigma_{2d+1}(\mathbb{E}[\bm{A}]) + \|\bm{\Delta}^{(i)}\| = \|\bm{\Delta}^{(i)}\|$ w.h.p., then applying Lemma~\ref{lemma:delta_norm_2} gives $\delta \geq \sigma_{2d}(\mathbb{E}[\bm{A}]) - 2\|\bm{\Delta}\| = \Omega(pn)$ w.h.p. To bound $\|(\bm{A} - \bm{A}^{(i)})\bm{\Phi}^{(i)}\|$, notice that $\bm{A} - \bm{A}^{(i)} = \bm{\Delta} - \bm{\Delta}^{(i)}$ which is only non-zero on the $i$-th block row and the $i$-th block column, then
\begin{equation*}
    \begin{aligned}
    \|(\bm{A} - \bm{A}^{(i)})\bm{\Phi}^{(i)}\| &=  
    \|(\bm{\Delta} - \bm{\Delta}^{(i)})\bm{\Phi}^{(i)}\| 
    \overset{(a)}{\leq} \|\bm{\Delta}_{i\cdot}\bm{\Phi}^{(i)}\| + \|\bm{\Delta}_{i\cdot}^\top\bm{\Phi}_{i\cdot}^{(i)}\| \\
    &\leq \|\bm{\Delta}_{i\cdot}\bm{\Phi}^{(i)}\| + \|\bm{\Delta}_{i\cdot}\|\|\bm{\Phi}_{i\cdot}^{(i)}\|, 
    \end{aligned}
\end{equation*}
where $(a)$ holds by separating the $i$-th block row and block column with the triangle inequality follows. To proceed, since $\bm{\Delta}_{i\cdot}$ is independent of $\bm{\Phi}^{(i)}$,  $\|\bm{\Delta}_{i\cdot}\bm{\Phi}^{(i)}\|$ is bounded by Lemma~\ref{lemma:Delta_M} and yields
\begin{equation*}
    \begin{aligned}
    \|\bm{\Delta}_{i\cdot}\bm{\Phi}^{(i)}\| 
    &\lesssim \sqrt{(p(1-p)+q)n \log (nd)}\max_j\|\bm{\Phi}_{j\cdot}^{(i)}\|
    \end{aligned}
\end{equation*}
w.h.p. Also, $\|\bm{\Delta}_{i\cdot}\|\|\bm{\Phi}_{i\cdot}^{(i)}\| \leq (\sqrt{p(1-p)n} + \sqrt{qn})\max_j\|\bm{\Phi}_{j\cdot}^{(i)}\|$ w.h.p., where $\|\bm{\Delta}_{i\cdot}\|$ is bounded by Lemma~\ref{lemma:delta_norm_2}. Combining the result above gives
\begin{equation}
    \|\bm{\Phi} - \bm{\Phi}^{(i)}\bm{O}^{(i)}\| \lesssim \eta \max_j \|\bm{\Phi}_{j\cdot}^{(i)}\|
    \label{eq:phi_phi_i_O}
\end{equation}
w.h.p. Next we show that $\max_j \|\bm{\Phi}_{j\cdot}^{(i)}\| \lesssim \max_j \|\bm{\Phi}_{j\cdot}\|$. Let $j^\prime$ be the index such that $\|\bm{\Phi}^{(i)}_{j^\prime \cdot}\| = \max_j \|\bm{\Phi}_{j\cdot}^{(i)}\|$, then we have 
\begin{equation}
    \begin{aligned}
    \|\bm{\Phi}_{j^\prime \cdot}^{(i)}\| - \|\bm{\Phi}_{j^\prime \cdot}\| &\overset{(a)}{\leq} \|\bm{\Phi}_{j^\prime \cdot} - \bm{\Phi}^{(i)}_{j^\prime \cdot}\bm{O}^{(i)}\| \overset{(b)}{\leq} \|\bm{\Phi} - \bm{\Phi}^{(i)}\bm{O}^{(i)}\| \lesssim \eta \max_j \|\bm{\Phi}_{j\cdot}^{(i)}\| = \eta \|\bm{\Phi}_{j^\prime \cdot}^{(i)}\|
    \end{aligned}
\label{eq:phi_j_bound}
\end{equation}
w.h.p., where $(a)$ comes from the triangle inequality, and $(b)$ holds since $\bm{\Phi}_{j^\prime \cdot} - \bm{\Phi}^{(i)}_{j^\prime \cdot}\bm{O}^{(i)}$ is the $j^\prime$-th block row of $\bm{\Phi} - \bm{\Phi}^{(i)}\bm{O}^{(i)}$. This implies if the condition \eqref{eq:p_q_assumption} that $\eta \leq c_0$ holds for a sufficiently small $c_0$, then $\max_j \|\bm{\Phi}_{j\cdot}^{(i)}\| = \|\bm{\Phi}_{j^\prime \cdot}^{(i)}\| \lesssim \|\bm{\Phi}_{j^\prime \cdot}\| \leq \max_j\|\bm{\Phi}_{j\cdot}\|$
w.h.p. 
Plugging this into \eqref{eq:phi_phi_i_O} completes the proof.
\end{proof}

\vspace{0.1cm}
\begin{proof}[Proof of Lemma~\ref{lemma:phi_i_psi_j}]
From the triangle inequality we obtain
\begin{align}
    \max_{j}\|\bm{\Phi}_{j\cdot}^{(i)} - \bm{\Psi}_{j\cdot}\bm{S}^{(i)}\| &\leq \max_j(\|\bm{\Phi}_{j\cdot}^{(i)}\| + \|\bm{\Psi}_{j\cdot}\bm{S}^{(i)}\|) \leq \max_j\|\bm{\Phi}_{j\cdot}^{(i)}\| + \max_j\|\bm{\Psi}_{j\cdot}\| \nonumber \\
    &\overset{(a)}{\lesssim} \max_j\|\bm{\Phi}_{j\cdot}\| + (1/\sqrt{n})
    \label{eq:phi_j_1_sqrtn}
\end{align}
w.h.p., 
where $(a)$ holds since $\max_j\|\bm{\Phi}_{j\cdot}^{(i)}\| \lesssim \max_j\|\bm{\Phi}_{j\cdot}\|$ in Lemma~\ref{lemma:Phi_Phi_i}, and $\|\bm{\Psi}_{j\cdot}\| = 1/\sqrt{m}$ by definition. 
It remains to show $\max_j\|\bm{\Phi}_{j\cdot}\| \geq 1/\sqrt{n}$, which comes from
\begin{equation*}
    1 = \|\bm{\Phi}\| \leq \sqrt{\sum_{i = 1}^n\|\bm{\Phi}_{i\cdot}\|^2} \leq \sqrt{n} \max_j \|\bm{\Phi}_{j\cdot}\|.
\end{equation*}
Plugging $\max_j\|\bm{\Phi}_{j\cdot}\| \geq 1/\sqrt{n}$ back into \eqref{eq:phi_j_1_sqrtn} completes the proof.
\end{proof}

\vspace{0.1cm}
\begin{proof}[Proof of Lemma~\ref{lemma:S_i_O_i_O}]
By definition, 
\begin{align}
    \|\bm{S}^{(i)}\bm{O}^{(i)} - \bm{O}\| &= \|\mathcal{P}(\bm{\Psi}^\top\bm{\Phi}^{(i)})\bm{O}^{(i)} - \mathcal{P}(\bm{\Psi}^\top\bm{\Phi})\| = \|\mathcal{P}(\bm{\Psi}^\top\bm{\Phi}^{(i)}\bm{O}^{(i)}) - \mathcal{P}(\bm{\Psi}^\top\bm{\Phi})\|  \nonumber\\
    &\overset{(a)}{\leq} 2\sqrt{2}\min\{\sigma_{\text{min}}(\bm{\Psi}^\top\bm{\Phi}^{(i)}), \; \sigma_{\text{min}}(\bm{\Psi}^\top\bm{\Phi})\}\|\bm{\Psi}^\top(\bm{\Phi}^{(i)}\bm{O}^{(i)} - \bm{\Phi})\|. \label{eq:sigma_phi}
\end{align}
where $(a)$ comes from Lemma~\ref{lemma:PX_PY}. To proceed, from Lemma~\ref{lemma:Psi_Phi} we have $1 - \sigma_{\text{min}}(\bm{\Psi}^\top\bm{\Phi}^{(i)}) \lesssim \tau$ and $1 - \sigma_{\text{min}}(\bm{\Psi}^\top\bm{\Phi}) \lesssim \tau$ w.h.p. Moreover, under the condition \eqref{eq:p_q_assumption} that $\eta \leq c_0$, it satisfies 
\begin{equation*}
    \tau = \frac{\sqrt{p(1-p)} + \sqrt{q}}{p\sqrt{n}} \overset{(a)}{\leq} \frac{\sqrt{2(p(1-p) + q)}}{p\sqrt{n} } \leq \frac{\sqrt{2}c_0}{\log(nd)} = o(1)
\end{equation*}
where $(a)$ uses the fact $\sqrt{x} + \sqrt{y} \leq \sqrt{2(x + y)}$ for any $x, y \geq 0$. 
This implies $\sigma_{\text{min}}(\bm{\Psi}^\top\bm{\Phi}^{(i)}) = \Omega(1), \sigma_{\text{min}}(\bm{\Psi}^\top\bm{\Phi}) = \Omega(1)$ w.h.p. Plugging this back into \eqref{eq:sigma_phi} yields
\begin{equation*}
    \begin{aligned}
    \|\bm{S}^{(i)}\bm{O}^{(i)} - \bm{O}\| &\lesssim \|\bm{\Psi}^\top(\bm{\Phi}^{(i)}\bm{O}^{(i)} - \bm{\Phi})\| \leq \|\bm{\Psi}\|\|\bm{\Phi} - \bm{\Phi}^{(i)}\bm{O}^{(i)}\| \lesssim \eta\max_j \|\bm{\Phi}_{j\cdot}\|
    \end{aligned}
\end{equation*}
w.h.p., 
where Lemma~\ref{lemma:Phi_Phi_i} is applied to bound $\|\bm{\Phi} - \bm{\Phi}^{(i)}\bm{O}^{(i)}\|$. 
\end{proof}

\vspace{0.1cm}
\begin{proof}[Proof of Lemma~\ref{lemma:upper_bound_phi_j}]
Let $i^\prime \in C_1$ and $j^\prime \in C_2$ be two nodes such that 
\begin{equation*}
    \|\bm{\Phi}_{i^\prime \cdot}\| = \max_{i \in C_1}\|\bm{\Phi}_{i\cdot}\|, \quad \|\bm{\Phi}_{j^\prime \cdot}\| = \max_{j \in C_2}\|\bm{\Phi}_{j\cdot}\|.
\end{equation*}
Without loss of generality we assume $\|\bm{\Phi}_{j^{\prime}}\| = \max_j\|\bm{\Phi}_{j\cdot}\|$ which is the largest block among all nodes. Then, we define a matrix $\bm{\Phi}^\prime \in \mathbb{R}^{nd \times 2d}$ which has the same size of $\bm{\Phi}$ and is formed by $\bm{\Phi}_{i^\prime \cdot}, \bm{\Phi}_{j^\prime \cdot}$ as
\begin{equation*}
    (\bm{\Phi}^\prime)^\top = 
    \begin{bmatrix}
    \smash[b]{\underbrace{
    \begin{matrix}
    \bm{\Phi}_{i^\prime \cdot}^\top &\cdots &\bm{\Phi}_{i^\prime \cdot}^\top
    \end{matrix}}_{m \times \bm{\Phi}_{i^\prime \cdot}^\top}
    }
    &
    \smash[b]{\underbrace{
    \begin{matrix}
    \bm{\Phi}_{j^\prime \cdot}^\top &\cdots &\bm{\Phi}_{j^\prime \cdot}^\top
    \end{matrix}}_{m \times \bm{\Phi}_{j^\prime \cdot}^\top}
    }
    \end{bmatrix}
    \vspace{0.35cm}
\end{equation*}
As a result, $\bm{\Phi}^\prime$ is close to $\bm{\Phi}$ such that 
\begin{equation}
    \|\bm{\Phi} - \bm{\Phi}^\prime\| \overset{(a)}{\leq} \sqrt{\sum_{i = 1}^m \|\bm{\Phi}_{i\cdot} - \bm{\Phi}_{i^\prime \cdot}\|^2 + \sum_{j = m+1}^n \|\bm{\Phi}_{j\cdot} - \bm{\Phi}_{j^\prime \cdot}\|^2} \overset{(b)}{\lesssim} \sqrt{n}\eta\|\bm{\Phi}_{j^\prime \cdot}\|
    \label{eq:phi_phi_prime}
\end{equation}
w.h.p., where $(a)$ holds by definition of the operator norm, and $(b)$ comes from \eqref{eq:bound_phi_i_phi_j} in the proof of Lemma~\ref{the:phi_i_phi_j}. This leads to 
\begin{equation}
    |\sigma_{\text{max}}(\bm{\Phi}) - \sigma_{\text{max}}(\bm{\Phi}^\prime)| \overset{(a)}{=} |1 - \sigma_{\text{max}}(\bm{\Phi}^\prime)| \overset{(b)}{\leq} \|\bm{\Phi} - \bm{\Phi}^\prime\| \lesssim \sqrt{n}\eta\|\bm{\Phi}_{j^\prime \cdot}\|
    \label{eq:1_sigma}
\end{equation}
w.h.p., 
where (a) uses $\sigma_{\text{max}}(\bm{\Phi}) = 1$ since $\bm{\Phi}^\top\bm{\Phi} = \bm{I}_{2d}$ by definition, (b) comes from Weyl's inequality. On the other hand, by definition of the operator norm
\begin{equation*}
\begin{aligned}
    \sigma_{\text{max}}(\bm{\Phi}^\prime) &= \max_{\|\bm{x}\| = 1}\|\bm{\Phi}^\prime\bm{x}\| = \max_{\|\bm{x}\| = 1} \sqrt{\sum_{i = 1}^m \|\bm{\Phi}_{i^\prime \cdot}\bm{x}\|^2 + \sum_{j = m+1}^n \|\bm{\Phi}_{j^\prime \cdot}\bm{x}\|^2} \\
    &\geq \max_{\|\bm{x}\| = 1}
    \sqrt{\sum_{j = m+1}^n\|\bm{\Phi}_{j^\prime \cdot}\bm{x}\|^2} = \sqrt{m}\|\bm{\Phi}_{j^\prime \cdot}\|.
\end{aligned}
\end{equation*}
Combining this with \eqref{eq:1_sigma} gives
\begin{equation*}
    \sqrt{m}\|\bm{\Phi}_{j^\prime \cdot}\| - 1 \leq \sigma_{\text{max}}(\bm{\Phi}^\prime) - 1 \lesssim \sqrt{n}\eta\|\bm{\Phi}_{j^\prime \cdot}\|
\end{equation*}
w.h.p. This implies as long as the condition \eqref{eq:p_q_assumption} that $\eta \leq c_0$ for a sufficiently small $c_0$, it satisfies $\|\bm{\Phi}_{j^\prime \cdot}\| = O(1/\sqrt{n})$, which completes the proof. 
\end{proof}

\subsection{Proof of the lemmas in Appendix~\ref{sec:proof_the_2}}
\label{sec:proof_lemma_the_2}

\begin{proof}[Proof of Lemma~\ref{lemma:sigma_phi_i}]
By definition, $\sigma_l(\bm{\Psi}_{i\cdot}\bm{O}) = \sqrt{2/n}, l = 1, \ldots, d$. Then,
\begin{equation*}
    \|\sigma_l(\bm{\Phi}_{i\cdot}) - \sqrt{2/n}\| \overset{(a)}{\leq} \|\bm{\Phi}_{i\cdot} - \bm{\Psi}_{i\cdot}\bm{O}\| \overset{(b)}{\lesssim} \eta/\sqrt{n},\quad l = 1,\ldots, d
\end{equation*}
w.h.p., where $(a)$ holds by Weyl's inequality in Theorem~\ref{the:weyl} and $(b)$ comes from Theorem~\ref{the:phi_psi_i}. $\|\mathcal{P}(\bm{\Phi}_{i\cdot}) - \mathcal{P}(\bm{\Psi}_{i\cdot}\bm{O})\|$ is bounded by Lemma~\ref{lemma:PX_PY} as
\begin{equation*}
    \begin{aligned}
        &\|\mathcal{P}(\bm{\Phi}_{i\cdot}) - \mathcal{P}(\bm{\Psi}_{i\cdot}\bm{O})\| \leq 2\sqrt{2}\min\{\sigma_{\text{min}}^{-1}(\bm{\Phi}_{i\cdot}),\; \sigma_{\text{min}}^{-1}(\bm{\Psi}_{i\cdot}\bm{O})\}\|\bm{\Phi}_{i\cdot} - \bm{\Psi}_{i\cdot}\bm{O}\|\\
        &\lesssim \min\left\{(\sqrt{2/n} - O(\eta/\sqrt{n}))^{-1}, \; (\sqrt{2/n})^{-1}\right\} \cdot O(\eta/\sqrt{n}) = O\left(\eta\right)
    \end{aligned}
\end{equation*}
w.h.p. For $\|\bm{\Phi}_{j\cdot} \mathcal{P}(\bm{\Phi}_{i\cdot})^\top - \bm{\Psi}_{j\cdot} \mathcal{P}(\bm{\Psi}_{i\cdot})^\top\|$, by substituting $\bm{\Phi}_{j\cdot}$ and $\mathcal{P}(\bm{\Psi}_{i\cdot})$ with $\bm{\Psi}_{j\cdot}\bm{O} + \bm{\Phi}_{j\cdot} - \bm{\Psi}_{j\cdot}\bm{O} = \bm{\Psi}_{j\cdot}\bm{O} $ and $\mathcal{P}(\bm{\Psi}_{i\cdot}\bm{O}) + \mathcal{P}(\bm{\Phi}_{i\cdot}) - \mathcal{P}(\bm{\Psi}_{i\cdot}\bm{O})$ respectively, we denote $\bm{\Delta}_{\bm{\Phi}_{j\cdot}} := \bm{\Phi}_{j\cdot} - \bm{\Psi}_{j\cdot}\bm{O}$ and $\bm{\Delta}_{\mathcal{P}(\bm{\Phi}_{i\cdot})} := \mathcal{P}(\bm{\Phi}_{i\cdot}) - \mathcal{P}(\bm{\Psi}_{i\cdot}\bm{O})$, then we get
\begin{align*}
    &\|\bm{\Phi}_{j\cdot} \mathcal{P}(\bm{\Phi}_{i\cdot})^\top - \bm{\Psi}_{j\cdot} \mathcal{P}(\bm{\Psi}_{i\cdot})^\top\| \\
    &= \|(\bm{\Psi}_{j\cdot}\bm{O} + \bm{\Delta}_{\bm{\Phi}_{j\cdot}})(\mathcal{P}(\bm{\Psi}_{i\cdot}\bm{O}) + \bm{\Delta}_{\mathcal{P}(\bm{\Phi}_{i\cdot})})^\top - \bm{\Psi}_{j\cdot} \mathcal{P}(\bm{\Psi}_{i\cdot})^\top\| \\
    &= \|\bm{\Delta}_{\bm{\Phi}_{j\cdot}}\mathcal{P}(\bm{\Psi}_{j\cdot}\bm{O})^\top + \bm{\Psi}_{j\cdot} \bm{O}\bm{\Delta}_{\mathcal{P}(\bm{\Phi}_{i\cdot})}^\top + \bm{\Delta}_{\bm{\Phi}_{j\cdot}}\bm{\Delta}_{\mathcal{P}(\bm{\Phi}_{i\cdot})}^\top\| \\
    &\leq \|\bm{\Delta}_{\bm{\Phi}_{j\cdot}}\|\|\mathcal{P}(\bm{\Psi}_{j\cdot})\| + \|\bm{\Psi}_{j\cdot}\|\|\bm{\Delta}_{\mathcal{P}(\bm{\Phi}_{i\cdot})}\| + \|\bm{\Delta}_{\bm{\Phi}_{j\cdot}}\|\|\bm{\Delta}_{\mathcal{P}(\bm{\Phi}_{i\cdot})}\|\\
    &\lesssim \frac{\eta}{\sqrt{n}} \cdot 1 + \sqrt{\frac{2}{n}} \cdot \eta + \frac{\eta}{\sqrt{n}} \cdot \eta \lesssim \frac{\eta}{\sqrt{n}}
\end{align*}
w.h.p., where $\|\bm{\Delta}_{\bm{\Phi}_{j\cdot}}\|$ is bounded by Theorem~\ref{the:phi_psi_i}. It remains to bound the last term $\|\mathcal{P}(\bm{\Phi}_{j\cdot} \mathcal{P}(\bm{\Phi}_{i\cdot})^\top) - \mathcal{P}(\bm{\Psi}_{j\cdot} \mathcal{P}(\bm{\Psi}_{i\cdot})^\top)\|$, by Lemma~\ref{lemma:PX_PY} we have
\begin{align*}
    &\|\mathcal{P}(\bm{\Phi}_{j\cdot} \mathcal{P}(\bm{\Phi}_{i\cdot})^\top) - \mathcal{P}(\bm{\Psi}_{j\cdot} \mathcal{P}(\bm{\Psi}_{i\cdot})^\top)\| \\
    &\lesssim \min\{\sigma_{\text{min}}^{-1}(\bm{\Phi}_{j\cdot} \mathcal{P}(\bm{\Phi}_{i\cdot})^\top),\; \sigma_{\text{min}}^{-1}(\bm{\Psi}_{j\cdot} \mathcal{P}(\bm{\Psi}_{i\cdot})^\top)\}\|\bm{\Phi}_{j\cdot} \mathcal{P}(\bm{\Phi}_{i\cdot})^\top - \bm{\Psi}_{j\cdot} \mathcal{P}(\bm{\Psi}_{i\cdot})^\top\| \\
    &\lesssim \min\left\{(\sqrt{2/n} - O(\eta/\sqrt{n}))^{-1}, \; (\sqrt{2/n})^{-1}\right\} \cdot O(\eta/\sqrt{n}) = O\left(\eta\right)
\end{align*}
w.h.p., where $\sigma_{\text{min}}(\bm{\Psi}_{j\cdot} \mathcal{P}(\bm{\Psi}_{i\cdot})^\top) = \sqrt{2/n}$, and 
\begin{equation*}
    \begin{aligned}
    \sigma_{\text{min}}(\bm{\Phi}_{j\cdot} \mathcal{P}(\bm{\Phi}_{i\cdot})^\top) &\geq \sigma_{\text{min}}(\bm{\Psi}_{j\cdot} \mathcal{P}(\bm{\Psi}_{i\cdot})^\top) - \|\bm{\Phi}_{j\cdot} \mathcal{P}(\bm{\Phi}_{i\cdot})^\top - \bm{\Psi}_{j\cdot} \mathcal{P}(\bm{\Psi}_{i\cdot})^\top\|\\ &\geq \sqrt{2/n} - O(\eta/\sqrt{n})
    \end{aligned}
\end{equation*}
w.h.p., which is bounded by Theorem~\ref{the:weyl}. This completes the proof.
\end{proof}

\vspace{0.1cm}
\begin{proof}[Proof of Lemma~\ref{lemma:Q_2}]
For $\|\bm{Q}_{\cdot 2} - \mathcal{P}(\bm{\Psi}_{j\cdot}\bm{O})^\top \bar{\bm{O}}_2\|$, the following satisfies
\begin{align*}
    &\|\bm{Q}_{\cdot 2} - \mathcal{P}(\bm{\Psi}_{j\cdot}\bm{O})^\top \bar{\bm{O}}_2\| = \|\bm{Q}_{\cdot 1}^\perp - \mathcal{P}(\bm{\Psi}_{j\cdot}\bm{O})^\top \bar{\bm{O}}_2\| \overset{(a)}{\lesssim} \|\bm{Q}_{\cdot 1}(\bm{Q}_{\cdot 1})^\top \mathcal{P}(\bm{\Psi}_{j\cdot}\bm{O})^\top \bar{\bm{O}}_2\|\\
    &= \|(\bm{Q}_{\cdot 1})^\top \mathcal{P}(\bm{\Psi}_{j\cdot}\bm{O})^\top \bar{\bm{O}}_2\| = \|\bar{\bm{O}}_1^\top\mathcal{P}(\bm{\Phi}_{i\cdot})\mathcal{P}(\bm{\Psi}_{j\cdot}\bm{O})^\top \bar{\bm{O}}_2\| = \|\mathcal{P}(\bm{\Phi}_{i\cdot})\mathcal{P}(\bm{\Psi}_{j\cdot}\bm{O})^\top\| \\
    &= \|\mathcal{P}(\bm{\Phi}_{i\cdot})\mathcal{P}(\bm{\Psi}_{j\cdot}\bm{O})^\top\| \leq \|\mathcal{P}(\bm{\Psi}_{i\cdot}\bm{O})\mathcal{P}(\bm{\Psi}_{j\cdot}\bm{O})^\top\| + \|(\mathcal{P}(\bm{\Psi}_{i\cdot}) - \mathcal{P}(\bm{\Psi}_{i\cdot}\bm{O}))\mathcal{P}(\bm{\Psi}_{j\cdot})^\top\| \\
    &\overset{(b)}{\leq} \|\mathcal{P}(\bm{\Psi}_{i\cdot})\mathcal{P}(\bm{\Psi}_{j\cdot})^\top\| + \|\mathcal{P}(\bm{\Psi}_{i\cdot}) - \mathcal{P}(\bm{\Psi}_{i\cdot}\bm{O})\|\|\mathcal{P}(\bm{\Psi}_{j\cdot})^\top\| \lesssim \eta
\end{align*}
w.h.p., where $(a)$ comes from Lemma~\ref{lemma:error2inner} with $\bm{X} = \bm{Q}_{\cdot 1}^\perp$ then $\bm{I}_{2d} - \bm{X}\bm{X}^\top = \bm{Q}_{\cdot 1}(\bm{Q}_{\cdot 1})^\top$,  and $(b)$ uses \eqref{eq:P_Phi_P_Psi} in Lemma~\ref{lemma:sigma_phi_i}. Next, for $\|\bm{\Phi}_{j\cdot}\bm{Q}_{\cdot 2} - \bm{\Psi}_{j\cdot}\mathcal{P}(\bm{\Psi}_{j\cdot})^\top \bar{\bm{O}}_2\|$ we have 
\begin{align*}
    &\|\bm{\Phi}_{j\cdot}\bm{Q}_{\cdot 2} - \bm{\Psi}_{j\cdot}\mathcal{P}(\bm{\Psi}_{j\cdot})^\top \bar{\bm{O}}_2\| \leq \|\bm{\Psi}_{j\cdot}\bm{O}\bm{Q}_{\cdot 2} -  \bm{\Psi}_{j\cdot}\mathcal{P}(\bm{\Psi}_{j\cdot})^\top \bar{\bm{O}}_2\| + \|(\bm{\Phi}_{j\cdot} - \bm{\Psi}_{j\cdot}\bm{O})\bm{Q}_{\cdot 2}\|\\
    & \leq \|\bm{\Psi}_{j\cdot}\|\|\bm{Q}_{\cdot 2} - \mathcal{P}(\bm{\Psi}_{j\cdot}\bm{O})^\top \bar{\bm{O}}_2\| + \|\bm{\Phi}_{j\cdot} - \bm{\Psi}_{j\cdot}\bm{O}\|\|\bm{Q}_{\cdot 2}\| \lesssim \eta/\sqrt{n}
\end{align*}
w.h.p. The bounds on $\|\mathcal{P}(\bm{\Phi}_{j\cdot}\bm{Q}_{\cdot 2}) - \mathcal{P}(\bm{\Psi}_{j\cdot}\mathcal{P}(\bm{\Psi}_{j\cdot})^\top \bar{\bm{O}}_2)\|$ can be obtained by applying Lemma~\ref{lemma:PX_PY} with Theorem~\ref{the:weyl}, which is similar to \eqref{eq:P_P_Phi_P_Psi} in Lemma~\ref{lemma:sigma_phi_i} and thus we do not repeat. 
\end{proof}

	\section{A more involved noise model}
\label{sec:new_noise_model}
{\color{black}
In this section, we study a more involved noise model that extends the one introduced in Section~\ref{sec:pre}. Recall that the orthogonal transform $\bm{A}_{ij}$ is measured exactly when nodes $i$ and $j$ belong to the same cluster. To extend from the measurement model in~\eqref{eq:clean_observation}, we include additive noise perturbation to~\eqref{eq:clean_observation} and the new noisy measurement $\widetilde{\bm{A}}_{ij}$ 
for any $i < j$ follows 
\begin{equation}
  \widetilde{\bm{A}}_{ij} =  \bm{A}_{ij} + \bm{W}_{ij}, 
    \label{eq:clean_observation_additive_noise}
\end{equation}
where $\bm{W}_{ij}$ denotes the additive noise that is independent for each pair of nodes $(i,j)$. For now there is no need to specify the statistics of $\bm{W}_{ij}$, but only make sure that $\bm{W}_{ii} = \bm{0}$ for $i = 1,\ldots, n$; $\mathbb{E}[\bm{W}_{ij}] = \bm{0}$ for any pair of nodes; and $\bm{W}_{ij} = \bm{W}_{ji}$. Notably, such additive noise model was also broadly considered in orthogonal synchronization problem, e.g.~\cite{ling2020near,ling2020solving}. 

In this case, our proposed Algorithm~\ref{alg:spectral} still applies and is able to recover the cluster memberships and the orthogonal transforms, as we show both theoretically and empirically in the following.

\subsection{Analysis} 
Our analysis for this noise model is still based on the setting of two clusters with equal cluster sizes. First, let us denote $\bm{W} = [\bm{W}_{ij}]_{i,j = 1}^n \in \mathbb{R}^{nd \times nd}$ as the whole symmetric matrix of additive noise, then we require the following assumptions on $\bm{W}$:

\begin{assumption}[Operator norm]
It satisfies $\|\bm{W}\| \leq \epsilon_0$ for some $$\epsilon_0 = O\big(\sqrt{p(1-p)n} + \sqrt{qn}\big),$$ with probability at least $1 - O(n^{-1})$.
\label{assump:operator_norm}
\end{assumption}
\begin{assumption}[Block row sum concentration]
Given any block matrix $\bm{M} \in \mathbb{R}^{nd \times r}$ with $n$ block rows, for each block row $\bm{W}_{i \cdot}$ it satisfies
\begin{equation*}
    \|\bm{W}_{i \cdot}\bm{M}\| \lesssim \epsilon \max_j\|\bm{M}_{j\cdot }\|
\end{equation*}
for some $\epsilon > 0$, with probability at least $1 - O(n^{-1})$.
\label{assump:block_row_sum}
\end{assumption}

Both Assumptions~\ref{assump:operator_norm} and \ref{assump:block_row_sum} are reasonable as Assumption~\ref{assump:operator_norm} states an overall concentration on $\bm{W}_{ij}$, and Assumption~\ref{assump:block_row_sum} further confines on each block row of $\bm{W}$. In particular, such a block row concentration is necessary for getting a blockwise analysis. Given the above, we are able to derive the following blockwise error bound between the noisy eigenvectors $\bm{\Phi}$ and the clean ones $\bm{\Psi}$:
\begin{theorem}[Blockwise error bound]
Under Assumptions~\ref{assump:operator_norm},\ref{assump:block_row_sum}, and the setting of two equal-sized clusters with model parameters $(n, p, q, d, \epsilon_0, \epsilon)$, for a sufficiently large $n$, suppose
    \begin{equation}
    \tilde{\eta} := \frac{\sqrt{(p(1-p)+q)n\log (nd)} + \epsilon}{pn} \leq c_0
    \label{eq:p_q_assumption_new_model}
\end{equation}
for some small constant $c_0 \geq 0$. Then with probability $1 - O(n^{-1})$,
\begin{equation}
    \max_{1 \leq i \leq n} \|\bm{\Phi}_{i\cdot} - \bm{\Psi}_{i\cdot}\bm{O}\| \lesssim \frac{\tilde{\eta}}{\sqrt{n}}
    \label{eq:bound_max_block_new_model}
\end{equation}
where $\bm{O} = \mathcal{P}(\bm{\Psi}^\top\bm{\Phi})$. 
\label{the:phi_psi_i_new_model}
\end{theorem}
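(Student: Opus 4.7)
The proof will essentially replay the three-step argument already carried out for Theorem~\ref{the:phi_psi_i}, with the residual $\bm{\Delta} = \bm{A}-\mathbb{E}[\bm{A}]$ replaced by the combined residual $\widetilde{\bm{\Delta}} := \widetilde{\bm{A}} - \mathbb{E}[\widetilde{\bm{A}}] = \bm{\Delta} + \bm{W}$, where $\bm{\Delta}$ is the residual analyzed in Section~\ref{sec:proof_main_theorems} and $\bm{W}$ is the additional additive noise. The first task is therefore to lift the two workhorse concentration lemmas (Lemma~\ref{lemma:delta_norm_2} and Lemma~\ref{lemma:Delta_M}) from $\bm{\Delta}$ to $\widetilde{\bm{\Delta}}$. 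By the triangle inequality, $\|\widetilde{\bm{\Delta}}\| \leq \|\bm{\Delta}\|+\|\bm{W}\|$, and Assumption~\ref{assump:operator_norm} guarantees $\|\bm{W}\| \leq \epsilon_0 = O(\sqrt{p(1-p)n}+\sqrt{qn})$, so the operator norm bound is preserved up to a constant. For the block-row bound, $\|\widetilde{\bm{\Delta}}_{i\cdot}\bm{M}\| \leq \|\bm{\Delta}_{i\cdot}\bm{M}\| + \|\bm{W}_{i\cdot}\bm{M}\|$, the first term controlled by Lemma~\ref{lemma:Delta_M} and the second by Assumption~\ref{assump:block_row_sum}, yielding $\|\widetilde{\bm{\Delta}}_{i\cdot}\bm{M}\| \lesssim (\sqrt{(p(1-p)+q)n\log(nd)}+\epsilon)\max_j \|\bm{M}_{j\cdot}\|$. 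Note that $\tilde{\eta}$ in \eqref{eq:p_q_assumption_new_model} is exactly $(pn)^{-1}$ times this quantity, so $\tilde{\eta}$ plays precisely the role of $\eta$ throughout the argument.

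Next, the leave-one-out construction extends unchanged: define $\widetilde{\bm{A}}^{(i)} := \mathbb{E}[\widetilde{\bm{A}}] + \widetilde{\bm{\Delta}}^{(i)}$ with $\widetilde{\bm{\Delta}}^{(i)}$ obtained from $\widetilde{\bm{\Delta}}$ by zeroing out the $i$-th block row and column. Crucially, since the $\bm{W}_{ij}$ are independent across pairs $(i,j)$ and independent of the $\bm{\Delta}$ randomness, $\widetilde{\bm{\Delta}}_{i\cdot}$ is still statistically independent of the top eigenvectors $\bm{\Phi}^{(i)}$ of $\widetilde{\bm{A}}^{(i)}$. This is the one structural property the whole leave-one-out machinery relies on, and it is preserved.

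I then replay Step 1--Step 3 of Section~\ref{sec:proof_sketch} verbatim with $\bm{\Delta}$, $\bm{\Delta}^{(i)}$, and $\eta$ replaced by $\widetilde{\bm{\Delta}}$, $\widetilde{\bm{\Delta}}^{(i)}$, and $\tilde{\eta}$ respectively. Specifically: the spectral-gap bound $\|\bm{\Lambda}^{-1}\| \lesssim (pn)^{-1}$ still follows from Weyl's inequality since $\|\widetilde{\bm{\Delta}}\| = O(\sqrt{p(1-p)n}+\sqrt{qn}) \ll pn$; Lemma~\ref{lemma:Psi_Phi}, Lemma~\ref{lemma:Phi_Phi_i}, Lemma~\ref{lemma:phi_i_psi_j}, Lemma~\ref{lemma:S_i_O_i_O}, and Lemma~\ref{lemma:upper_bound_phi_j} are re-derived with the updated $\widetilde{\bm{\Delta}}$ bounds; and the decomposition \eqref{eq:bound_T_1_T_2_T_3_appendix} of $\|\widetilde{\bm{\Delta}}_{i\cdot}(\bm{\Phi}-\bm{\Psi}\bm{O})\|$ into $T_1+T_2+T_3$ proceeds as before, with the matrix-Bernstein step in $T_2$ simply replaced by an application of the extended block-row bound. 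The dominant term $T_2$ is now of order $\sqrt{(p(1-p)+q)n\log(nd)}\max_j\|\bm{\Phi}_{j\cdot}\| + \epsilon\max_j\|\bm{\Phi}_{j\cdot}\| = \tilde{\eta} \cdot pn \max_j\|\bm{\Phi}_{j\cdot}\|$, and concatenating the pieces reproduces Lemma~\ref{the:Phi_f_Phi} and Lemma~\ref{the:phi_i_phi_j} with $\tilde{\eta}$ in place of $\eta$. Finally, Step~2 of the sketch (the $\bm{N}_i, \bm{N}_{\bm{\Delta},i}$ decomposition in \eqref{eq:def_N_C_1} combined with Davis--Kahan) yields \eqref{eq:bound_max_block_new_model}.

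The only place where one must be a little careful is in checking that no step of the original proof exploits a finer structural property of $\bm{\Delta}$ beyond the two bounds in Lemmas~\ref{lemma:delta_norm_2} and \ref{lemma:Delta_M} and the leave-one-out independence; a scan through Appendix~\ref{sec:proof_the_1} confirms that every invocation of the noise matrix reduces to one of these three facts. This is the main thing to verify, but it is essentially bookkeeping, and once done the blockwise bound \eqref{eq:bound_max_block_new_model} follows without further new ideas.
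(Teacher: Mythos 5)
Your proposal is correct and follows essentially the same route as the paper: the paper's own proof likewise reduces everything to updating the two concentration bounds ($\|\widetilde{\bm{\Delta}}\|\leq\|\bm{\Delta}\|+\|\bm{W}\|$ via Assumption~\ref{assump:operator_norm}, and $\|\widetilde{\bm{\Delta}}_{i\cdot}\bm{M}\|\leq\|\bm{\Delta}_{i\cdot}\bm{M}\|+\|\bm{W}_{i\cdot}\bm{M}\|$ via Assumption~\ref{assump:block_row_sum}) and then rerunning the argument of Appendix~\ref{sec:proof_the_1} with $\tilde{\eta}$ in place of $\eta$. Your explicit check that the leave-one-out independence of $\widetilde{\bm{\Delta}}_{i\cdot}$ from $\bm{\Phi}^{(i)}$ is preserved is a point the paper leaves implicit, but otherwise the two arguments coincide.
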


\begin{proof}
The proof structure essentially follows the one of Theorem~\ref{the:phi_psi_i}, with the only difference on the analysis of the perturbation $\widetilde{\bm{\Delta}} = \widetilde{\bm{A}} - \mathbb{E}[\widetilde{\bm{A}}]$. In this case, 
$\widetilde{\bm{\Delta}} = \bm{\Delta} + \bm{W}$ where $\bm{\Delta}$ 
represents the original perturbation defined in \eqref{eq:Delta_ij}. 
As a result, by applying the triangular inequality, Lemma~\ref{lemma:delta_norm_2} for the operator norm bound $\|\widetilde{\bm{\Delta}}\|$ can be bounded as 
\begin{equation*}
    \|\widetilde{\bm{\Delta}}\| \leq \|\bm{\Delta}\| + \|\bm{W}\| \leq \sqrt{p(1-p)n} + \sqrt{qn} + \epsilon_0.
\end{equation*}
Similarly, the block row sum inequality in Lemma~\ref{lemma:Delta_M} can be given as 
\begin{equation*}
    \| \widetilde{\bm{\Delta}}_{i \cdot}\bm{M}\| \leq \|\bm{\Delta}_{i \cdot} \bm{M}\| + \|\bm{W}_{i \cdot}\bm{M}\| \leq \sqrt{(p(1-p) + q)n\log (nd)} + \epsilon.
\end{equation*}
Then, the remaining proof is almost identical to the one presented in Appendix~\ref{sec:proof_the_1}, but only replacing the original bounds on  $\| \widetilde{\bm{\Delta}}\|$ and $\| \widetilde{\bm{\Delta}}_{i \cdot}\bm{M}\|$ with the updated ones, and we leave the detailed proof to interested readers. 
\end{proof}

As a result, the performance guarantee of Algorithm~\ref{alg:spectral} on the new noise model is identical to Theorem~\ref{the:cond}, and we re-state here:
\begin{theorem}[Performance guarantee]
Under the assumption of Theorem~\ref{the:phi_psi_i_new_model}, for $i = 1,\ldots, n$, with probability $1 - O(n^{-1})$, Algorithm~\ref{alg:spectral} exactly recovers the cluster memberships $\kappa(i)$ defined in Section~\ref{sec:pre}, and $\hat{\bm{O}}_i$ satisfies
\begin{equation}
    \|\hat{\bm{O}}_i - \bm{O}_i\bar{\bm{O}}_{\kappa(i)}\| \lesssim \tilde{\eta}
    \label{eq:sync_bounded_new_model}
\end{equation}
where $\bar{\bm{O}}_{\kappa(i)}$ is orthogonal and only depends on the cluster that $i$ belongs to.
\label{the:cond_new_model}
\end{theorem}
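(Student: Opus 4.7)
The plan is to mirror the proof of Theorem~\ref{the:cond} almost verbatim, exploiting the fact that Theorem~\ref{the:phi_psi_i_new_model} supplies a blockwise eigenvector perturbation bound of exactly the same shape as Theorem~\ref{the:phi_psi_i}, with $\eta$ replaced throughout by $\tilde{\eta}$. Since the argument in Appendix~\ref{sec:proof_the_2} only invokes Theorem~\ref{the:phi_psi_i}, Weyl's inequality, Lemma~\ref{lemma:PX_PY}, and the identity $\bm{\Psi}_{i\cdot}(\bm{\Psi}_{j\cdot})^\top = \bm{I}_d/m$ (for $i,j$ in the same cluster, $\bm{0}$ otherwise)---none of which depend on the specific distribution of the residual---the entire chain of bounds transports immediately. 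I would begin by re-stating Lemmas~\ref{lemma:sigma_phi_i} and~\ref{lemma:Q_2} with $\eta \to \tilde{\eta}$; their proofs are character-for-character identical once Theorem~\ref{the:phi_psi_i_new_model} is plugged in wherever Theorem~\ref{the:phi_psi_i} was used.

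For the cluster recovery part, assume without loss of generality that the first pivot $p_1$ of the blockwise CPQR lies in $C_1$. A node $j$ is assigned to $C_1$ iff $\|\mathcal{P}(\bm{\Phi}_{p_1\cdot})(\bm{\Phi}_{j\cdot})^\top\|_\mathrm{F} > \tfrac{\sqrt{2}}{2}\|\bm{\Phi}_{j\cdot}\|_\mathrm{F}$. Using $\bm{\Psi}_{\cdot}\bm{O}$ as a surrogate for $\bm{\Phi}_{\cdot}$ and splitting the error through the triangle inequality exactly as in \eqref{eq:bound_P_phi_i_phi_j}, the LHS evaluates to $\sqrt{2d/n}-O(\tilde{\eta}\sqrt{d/n})$ when $j \in C_1$ and to $O(\tilde{\eta}\sqrt{d/n})$ when $j \in C_2$, while the RHS is $(1+O(\tilde{\eta}))\sqrt{d/n}$. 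Under \eqref{eq:p_q_assumption_new_model}, $\tilde{\eta}$ is sufficiently small that these inequalities resolve in the correct direction in both cases, delivering exact cluster recovery with high probability.

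For the orthogonal transform recovery, decompose $\bm{Q}=[\bm{Q}_{\cdot 1},\bm{Q}_{\cdot 2}]$ with $\bm{Q}_{\cdot 1}=\mathcal{P}((\bm{\Phi}_{p_1\cdot})^\top)\bar{\bm{O}}_1$ and $(\bm{Q}_{\cdot 1})^\top\bm{Q}_{\cdot 2}=\bm{0}$. For $j\in C_1$, the recovery error reduces to $\|\mathcal{P}(\bm{\Phi}_{j\cdot}\mathcal{P}(\bm{\Phi}_{p_1\cdot})^\top)-\mathcal{P}(\bm{\Psi}_{j\cdot}\mathcal{P}(\bm{\Psi}_{p_1\cdot})^\top)\|$, which is $O(\tilde{\eta})$ by the updated \eqref{eq:P_P_Phi_P_Psi}. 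For $j\in C_2$, the updated \eqref{eq:P_Q_2_Psi} yields $\|\hat{\bm{O}}_j-\bm{O}_j\bar{\bm{O}}_2\|\lesssim \tilde{\eta}$ with $\bar{\bm{O}}_2=\mathcal{P}(\mathcal{P}(\bm{\Phi}_{j\cdot}\bm{O})\bm{Q}_{\cdot 2})$. A union bound over all $n$ nodes then gives \eqref{eq:sync_bounded_new_model}.

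The only substantive obstacle is ensuring that the smallness assumption \eqref{eq:p_q_assumption_new_model} still forces the auxiliary conditions used implicitly in Appendix~\ref{sec:proof_the_2}---namely $\sigma_{\min}(\bm{\Psi}^\top\bm{\Phi})=\Omega(1)$ and $\max_j\|\bm{\Phi}_{j\cdot}\|\asymp 1/\sqrt{n}$. These were originally derived from Lemmas~\ref{lemma:Psi_Phi}--\ref{lemma:upper_bound_phi_j}, whose proofs are driven by the two inputs $\|\widetilde{\bm{\Delta}}\|$ and $\|\widetilde{\bm{\Delta}}_{i\cdot}\bm{M}\|$. Since those inputs are already controlled by Assumptions~\ref{assump:operator_norm} and~\ref{assump:block_row_sum} in the proof of Theorem~\ref{the:phi_psi_i_new_model}, and since $\tilde{\eta}\leq c_0$ suffices to make $\tau=o(1)$ in the reworked Lemma~\ref{lemma:Psi_Phi}, the auxiliary conditions carry over without any new ideas. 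The remainder of the proof is a routine substitution; I would omit the repeated calculations and refer the reader to the original argument in Appendix~\ref{sec:proof_the_2}.
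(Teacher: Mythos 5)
Your proposal is correct and matches the paper's approach exactly: the paper's own proof of this theorem is simply the remark that it is identical to that of Theorem~\ref{the:cond} with $\eta$ replaced by $\tilde{\eta}$, which is precisely the substitution you carry out (and you are in fact more explicit than the paper about why the auxiliary lemmas transport).
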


\begin{proof}
The proof is identical to the one of Theorem~\ref{the:cond} and therefore we do not repeat.
\end{proof}

\begin{figure}[t!]
    \centering
    \subfloat[\scriptsize{Success rate of exact recovery, $\sigma = 0.5$}]{\includegraphics[width = 0.4\textwidth]{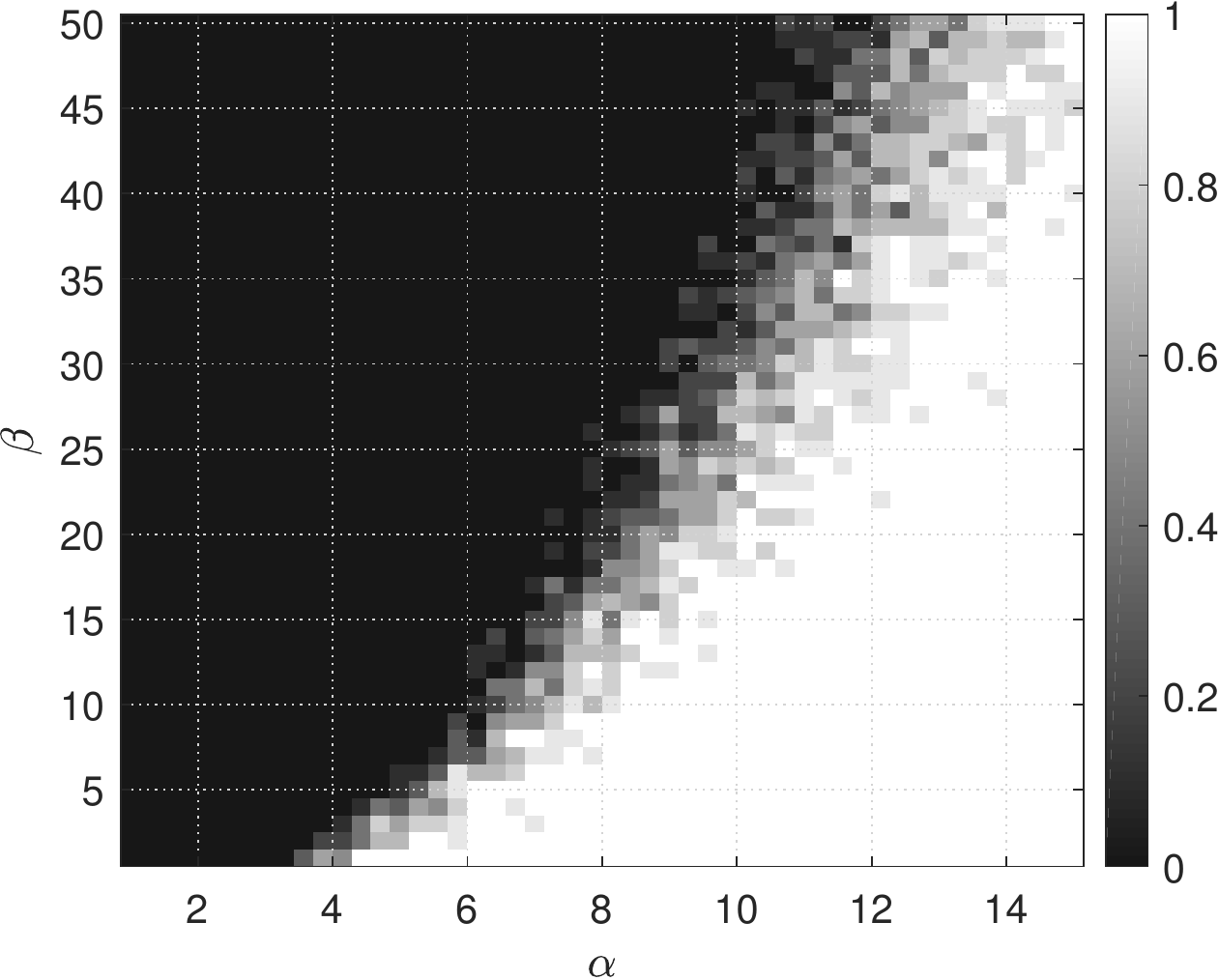}
    \label{fig:exp_noise_a}
    }\hskip 0.2cm
    \subfloat[\scriptsize{Error of synchronization, $\sigma = 0.5$}]{\includegraphics[width = 0.405\textwidth]{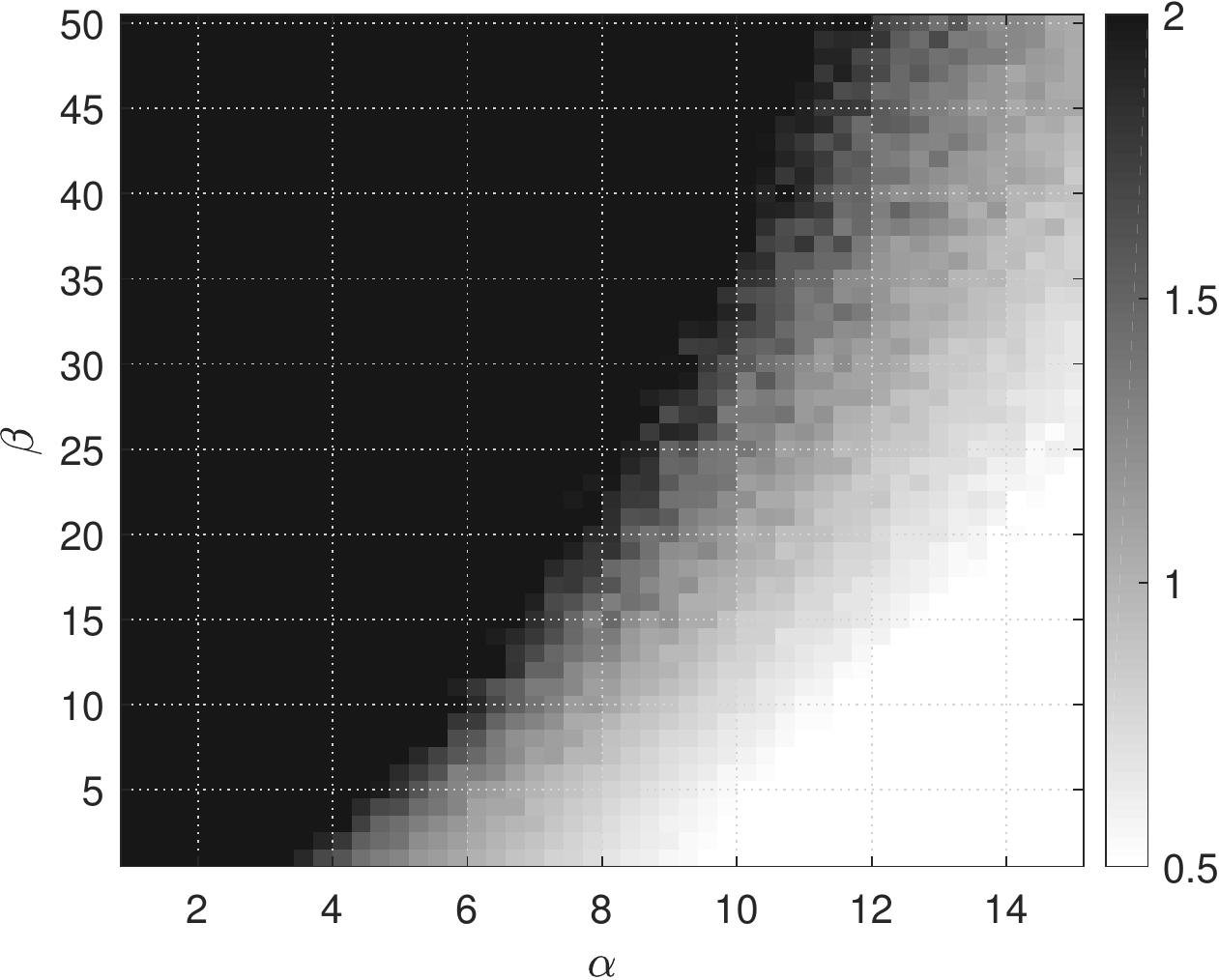}
    \label{fig:exp_noise_b}
    }\\[-2pt]
    \subfloat[\scriptsize{Success rate of exact recovery, $\sigma = 1$}]{\includegraphics[width = 0.4\textwidth]{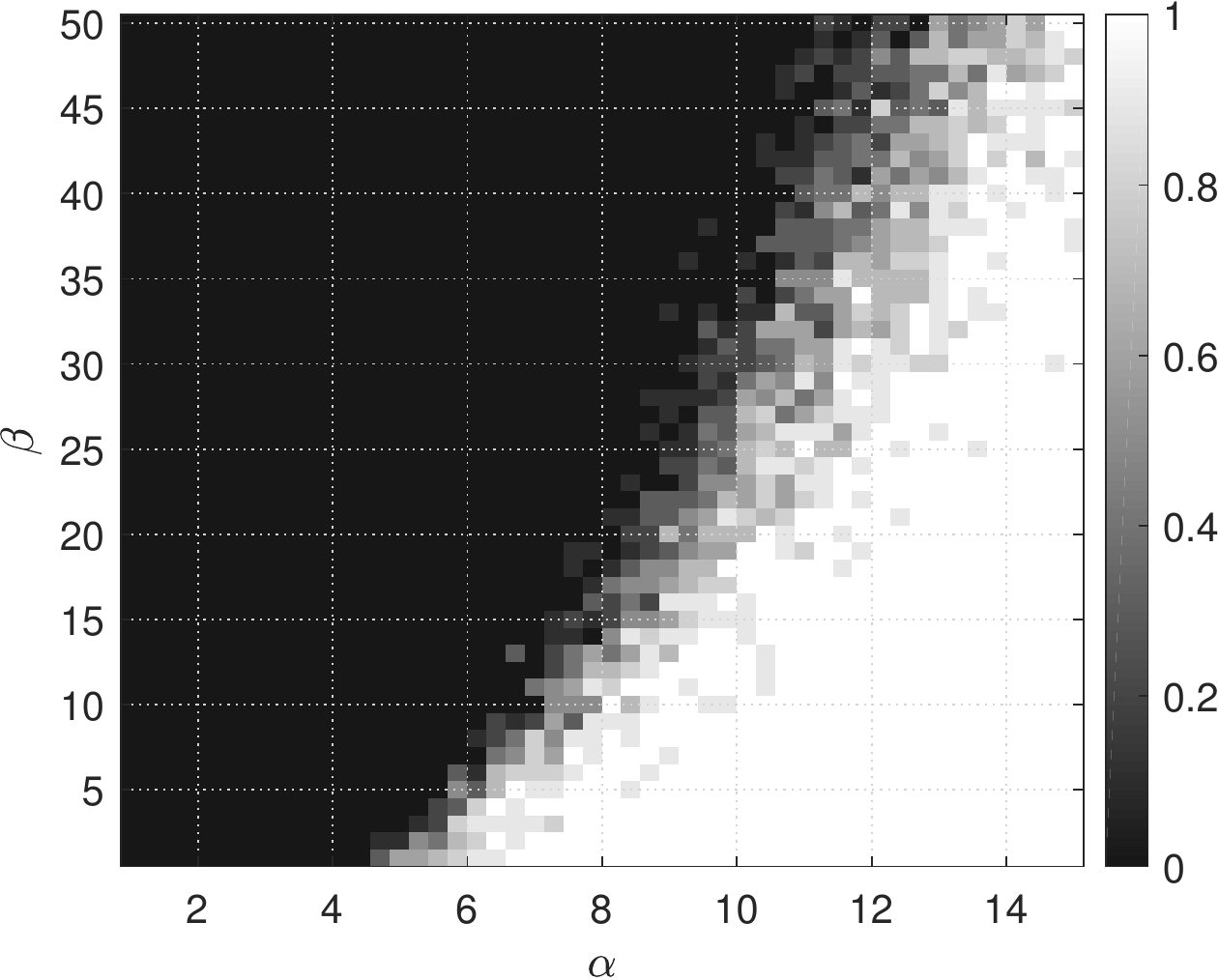}
    \label{fig:exp_noise_c}
    }\hskip 0.2cm
    \subfloat[\scriptsize{Error of synchronization, $\sigma = 1$}]{\includegraphics[width = 0.405\textwidth]{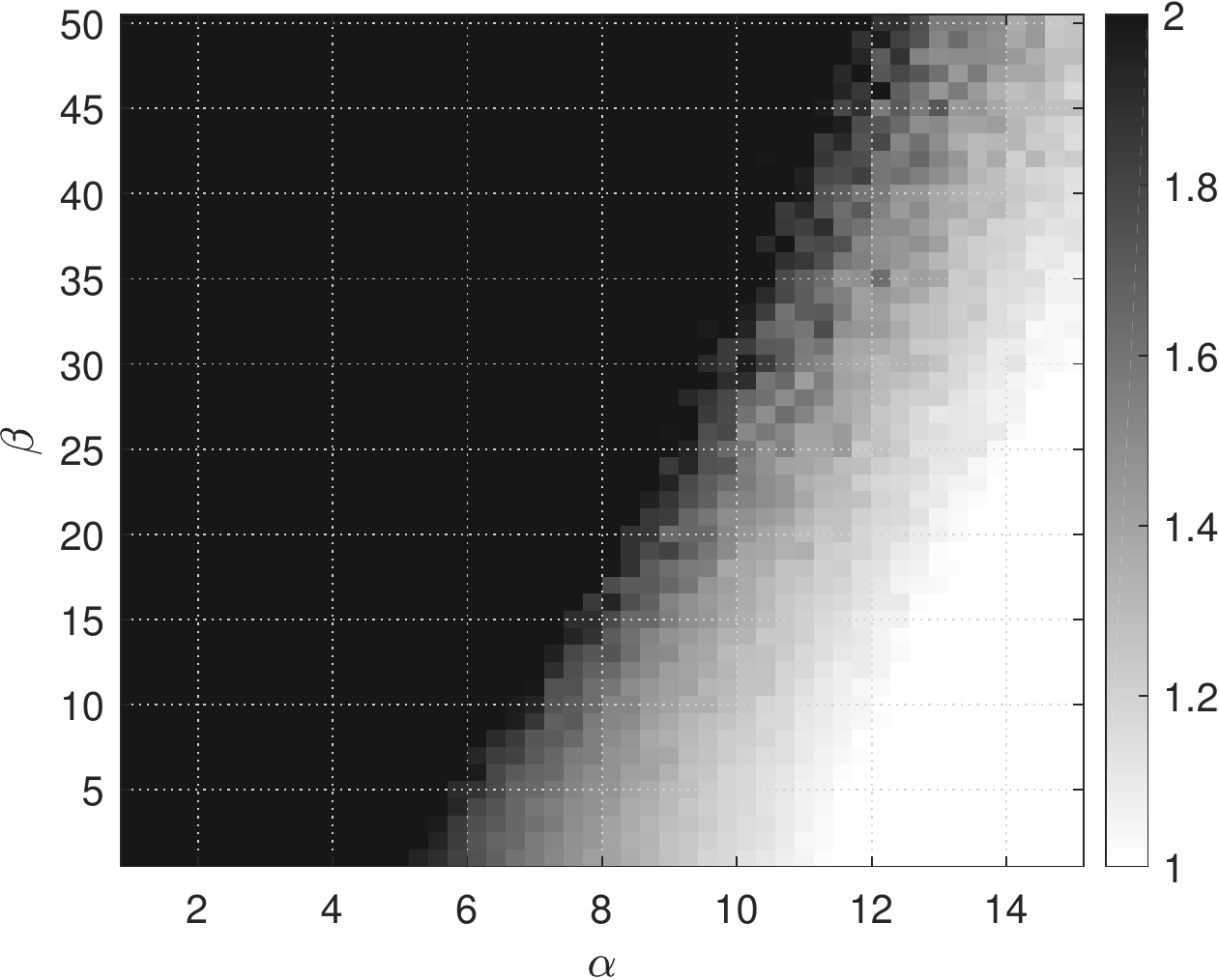}
    \label{fig:exp_noise_d}
    }\\[-2pt]
    \subfloat[\scriptsize{Success rate of exact recovery, $\sigma = 2$}]{\includegraphics[width = 0.4\textwidth]{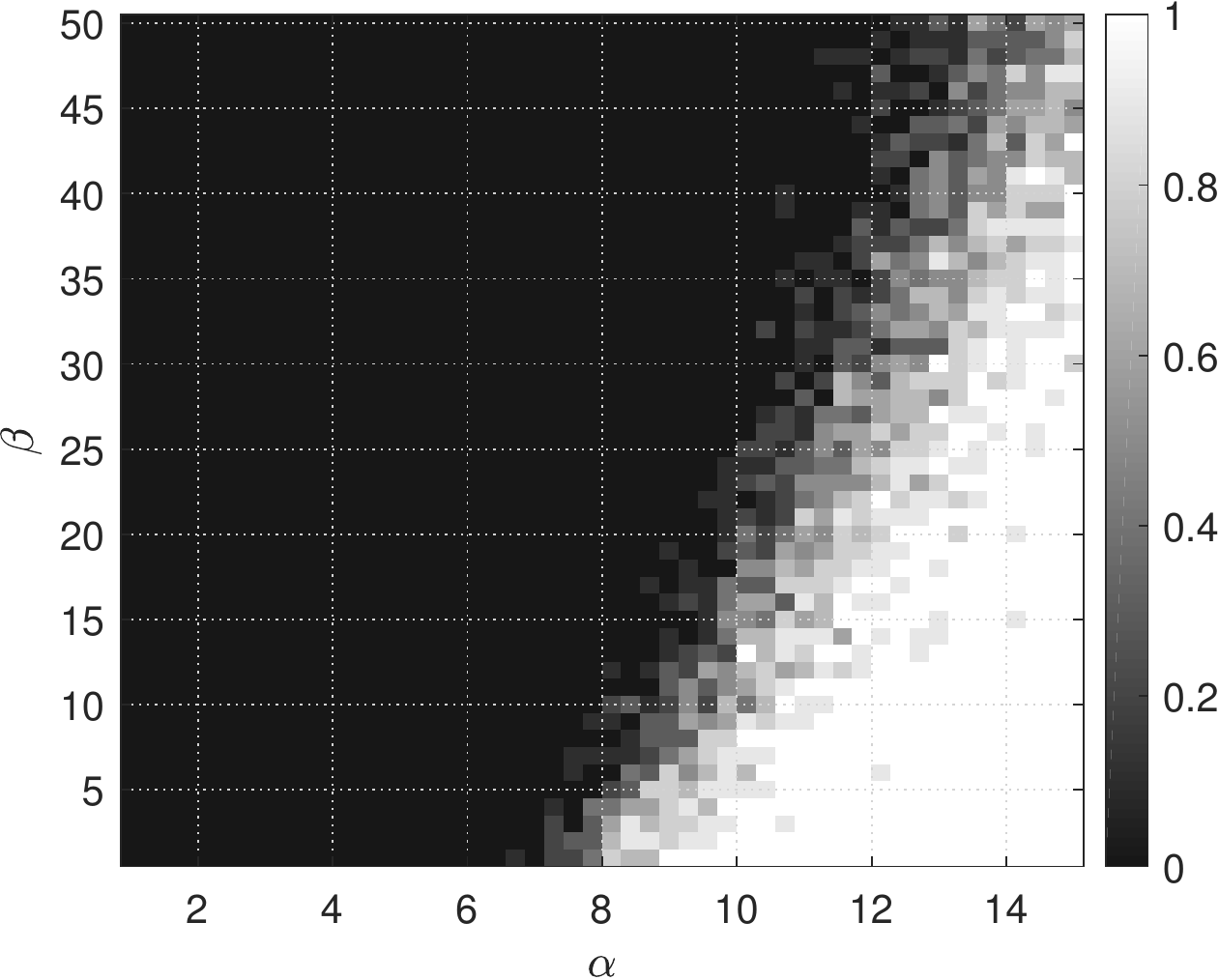}
    \label{fig:exp_noise_e}
    }\hskip 0.2cm
    \subfloat[\scriptsize{Error of synchronization, $\sigma = 2$}]{\includegraphics[width = 0.405\textwidth]{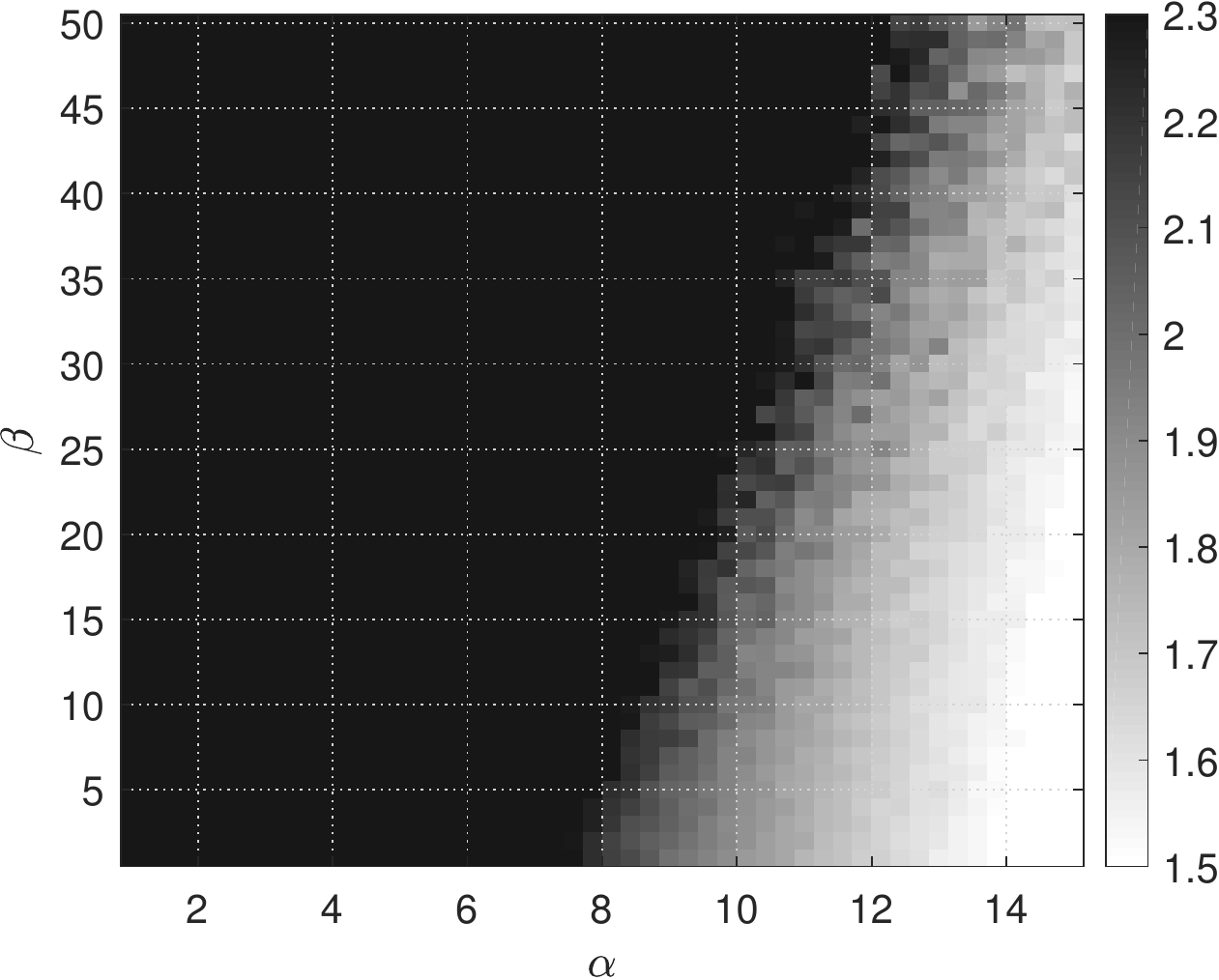}
    \label{fig:exp_noise_f}
    }
    \caption{\textit{Results on the probabilistic model with additive Gaussian noise.}
    We test under the setting $m_1 = m_2 = 500$, $d = 2$, the noise levels $\sigma \in \{0.5, 1, 2\}$.  
    \protect \subref{fig:exp_noise_a}, \protect \subref{fig:exp_noise_c}, and \protect \subref{fig:exp_noise_e}: the success rate of exact recovery by \eqref{eq:def_failure_rate}, under varying $\alpha$ in $p = \alpha\log n/n$ and $\beta$ in $q = \beta\log n/n$; \protect \subref{fig:exp_noise_b}, \protect \subref{fig:exp_noise_d}, and \protect \subref{fig:exp_noise_f}: the synchronization error by \eqref{eq:def_error_sync} under varying $\alpha$ and $\beta$.
    }
    \label{fig:exp_noise}
\end{figure}

\begin{remark}
As an example, let us consider the case of i.i.d. Gaussian noise such that all the entries in the off-diagonal blocks $\bm{W}_{ij}$ are i.i.d. Gaussian random variables with mean zero and variance $\sigma^2$. In other words, $\bm{W}_{ij} \sim \mathcal{N}(0, \sigma^2 \bm{I}_d)$ for any pair of nodes $(i,j)$ that $i \neq j$. Then, by using an existing result on the operator norm (e.g. \cite[Theorem 4.4.5]{vershynin2018high}) we have $$\|\bm{W}\| \lesssim \sigma \sqrt{nd}$$ with high probability. For the bound on $\| \widetilde{\bm{\Delta}}_{i \cdot}\bm{M}\|$, we have
\begin{equation*}
    \|\widetilde{\bm{\Delta}}_{i \cdot}\bm{M}\| \leq \|\widetilde{\bm{\Delta}}_{i \cdot}\|\|\bm{M}\| \overset{(a)}{\lesssim} \sigma \sqrt{nd} \cdot \|\bm{M}\| \overset{(b)}{\leq} \sigma n\sqrt{nd} \max_{j}\|\bm{M}_{j \cdot}\|.
\end{equation*}
where $(a)$ holds since $\|\widetilde{\bm{\Delta}}_{i \cdot}\| \lesssim \sigma\sqrt{nd}$, and $(b)$ comes from the fact that $\|\bm{M}\| \leq n\max_{j}\|\bm{M}_{j\cdot}\|$. As a result, we can set $\epsilon_0 = \sigma\sqrt{nd}$ and $\epsilon = \sigma n\sqrt{nd}$, then the exact recovery condition $\eqref{eq:p_q_assumption_new_model}$ becomes
\begin{equation*}
    \tilde{\eta} = \frac{\sqrt{(p(1-p)+q)\log (nd)} + \sigma n\sqrt{d}}{p\sqrt{n}} \leq c_0
\end{equation*}
which indicates that exact recovery is available with high probability as long as $\sigma$ is less than a certain threshold.
\label{remark:gaussian_noise}
\end{remark}

\subsection{Experiments}
In this part, we empirically test Algorithm~\ref{alg:spectral} on the model with additive Gaussian noise discussed in Remark~\ref{remark:gaussian_noise}. We following the same evaluation process as in Section~\ref{sec:exp} and the result is shown in Fig.~\ref{fig:exp_noise}. We test on the case of $m_1 = m_2 = 500$, $d = 2$ and different noise levels $\sigma \in \{0,5, 1, 2\}$. In Fig.~\ref{fig:exp_noise} we still observe sharp phase transition phenomenon on the exact recovery of cluster memberships, and the error of synchronization is also bounded when the clusters are perfectly identified. This agrees with our theoretical analysis and demonstrates the efficacy of our proposed algorithm.  

}

	\bibliographystyle{plain}
	\bibliography{ref}

\end{document}